\documentclass{article} 
\usepackage{arxiv,times}

\usepackage{array}
\usepackage{tabularx}
\usepackage{amsmath}
\usepackage{amsfonts}
\usepackage{amsthm}
\usepackage{amssymb}
\usepackage{mathtools}
\usepackage{hyperref}
\usepackage{multirow}
\usepackage{caption} 
\usepackage{float}
\captionsetup[table]{skip=10pt}

\newcommand{\reals}{\ensuremath{\mathbb{R}}}
\newcommand{\sphere}{\ensuremath{\mathbb{S}}}

\newcommand{\naturals}{\ensuremath{\mathbb{N}}}
\newcommand{\ints}{\ensuremath{\mathbb{Z}}}
\newcommand{\defeq}{\vcentcolon=}

\newcommand{\expec}{\ensuremath{\mathbb{E}}}

\newcommand{\wb}{\mathbf{w}}

\newcommand{\xb}{\mathbf{x}}
\newcommand{\ab}{\mathbf{a}}
\newcommand{\thetab}{\mathbf{\theta}}
\newcommand{\Ab}{\mathbf{A}}
\newcommand{\Bb}{\mathbf{B}}

\newcommand{\Xb}{\mathbf{X}}
\newcommand{\Wb}{\mathbf{W}}

\newcommand{\Yb}{\mathbf{Y}}
\newcommand{\Jb}{\mathbf{J}}

\newcommand{\parens}[1]{\left(#1\right)}

\newcommand{\brackets}[1]{\left[#1\right]}

\newcommand{\RR}{\mathbb{R}}
\newcommand{\EE}{\mathbb{E}}
\newcommand{\PP}{\mathbb{P}}


\DeclarePairedDelimiter\floor{\lfloor}{\rfloor}


\newcommand{\cN}{\ensuremath{\mathcal{N}}}

\newcommand{\cO}{\ensuremath{\mathcal{O}}}

\newcommand{\cL}{\ensuremath{\mathcal{L}}}

\newcommand{\cJ}{\ensuremath{\mathcal{J}}}

\newcommand{\cG}{\ensuremath{\mathcal{G}}}
\newcommand{\cP}{\ensuremath{\mathcal{P}}}

\newcommand{\va}{\mathbf a}
\newcommand{\vb}{\mathbf b}

\newcommand{\vx}{\mathbf x}
\newcommand{\vy}{\mathbf y}

\newcommand{\vv}{\mathbf v}

\newcommand{\vu}{\mathbf u}
\newcommand{\vw}{\mathbf w}

\newcommand{\mA}{\ensuremath{\mathbf{A}}}
\newcommand{\mW}{\ensuremath{\mathbf{W}}}

\newcommand{\mG}{\ensuremath{\mathbf{G}}}
\newcommand{\mX}{\ensuremath{\mathbf{X}}}

\newcommand{\mY}{\ensuremath{\mathbf{Y}}}
\newcommand{\mB}{\ensuremath{\mathbf{B}}}
\newcommand{\mM}{\ensuremath{\mathbf{M}}}
\newcommand{\mT}{\ensuremath{\mathbf{T}}}
\newcommand{\mK}{\ensuremath{\mathbf{K}}}
\newcommand{\mV}{\ensuremath{\mathbf{V}}}

\newcommand{\mH}{\ensuremath{\mathbf{H}}}

\newcommand{\mS}{\ensuremath{\mathbf{S}}}

\newcommand\norm[1]{\left\lVert#1\right\rVert}
\newcommand\abs[1]{\left\lvert#1\right\rvert}

\newtheorem{theorem}{Theorem}[section]
\newtheorem{assumption}{Assumption}
\newtheorem{corollary}[theorem]{Corollary}
\newtheorem{lemma}[theorem]{Lemma}
\newtheorem{definition}[theorem]{Definition}

\newtheorem{observation}[theorem]{Observation}
\usepackage{natbib}
\usepackage{hyperref}
\usepackage{url}
\usepackage{enumitem}

\usepackage{thm-restate}

\title{Characterizing the spectrum of the NTK via a power series expansion}

\author{$\dagger$$^*$ Michael Murray, $\dagger$$^*$ Hui Jin, $\dagger$$^*$ Benjamin Bowman, $\dagger \ddag \mathsection$ Guido Montufar\\
  $\dagger$ Department of Mathematics, UCLA, CA, USA\\
  $\ddag$ Department of Statistics, UCLA, CA, USA\\
  $\mathsection$ Max Planck Institute for Mathematics in the Sciences, Leipzig, Germany \\
  \texttt{[mmurray,huijin,benbowman314,montufar]@math.ucla.edu} \\
  $^*$Equal contribution
}

\begin{document}

\maketitle
\begin{abstract}
Under mild conditions on the network initialization we derive a power series expansion for the Neural Tangent Kernel (NTK) of arbitrarily deep feedforward networks in the infinite width limit. We provide expressions for the coefficients of this power series which depend on both the Hermite coefficients of the activation function as well as the depth of the network. We observe faster decay of the Hermite coefficients leads to faster decay in the NTK coefficients and explore the role of depth. Using this series, first we relate the effective rank of the NTK to the effective rank of the input-data Gram. Second, for data drawn uniformly on the sphere we study the eigenvalues of the NTK, analyzing the impact of the choice of activation function. Finally, for generic data and activation functions with sufficiently fast Hermite coefficient decay, we derive an asymptotic upper bound on the spectrum of the NTK.
\end{abstract}

\section{Introduction} \label{sec:intro}
Neural networks currently dominate modern artificial intelligence, however, despite their empirical success establishing a principled theoretical foundation for them remains an active challenge. The key difficulties are that neural networks induce nonconvex optimization objectives \citep{Sontag89backpropagationcan} and typically operate in an overparameterized regime which precludes classical statistical learning theory \citep{DBLP:books/daglib/0025992}. The persistent success of overparameterized models tuned via non-convex optimization suggests that the relationship between the parameterization, optimization, and generalization is more sophisticated than that which can be addressed using classical theory. 
\par
A recent breakthrough on understanding the success of overparameterized networks was established through the Neural Tangent Kernel (NTK) \citep{jacot_ntk}.  In the infinite width limit the optimization dynamics are described entirely by the NTK and the parameterization behaves like a linear model \citep{Lee2019WideNN-SHORT}.  In this regime explicit guarantees for the optimization and generalization can be obtained \citep{du2019gradient,du2018gradient,fine_grain_arora, allenzhu2019convergence, zou2020gradient}.  While one must be judicious when extrapolating insights from the NTK to finite width networks \citep{10.5555/3495724.3496995}, the NTK remains one of the most promising avenues for understanding deep learning on a principled basis.
\par
The spectrum of the NTK is fundamental to both the optimization and generalization of wide networks. In particular, bounding the smallest eigenvalue of the NTK Gram matrix is a staple technique for establishing convergence guarantees for the optimization \citep{du2019gradient,du2018gradient,solt_mod_over}.  Furthermore, the full spectrum of the NTK Gram matrix governs the dynamics of the empirical risk \citep{arora_exact_comp}, and the eigenvalues of the associated integral operator characterize the dynamics of the generalization error outside the training set \citep{bowman2022spectral, bowman2022implicit}. Moreover, the decay rate of the generalization error for Gaussian process regression using the NTK can be characterized by the decay rate of the spectrum \citep{caponnetto2007optimal, cui2021generalization,jin2022learning}. 
\par
The importance of the spectrum of the NTK has led to a variety of efforts to characterize its structure via random matrix theory and other tools \citep{https://doi.org/10.48550/arxiv.1907.10599,NEURIPS2020_572201a4}.  There is a broader body of work studying the closely related Conjugate Kernel, Fisher Information Matrix, and Hessian \citep{Poole2016,pennington_nonlinear,pennington_shallow,10.2307/26542333,Karakida_2020}.  These results often require complex random matrix theory or operate in a regime where the input dimension is sent to infinity.  By contrast, using a just a power series expansion we are able to characterize a variety of attributes of the spectrum for fixed input dimension and recover key results from prior work. 

\subsection{Contributions} 
    In Theorem~\ref{theorem:ntk_power_series} we derive coefficients for the power series expansion of the NTK under unit variance initialization, see Assumption \ref{assumption:init_var_1}. Consequently we are able to derive insights into the NTK spectrum, notably concerning the outlier eigenvalues as well as the asymptotic decay. 
\begin{itemize}[leftmargin=*] 
    \item In Theorem~\ref{thm:infinite_effective_constant_bd} and Observation~\ref{obs:order_one_outliers} we demonstrate that the largest eigenvalue $\lambda_1(\mK)$ of the NTK takes up an $\Omega(1)$ proportion of the trace and that there are $O(1)$ outlier eigenvalues of the same order as $\lambda_1(\mK)$. 
    \item In Theorem~\ref{thm:infinite_effective_rank_bd} and Theorem~\ref{thm:main_effective_rank_bd} we show that the effective rank $Tr(\mK) / \lambda_1(\mK)$ of the NTK is upper bounded by a constant multiple of the effective rank $Tr(\mX \mX^T) / \lambda_1(\mX \mX^T)$ of the input data Gram matrix for both infinite and finite width networks.  
    \item In Corollary~\ref{cor:ReLUbias0} and Theorem~\ref{theorem:informal_ub_eigs_nonuniform} we characterize the asymptotic behavior of the NTK spectrum for both uniform and nonuniform data distributions on the sphere. 
\end{itemize}

\subsection{Related work}
\textbf{Neural Tangent Kernel (NTK):} the NTK was introduced by \cite{jacot_ntk}, who demonstrated that in the infinite width limit neural network optimization is described via a kernel gradient descent.  As a consequence, when the network is polynomially wide in the number of samples, global convergence guarantees for gradient descent can be obtained \citep{du2019gradient,du2018gradient,allenzhu2019convergence, zou2019improved,Lee2019WideNN-SHORT,zou2020gradient,solt_mod_over,marco,nguyenrelu}. Furthermore, the connection between infinite width networks and Gaussian processes, which traces back to \cite{neal1996}, has been reinvigorated in light of the NTK. Recent investigations include  \cite{LeeBNSPS18,matthews2018gaussian,bayesianconv}. 

\textbf{Analysis of NTK Spectrum:} theoretical analysis of the NTK spectrum via random matrix theory was investigated by \cite{https://doi.org/10.48550/arxiv.1907.10599,NEURIPS2020_572201a4} in the high dimensional limit. \cite{NEURIPS2021_14faf969} demonstrated that for ReLU networks the spectrum of the NTK integral operator asymptotically follows a power law, which is consistent with our results for the uniform data distribution. \cite{uniform_sphere_data} calculated the NTK spectrum for shallow ReLU networks under the uniform distribution, which was then expanded to the nonuniform case by \cite{10.5555/3524938.3525002}. \cite{geifman2022on} analyzed the spectrum of the conjugate kernel and NTK for convolutional networks with ReLU activations whose pixels are uniformly distributed on the sphere.  \cite{geifman2020similarity, bietti2021deep,chen2021deep} analyzed the reproducing kernel Hilbert spaces of the NTK for ReLU networks and the Laplace kernel via the decay rate of the spectrum of the kernel.  In contrast to previous works, we are able to address the spectrum in the finite dimensional setting and characterize the impact of different activation functions on it. 

\textbf{Hermite Expansion:} \cite{dual_view} used Hermite expansion to the study the expressivity of the Conjugate Kernel. \cite{pmlr-v162-simon22a} used this technique to demonstrate that any dot product kernel can be realized by the NTK or Conjugate Kernel of a shallow, zero bias network. \cite{solt_mod_over} use Hermite expansion to study the NTK and establish a quantitative bound on the smallest eigenvalue for shallow networks. This approach was incorporated by \cite{marco} to handle convergence for deep networks, with sharp bounds on the smallest NTK eigenvalue for deep ReLU networks provided by \cite{nguyen_tight_bounds}. The Hermite approach was utilized by \cite{Panigrahi2020Effect} to analyze the smallest NTK eigenvalue of shallow networks under various activations. Finally, in a concurrent work \cite{han2022fast} use Hermite expansions to develop a principled and efficient polynomial based approximation algorithm for the NTK and CNTK. In contrast to the aforementioned works, here we employ the Hermite expansion to characterize both the outlier and asymptotic portions of the spectrum for both shallow and deep networks under general activations. 

\section{Preliminaries} \label{subsec:preliminaries}
For our notation, lower case letters, e.g., $x,y$, denote scalars, lower case bold characters, e.g., $\vx, \vy$ are for vectors, and upper case bold characters, e.g., $\mX, \mY$, are for matrices.  For natural numbers $k_1, k_2 \in \mathbb{N}$ we let $[k_1] = \{1, \ldots, k_1\}$ and $[k_2, k_1] = \{k_2, \ldots, k_1\}$. If $k_2>k_1$ then $[k_2,k_1]$ is the empty set. We use $\norm{\cdot}_p$ to denote the $p$-norm of the matrix or vector in question and as default use $\norm{\cdot}$ as the operator or 2-norm respectively. We use $\mathbf{1}_{m \times n} \in \mathbb{R}^{m \times n}$ to denote the matrix with all entries equal to one. We define $\delta_{p=c}$ to take the value $1$ if $p=c$ and be zero otherwise. We will frequently overload scalar functions $\phi:\reals \rightarrow \reals$ by applying them elementwise to vectors and matrices. The entry in the $i$th row and $j$th column of a matrix we access using the notation $[\mX]_{ij}$. The Hadamard or entrywise product of two matrices $\mX, \mY \in \reals^{m \times n}$ we denote $\mX \odot \mY$ as is standard. The $p$th Hadamard power we denote $\mX^{\odot p}$ and define it as the Hadamard product of $\mX$ with itself $p$ times, 
\[
\mX^{\odot p} \defeq \mX \odot \mX \odot \cdots \odot \mX.
\]
Given a Hermitian or symmetric matrix $\mX \in \reals^{n \times n}$, we adopt the convention that $\lambda_i(\mX)$ denotes the $i$th largest eigenvalue,
\[
\lambda_1(\mX) \geq \lambda_2(\mX) \geq \cdots \geq \lambda_n(\mX).
\]
Finally, for a square matrix $\mX \in \mathbb{R}^{n \times n}$ we let $Tr(\mX) = \sum_{i = 1}^n [\mX]_{ii}$ denote the trace.

\subsection{Hermite Expansion}
We say that a function $f\colon \reals \rightarrow \reals$ is square integrable with respect to the standard Gaussian measure $\gamma(z) = \frac{1}{\sqrt{2 \pi}} e^{-z^2/2}$ if $\expec_{X \sim \cN(0,1)}[f(X)^2]< \infty$. 
We denote by $L^2(\reals,\gamma)$ the space of all such functions. The normalized probabilist's Hermite polynomials are defined as
\begin{align*}
	h_k(x)=\frac{{(-1)}^ke^{x^2/2}}{\sqrt{k!}} \frac{d^{k}}{d x^{k}} e^{-x^2/2}, \quad k=0,1,\ldots 
\end{align*}
and form a complete orthonormal basis in $L^2(\reals,\gamma)$ \cite[\S 11]{donnellbook}. The Hermite expansion of a function $\phi \in L^2(\reals ,\gamma)$ is given by $\phi(x)= \sum_{k=0}^\infty \mu_k(\phi) h_k(x)$, where $\mu_k(\phi) = \expec_{X \sim \cN(0,1)}[\phi(X)h_k(X)]$ is the $k$th normalized probabilist's Hermite coefficient of $\phi$. 

\subsection{NTK Parametrization} 
In what follows, for $n,d\in \naturals$ let $\mX \in \reals^{n \times d}$ denote a matrix which stores $n$ points in $\reals^d$ row-wise. Unless otherwise stated, we assume $d \leq n$ and denote the $i$th row of $\mX_n$ as $\vx_i$. In this work we consider fully-connected neural networks of the form $f^{(L+1)}\colon \reals^d \rightarrow \reals$ with $L \in \naturals$ hidden layers and a linear output layer. For a given input vector $\vx \in \reals^d$, the activation $f^{(l)}$ and preactivation $g^{(l)}$ at each layer $l \in [L+1]$ are defined via the following recurrence relations, 
\begin{equation}\label{eq:ffnn}
    \begin{aligned}
    &g^{(1)}(\vx) = \gamma_w \mW^{(1)}\vx + \gamma_b\vb^{(1)}, \; f^{(1)}(\vx) = \phi \left( g^{(1)} (\vx) \right),\\
    &g^{(l)}(\vx) = \frac{\sigma_w}{\sqrt{m_{l-1}}} \mW^{(l)} f^{(l-1)}(\vx) + \sigma_b\vb^{(l)}, \; f^{(l)}(\vx) = \phi \left( g^{(l)} (\vx) \right), \; \forall l \in [2,L],\\
    &g^{(L+1)}(\vx) =  \frac{\sigma_w}{\sqrt{m_L}}\mW^{(L+1)}f^{(L)}(\vx), \; f^{(L+1)}(\vx) = g^{(L+1)}(\vx). 
    \end{aligned}
\end{equation}
The parameters $\mW^{(l)} \in \reals^{m_l \times m_{l-1}}$ and $\vb^{(l)} \in \reals^{m_l}$ are the weight matrix and bias vector at the $l$th layer respectively, $m_0 = d$, $m_{L+1} = 1$, and $\phi\colon \reals \rightarrow \reals$ is the activation function applied elementwise. The variables $\gamma_w, \sigma_w \in \reals_{>0}$ and $\gamma_b, \sigma_b \in \reals_{\geq0}$ correspond to weight and bias hyperparameters respectively.
Let $\theta_l \in \reals^p$ denote a vector storing the network parameters $(\mW^{(h)}, \vb^{(h)})_{h=1}^{l}$ up to and including the $l$th layer. The Neural Tangent Kernel \citep{jacot_ntk} $\tilde{\Theta}^{(l)}\colon \reals^d \times \reals^d \rightarrow \reals$ associated with $f^{(l)}$ at layer $l \in [L+1]$ is defined as
\begin{equation} \label{eq:ntk}
    \tilde{\Theta}^{(l)}(\vx, \vy) := \langle \nabla_{\theta_l} f^{(l)}(\vx), \nabla_{\theta_l} f^{(l)}(\vy) \rangle. 
\end{equation}
We will mostly study the NTK under the following standard assumptions.
\begin{assumption}\label{assumptions:kernel_regime}
NTK initialization. 
    \begin{enumerate}[leftmargin=*]
        \item At initialization all network parameters are distributed as $\cN(0,1)$ and are mutually independent.
        \item The activation function satisfies $\phi \in L^2(\reals, \gamma)$, is differentiable almost everywhere and its derivative, which we denote $\phi'$, also satisfies $\phi' \in L^2(\reals, \gamma)$.
        \item The widths are sent to infinity in sequence, $m_1 \rightarrow \infty, m_2 \rightarrow \infty, \ldots, m_{L}\rightarrow \infty$. 
    \end{enumerate}
\end{assumption}
Under Assumption~\ref{assumptions:kernel_regime}, for any $l \in [L+1]$, $\tilde{\Theta}^{(l)}(\vx, \vy)$ converges in probability to a deterministic limit $\Theta^{(l)}\colon \reals^d \times \reals^d \rightarrow \reals$ \citep{jacot_ntk} and the network behaves like a kernelized linear predictor during training; see, e.g., \cite{arora_exact_comp, Lee2019WideNN-SHORT, pmlr-v125-woodworth20a}. Given access to the rows $(\vx_i)_{i=1}^n$ of $\mX$ the NTK matrix at layer $l\in [L+1]$, which we denote $\mK_{l}$, is the $n \times n$ matrix with entries defined as 
\begin{equation} \label{eq:ntk_matrix_def}
[\mK_{l}]_{ij} = \frac{1}{n}\Theta^{(l)}(\vx_i, \vx_j), \; \forall (i,j) \in [n] \times [n].
\end{equation}

\section{Expressing the NTK as a power series}\label{sec:power_series}
The following assumption allows us to study a power series for the NTK of deep networks and with general activation functions. We remark that power series for the NTK of deep networks with positive homogeneous activation functions, namely ReLU, have been studied in prior works \cite{han2022fast, chen2021deep, bietti2021deep, geifman2022on}. We further remark that while these works focus on the asymptotics of the NTK spectrum we also study the large eigenvalues.
\begin{assumption} \label{assumption:init_var_1}
    The hyperparameters of the network satisfy $\gamma_w^2 + \gamma_b^2 = 1$, $\sigma_w^2 \expec_{Z \sim \cN(0,1)}[\phi(Z)^2] \leq 1$ and $\sigma_b^2 = 1 - \sigma_w^2\expec_{Z \sim \cN(0,1)} [\phi(Z)^2]$. The data is normalized so that $\norm{\vx_i}=1$ for all $i \in [n]$.
\end{assumption}
Recall under Assumption \ref{assumptions:kernel_regime} that the preactivations of the network are centered Gaussian processes \citep{neal1996, LeeBNSPS18}. 
 Assumption~\ref{assumption:init_var_1} ensures the preactivation of each neuron has unit variance and thus is reminiscent of the \cite{LeCuBottOrrMull9812}, \cite{pmlr-v9-glorot10a} and \cite{7410480} initializations, which are designed to avoid vanishing and exploding gradients. We refer the reader to Appendix~\ref{appendix:unit_var_init} for a thorough discussion. Under Assumption~\ref{assumption:init_var_1} we will show it is possible to write the NTK not only as a dot-product kernel but also as an analytic power series on $[-1,1]$ and derive expressions for the coefficients. In order to state this result recall, given a function $f \in L^2(\reals, \gamma)$, that the $p$th normalized probabilist's Hermite coefficient of $f$ is denoted $\mu_p(f)$, we refer the reader to Appendix~\ref{appendix:background_hermite} for an overview of the Hermite polynomials and their properties. Furthermore, letting $\bar{a} = (a_j)_{j=0}^{\infty}$ denote a sequence of real numbers, then for any $p, k \in \ints_{\geq 0}$ we define
\begin{equation} \label{eq:def_F}
    F(p,k,\bar{a}) = 
    \begin{cases}
        1, \; &k=0 \text{ and } p = 0, \\
        0, \; &k=0 \text{ and } p \geq 1,\\
        \sum_{(j_i) \in \cJ(p,k)} \prod_{i=1}^k a_{j_i}, \; &k\geq 1 \text{ and } p \geq 0,
    \end{cases}
\end{equation}
where
\[
    \cJ(p,k) \defeq \big\{ (j_i)_{i \in [k]} \; : \; j_i \geq 0 \; \forall i \in [k], \; \sum_{i=1}^k j_i = p  \big\} \quad \text{for all $p \in \ints_{\geq 0}$, $k \in \naturals$}. 
\]
Here $\cJ(p,k)$ is the set of all $k$-tuples of nonnegative integers which sum to $p$ and $F(p,k,\bar{a})$ is therefore the sum of all ordered products of $k$ elements of $\bar{a}$ whose indices sum to $p$. We are now ready to state the key result of this section, Theorem~\ref{theorem:ntk_power_series}, whose proof is provided in Appendix~\ref{appendix:subsec:deriving_power_series}. 

\begin{restatable}{theorem}{ntkPowerSeries}\label{theorem:ntk_power_series}
    Under Assumptions \ref{assumptions:kernel_regime} and \ref{assumption:init_var_1}, for all $l \in [L+1]$
    \begin{equation}\label{eq:ntk_power_series}
        n\mK_{l} = \sum_{p=0}^{\infty} \kappa_{p,l}\left( \mX \mX^T \right)^{\odot p}.
    \end{equation}
    The series for each entry $n[\mK_{l}]_{ij}$ converges absolutely and the coefficients $\kappa_{p,l}$ are nonnegative and can be evaluated using the recurrence relationships
    \begin{equation}\label{eq:recurrence_ntk_coeffs}
        \kappa_{p,l} = 
        \begin{cases}
            \delta_{p=0}\gamma_b^2 + \delta_{p=1}\gamma_w^2, & l=1,\\
            \alpha_{p,l} + \sum_{q = 0}^p \kappa_{q,l-1}\upsilon_{p-q,l}, &l \in [2,L+1],
        \end{cases}
    \end{equation}
    where
    \begin{equation} \label{eq:recurrence_alpha_coeffs}
        \alpha_{p,l} = 
        \begin{cases}
            \sigma_w^2 \mu_p^2(\phi) + \delta_{p=0}\sigma_b^2, &l=2,\\
            \sum_{k=0}^{\infty}\alpha_{k,2} F(p,k,\bar{\alpha}_{l-1}), &l\geq 3, 
        \end{cases}
    \end{equation}
    and
    \begin{equation} \label{eq:recurrence_upsilon_coeffs}
        \upsilon_{p,l} = 
        \begin{cases}
            \sigma_w^2 \mu_p^2(\phi'), &l=2,\\
            \sum_{k=0}^{\infty} \upsilon_{k,2} F(p,k,\bar{\alpha}_{l-1}), &l\geq3, 
        \end{cases}
    \end{equation}
    are likewise nonnegative for all $p \in \ints_{\geq 0}$ and $l \in [2, L+1]$.
\end{restatable}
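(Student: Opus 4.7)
The plan is to reduce the claim to the standard NTK recursion in terms of the conjugate (NNGP) kernel $\Sigma^{(l)}$ and the derivative kernel $\dot\Sigma^{(l)}$, and to expand each as a power series in $\langle \vx,\vy\rangle$ via the Hermite basis. Under Assumption~\ref{assumptions:kernel_regime} one has $\Theta^{(1)} = \Sigma^{(1)}$ and, for $l\geq 2$,
\[
\Theta^{(l)}(\vx,\vy) \;=\; \Sigma^{(l)}(\vx,\vy) + \dot\Sigma^{(l)}(\vx,\vy)\,\Theta^{(l-1)}(\vx,\vy),
\]
where $\Sigma^{(l)}(\vx,\vy) = \sigma_w^2\,\EE_{(U,V)\sim\cN(\mathbf{0},\Lambda^{(l-1)})}[\phi(U)\phi(V)] + \sigma_b^2$, $\Lambda^{(l-1)}$ is the $2\times 2$ covariance built from the diagonal and off-diagonal entries of $\Sigma^{(l-1)}$ at $(\vx,\vy)$, and $\dot\Sigma^{(l)}$ is defined analogously with $\phi'$ in place of $\phi$. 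Once each $\Sigma^{(l)}$ and $\dot\Sigma^{(l)}$ is expressed as a power series in $\langle\vx,\vy\rangle$, the Cauchy product induced by the NTK recursion immediately yields the stated recurrence for $\kappa_{p,l}$; the base case $\kappa_{p,1}$ is read off directly from $\Sigma^{(1)}(\vx,\vy) = \gamma_w^2\langle\vx,\vy\rangle + \gamma_b^2$.

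The analytic input is the Hermite orthogonality identity: for $(U,V)$ jointly standard normal with correlation $\rho$, $\EE[h_j(U)h_k(V)] = \delta_{j=k}\rho^k$, so for $\phi\in L^2(\reals,\gamma)$ with Hermite expansion $\phi = \sum_k \mu_k(\phi) h_k$,
\[
\EE[\phi(U)\phi(V)] \;=\; \sum_{k\geq 0}\mu_k^2(\phi)\,\rho^k.
\]
Assumption~\ref{assumption:init_var_1} is precisely calibrated so that $\Sigma^{(l)}(\vx,\vx)=1$ for every unit-norm $\vx$ and every $l$: $\Sigma^{(1)}(\vx,\vx) = \gamma_w^2\|\vx\|^2 + \gamma_b^2 = 1$, and inductively $\Sigma^{(l)}(\vx,\vx) = \sigma_w^2\,\EE_{Z\sim\cN(0,1)}[\phi(Z)^2] + \sigma_b^2 = 1$. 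Hence $(U,V)$ has unit marginal variances at every layer and Mehler applies to give
\[
\Sigma^{(l)}(\vx,\vy) \;=\; \sum_{k\geq 0}\alpha_{k,2}\bigl[\Sigma^{(l-1)}(\vx,\vy)\bigr]^{k},\qquad l\geq 2,
\]
with $\alpha_{k,2} = \sigma_w^2\mu_k^2(\phi) + \sigma_b^2\delta_{k=0}$, together with the analogous identity for $\dot\Sigma^{(l)}$ involving $\upsilon_{k,2} = \sigma_w^2\mu_k^2(\phi')$. Substituting the inductive expansion $\Sigma^{(l-1)}(\vx,\vy) = \sum_j \alpha_{j,l-1}\langle\vx,\vy\rangle^j$ and using the multinomial identity
\[
\Bigl(\sum_{j\geq 0}a_j t^j\Bigr)^k \;=\; \sum_{p\geq 0} F(p,k,\bar a)\,t^p
\]
produces the stated recursions for $\alpha_{p,l}$ and $\upsilon_{p,l}$, after which the NTK recursion yields the Cauchy-product formula for $\kappa_{p,l}$.

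Nonnegativity of every coefficient is then immediate by induction: $\mu_p^2(\phi)$, $\mu_p^2(\phi')$, $\gamma_w^2$, $\gamma_b^2$, $\sigma_w^2$, $\sigma_b^2$ are all nonnegative, and the operations involved ($F$-type multinomial sums, Cauchy products, and nonnegative linear combinations) preserve nonnegativity. Absolute convergence of the entry-wise series for $n[\mK_l]_{ij}$ follows by comparison with its value at $\vx_i = \vx_j$: since $|\langle\vx_i,\vx_j\rangle|\leq 1$ and every $\kappa_{p,l}\geq 0$, the partial sums are dominated by $\sum_{p\geq 0}\kappa_{p,l} = \Theta^{(l)}(\vx_i,\vx_i)$, which is a finite quantity built from $\EE[\phi(Z)^2]$ and $\EE[\phi'(Z)^2]$ over finitely many layers under Assumptions~\ref{assumptions:kernel_regime} and \ref{assumption:init_var_1}. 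The main technical obstacle is justifying the interchange of summations required to pass from $\sum_k\alpha_{k,2}[\Sigma^{(l-1)}(\vx,\vy)]^k$, in which $\Sigma^{(l-1)}$ is itself an infinite power series, to a single absolutely convergent power series in $\langle\vx,\vy\rangle$ with coefficients $\sum_k\alpha_{k,2}F(p,k,\bar\alpha_{l-1})$. This will be handled by Tonelli's theorem using the nonnegativity of all summands together with the diagonal bound just described, which also propagates the absolute-convergence control through the induction on $l$.
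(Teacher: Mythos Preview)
Your proposal is correct and follows essentially the same route as the paper: the paper first establishes the expansions of $\Sigma^{(l)}$ and $\dot\Sigma^{(l)}$ (equivalently $\mG_l$ and $\dot\mG_l$) via the Hermite identity $\EE[\phi(U)\phi(V)]=\sum_k\mu_k^2(\phi)\rho^k$ for unit-variance correlated Gaussians, substitutes the inductive series for $\Sigma^{(l-1)}$ and regroups using $F(p,k,\bar\alpha_{l-1})$, and then feeds these into the NTK recursion $\Theta^{(l)}=\Sigma^{(l)}+\dot\Sigma^{(l)}\Theta^{(l-1)}$ via a Cauchy/Mertens product. The only cosmetic differences are that the paper works at the matrix level and justifies the interchange of sums by the explicit bound $\sum_{p}F(p,k,\bar\alpha_{l-1})|\langle\vx_i,\vx_j\rangle|^p\leq [\mG_{l-1}]_{ii}^k=1$ rather than invoking Tonelli, which is the same argument in slightly different clothing.
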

As already remarked, power series for the NTK have been studied in previous works, however, to the best of our knowledge Theorem \ref{theorem:ntk_power_series} is the first to explicitly express the coefficients at a layer in terms of the coefficients of previous layers. To compute the coefficients of the NTK as per Theorem~\ref{theorem:ntk_power_series}, the Hermite coefficients of both $\phi$ and $\phi'$ are required. Under Assumption~\ref{assumption:phi} below, which has minimal impact on the generality of our results, this calculation can be simplified. In short, under Assumption~\ref{assumption:phi} $\upsilon_{p,2} = (p+1) \alpha_{p+1,2}$ and therefore only the Hermite coefficients of $\phi$ are required. We refer the reader to Lemma \ref{lem:derivhermiterelation} in Appendix \ref{appendix:subsec:analyzing_ntk_coefficients} for further details.

\begin{assumption}\label{assumption:phi}
The activation function $\phi\colon\reals \rightarrow \reals$ is absolutely continuous on $[-a, a]$ for all $a > 0$, differentiable almost everywhere, and is polynomially bounded, i.e., $|\phi(x)| = \mathcal{O}(|x|^\beta)$ for some $\beta > 0$. 
Further, the derivative $\phi'\colon\reals \rightarrow \reals$ satisfies $\phi' \in L^2(\mathbb{R}, \gamma)$. 
\end{assumption}
We remark that ReLU, Tanh, Sigmoid, Softplus and many other commonly used activation functions satisfy Assumption~\ref{assumption:phi}. In order to understand the relationship between the Hermite coefficients of the activation function and the coefficients of the NTK, we first consider the simple two-layer case with $L=1$ hidden layers. From Theorem~\ref{theorem:ntk_power_series}
\begin{equation} \label{eq:ntk_coeffs_2_layer_simple}
    \kappa_{p,2} = \sigma_w^2(1 + \gamma_w^2 p)\mu_p^2(\phi) +\sigma_w^2 \gamma_b^2 (1+p)\mu_{p+1}^2(\phi)+ \delta_{p=0} \sigma_b^2.
\end{equation}
As per Table \ref{tab:kappa_2layer_coeffs}, a general trend we observe across all activation functions is that the first few coefficients account for the large majority of the total NTK coefficient series. 

\begin{table}[H]
\footnotesize 
\begin{center}
\caption{Percentage  of $\sum_{p=0}^{\infty}\kappa_{p,2}$ accounted for by the first $T+1$ NTK coefficients assuming $\gamma_w^2=1$, $\gamma_b^2 = 0$, $\sigma_w^2 = 1$ and $\sigma_b^2 = 1 - \expec[\phi(Z)^2]$.}
\label{tab:kappa_2layer_coeffs}
\begin{tabular}{ |p{1.2cm}|p{1.2cm}|p{1.2cm}|p{1.2cm}|p{1.2cm}|p{1.2cm}|p{1.2cm}|p{1.2cm}| }
 \hline
 $T =$ & 0 & 1 & 2 & 3 & 4 & 5 \\
 \hline
ReLU & 43.944 & 77.277 & 93.192 & 93.192 & 95.403 & 95.403 \\
Tanh & 41.362 & 91.468 & 91.468 & 97.487 & 97.487 & 99.090 \\
Sigmoid & 91.557 & 99.729 & 99.729 & 99.977 & 99.977 & 99.997 \\
Gaussian & 95.834 & 95.834 & 98.729 & 98.729 & 99.634 & 99.634 \\
 \hline
\end{tabular}
\end{center}
\end{table}

However, the asymptotic rate of decay of the NTK coefficients varies significantly by activation function, due to the varying behavior of their tails. In Lemma \ref{lemma:ntk_2layer_coeff_decay} we choose ReLU, Tanh and Gaussian as prototypical examples of activations functions with growing, constant, and decaying tails respectively, and analyze the corresponding NTK coefficients in the two layer setting. For typographical ease we denote the zero mean Gaussian density function with variance $\sigma^2$ as $\omega_{\sigma}(z) \defeq (1/\sqrt{2 \pi \sigma^2}) \exp\left( -z^2/ (2 \sigma^2)\right)$.

\begin{restatable}{lemma}{NTKcoeffTwolayer}\label{lemma:ntk_2layer_coeff_decay}
    Under Assumptions \ref{assumptions:kernel_regime} and \ref{assumption:init_var_1},
    \begin{enumerate}
        \item if $\phi(z) = ReLU(z)$, then $\kappa_{p,2} = \delta_{(\gamma_b >0) \cup (p \text{ even})} \Theta(p^{-3/2})$,
        \item if $\phi(z) = Tanh(z)$, then $\kappa_{p,2} = \cO \left(\exp\left( - \frac{\pi \sqrt{p-1}}{2}\right) \right)$,
        \item if $\phi(z) = \omega_{\sigma}(z)$, then $\kappa_{p,2} = \delta_{(\gamma_b >0) \cup (p \text{ even})}\Theta(p^{1/2} (\sigma^2+1)^{-p})$. 
    \end{enumerate}
\end{restatable}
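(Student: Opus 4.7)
The plan is to work directly from the two-layer closed form (\ref{eq:ntk_coeffs_2_layer_simple}), which reduces all three claims to asymptotics for $\mu_p^2(\phi)$ and $\mu_{p+1}^2(\phi)$. Since the three activations sit in qualitatively different regimes (polynomial growth, analyticity on a finite strip, Schwartz decay), each part requires its own tool; in all three cases the parity indicator $\delta_{(\gamma_b > 0) \cup (p \text{ even})}$ appears automatically once the bare decay of $\mu_p(\phi)$ is known, from the observation in (\ref{eq:ntk_coeffs_2_layer_simple}) that the $\mu_p^2$ term carries the main contribution while the cross term $\sigma_w^2 \gamma_b^2 (1+p)\mu_{p+1}^2$ switches on only when $\gamma_b > 0$ and then supplies the odd-$p$ (or even-$p$) contribution via the opposite parity of $\mu_{p+1}$.

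For ReLU I would invoke the classical closed-form Hermite coefficients: $\mu_0 = (2\pi)^{-1/2}$, $\mu_1 = 1/2$, $\mu_p(\mathrm{ReLU}) = 0$ for odd $p \geq 3$, and $\mu_{2k}(\mathrm{ReLU}) = (-1)^{k+1}(2k-3)!!/\sqrt{2\pi (2k)!}$ for $k \geq 1$. Stirling's formula then gives $p \mu_p^2 = \Theta(p^{-3/2})$ on even $p \geq 2$, and substituting into (\ref{eq:ntk_coeffs_2_layer_simple}) with the case split on $\gamma_b$ yields the stated $\Theta(p^{-3/2})$ together with the parity indicator.

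For the Gaussian $\phi = \omega_\sigma$, I would compute $\mu_p$ in closed form. The identity $\omega_\sigma(z)\gamma(z) = (2\pi(1+\sigma^2))^{-1/2}\omega_\tau(z)$ with $\tau^2 = \sigma^2/(1+\sigma^2)$ reduces $\mu_p(\omega_\sigma)$ to $\EE[h_p(\tau Y)]$ for $Y \sim \cN(0,1)$. Using the generating function $\sum_k h_k(z) t^k/\sqrt{k!} = e^{tz - t^2/2}$ together with the Gaussian MGF gives $\EE[h_p(\tau Y)] = \sqrt{p!}\,[t^p]\,e^{(\tau^2-1) t^2/2}$, which vanishes on odd $p$ and yields an explicit ratio of factorials on even $p$. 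Stirling then delivers $\mu_p^2(\omega_\sigma) = \Theta(p^{-1/2}(1+\sigma^2)^{-p})$ on even $p$, and the parity handling proceeds exactly as in the ReLU case.

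The main obstacle is the tanh case, where there is no elementary closed form and one must exploit analyticity on $|\mathrm{Im}\,z| < \pi/2$. Two routes are natural: shift the contour in $\mu_p(\tanh) = \int \tanh(z)\, h_p(z)\gamma(z)\,dz$ up by $ib$ for some $|b| < \pi/2$ and apply saddle-point asymptotics for $h_p(x+ib)\gamma(x+ib)$; or, equivalently, use the partial-fraction expansion $\tanh z = \sum_{k \geq 0} 2z/(z^2 + ((2k+1)\pi/2)^2)$ together with the Fourier representation $z/(z^2 + a^2) = \int_0^\infty e^{-at}\sin(tz)\,dt$ and the elementary Hermite expansion $\mu_p(\sin(c\,\cdot\,)) = \sin(p\pi/2)\,c^p e^{-c^2/2}/\sqrt{p!}$. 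This converts $\mu_p(\tanh)$ into a sum over $k$ of Laplace-type integrals $\int_0^\infty t^p e^{-t^2/2 - at}\,dt$ with $a = (2k+1)\pi/2$; a saddle at $t_\ast \approx \sqrt{p}$ gives each integral size $\sqrt{p!}\,e^{-a\sqrt{p}}$ up to polynomial factors, the $k=0$ term dominates, and one obtains $|\mu_p(\tanh)| = \cO(\mathrm{poly}(p)\,e^{-\pi\sqrt{p}/2})$ on odd $p$. Substituting into (\ref{eq:ntk_coeffs_2_layer_simple}) and absorbing the polynomial prefactor into a slightly weakened exponent yields the claimed $\cO(\exp(-\pi\sqrt{p-1}/2))$. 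The delicate step is obtaining the correct exponential constant in the saddle-point estimate and uniformly controlling the tail of the $k$-sum so that it does not overwhelm the $k=0$ contribution.
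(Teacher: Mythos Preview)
Your treatment of the ReLU and Gaussian cases is essentially the paper's: compute $\mu_p(\phi)$ in closed form and reduce to a Stirling-type estimate. The paper cites Lemma~\ref{lemma:hermite_relu} and \citet[Theorem~2.9]{davis_hermite} for the coefficients rather than deriving them, and uses the Wallis-ratio bounds \eqref{eq:wallis_ratio_bounds} in place of Stirling, but these are cosmetic differences only.

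For the $\tanh$ case your route diverges from the paper's. The paper does not attempt to derive the Hermite decay of $\tanh$: it simply quotes \citet[Corollary~F.7.1]{Panigrahi2020Effect}, which gives $\mu_p(\tanh') = \cO\!\left(\exp(-\pi\sqrt{p}/4)\right)$, and then uses Lemma~\ref{lem:derivhermiterelation} to convert this to $\mu_p(\tanh) = p^{-1/2}\mu_{p-1}(\tanh') = \cO\!\left(p^{-1/2}\exp(-\pi\sqrt{p-1}/4)\right)$; squaring and multiplying by $p$ yields the claim in one line. Your Mittag--Leffler plus Laplace-method argument is self-contained and, if executed, would in fact give the sharper rate $|\mu_p(\tanh)| = \cO\!\left(\mathrm{poly}(p)\,e^{-(\pi/2)\sqrt{p}}\right)$, since the nearest pole of $\tanh$ sits at distance $\pi/2$; after squaring this overshoots what the lemma asks for. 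The cost is that you must justify the interchange of the $k$-sum, the Fourier $t$-integral and the Hermite inner product, and then make the saddle-point estimate uniform in $k$ so that the tail does not swamp the $k=0$ term---real work, though standard. The paper's route is a two-line citation; yours is more self-contained and explains the origin of the exponential constant, but is substantially longer to make rigorous.
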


The trend we observe from Lemma \ref{lemma:ntk_2layer_coeff_decay} is that activation functions whose Hermite coefficients decay quickly, such as $\omega_{\sigma}$, result in a faster decay of the NTK coefficients. We remark that analyzing the rates of decay for $l \geq 3$ is challenging due to the calculation of $F(p, k, \bar{\alpha}_{l-1})$ \eqref{eq:def_F}. In Appendix \ref{subsec:appendix:deep_decay} we provide preliminary results in this direction, upper bounding, in a very specific setting, the decay of the NTK coefficients for depths $l\geq 2$. Finally, we briefly pause here to highlight the potential for using a truncation of \eqref{eq:ntk_power_series} in order to perform efficient numerical approximation of the infinite width NTK. We remark that this idea is also addressed in a concurrent work by \cite{han2022fast}, albeit under a somewhat different set of assumptions \footnote{In particular, in \cite{han2022fast} the authors focus on homogeneous activation functions and allow the data to lie off the sphere. By contrast, we require the data to lie on the sphere but can handle non-homogeneous activation functions in the deep setting.}. As per our observations thus far that the coefficients of the NTK power series \eqref{eq:ntk_power_series} typically decay quite rapidly, one might consider approximating $\Theta^{(l)}$ by computing just the first few terms in each series of \eqref{eq:ntk_power_series}. Figure~\ref{fig:error_truncated_ntk} in Appendix~\ref{appendix:numerical_approx_ntk} displays the absolute error between the truncated ReLU NTK and the analytical expression for the ReLU NTK, which is also defined in Appendix~\ref{appendix:numerical_approx_ntk}. Letting $\rho$ denote the input correlation then the key takeaway is that while for $\abs{\rho}$ close to one the approximation is poor, for $\abs{\rho}<0.5$, which is arguably more realistic for real-world data, with just $50$ coefficients machine level precision can be achieved. We refer the interested reader to Appendix~\ref{appendix:numerical_approx_ntk} for a proper discussion.

\section{Analyzing the spectrum of the NTK via its power series}\label{sec:ntk_spectrum}
In this section, we consider a general kernel matrix power series of the form $n \mK = \sum_{p = 0}^\infty c_p (\mX \mX^T)^{\odot p}$ where $\{c_p\}_{p = 0}^\infty$ are coefficients and $\mX$ is the data matrix. According to Theorem~\ref{theorem:ntk_power_series}, the coefficients of the NTK power series \eqref{eq:ntk_power_series} are always nonnegative, thus we only consider the case where $c_p$ are nonnegative. We will also consider the kernel function power series, which we denote as $K(x_1,x_2) = \sum_{p = 0}^{\infty} c_p \langle x_1,x_2\rangle^p$. Later on we will analyze the spectrum of kernel matrix $\mK$ and kernel function $K$.
\subsection{Analysis of the upper spectrum and effective rank} \label{subsec:partial_trace}
In this section we analyze the upper part of the spectrum of the NTK, corresponding to the large eigenvalues, using the power series given in Theorem~\ref{theorem:ntk_power_series}.  Our first result concerns the \textit{effective rank} \citep{DBLP:journals/simods/HuangHV22} of the NTK.  
Given a positive semidefinite matrix $\mA \in \reals^{n \times n}$ we define the effective rank of $\mA$ to be
\[ 
\mathrm{eff}(\mA) = \frac{Tr(\mA)}{\lambda_1(\mA)}. 
\] 
The effective rank quantifies how many eigenvalues are on the order of the largest eigenvalue. This follows from the Markov-like inequality 
\begin{equation}\label{eq:eigmarkov}
\abs{\{ p : \lambda_p(\mA) \geq c \lambda_1(\mA)\}} \leq c^{-1} \mathrm{eff}(\mA)
\end{equation}
and the eigenvalue bound
\[ \frac{\lambda_p(\Ab)}{\lambda_1(\Ab)} \leq \frac{\mathrm{eff}(\Ab)}{p}. \]
\par
Our first result is that the effective rank of the NTK can be bounded in terms of a ratio involving the power series coefficients.  As we are assuming the data is normalized so that $\norm{\xb_i} = 1$ for all $i \in [n]$, then observe by the linearity of the trace 
\[ 
Tr(n \mK) = \sum_{p = 0}^\infty c_p Tr((\mX \mX^T)^{\odot p}) = n \sum_{p = 0}^\infty c_p, 
\]
where we have used the fact that $Tr((\mX \mX^T)^{\odot p}) = n$ for all $p \in \mathbb{N}$.  On the other hand, 
\[ 
\lambda_1(n \mK) \geq \lambda_1(c_0 (\mX \mX^T)^0) = \lambda_1(c_0 \mathbf{1}_{n \times n} ) = n c_0. 
\]
Combining these two results we get the following theorem. 
\begin{restatable}{theorem}{infiniteeffectiveconstantbd}\label{thm:infinite_effective_constant_bd}
Assume that we have a kernel Gram matrix $\mK$ of the form $n \mK = \sum_{p = 0}^\infty c_p (\mX \mX^T)^{\odot p}$ where $c_0 \neq 0$.  Furthermore, assume the input data $\xb_i$ are normalized so that $\norm{\xb_i} = 1$ for all $i \in [n]$.  Then 
\[ 
\mathrm{eff}(\mK) \leq \frac{\sum_{p = 0}^\infty c_p}{c_0}. 
\]
\end{restatable}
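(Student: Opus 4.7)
The proof is essentially sketched already in the paragraph preceding the theorem statement, so my plan is to organize those pieces into a clean argument that carefully justifies the lower bound on $\lambda_1(\mK)$. Writing $\mathrm{eff}(\mK) = Tr(\mK)/\lambda_1(\mK) = Tr(n\mK)/\lambda_1(n\mK)$, the proof splits into an upper bound on the trace and a lower bound on the top eigenvalue.

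For the trace, I would exchange the sum and trace (permissible by the absolute convergence of each entry of the series, inherited from Theorem~\ref{theorem:ntk_power_series}, or equivalently as an assumption in this more abstract setting), and then use the normalization $\|\xb_i\|=1$ to note that the diagonal of $(\mX\mX^T)^{\odot p}$ is identically $1$, so $Tr((\mX\mX^T)^{\odot p}) = n$. This yields $Tr(n\mK) = n\sum_{p=0}^\infty c_p$.

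For the lower bound on $\lambda_1$, the key observation is that each matrix $(\mX\mX^T)^{\odot p}$ is positive semidefinite by the Schur product theorem (as the Hadamard power of the Gram matrix $\mX\mX^T$), and since all $c_p \geq 0$ by the standing assumption in this section, $n\mK$ is a sum of PSD matrices. For any such sum, $\lambda_1$ dominates any single summand: if $\vv$ is the top unit eigenvector of the $p=0$ term $c_0 \mathbf{1}_{n\times n}$, then $\vv^T (n\mK) \vv \geq c_0 \vv^T \mathbf{1}_{n\times n} \vv = c_0 n$, using that $\mathbf{1}_{n\times n}$ has top eigenvalue $n$ (with eigenvector $n^{-1/2}(1,\ldots,1)^T$). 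Hence $\lambda_1(n\mK) \geq c_0 n$.

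Combining the two estimates gives $\mathrm{eff}(\mK) \leq (n \sum_p c_p)/(c_0 n) = \sum_p c_p / c_0$, which is the claim. I do not anticipate any real obstacle; the only minor point requiring attention is the interchange of trace and infinite sum, which is handled by the nonnegativity of the summands together with the absolute convergence guaranteed in the intended application (Theorem~\ref{theorem:ntk_power_series}), or alternatively by first truncating the series, applying the inequality to the finite partial sums, and passing to the limit using monotone convergence of $Tr$ and monotonicity of $\lambda_1$ on the PSD cone.
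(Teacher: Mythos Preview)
Your proposal is correct and follows essentially the same approach as the paper: compute $Tr(n\mK)=n\sum_p c_p$ from the unit-norm diagonals, lower bound $\lambda_1(n\mK)\ge \lambda_1(c_0\mathbf{1}_{n\times n})=c_0 n$ using nonnegativity of the remaining PSD summands, and take the ratio. Your additional remarks on the Schur product theorem and on justifying the trace--sum interchange are welcome elaborations but do not change the argument.
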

By Theorem~\ref{theorem:ntk_power_series} $c_0 \neq 0$ provided the network has biases or the activation function has nonzero Gaussian expectation (i.e., $\mu_0(\phi) \neq 0$).  Thus we have that the effective rank of $\mK$ is bounded by an $O(1)$ quantity. In the case of ReLU for example, as evidenced by Table \ref{tab:kappa_2layer_coeffs}, the effective rank will be roughly $2.3$ for a shallow network.  By contrast, a well-conditioned matrix would have an effective rank that is $\Omega(n)$.  Combining Theorem \ref{thm:infinite_effective_constant_bd} and the Markov-type bound \eqref{eq:eigmarkov} we make the following important observation. 
\begin{observation}\label{obs:order_one_outliers}
The largest eigenvalue $\lambda_1(\mK)$ of the NTK takes up an $\Omega(1)$ fraction of the entire trace and there are $O(1)$ eigenvalues on the same order of magnitude as $\lambda_1(\mK)$, where the $O(1)$ and $\Omega(1)$ notation are with respect to the parameter $n$.
\end{observation}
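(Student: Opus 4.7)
The observation follows as a direct corollary of Theorem~\ref{thm:infinite_effective_constant_bd} combined with the Markov-type inequality \eqref{eq:eigmarkov}; no new machinery is needed. My plan is to simply assemble these two ingredients and check that the constants are indeed independent of $n$.

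First, I would appeal to Theorem~\ref{thm:infinite_effective_constant_bd} to write $\mathrm{eff}(\mK) \leq C$ where $C \defeq \sum_{p=0}^{\infty} c_p / c_0$. The hypothesis $c_0 \neq 0$ is satisfied whenever the network has biases or $\mu_0(\phi) \neq 0$, as noted after Theorem~\ref{thm:infinite_effective_constant_bd}. Rearranging the definition of effective rank immediately gives
\[
\lambda_1(\mK) \geq \frac{Tr(\mK)}{C},
\]
so $\lambda_1(\mK)$ accounts for at least a $1/C$ fraction of the trace. Since $C$ depends only on the network hyperparameters, the activation $\phi$, and the depth $L$, but not on $n$, this is an $\Omega(1)$ fraction in $n$, establishing the first claim.

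Second, for the bound on the number of outlier eigenvalues, I would apply \eqref{eq:eigmarkov} with any fixed constant $c \in (0,1]$: this yields
\[
\abs{\{ p : \lambda_p(\mK) \geq c \,\lambda_1(\mK)\}} \leq c^{-1} \mathrm{eff}(\mK) \leq c^{-1} C,
\]
which is $O(1)$ in $n$. Hence for any fixed threshold, only a constant number of eigenvalues lie on the same order of magnitude as $\lambda_1(\mK)$.

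The only subtlety is verifying that $C$ is genuinely finite and $n$-independent. Finiteness of $\sum_{p=0}^{\infty} c_p$ corresponds to absolute convergence of the power series \eqref{eq:ntk_power_series} when the inputs satisfy $\langle \vx_i, \vx_i\rangle = 1$, which is exactly the convergence statement in Theorem~\ref{theorem:ntk_power_series} evaluated on the diagonal. The $n$-independence is immediate from the recurrences \eqref{eq:recurrence_ntk_coeffs}--\eqref{eq:recurrence_upsilon_coeffs}, since the $\kappa_{p,l}$ depend only on the activation function, hyperparameters, and depth. There is no substantive obstacle beyond these bookkeeping checks.
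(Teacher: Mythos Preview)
Your proposal is correct and matches the paper's approach exactly: the paper derives this observation directly from Theorem~\ref{thm:infinite_effective_constant_bd} together with the Markov-type inequality \eqref{eq:eigmarkov}, and your write-up simply makes those two steps explicit (with the additional, harmless bookkeeping that $C$ is finite and $n$-independent).
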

While the constant term $c_0 \mathbf{1}_{n \times n}$ in the kernel leads to a significant outlier in the spectrum of $\mK$, it is rather uninformative beyond this.  What interests us is how the structure of the data $\mX$ manifests in the spectrum of the kernel matrix $\mK$.  For this reason we will examine the centered kernel matrix $\widetilde{\mK} := \mK - \frac{c_0}{n} \mathbf{1}_{n \times n}$.  By a very similar argument as before we get the following result. 

\begin{restatable}{theorem}{infiniteeffectiverankbd}\label{thm:infinite_effective_rank_bd}
Assume that we have a kernel Gram matrix $\mK$ of the form $n \mK = \sum_{p = 0}^\infty c_p (\mX \mX^T)^{\odot p}$ where $c_1 \neq 0$.  Furthermore, assume the input data $\xb_i$ are normalized so that $\norm{\xb_i} = 1$ for all $i \in [n]$.  Then the centered kernel $\widetilde{\mK} := \mK - \frac{c_0}{n} \mathbf{1}_{n \times n}$ satisfies 
\[ 
\mathrm{eff}(\widetilde{\mK}) \leq \mathrm{eff}(\Xb \Xb^T) \frac{\sum_{p = 1}^\infty c_p}{c_1}. 
\]
\end{restatable}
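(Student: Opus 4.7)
The plan is to mirror the strategy of Theorem~\ref{thm:infinite_effective_constant_bd}, but applied to $n\widetilde{\mK} = \sum_{p=1}^{\infty} c_p (\mX \mX^T)^{\odot p}$, isolating the $p=1$ term (which encodes the data geometry) as the lower bound on $\lambda_1$.

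First I would compute the trace. Since $\|\vx_i\| = 1$, the diagonal of $\mX\mX^T$ is the all-ones vector, and therefore so is the diagonal of every Hadamard power $(\mX\mX^T)^{\odot p}$. Linearity of the trace then gives
\begin{equation*}
\mathrm{Tr}(n\widetilde{\mK}) \;=\; \sum_{p=1}^{\infty} c_p \,\mathrm{Tr}\bigl((\mX\mX^T)^{\odot p}\bigr) \;=\; n\sum_{p=1}^{\infty} c_p,
\end{equation*}
so $\mathrm{Tr}(\widetilde{\mK}) = \sum_{p=1}^{\infty} c_p$. Absolute convergence of the series (Theorem~\ref{theorem:ntk_power_series}) justifies this termwise manipulation in the NTK case; in the general statement the hypothesis that the power series defines $\mK$ already encodes the needed convergence.

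Next I would lower bound $\lambda_1(\widetilde{\mK})$ by the contribution of the $p=1$ term. The Schur product theorem implies that each Hadamard power $(\mX\mX^T)^{\odot p}$ is positive semidefinite, so all tail terms $\sum_{p \geq 2} c_p (\mX\mX^T)^{\odot p}$ are PSD (using $c_p \geq 0$). For any PSD matrices $A, B$, picking $\vv$ a unit top eigenvector of $A$ yields $\lambda_1(A+B) \geq \vv^T(A+B)\vv \geq \lambda_1(A)$. Applying this with $A = c_1 \mX\mX^T$ gives
\begin{equation*}
\lambda_1(n\widetilde{\mK}) \;\geq\; \lambda_1\bigl(c_1 \mX\mX^T\bigr) \;=\; c_1\,\lambda_1(\mX\mX^T),
\end{equation*}
hence $\lambda_1(\widetilde{\mK}) \geq c_1 \lambda_1(\mX\mX^T)/n$.

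Finally I would assemble the ratio and rewrite it in the stated form. Using $\mathrm{Tr}(\mX\mX^T) = \sum_{i=1}^{n} \|\vx_i\|^2 = n$, we have $\mathrm{eff}(\mX\mX^T) = n/\lambda_1(\mX\mX^T)$. Therefore
\begin{equation*}
\mathrm{eff}(\widetilde{\mK}) \;=\; \frac{\mathrm{Tr}(\widetilde{\mK})}{\lambda_1(\widetilde{\mK})} \;\leq\; \frac{\sum_{p=1}^{\infty} c_p}{c_1\lambda_1(\mX\mX^T)/n} \;=\; \mathrm{eff}(\mX\mX^T)\,\frac{\sum_{p=1}^{\infty} c_p}{c_1},
\end{equation*}
which is the desired inequality. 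There is no real obstacle here: the only non-routine ingredient is the invocation of the Schur product theorem to ensure the tail terms contribute nonnegatively to $\lambda_1$, so that dropping them yields a legitimate lower bound rather than merely a heuristic one.
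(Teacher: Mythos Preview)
Your proof is correct and follows essentially the same approach as the paper: compute the trace termwise using unit-norm data, lower bound $\lambda_1(n\widetilde{\mK})$ by dropping the PSD tail $\sum_{p\geq 2} c_p(\mX\mX^T)^{\odot p}$ to isolate $c_1\mX\mX^T$, and assemble the ratio. You add a bit more justification (Schur product theorem, the variational argument for $\lambda_1(A+B)\geq\lambda_1(A)$) than the paper's terse version, but the argument is the same.
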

Thus we have that the effective rank of the centered kernel $\widetilde{\mK}$ is upper bounded by a constant multiple of the effective rank of the input data Gram $\mX  \mX^T$.  Furthermore, we can take the ratio $\frac{\sum_{p = 1}^\infty c_p}{c_1}$ as a measure of how much the NTK inherits the behavior of the linear kernel $\mX \mX^T$: in particular, if the input data gram has low effective rank and this ratio is moderate then we may conclude that the centered NTK must also have low effective rank.  Again from Table~\ref{tab:kappa_2layer_coeffs}, in the shallow setting we see that this ratio tends to be small for many of the common activations, for example, for ReLU it is roughly 1.3. To summarize then from Theorem~\ref{thm:infinite_effective_rank_bd} we make the important observation.
\begin{observation}\label{obs:centered_kernel_low_rank}
Whenever the input data are approximately low rank, the centered kernel matrix $\widetilde{\mK} = \mK - \frac{c_0}{n} \mathbf{1}_{n \times n}$ is also approximately low rank.
\end{observation}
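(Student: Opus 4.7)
The plan is to mimic the argument for Theorem~\ref{thm:infinite_effective_constant_bd}: bound $Tr(\widetilde{\mK})$ from above and $\lambda_1(\widetilde{\mK})$ from below, and then take the ratio. Since $\widetilde{\mK}$ removes exactly the $p = 0$ term from the power series, we have $n \widetilde{\mK} = \sum_{p=1}^\infty c_p (\mX \mX^T)^{\odot p}$.

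For the trace upper bound, I would use that the diagonal entries of $(\mX \mX^T)^{\odot p}$ are $\|\vx_i\|^{2p} = 1$ under the normalization assumption, so $Tr((\mX \mX^T)^{\odot p}) = n$ for every $p \geq 1$. Linearity of the trace together with absolute convergence of the series then gives $Tr(\widetilde{\mK}) = \sum_{p=1}^\infty c_p$.

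For the lower bound on $\lambda_1(\widetilde{\mK})$, the key step is to write
\[
n \widetilde{\mK} - c_1 \mX \mX^T = \sum_{p = 2}^\infty c_p (\mX \mX^T)^{\odot p}.
\]
By the Schur product theorem, each Hadamard power $(\mX \mX^T)^{\odot p}$ of the positive semidefinite Gram matrix $\mX \mX^T$ is itself PSD, and since all $c_p \geq 0$ (as noted at the start of Section~\ref{sec:ntk_spectrum}) the entire remainder is PSD. Weyl's inequality then yields $\lambda_1(n \widetilde{\mK}) \geq \lambda_1(c_1 \mX \mX^T) = c_1 \lambda_1(\mX \mX^T)$.

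Combining the two bounds and using $Tr(\mX \mX^T) = \sum_{i=1}^n \|\vx_i\|^2 = n$,
\[
\mathrm{eff}(\widetilde{\mK}) = \frac{Tr(n \widetilde{\mK})}{\lambda_1(n \widetilde{\mK})} \leq \frac{n \sum_{p=1}^\infty c_p}{c_1 \lambda_1(\mX \mX^T)} = \mathrm{eff}(\mX \mX^T) \cdot \frac{\sum_{p=1}^\infty c_p}{c_1},
\]
which is the desired inequality. The only genuine subtlety is the PSD remainder step, which rests on the nonnegativity of $\{c_p\}$ and the Schur product theorem; everything else is bookkeeping with the normalization $\|\vx_i\| = 1$ and absolute convergence of the entrywise series supplied by Theorem~\ref{theorem:ntk_power_series}.
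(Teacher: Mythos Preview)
Your proposal is correct and essentially reproduces the paper's own argument: the observation is stated in the paper as an immediate consequence of Theorem~\ref{thm:infinite_effective_rank_bd}, whose proof bounds $Tr(n\widetilde{\mK}) = n\sum_{p\geq 1} c_p$ via the unit-norm diagonals and bounds $\lambda_1(n\widetilde{\mK}) \geq c_1 \lambda_1(\mX\mX^T)$ by dropping the PSD remainder, exactly as you do. You have in fact been slightly more explicit than the paper in justifying the $\lambda_1$ lower bound via the Schur product theorem and Weyl's inequality, whereas the paper simply asserts $\lambda_1(n\widetilde{\mK}) \geq \lambda_1(c_1 \mX\mX^T)$.
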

\par
It turns out that this phenomenon also holds for finite-width networks at initialization. Consider the shallow model
\[
\sum_{\ell = 1}^m a_\ell \phi(\langle \vw_\ell, \vx \rangle) , 
\]
where $\xb \in \RR^d$ and $\wb_\ell \in \RR^d$, $a_\ell \in \RR$ for all $\ell \in [m]$.  The following theorem demonstrates that when the width $m$ is linear in the number of samples $n$ then $\mathrm{eff}(\mK)$ is upper bounded by a constant multiple of $\mathrm{eff}(\mX \mX^T)$.
\begin{restatable}{theorem}{effrankbd}\label{thm:main_effective_rank_bd}
Assume $\phi(x) = ReLU(x)$ and $n \geq d$.  Fix $\epsilon > 0$ small.  Suppose that $\wb_1, \ldots, \wb_m \sim N(0, \nu_1^2 I_d)$ i.i.d.\ and $a_1, \ldots, a_m \sim N(0, \nu_2^2)$.  Set $M = \max_{i \in [n]} \norm{\xb_i}_2$, and let 
\[
\Sigma := \EE_{\mathbf{w} \sim N(0, \nu_1^2 I)}[\phi(\Xb \mathbf{w}) \phi(\mathbf{w}^T \Xb^T)]. 
\]
Then
\[ 
m = \Omega\parens{\max(\lambda_1(\Sigma)^{-2}, 1) \max(n, \log(1/\epsilon))}, \quad \nu_1 = O(1/M\sqrt{m})  
\]
suffices to ensure that, with probability at least $1 - \epsilon$ over the sampling of the parameter initialization, 
\[ 
\mathrm{eff}(\mK) \leq C \cdot \mathrm{eff}(\mX \mX^T),  
\]
where $C > 0$ is an absolute constant.
\end{restatable}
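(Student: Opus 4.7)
The plan is to split the empirical NTK by parameter group,
$\mK = \mK_a + \mK_w$,
where $n\mK_a = \sum_{\ell=1}^m \phi(\mX\vw_\ell)\phi(\mX\vw_\ell)^T$ arises from the partials $\partial f/\partial a_\ell = \phi(\vw_\ell^T \vx)$ and $n\mK_w = \sum_{\ell=1}^m a_\ell^2\,[\phi'(\mX\vw_\ell)\phi'(\mX\vw_\ell)^T]\odot(\mX\mX^T)$ from the partials $\partial f/\partial\vw_\ell$. Since $\mK_w\succeq 0$, $\lambda_1(\mK)\geq \lambda_1(\mK_a)$, so the strategy is: (i) upper-bound $Tr(\mK)$ by a constant multiple of $(m/n)(\nu_1^2+\nu_2^2)Tr(\mX\mX^T)$; (ii) lower-bound $\lambda_1(\mK_a)$ by a constant multiple of $(m/n)\lambda_1(\Sigma)$ via matrix concentration; (iii) convert $\Sigma$ back into $\mX\mX^T$ via a Hermite expansion so that the scaling factors cancel in the ratio.

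For step (i), ReLU satisfies $\EE[\phi(Z)^2]=s^2/2$ for $Z\sim N(0,s^2)$ and $\phi'(z)^2\in\{0,1\}$, so $\EE\,Tr(\mK_a) = (m/n)Tr(\Sigma) = (m\nu_1^2/(2n))Tr(\mX\mX^T)$ and $\EE\,Tr(\mK_w) \leq (m\nu_2^2/(2n))Tr(\mX\mX^T)$. Both traces are sums of $mn$ independent sub-exponential scalars (squares of the sub-Gaussian entries $\phi(\vw_\ell^T\vx_i)$, respectively products $a_\ell^2\phi'(\vw_\ell^T\vx_i)^2\|\vx_i\|^2$), so a Bernstein / Hanson--Wright concentration bound yields $Tr(\mK) \leq C_1\,(m/n)(\nu_1^2+\nu_2^2)Tr(\mX\mX^T)$ with probability at least $1-\epsilon/2$ once $m\gtrsim\log(1/\epsilon)$. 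The constraint $\nu_1 = O(1/(M\sqrt m))$ keeps $\|\phi(\mX\vw_\ell)\|_2^2$ of order $n/m$ with high probability, a uniform bound reused in the next step.

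For step (ii) I apply the bounded matrix Bernstein inequality to the i.i.d.\ sum $n\mK_a = \sum_\ell\phi(\mX\vw_\ell)\phi(\mX\vw_\ell)^T$, whose summands have mean $\Sigma$. Because these summands are unbounded rank-one outer products of sub-Gaussian vectors, I first restrict to the high-probability event $\mathcal{E}=\{\max_\ell\|\phi(\mX\vw_\ell)\|_2^2\leq c\,n/m\}$ (secured by Gaussian concentration of $\|\mX\vw_\ell\|$, the bound on $\nu_1$, and a union bound over $\ell\in[m]$), and then apply Bernstein conditionally on $\mathcal{E}$. With operator-norm bound $R = c\,n/m$ and matrix-variance bound $\|\EE[Y_\ell^2]\|\leq R\|\Sigma\|$, requiring the deviation $\|n\mK_a - m\Sigma\|$ to be at most $(m/2)\lambda_1(\Sigma)$ reproduces the stated sample complexity $m = \Omega(\max(\lambda_1(\Sigma)^{-2},1)\max(n,\log(1/\epsilon)))$ with failure probability at most $\epsilon/2$, whence $\lambda_1(\mK)\geq\lambda_1(\mK_a)\geq (m/(2n))\lambda_1(\Sigma)$.

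For step (iii), in the spirit of Theorem~\ref{theorem:ntk_power_series}, I use the $1$-homogeneity of ReLU and the Mehler identity $\EE[h_k(U_i)h_k(U_j)]=\rho_{ij}^k$ for standard normals $(U_i,U_j)$ with correlation $\rho_{ij} = \langle\vx_i,\vx_j\rangle/(\|\vx_i\|\|\vx_j\|)$ to obtain $\Sigma = \nu_1^2\sum_{k=0}^\infty \mu_k(\phi)^2\,\mD_k(\mX\mX^T)^{\odot k}\mD_k$ with $\mD_k := \mathrm{diag}(\|\vx_i\|^{1-k})$. Every summand is PSD (Schur's product theorem plus conjugation by the positive diagonal $\mD_k$), so retaining only $k=1$ gives $\Sigma\succeq (\nu_1^2/4)\mX\mX^T$ (using $\mu_1(\phi)=1/2$), while the diagonal yields $Tr(\Sigma)=(\nu_1^2/2)Tr(\mX\mX^T)$. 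Substituting into the ratio, the $\nu_1^2$ factors cancel and produce $\mathrm{eff}(\mK)\leq C\cdot \mathrm{eff}(\mX\mX^T)$ with absolute $C$. The main obstacle is the matrix Bernstein step (ii): the summands are unbounded, so the truncation event $\mathcal{E}$ has to be engineered carefully to match the tail of $\|\phi(\mX\vw)\|$, and the factor $\lambda_1(\Sigma)^{-2}$ in the sample complexity is precisely the variance-over-squared-error price one pays for reducing the Bernstein error below a constant fraction of $\lambda_1(\Sigma)$.
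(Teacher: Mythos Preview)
Your decomposition $\mK = \mK_a + \mK_w$ matches the paper's split $\mK = \mK_{outer} + \mK_{inner}$, but the way you combine the two pieces has a genuine gap. You lower-bound $\lambda_1(\mK)$ only through $\lambda_1(\mK_a)\gtrsim (m/n)\lambda_1(\Sigma)\gtrsim (m\nu_1^2/n)\lambda_1(\mX\mX^T)$, while your trace bound reads $Tr(\mK)\lesssim (m/n)(\nu_1^2+\nu_2^2)Tr(\mX\mX^T)$. Dividing gives
\[
\mathrm{eff}(\mK)\;\lesssim\;\Bigl(1+\tfrac{\nu_2^2}{\nu_1^2}\Bigr)\,\mathrm{eff}(\mX\mX^T),
\]
and the prefactor $1+\nu_2^2/\nu_1^2$ is \emph{not} an absolute constant: the theorem places no constraint on $\nu_2$, so by taking $\nu_2$ large relative to $\nu_1$ your bound becomes vacuous. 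In that regime $\mK$ is dominated by $\mK_w$, and $\lambda_1(\mK_a)$ is simply the wrong scale for the largest eigenvalue.

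The paper avoids this by using subadditivity of the effective rank, $\mathrm{eff}(\mK)\le\mathrm{eff}(\mK_a)+\mathrm{eff}(\mK_w)$, and bounding each ratio separately so that the variance parameters cancel \emph{within} each term. Your treatment of $\mK_a$ is close in spirit to the paper's (they use Vershynin's sub-Gaussian row concentration rather than truncated matrix Bernstein, and relate $\Sigma$ back to $\mX\mX^T$ via the identity $x=\phi(x)-\phi(-x)$ together with the conditioning of the Gaussian matrix $\mW$, whereas your Hermite lower bound $\Sigma\succeq(\nu_1^2/4)\mX\mX^T$ is arguably cleaner). What you are missing entirely is a separate argument that $\mathrm{eff}(\mK_w)\lesssim\mathrm{eff}(\mX\mX^T)$ with an absolute constant. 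The paper obtains this through the Kronecker structure of the inner-layer Jacobian: with $\mY_i=(a_\ell\phi'(\langle\vw_\ell,\vx_i\rangle))_\ell$ one has $\mK_w=(\mY\mY^T)\odot(\mX\mX^T)$, and the key quantity is $\alpha=\sup_{\|\vb\|=1}\min_j|\langle\mY_j,\vb\rangle|$, which is lower-bounded (for Gaussian $a_\ell$ and rotationally invariant $\vw_\ell$) by finding a set of neurons with comparable outer weights that are simultaneously active on every data point. This yields $\mathrm{eff}(\mK_w)\le(\|\va\|_2^2/\alpha^2)\,\mathrm{eff}(\mX\mX^T)$ with $\|\va\|_2^2/\alpha^2=O(1)$, independent of $\nu_2$. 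You would need an analogous argument; nothing in your steps (i)--(iii) supplies it.
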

\par
Many works consider the model where the outer layer weights are fixed and have constant magnitude and only the inner layer weights are trained.  This is the setting considered by \cite{pmlr-v54-xie17a}, \cite{fine_grain_arora}, \cite{du2018gradient},  \cite{DBLP:journals/corr/abs-1906-05392}, \cite{pmlr-v108-li20j},  and \cite{solt_mod_over}.  In this setting we can reduce the dependence on the width $m$ to only be logarithmic in the number of samples $n$, and we have an accompanying lower bound.  See Theorem~\ref{thm:const_outer_inner_bound} in the Appendix \ref{sec:eff_rank_inner} for details.
\par
In Figure~\ref{fig:spectrum_cifar} we empirically validate our theory by computing the spectrum of the NTK on both Caltech101 \citep{li_andreeto_ranzato_perona_2022} and isotropic Gaussian data for feedforward networks.  We use the \texttt{functorch}\footnote{https://pytorch.org/functorch/stable/notebooks/neural\_tangent\_kernels.html} module in PyTorch \citep{NEURIPS2019_9015} using an algorithmic approach inspired by \cite{pmlr-v162-novak22a}. As per Theorem 4.1 and Observation 4.2, we observe all network architectures exhibit a dominant outlier eigenvalue due to the nonzero constant coefficient in the power series. Furthermore, this dominant outlier becomes more pronounced with depth, as can be observed if one carries out the calculations described in Theorem \ref{theorem:ntk_power_series}. Additionally, this outlier is most pronounced for ReLU, as the combination of its Gaussian mean plus bias term is the largest out of the activations considered here. As predicted by Theorem 4.3, Observation 4.4 and Theorem 4.5, we observe real-world data, which has a skewed spectrum and hence a low effective rank, results in the spectrum of the NTK being skewed. By contrast, isotropic Gaussian data has a flat spectrum, and as a result beyond the outlier the decay of eigenvalues of the NTK is more gradual. These observations support the claim that the NTK inherits its spectral structure from the data. We also observe that the spectrum for Tanh is closer to the linear activation relative to ReLU: intuitively this should not be surprising as close to the origin Tanh is well approximated by the identity. Our theory provides a formal explanation for this observation, indeed, the power series coefficients for Tanh networks decay quickly relative to ReLU. We provide further experimental results in Appendix~\ref{appendix:subsection:emp_validation_ef}, including for CNNs where we observe the same trends.  We note that the effective rank has implications for the generalization error.  The Rademacher complexity of a kernel method (and hence the NTK model) within a parameter ball is determined by its its trace \citep{journals/jmlr/BartlettM02}. Since for the NTK $\lambda_1(\mK) = O(1)$, lower effective rank implies smaller trace and hence limited complexity.
\begin{figure}[H]
    \centering
    \includegraphics[width=\textwidth]{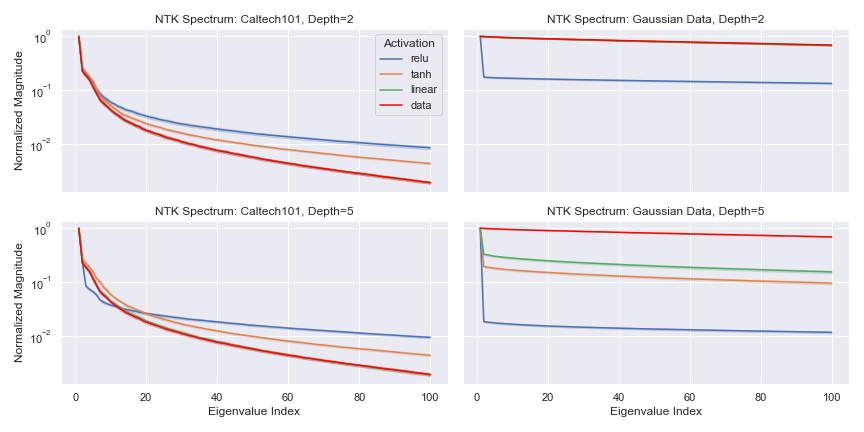}
    \caption{\textbf{ (Feedforward NTK Spectrum) } We plot the normalized eigenvalues $\lambda_p / \lambda_1$ of the NTK Gram matrix $\mK$ and the data Gram matrix $\mX \mX^T$ for Caltech101 and isotropic Gaussian datasets. To compute the NTK we randomly initialize feedforward networks of depths $2$ and $5$ with width $500$. We use the standard parameterization and Pytorch's default Kaiming uniform initialization in order to better connect our results with what is used in practice. We consider a batch size of $n = 200$ and plot the first $100$ eigenvalues. The thick part of each curve corresponds to the mean across 10 trials, while the transparent part corresponds to the 95\% confidence interval}
    \label{fig:spectrum_cifar}
\end{figure}

\subsection{Analysis of the lower spectrum}\label{subsec:lower}
In this section, we analyze the lower part of the spectrum using the power series. 
We first analyze the kernel function $K$ which we recall is a dot-product kernel of the form $K(x_1,x_2) = \sum_{p = 0}^{\infty} c_p \langle x_1,x_2\rangle^p$. 
Assuming the training data is uniformly distributed on a hypersphere it was shown by \cite{uniform_sphere_data, bietti2019inductive} that the eigenfunctions of $K$ are the spherical harmonics. \cite{azevedo2015eigenvalues} gave the eigenvalues of the kernel $K$ in terms of the power series coefficients.
\begin{restatable}{theorem}{unifEigDecay}[\cite{azevedo2015eigenvalues}]\label{thm:eigen_hypersphere}  Let $\Gamma$ denote the gamma function.  Suppose that the training data are uniformly sampled from the unit hypersphere $\mathbb{S}^d$, $d\geq 2$.
If the dot-product kernel function has the expansion $K(x_1,x_2) = \sum_{p = 0}^{\infty} c_p \langle x_1,x_2\rangle^p$ where $c_p\geq 0$, then the eigenvalue of every spherical harmonic of frequency $k$ is given by 
\begin{equation*}
    \overline{\lambda_k}=\frac{\pi^{d/2}}{2^{k-1}}\sum_{\substack{p\geq k\\p-k\text{ is even}}}c_p\frac{ \Gamma(p+1)\Gamma(\frac{p-k+1}{2})}{\Gamma(p-k+1)\Gamma(\frac{p-k+1}{2}+k+d/2)}.
\end{equation*}
\end{restatable}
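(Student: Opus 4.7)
The plan is to diagonalize the integral operator for $K$ via the Funk--Hecke formula and then reduce the computation to a family of one-dimensional integrals of monomials against Gegenbauer polynomials, which can be evaluated in closed form.

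\textbf{Step 1 (Funk--Hecke reduction).} Since $c_p\geq 0$ and $K(x,x)=\sum_p c_p<\infty$, the series $\sum_p c_p t^p$ converges absolutely and uniformly on $[-1,1]$, so $K$ is a continuous dot-product kernel on the sphere. By the classical Funk--Hecke theorem, every spherical harmonic $Y_k$ of frequency $k$ is an eigenfunction of the operator $T_K g(x)=\int_{\sphere^d} K(x,y)\,g(y)\,d\sigma(y)$, with eigenvalue of the form
\[
\overline{\lambda_k}=\omega_{d-1}\int_{-1}^{1} f(t)\,\frac{C_k^{(\alpha)}(t)}{C_k^{(\alpha)}(1)}\,(1-t^2)^{\alpha-1/2}\,dt,
\]
where $f(t)=\sum_p c_p t^p$, $\alpha=(d-1)/2$, $C_k^{(\alpha)}$ is the Gegenbauer polynomial, and $\omega_{d-1}$ is the surface area of the relevant sphere. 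This recovers the known fact that for uniform data on the sphere the spherical harmonics are eigenfunctions (cf.\ \cite{uniform_sphere_data, bietti2019inductive}); what remains is to pin down $\overline{\lambda_k}$ for our specific $f$.

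\textbf{Step 2 (reduction to monomials).} By uniform convergence, interchange the sum and integral to get $\overline{\lambda_k}=\sum_{p=0}^{\infty} c_p I_{p,k}$, where $I_{p,k}$ is the Funk--Hecke eigenvalue of the single monomial $t\mapsto t^p$. Two elementary observations cut the sum down to its stated support: the parity identity $C_k^{(\alpha)}(-t)=(-1)^k C_k^{(\alpha)}(t)$ forces $I_{p,k}=0$ whenever $p-k$ is odd, and the orthogonality of $C_k^{(\alpha)}$ against all polynomials of degree $<k$ (in the weight $(1-t^2)^{\alpha-1/2}$) forces $I_{p,k}=0$ whenever $p<k$.

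\textbf{Step 3 (closed form for $I_{p,k}$).} For $p\geq k$ with $p-k$ even I use the Rodrigues formula for $C_k^{(\alpha)}$ and integrate by parts $k$ times. The boundary terms vanish thanks to the weight $(1-t^2)^{\alpha-1/2}$, and the resulting integrand reduces to a pure beta integral of the form $\int_{-1}^{1} t^{p-k}(1-t^2)^{\alpha+k-1/2}\,dt$, which evaluates to $\Gamma(\tfrac{p-k+1}{2})\,\Gamma(\alpha+k+\tfrac12)\,/\,\Gamma(\tfrac{p-k+1}{2}+\alpha+k+\tfrac12)$. Collecting the Rodrigues prefactor $\frac{(-1)^k\Gamma(\alpha+1/2)\Gamma(2\alpha+k)}{2^k k!\,\Gamma(2\alpha)\Gamma(\alpha+k+1/2)}$, the normalization $C_k^{(\alpha)}(1)=\binom{2\alpha+k-1}{k}$, and the surface area $\omega_{d-1}$, then substituting $\alpha=(d-1)/2$, yields the stated expression up to the usual gamma-function identities.

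\textbf{Main obstacle.} The structural steps above (Funk--Hecke, linearity, parity, vanishing for $p<k$) are either classical or immediate. The real work is in Step 3: showing that the product of (i)~$\omega_{d-1}$, (ii)~the Rodrigues prefactor, (iii)~$1/C_k^{(\alpha)}(1)$, and (iv)~the beta-integral gammas collapses into the compact quotient
\[
\frac{\pi^{d/2}}{2^{k-1}}\cdot\frac{\Gamma(p+1)\,\Gamma(\tfrac{p-k+1}{2})}{\Gamma(p-k+1)\,\Gamma(\tfrac{p-k+1}{2}+k+d/2)}.
\]
This is pure gamma-function bookkeeping; the key identity used is the Legendre duplication formula $\Gamma(2z)=\tfrac{2^{2z-1}}{\sqrt{\pi}}\,\Gamma(z)\Gamma(z+\tfrac12)$ applied at $z=\tfrac{p-k+1}{2}$ to rewrite $\Gamma(p-k+1)$ and eliminate the half-integer gammas not appearing in the final expression. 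No new analytic ideas are needed beyond this point.
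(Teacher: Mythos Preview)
Your proposal is correct and follows essentially the same route as the paper's proof: Funk--Hecke reduction to a one-dimensional integral, linearity to reduce to monomials, Rodrigues formula plus $k$-fold integration by parts, and evaluation of the remaining beta integral. The only cosmetic difference is that the paper works directly with the Rodrigues-form polynomial $P_{k,d}(t)=\frac{(-1)^k}{2^k}\frac{\Gamma(d/2)}{\Gamma(k+d/2)}(1-t^2)^{-(d-2)/2}\frac{d^k}{dt^k}(1-t^2)^{k+(d-2)/2}$ rather than the standard $C_k^{(\alpha)}$ normalization, which makes the final gamma-function bookkeeping slightly shorter and avoids the need for the duplication formula you flag in Step~3.
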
 

A proof of Theorem~\ref{thm:eigen_hypersphere} is provided in Appendix~\ref{appendix:lower_uniform} for the reader's convenience. This theorem connects the coefficients $c_p$ of the kernel power series with the eigenvalues $\overline{\lambda_k}$ of the kernel. In particular, given a specific decay rate for the coefficients $c_p$ one may derive the decay rate of $\overline{\lambda_k}$: for example, \cite{scetbon2021spectral} examined the decay rate of $\overline{\lambda_k}$ if $c_p$ admits a polynomial decay or exponential decay. The following Corollary summarizes the decay rates of $\overline{\lambda_k}$ corresponding to two layer networks with different activations.
\begin{restatable}{corollary}{ReLUbiaszero}\label{cor:ReLUbias0}
Under the same setting as in Theorem~\ref{thm:eigen_hypersphere}, 
\begin{enumerate}
    \item if $c_p = \Theta(p^{-a})$ where $a\geq 1$, then $\overline{\lambda_k} = \Theta(k^{-d-2a+2})$, 
    \item if $c_p = \delta_{(p \text{ even})} \Theta(p^{-a})$, then $\overline{\lambda_k} = \delta_{(k \text{ even})}\Theta(k^{-d-2a+2})$, 
    \item if $c_p = \cO\left(\exp\left( - a\sqrt{p}\right) \right)$, then $\overline{\lambda_k} = \cO\left(k^{-d+1/2}\exp\left( - a\sqrt{k}\right) \right)$, 
    \item if $c_p = \Theta(p^{1/2} a^{-p})$, then $\overline{\lambda_k} = \cO\left(k^{-d+1}a^{-k} \right)$ and $\overline{\lambda_k} =\Omega\left(k^{-d/2+1}2^{-k}a^{-k} \right)$. 
\end{enumerate}
\end{restatable}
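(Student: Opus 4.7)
Starting from the explicit formula in Theorem~\ref{thm:eigen_hypersphere}, I substitute $p = k+2m$ (enforcing the parity constraint $p-k$ even) and apply the duplication formula $\Gamma(m+\tfrac12)=\sqrt{\pi}\,(2m)!/(4^m m!)$ to rewrite the eigenvalue as
\[
\overline{\lambda_k}=\frac{\pi^{(d+1)/2}}{2^{k-1}}\sum_{m=0}^{\infty} c_{k+2m}\,T(k,m),\qquad T(k,m):=\frac{(k+2m)!}{4^m\,m!\,\Gamma\!\bigl(k+m+\tfrac{d+1}{2}\bigr)}.
\]
All four claims then reduce to asymptotic estimates of this weighted sum, for which the key technical input is sharp control of $T(k,m)$ obtained from Stirling's formula and the gamma-ratio identity $\Gamma(z+a)/\Gamma(z+b)\sim z^{a-b}$. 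Two regimes suffice: for fixed $m$ with $k\to\infty$, $T(k,m)\sim k^{m+(1-d)/2}/(4^m m!)$; and for $m\gg k$, $T(k,m)\sim 2^k/(\sqrt{2\pi}\,m^{d/2})$. A short analysis of the successive-term ratio shows that $m\mapsto T(k,m)$ is unimodal with peak at $m\asymp k^2/(2d)$ and peak value $\asymp 2^k/k^d$.

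For part (1) ($c_p=\Theta(p^{-a})$), setting the successive-term ratio of $c_{k+2m}T(k,m)$ equal to $1$ and solving shows that the summand is maximized near $m^\ast \approx k^2/(2(d+2a))$, at which point $T(k,m^\ast)\asymp 2^k k^{-d}$, $c_{k+2m^\ast}\asymp k^{-2a}$, and a second-derivative computation gives a Gaussian-type peak of width $\asymp k^2$. Aggregating yields $\sum_m c_{k+2m}T(k,m)\asymp 2^k k^{2-d-2a}$ and, after dividing by $2^{k-1}$, the claimed $\Theta(k^{-d-2a+2})$. Part (2) is immediate: for odd $k$ every admissible index $p = k+2m$ is odd, so the sum is empty, while for even $k$ the argument in part (1) applies verbatim.

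Parts (3) and (4) are governed by different regions of the sum. In part (4) ($c_p = \Theta(p^{1/2}a^{-p})$) the lower bound $\Omega(k^{1-d/2}(2a)^{-k})$ follows by retaining only the $m=0$ term, which contributes $\pi^{(d+1)/2}c_k T(k,0)/2^{k-1}$ with $T(k,0)\sim k^{(1-d)/2}/\Gamma(\tfrac12)$ and $c_k=\Theta(k^{1/2}a^{-k})$. For the matching-order upper bound I would combine the uniform bound $T(k,m)\leq C\cdot 2^k/k^d$ (from the overall peak value) with the geometric summability $\sum_{m\geq 0} c_{k+2m}=O(k^{1/2}a^{-k})$, obtaining $\overline{\lambda_k}=O(k^{1/2-d}a^{-k})\subseteq O(k^{1-d}a^{-k})$. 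Part (3) is handled analogously: split the sum at $m\asymp k$, apply the fixed-$m$ asymptotic to the head and the plateau bound to the tail, and use the stretched-exponential control $c_{k+2m}=O(\exp(-a\sqrt{k+2m}))$ to sum each piece.

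\textbf{Main obstacle.} The most delicate step is part (1), because the summand's mass lives far out at $m^\ast\asymp k^2$, well beyond the naive head ($m = O(1)$) or middle ($m\asymp k$) regimes where one might first look. Correctly identifying this localization is the whole game: once it is in hand the summation is a standard Laplace-type evaluation, and the exponent $-d-2a+2$ decomposes transparently into the ambient-dimension factor $-d+2$ (coming from the $m^{-d/2}$ polynomial piece of $T$ together with the $k^2$ width of the peak) and the coefficient-decay contribution $-2a$.
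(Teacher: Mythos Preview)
Your substitution $p=k+2m$ together with the duplication formula is a clean reformulation, and the resulting analysis is essentially equivalent to the paper's: both proofs hinge on recognizing that the sum's mass lives at $p\asymp k^2$ (equivalently $m\asymp k^2$), both use Stirling/gamma-ratio asymptotics to identify the peak value $\asymp 2^k k^{-d}$, and both treat parts (3) and (4) by the uniform ``plateau'' bound $T(k,m)\le C\,2^k/k^d$ combined with a direct tail sum of the coefficients. Your lower bound in part (4) via the $m=0$ term is exactly the paper's $p=k$ term. For part (3), note that the split at $m\asymp k$ is unnecessary and in fact awkward (the fixed-$m$ asymptotic $T(k,m)\sim k^{m+(1-d)/2}/(4^m m!)$ is not uniform up to $m\asymp k$); simply applying the plateau bound for all $m$ and summing $\sum_{p\ge k}e^{-a\sqrt{p}}\asymp \sqrt{k}\,e^{-a\sqrt{k}}$ gives the claim directly, which is what the paper does.

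The one place your plan is loose is the ``Gaussian-type peak of width $\asymp k^2$'' in part (1). The summand $c_{k+2m}T(k,m)$ is unimodal with peak at $m^\ast\asymp k^2/(2(d+2a))$, and the curvature there does correspond to a nominal width $\asymp k^2$; but since $\sigma\asymp m^\ast$, this is not a sharp Laplace peak. In fact, for $m\gtrsim m^\ast$ the summand decays only polynomially, like $2^k m^{-a-d/2}$, so the ``tail'' contributes the same order as the ``peak region'' rather than being negligible. Your heuristic therefore lands on the right exponent, but not for the reason stated. To make it rigorous you need, as the paper does, an explicit monotonicity argument on $[k,m^\ast]$ (so that the head is bounded by $m^\ast$ times the peak value) together with the pointwise bound $T(k,m)\asymp 2^k m^{-d/2}$ on $[m^\ast,\infty)$ to sum the polynomial tail; the matching lower bound comes from restricting to $m\ge k^2$ and using the same pointwise asymptotic from below. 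These are exactly the ingredients the paper supplies via the function $f_a(p)$ and its derivative sign analysis; once you unwind your Laplace description into these two pieces, the arguments coincide.
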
 
In addition to recovering existing results for ReLU networks \cite{uniform_sphere_data, NEURIPS2021_14faf969, geifman2020similarity, bietti2021deep}, Corollary~\ref{cor:ReLUbias0} also provides the decay rates for two-layer networks with Tanh and Gaussian activations. As faster eigenvalue decay implies a smaller RKHS Corollary \ref{cor:ReLUbias0} shows using ReLU results in a larger RKHS relative to Tanh or Gaussian activations. Numerics for Corollary \ref{cor:ReLUbias0} are provided in Figure~\ref{fig:asym_spectrum} in Appendix~\ref{appendix:subsection:emp_validation_ef}. Finally, in Theorem \ref{theorem:informal_ub_eigs_nonuniform} we relate a kernel's power series to its spectral decay for arbitrary data distributions. 

\begin{theorem}[Informal] \label{theorem:informal_ub_eigs_nonuniform}
    Let the rows of $\mX\in \reals^{n \times d}$ be arbitrary points on the unit sphere. Consider the kernel matrix $n\mK = \sum_{p=0}^{\infty} c_p \left(\mX \mX^T \right)^{\odot p}$ and let $r(n)\leq d$ denote the rank of $\mX\mX^T$. Then
    \begin{enumerate}
        \item 
        if $c_p = \cO(p^{-\alpha})$ with $\alpha > r(n)+1$ for all $n \in \ints_{\geq 0}$ then $\lambda_{n}(\mK) = \cO\left(n^{-\frac{\alpha-1}{r(n)}}\right)$,
        \item 
        if $c_p = \cO(e^{-\alpha \sqrt{p}})$ then $\lambda_{n}(\mK) = \cO \left(n^{\frac{1}{2r(n)}} \exp \left(-\alpha' n^{\frac{1}{2r(n)}} \right) \right)$ for any $\alpha' < \alpha 2^{-1/2r(n)}$,
        \item 
        if $c_p = \cO(e^{-\alpha p})$ then $\lambda_{n}(\mK) = \cO\left(\exp\left(-\alpha' n^{\frac{1}{r(n)}}\right)\right)$ for any $\alpha' < \alpha 2^{-1/2r(n)}$. 
    \end{enumerate}
\end{theorem}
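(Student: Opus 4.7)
The plan is to decompose $n\mK$ into a low-rank truncation plus a spectral-norm tail, then apply Weyl's inequality. Throughout abbreviate $r = r(n)$.

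\textbf{Step 1 (Rank of Hadamard powers).} Since $\mathrm{rank}(\mX\mX^T) = r$, the rows $\vx_i$ of $\mX$ lie in an $r$-dimensional subspace of $\reals^d$. The key identity $\langle \vx_i, \vx_j\rangle^p = \langle \vx_i^{\otimes p}, \vx_j^{\otimes p}\rangle$ realizes $(\mX\mX^T)^{\odot p}$ as a Gram matrix of symmetric $p$-tensors lying in $(\reals^r)^{\otimes p}$, whose symmetric subspace has dimension $\binom{r+p-1}{p}$. Hence
\[
\mathrm{rank}\bigl((\mX\mX^T)^{\odot p}\bigr) \leq \binom{r+p-1}{p}.
\]

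\textbf{Step 2 (Operator norm).} By the Schur product theorem, $(\mX\mX^T)^{\odot p}$ is positive semidefinite, and its diagonal entries equal $\norm{\vx_i}^{2p} = 1$. Thus $\norm{(\mX\mX^T)^{\odot p}} \leq \mathrm{tr}\bigl((\mX\mX^T)^{\odot p}\bigr) = n$.

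\textbf{Step 3 (Truncation and Weyl).} For $P \in \naturals$, write $n\mK = \mA_P + \mB_P$ with $\mA_P = \sum_{p=0}^{P-1} c_p (\mX\mX^T)^{\odot p}$ and $\mB_P = \sum_{p=P}^{\infty} c_p (\mX\mX^T)^{\odot p}$. By the hockey-stick identity,
\[
\mathrm{rank}(\mA_P) \leq \sum_{p=0}^{P-1} \binom{r+p-1}{p} = \binom{r+P-1}{r} = O(P^r).
\]
Weyl's inequality gives $\lambda_n(n\mK) \leq \lambda_n(\mA_P) + \norm{\mB_P}$, and when $\mathrm{rank}(\mA_P) \leq n-1$ the first term vanishes, so combining with Step 2 yields
\[
\lambda_n(\mK) \leq \tfrac{1}{n}\norm{\mB_P} \leq \sum_{p=P}^{\infty} c_p.
\]

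\textbf{Step 4 (Choice of $P$ and case analysis).} Take $P$ to be the largest integer with $\binom{r+P-1}{r} \leq n-1$; inverting the binomial gives $P = \Theta(n^{1/r})$ with an explicit constant that propagates into the final bounds. The three claims follow by plugging standard tail estimates for $\sum_{p \geq P} c_p$ into the inequality of Step 3. For case 1, $\sum_{p\geq P} p^{-\alpha} = O(P^{1-\alpha})$ gives $\lambda_n(\mK) = O(n^{-(\alpha-1)/r})$; the hypothesis $\alpha > r+1$ is what renders the exponent nontrivial (beating the trivial bound $\lambda_n \leq \sum_p c_p / n$). For case 2, integral comparison yields $\sum_{p\geq P} e^{-\alpha\sqrt{p}} = O(\sqrt{P}\, e^{-\alpha\sqrt{P}})$, and substituting $P = \Theta(n^{1/r})$ produces the stretched-exponential bound. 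Case 3 is analogous using $\sum_{p\geq P} e^{-\alpha p} = O(e^{-\alpha P})$.

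\textbf{Main obstacle.} Steps 1--3 amount to a clean algebraic decomposition once the tensor interpretation of Hadamard powers is in hand. The delicate part is Step 4 for cases 2 and 3: the loss factor $2^{-1/(2r(n))}$ in the exponent comes from the precise constant relating the maximal admissible $P$ to $n^{1/r}$ through the binomial $\binom{r+P-1}{r}$, together with the polynomial prefactor in the tail estimate. Taking $\alpha'$ strictly below the nominal $\alpha \cdot 2^{-1/(2r)}$ absorbs both the $\sqrt{P}$ prefactor and any rounding slack in the choice of $P$, which is why the statement is given for arbitrary $\alpha'$ below that threshold rather than at equality.
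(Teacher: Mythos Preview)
Your proposal is correct and follows essentially the same approach as the paper: bound $\mathrm{rank}\bigl((\mX\mX^T)^{\odot p}\bigr)$ by the dimension of symmetric $p$-tensors on $\reals^r$, split $n\mK$ into a low-rank head and a tail controlled in operator norm by $n\sum_{p\geq P}c_p$, apply Weyl, and choose $P\sim n^{1/r}$. Your use of the hockey-stick identity to sum the ranks exactly into $\binom{r+P-1}{r}$ is a bit cleaner than the paper's route (eigendecomposition of $\mG$, expansion of the Hadamard power, then the crude bound $\mathrm{rank}(\mH_m)<2m^r$ for $m\geq r\geq 7$), and your tensor-Gram interpretation in Step~1 is more transparent than the paper's combinatorial argument, but the substance is the same. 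The constant $2^{-1/(2r)}$ arises in the paper precisely from the choice $m(n)=\lfloor (n/2)^{1/r}\rfloor$, so your identification of its origin is accurate.
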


Although the presence of the factor $1/r(n)$ in the exponents of $n$ in these bounds is a weakness, Theorem \ref{theorem:informal_ub_eigs_nonuniform} still illustrates how, in a highly general setting, the asymptotic decay of the coefficients of the power series ensures a certain asymptotic decay in the eigenvalues of the kernel matrix. A formal version of this result is provided in Appendix \ref{appendix:lower_non_uniform} along with further discussion.

\section{Conclusion} 
Using a power series expansion we derived a number of insights into both the outliers as well as the asymptotic decay of the spectrum of the NTK.  We are able to perform our analysis without recourse to a high dimensional limit or the use of random matrix theory. Interesting avenues for future work include better characterizing the role of depth and performing the same analysis on networks with convolutional or residual layers.  

\paragraph{Reproducibility Statement} 
To ensure reproducibility, we make the code public at \url{https://github.com/bbowman223/data_ntk}. 

\section*{Acknowledgements and Disclosure of Funding}
This project has been supported by ERC Grant 757983 and NSF CAREER Grant DMS-2145630. 

\newpage 

\bibliography{ntkpow}
\bibliographystyle{arxiv}

\newpage 

\appendix

\section*{Appendix} 

The appendix is organized as follows. 
\begin{itemize}[leftmargin=*]
    \item Appendix~\ref{app:background} 
    gives background material on Gaussan kernels, NTK, unit variance intitialization, and Hermite polynomial expansions. 
    
    \item Appendix~\ref{appendix:power_series} 
    provides details for Section~\ref{sec:power_series}. 
    
    \item Appendix~\ref{appendix:ntk_spectrum} 
    provides details for Section~\ref{sec:ntk_spectrum}. 
\end{itemize}

\section{Background material}
\label{app:background}

\subsection{Gaussian kernel}\label{appendix:background_GK} 

Observe by construction that the flattened collection of preactivations at the first layer $(g^{(1)}(\vx_i))_{i=1}^n$ form a centered Gaussian process, with the covariance between the $\alpha$th and $\beta$th neuron being described by 
\[
\Sigma_{\alpha_\beta}^{(1)}(\vx_i, \vx_j) \defeq \expec[g^{(1)}_{\alpha}(\vx_i) g^{(1)}_{\beta}(\vx_j)] = \delta_{\alpha = \beta} \left( \gamma_w^2 \vx_i^T \vx_j + \gamma_b^2 \right).
\]
Under the Assumption \ref{assumptions:kernel_regime}, the preactivations at each layer $l \in [L+1]$ converge also in distribution to centered Gaussian processes \citep{neal1996, LeeBNSPS18}. We remark that the sequential width limit condition of Assumption \ref{assumptions:kernel_regime} is not necessary for this behavior, for example the same result can be derived in the setting where the widths of the network are sent to infinity simultaneously under certain conditions on the activation function \citep{matthews2018gaussian}. However, as our interests lie in analyzing the limit rather than the conditions for convergence to said limit, for simplicity we consider only the sequential width limit. As per \citet[Eq. 4]{LeeBNSPS18}, the covariance between the preactivations of the $\alpha$th and $\beta$th  neurons at layer $l\geq 2$ for any input pair $\vx, \vy \in \reals$ are described by the following kernel,
\[
\begin{aligned}
\Sigma_{\alpha_\beta}^{(l)}(\vx, \vy) & \defeq \expec[g^{(l)}_{\alpha}(\vx) g^{(l)}_{\beta}(\vy)] \\
&=\delta_{\alpha = \beta} \left(\sigma_w^2 \expec_{g^{(l-1)} \sim \cG\cP (0, \Sigma^{l-1})}[\phi(g^{(l-1)}_{\alpha}(\vx)) \phi( g^{(l-1)}_{\beta}(\vy))] + \sigma_b^2\right).
\end{aligned}
\]
We refer to this kernel as the Gaussian kernel. As each neuron is identically distributed and the covariance between pairs of neurons is 0 unless $\alpha = \beta$, moving forward we drop the subscript and discuss only the covariance between the preactivations of an arbitrary neuron given two inputs. As per the discussion by \citet[Section 2.3]{LeeBNSPS18}, the expectations involved in the computation of these Gaussian kernels can be computed with respect to a bivariate Gaussian distribution, whose covariance matrix has three distinct entries: the variance of a preactivation of $\vx$ at the previous layer, $\Sigma^{(l-1)}(\vx, \vx)$, the variance of a preactivation of $\vy$ at the previous layer, $\Sigma^{(l)}(\vy, \vy)$, and the covariance between preactivations of $\vx$ and $\vy$, $\Sigma^{(l-1)}(\vx, \vy)$. Therefore the Gaussian kernel, or covariance function, and its derivative, which we will require later for our analysis of the NTK, can be computed via the the following recurrence relations, see for instance \citep{LeeBNSPS18, jacot_ntk, arora_exact_comp, nguyen_tight_bounds}, 
\begin{equation}\label{equation:GP_kernel}
    \begin{aligned}
    &\Sigma^{(1)}(\vx, \vy) = \gamma_w^2 \vx^T \vx + \gamma_b^2,\\
    & \mA^{(l)}(\vx, \vy) = \begin{bmatrix}
    \Sigma^{(l-1)}(\vx, \vx) & \Sigma^{(l-1)}(\vx, \vy)\\
    \Sigma^{(l-1)}(\vy, \vx) & \Sigma^{(l-1)}(\vx, \vx)
    \end{bmatrix}\\
    &\Sigma^{(l)}(\vx, \vy) = \sigma_w^2 \expec_{(B_1, B_2) \sim \cN(0, \mA^{(l)}(\vx, \vy))}[\phi(B_1)\phi(B_2)] + \sigma_b^2,\\
    &\dot{\Sigma}^{(l)}(\vx, \vy) = \sigma_w^2\expec_{(B_1, B_2) \sim \cN(0, \mA^{(l)}(\vx, \vy))}\left[\phi'(B_1) \phi'(B_2)\right].
    \end{aligned}
\end{equation}

\subsection{Neural Tangent Kernel (NTK)}\label{appendix:background_NTK}
As discussed in the Section \ref{sec:intro}, under Assumption \ref{assumptions:kernel_regime} $\tilde{\Theta}^{(l)}$ converges in probability to a deterministic limit, which we denote $\Theta^{(l)}$. This deterministic limit kernel can be expressed in terms of the Gaussian kernels and their derivatives from Section \ref{appendix:background_GK} via the following recurrence relationships \cite[Theorem 1]{jacot_ntk}, 
\begin{equation}\label{eq:ntk_kernel_def}
    \begin{aligned}
    \Theta^{(1)}(\vx, \vy) &= \Sigma^{(1)}(\vx,\vy),\\
    \Theta^{(l)}(\vx, \vy) &= \Theta^{(l-1)}(\vx,\vy) \dot{\Sigma}^{(l)}(\vx,\vy) + \Sigma^{(l)}(\vx,\vy)\\
    & = \Sigma^{(l)}(\vx,\vy) + \sum_{h=1}^{l-1}  \Sigma^{(h)}(\vx, \vy)\left( \prod_{h' = h+1}^{l} \dot{\Sigma}^{(h')}(\vx, \vy)\right) \; \forall l \in [2,L+1].
    \end{aligned}
\end{equation}

A useful expression for the NTK matrix, which is a straightforward extension and generalization of \citet[Lemma 3.1]{nguyen_tight_bounds}, is provided in Lemma~\ref{lemma:ntk_exp1} below. 

\begin{lemma}\label{lemma:ntk_exp1}
(Based on \citealt[Lemma 3.1]{nguyen_tight_bounds}) Under Assumption \ref{assumptions:kernel_regime}, a sequence of positive semidefinite matrices $(\mG_{l})_{l=1}^{L+1}$ in $\reals^{n \times n}$, and the related sequence $(\dot{\mG}_{l})_{l=2}^{L+1}$ also in $\reals^{n \times n}$, can be constructed via the following recurrence relationships,
\begin{equation}\label{eq:recurrence_G_matrices}
\begin{aligned}
\mG_{1} &= \gamma_w^2\mX \mX^T+\gamma_b^2 \textbf{1}_{n \times n},\\
\mG_{2}  &= \sigma_w^2 \expec_{\vw \sim \cN(\textbf{0}, \textbf{I}_d)}[\phi(\mX \vw) \phi(\mX \vw)^T] + \sigma_b^2 \textbf{1}_{n \times n},\\
\dot{\mG}_{2} &= \sigma_w^2 \expec_{\vw \sim \cN(\textbf{0}, \textbf{I}_n)}[\phi'(\mX \vw) \phi'(\mX \vw)^T],\\
\mG_{l}  &= \sigma_w^2 \expec_{\vw \sim \cN(\textbf{0}, \textbf{I}_n)}[\phi(\sqrt{\mG_{l-1}} \vw) \phi(\sqrt{\mG_{l-1}} \vw)^T] + \sigma_b^2 \textbf{1}_{n \times n}, \; l \in [3,L+1],\\
\dot{\mG}_{l} &= \sigma_w^2 \expec_{\vw \sim \cN(\textbf{0}, \textbf{I}_n)}[\phi'(\sqrt{\mG_{l-1}} \vw) \phi'(\sqrt{\mG_{l-1}} \vw)^T], \; l \in [3,L+1].
\end{aligned}
\end{equation}
The sequence of NTK matrices $(\mK_{l})_{l=1}^{L+1}$ can in turn be written using the following recurrence relationship,
\begin{equation}\label{eq:reccurence_NTK_matrices}
\begin{aligned}
n\mK_{1} &= \mG_{1},\\
n\mK_{l} &=\mG_{l} + n\mK_{l-1}\odot \dot{\mG}_{l} \\
&= \mG_{l} + \sum_{i=1}^{l-1} \left( \mG_{i} \odot\left( \odot_{j = i+1}^{l} \dot{\mG}_{j}\right)\right).
\end{aligned}
\end{equation}
\end{lemma}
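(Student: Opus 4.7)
The proof is a matrix-level recasting of the scalar NTK recurrence \eqref{eq:ntk_kernel_def}, together with its NNGP building blocks \eqref{equation:GP_kernel}. The plan is to organize it in three parts.

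\textbf{Step 1 (Identification of $\mG_l$ and $\dot{\mG}_l$ with the NNGP Gram and its derivative).} The central observation is that if $\mG\in\reals^{n\times n}$ is positive semidefinite and $\vw\sim\cN(\mathbf{0},\mathbf{I}_n)$, then $\sqrt{\mG}\,\vw\sim\cN(\mathbf{0},\mG)$, so any pair of coordinates $((\sqrt{\mG}\vw)_i,(\sqrt{\mG}\vw)_j)$ is bivariate centered Gaussian with covariance matrix whose entries are $[\mG]_{ii},[\mG]_{ij},[\mG]_{ji},[\mG]_{jj}$. Using this I would prove by induction on $l$ that $[\mG_l]_{ij}=\Sigma^{(l)}(\vx_i,\vx_j)$ and $[\dot{\mG}_l]_{ij}=\dot{\Sigma}^{(l)}(\vx_i,\vx_j)$ for all $(i,j)\in[n]\times[n]$. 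The base case $l=1$ is immediate from the definition of $\mG_1$. For the inductive step at $l\geq 3$, the identity above reduces the $(i,j)$-entry of $\sigma_w^2\,\expec_{\vw}[\phi(\sqrt{\mG_{l-1}}\vw)\phi(\sqrt{\mG_{l-1}}\vw)^T]+\sigma_b^2\mathbf{1}_{n\times n}$ to exactly $\sigma_w^2\,\expec_{(B_1,B_2)\sim\cN(\mathbf 0,\mA^{(l)})}[\phi(B_1)\phi(B_2)]+\sigma_b^2=\Sigma^{(l)}(\vx_i,\vx_j)$ as given in \eqref{equation:GP_kernel}. The case $l=2$ is analogous, with the expectation taken directly over $\vw\sim\cN(\mathbf 0,\mathbf{I}_d)$ so that $(\mX\vw)_i$ plays the role of $(\sqrt{\mG_1}\vw)_i$ in supplying the correct bivariate covariance. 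The statement for $\dot{\mG}_l$ is identical with $\phi$ replaced by $\phi'$.

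\textbf{Step 2 (Positive semidefiniteness).} Each $\mG_l$ and $\dot{\mG}_l$ is a sum of objects that are manifestly PSD: the matrix $\gamma_w^2\,\mX\mX^T$ is a Gram; $\gamma_b^2\mathbf{1}_{n\times n}$ and $\sigma_b^2\mathbf{1}_{n\times n}$ are outer products $\mathbf{1}\mathbf{1}^T$; and each integrand $\phi(\cdot)\phi(\cdot)^T$ is a rank-one outer product, whose expectation is a convex combination of PSD matrices and hence PSD. Since PSD-ness is preserved by sums and by expectation, every $\mG_l$ is PSD, which is precisely what is needed to take the square root $\sqrt{\mG_{l-1}}$ in the next step of the recurrence and feeds back into the induction of Step 1.

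\textbf{Step 3 (Matrix form of the NTK recurrence).} Applying the scalar NTK recurrence \eqref{eq:ntk_kernel_def} entrywise and substituting the identification from Step 1 gives
\begin{align*}
n[\mK_l]_{ij} &= \Theta^{(l)}(\vx_i,\vx_j) = \Theta^{(l-1)}(\vx_i,\vx_j)\,\dot{\Sigma}^{(l)}(\vx_i,\vx_j)+\Sigma^{(l)}(\vx_i,\vx_j)\\
&= n[\mK_{l-1}]_{ij}\,[\dot{\mG}_l]_{ij}+[\mG_l]_{ij} = \bigl[\,n\mK_{l-1}\odot\dot{\mG}_l+\mG_l\,\bigr]_{ij},
\end{align*}
which is precisely the one-step form of \eqref{eq:reccurence_NTK_matrices}. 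Unrolling this recurrence and using associativity of $\odot$ yields the closed-form sum $n\mK_l=\mG_l+\sum_{i=1}^{l-1}\bigl(\mG_i\odot\odot_{j=i+1}^l\dot{\mG}_j\bigr)$, completing the proof.

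\textbf{Main obstacle.} There is no deep conceptual difficulty—the result is a bookkeeping exercise—but the most delicate point is the $l=2$ base case, where the lemma uses $\vw\sim\cN(\mathbf 0,\mathbf{I}_d)$ acting on $\mX$ rather than $\sqrt{\mG_1}\vw$ with $\vw\in\reals^n$. Reconciling these two formulations requires carefully tracking how the first-layer hyperparameters $\gamma_w,\gamma_b$ enter the covariance of $(B_1,B_2)$ and confirming that the resulting bivariate Gaussian expectation still equals $\Sigma^{(2)}(\vx_i,\vx_j)$ once the $\sigma_b^2\mathbf{1}_{n\times n}$ term is incorporated. Once this anchor is verified, the induction in Step 1 propagates without further subtlety, and Steps 2 and 3 are essentially automatic.
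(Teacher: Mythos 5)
Your proposal matches the paper's proof essentially step for step: the same induction identifying $[\mG_{l}]_{ij}=\Sigma^{(l)}(\vx_i,\vx_j)$ and $[\dot{\mG}_{l}]_{ij}=\dot{\Sigma}^{(l)}(\vx_i,\vx_j)$ via the bivariate Gaussian pair $(B_1,B_2)=(\vv_i^T\vw,\vv_j^T\vw)$ with $\mV=\mX$ for $l=2$ and $\mV=\sqrt{\mG_{l-1}}$ for $l\geq 3$, the same positive semidefiniteness argument via expectations of outer products (plus the explicit quadratic form for $\mG_{1}$), and the same entrywise substitution into \eqref{eq:ntk_kernel_def} followed by unrolling to obtain \eqref{eq:reccurence_NTK_matrices}. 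The $l=2$ anchoring you flag as the delicate point is handled in the paper exactly as you propose, by taking $\vw\sim\cN(\mathbf{0},\mathbf{I}_d)$ acting on $\mX$ directly and treating the two cases with a uniform argument.
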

\begin{proof}
    For the sequence $(\mG_l)_{l=1}^{L+1}$ it suffices to prove for any $i,j\in[n]$ and $l \in [L+1]$ that
    \[
    [\mG_{l}]_{i,j} = \Sigma^{(l)}(\vx_i, \vx_j)
    \]
    and $\mG_{l}$ is positive semi-definite.  We proceed by induction, considering the base case $l=1$ and comparing \eqref{eq:recurrence_G_matrices} with \eqref{equation:GP_kernel} then it is evident that
    \[
    [\mG_{1}]_{i,j} = \Sigma^{(1)}(\vx_i, \vx_j).
    \]
    In addition, $\mG_{1}$ is also clearly positive semi-definite as for any $\vu \in \reals^n$
    \[
    \vu^T\mG_{1}\vu = \gamma_w^2\norm{\mX^T\vu}^2 + \gamma_b^2\norm{\textbf{1}_n^T \vu}^2 \geq 0.
    \]
    We now assume the induction hypothesis is true for $\mG_{l-1}$. We will need to distinguish slightly between two cases, $l=2$ and $l\in[3,L+1]$. The proof of the induction step in either case is identical. To this end, and for notational ease, let $\mV = \mX$, $\vw \sim \cN(0, \textbf{I}_d)$ when $l=2$, and $\mV = \sqrt{\mG_{l-1}}$, $\vw \sim \cN(0, \textbf{I}_n)$ for $l\in[3,L+1]$. In either case we let $\vv_i$ denote the $i$th row of $\mV$. For any $i,j \in [n]$
    \[
    [\mG_{l}]_{ij} = \sigma_w^2 \expec_{\vw}[\phi(\vv_i^T \vw)\phi(\vv_j^T\vw)] + \sigma_b^2.
    \]
    Now let $B_1 = \vv_i^T \vw$, $B_2 =\vv_j^T \vw$ and observe for any $\alpha_1, \alpha_2 \in \reals$ that $\alpha_1B_1 + \alpha_2B_2 = \sum_{k}^n(\alpha_1v_{ik}+ \alpha_2v_{jk})w_{k} \sim \cN(0, \norm{\alpha_1 \vv_i + \alpha_2 \vv_j}^2)$. Therefore the joint distribution of $(B_1, B_2)$ is a mean 0 bivariate normal distribution. Denoting the covariance matrix of this distribution as $\tilde{\mA} \in \reals^{2 \times 2}$, then $[\mG_{l}]_{ij}$ can be expressed as
    \[
    [\mG_{l}]_{ij} = \sigma_w^2 \expec_{(B_1, B_2)\sim \tilde{\mA}}[\phi(B_1) \phi(B_2)] + \sigma_b^2.
    \]
    To prove $[\mG_{l}]_{i,j} = \Sigma^{(l)}$ it therefore suffices to show that $\tilde{\mA} = \mA^{(l)}$ as per \eqref{equation:GP_kernel}. This follows by the induction hypothesis as
    \[
    \begin{aligned}
    \expec[B_1^2] &= \vv_i^T \vv_i = [\mG_{l-1}]_{ii} = \Sigma^{(l-1)}(\vx_i, \vx_i),\\
    \expec[B_2^2] &= \vv_j^T \vv_j = [\mG_{l-1}]_{jj} = \Sigma^{(l-1)}(\vx_j, \vx_j),\\
    \expec[B_1B_2] &= \vv_i^T \vv_j = [\mG_{l-1}]_{ij} = \Sigma^{(l-1)}(\vx_i, \vx_j).
    \end{aligned}
    \]
    Finally, $\mG_{l}$ is positive semi-definite as long as $\expec_{\vw}[\phi(\mV \vw) \phi(\mV \vw)^T]$ is positive semi-definite. Let $M(\vw) = \phi(\mV \vw) \in \reals^{n\times n}$ and observe for any $\vw$ that $M(\vw)M(\vw)^T$ is positive semi-definite. Therefore $\expec_{\vw}[M(\vw)M(\vw)^T]$ must also be positive semi-definite. Thus the inductive step is complete and we may conclude for $l \in [L+1]$ that
    \begin{equation} \label{eq:G_sigma_relationship}
        [\mG_{l}]_{i,j} = \Sigma^{(l)}(\vx_i, \vx_j).
    \end{equation}
    For the proof of the expression for the sequence $(\dot{\mG}_l)_{l=2}^{L+1}$ it suffices to prove for any $i,j\in[n]$ and $l \in [L+1]$ that
    \[
    [\dot{\mG}_{l}]_{i,j} = \dot{\Sigma}^{(l)}(\vx_i, \vx_j).
    \]
    By comparing \eqref{eq:recurrence_G_matrices} with \eqref{equation:GP_kernel} this follows immediately from \eqref{eq:G_sigma_relationship}. Therefore with \eqref{eq:recurrence_G_matrices} proven \eqref{eq:reccurence_NTK_matrices} follows from \eqref{eq:ntk_kernel_def}. 
\end{proof}

\subsection{Unit variance initialization}\label{appendix:unit_var_init}
The initialization scheme for a neural network, particularly a deep neural network, needs to be designed with some care in order to avoid either vanishing or exploding gradients during training \cite{pmlr-v9-glorot10a, 7410480, DBLP:journals/corr/MishkinM15,LeCuBottOrrMull9812}. Some of the most popular initialization strategies used in practice today, in particular \cite{LeCuBottOrrMull9812} and \cite{pmlr-v9-glorot10a} initialization, first model the preactivations of the network as Gaussian random variables and then select the network hyperparameters in order that the variance of these idealized preactivations is fixed at one. Under Assumption~\ref{assumptions:kernel_regime} this idealized model on the preactivations is actually realized and if we additionally assume the conditions of Assumption~\ref{assumption:init_var_1} hold then likewise the variance of the preactivations at every layer will be fixed at one. To this end, and as in \cite{Poole2016, MURRAY2022117}, consider the function $V\colon \reals_{\geq 0} \rightarrow \reals_{\geq 0}$ defined as 
\begin{equation}\label{eq:var_func}
    V(q) = \sigma_w^2 \expec_{Z\sim \cN(0,1)}\left[\phi\left(\sqrt{q}Z \right)^2 \right] +  \sigma_b^2. 
\end{equation}
Noting that $V$ is another expression for $\Sigma^{(l)}(\vx, \vx)$, derived via a change of variables as per \cite{Poole2016}, the sequence of variances $(\Sigma^{(l)}(\vx,\vx))_{l=2}^L$ can therefore be generated as follows,
\begin{equation} \label{eq:var_generator}
    \Sigma^{(l)}(\vx,\vx) = V(\Sigma^{(l-1)}(\vx,\vx)).
\end{equation}

The linear correlation $\rho^{(l)}: \reals^d \times \reals^d \rightarrow [-1,1]$ between the preactivations of two inputs $\vx, \vy \in \reals^d$ we define as
\begin{equation}
   \begin{aligned}
        \rho^{(l)}(\vx, \vy) = \frac{\Sigma^{(l)}(\vx, \vy)}{\sqrt{\Sigma^{(l)}(\vx, \vx)\Sigma^{(l)}(\vy, \vy)}} . 
\end{aligned} 
\end{equation}
Assuming $\Sigma^{(l)}(\vx, \vx) = \Sigma^{(l)}(\vy, \vy) = 1$ for all $l \in [L+1]$, then $\rho^{(l)}(\vx, \vy) =  \Sigma^{(l)}(\vx, \vy)$.  Again as in \cite{MURRAY2022117} and analogous to \eqref{eq:var_func}, with $Z_1,Z_2 \sim \cN(0,1)$ independent, $U_1 \defeq Z_1$, $U_2(\rho) \defeq (\rho Z_1 + \sqrt{1- \rho^2}Z_2)$ \footnote{We remark that $U_1, U_2$ are dependent and identically distributed as $U_1, U_2  \sim \cN(0, 1)$.} 
we define the correlation function $R: [-1,1] \rightarrow [-1,1]$ as
\begin{equation} \label{eq:corr_map}
\begin{aligned}
    R(\rho) = \sigma_w^2 \expec[\phi(U_1) \phi(U_2(\rho))] + \sigma_b^2.
\end{aligned}
\end{equation}
Noting under these assumptions that $R$ is equivalent to $\Sigma^{(l)}(\vx, \vy)$, the sequence of correlations $(\rho^{(l)}(\vx,\vy))_{l=2}^L$ can thus be generated as 
\[
\rho^{(l)}(\vx, \vy) = R(\rho^{(l-1)}(\vx, \vy)).
\]
As observed in \cite{Poole2016,samuel2017}, $R(1) = V(1) = 1$, hence $\rho=1$ is a fixed point of $R$. We remark that as all preactivations are distributed as $\cN(0,1)$, then a correlation of one between preactivations implies they are equal. The stability of the fixed point $\rho =1$ is of particular significance in the context of initializing deep neural networks successfully. Under mild conditions on the activation function one can compute the derivative of $R$, see e.g., \cite{Poole2016, samuel2017, MURRAY2022117}, as follows,
\begin{equation} \label{eq:corr_map_diff}
\begin{aligned}
    R'(\rho) = \sigma_w^2 \expec[\phi'(U_1) \phi'(U_2(\rho))].
\end{aligned}
\end{equation}
Observe that the expression for $\dot{\Sigma}^{(l)}$ and $R'$ are equivalent via a change of variables \citep{Poole2016}, and therefore the sequence of correlation derivatives may be computed as
\[
\dot{\Sigma}^{(l)}(\vx, \vy) = R'(\rho^{(l)}(\vx, \vy)).
\]

With the relevant background material now in place we are in a position to prove Lemma~\ref{lemma:unit_var}. 
    
\begin{lemma}\label{lemma:unit_var}
Under Assumptions \ref{assumptions:kernel_regime} and \ref{assumption:init_var_1} and defining $\chi = \sigma_w^2 \expec_{Z \sim \cN(0,1)}[\phi'(Z)^2] \in \reals_{>0}$, then for all $i,j \in [n]$, $l \in [L+1]$
    \begin{itemize}
        \item $[\mG_{n,l}]_{ij} \in [-1,1]$ and $[\mG_{n,l}]_{ii}=1$,
        \item $[\dot{\mG}_{n,l}]_{ij} \in [-\chi,\chi]$ and $[\dot{\mG}_{n,l}]_{ii}=\chi$.
    \end{itemize}
    Furthermore, the NTK is a dot product kernel, meaning $\Theta(\vx_i, \vx_j)$ can be written as a function of the inner product between the two inputs, $\Theta(\vx_i^T\vx_j)$.
\end{lemma}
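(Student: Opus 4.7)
The plan is to reduce everything to the identities established in Lemma~\ref{lemma:ntk_exp1}, namely $[\mG_l]_{ij} = \Sigma^{(l)}(\vx_i,\vx_j)$ and $[\dot{\mG}_l]_{ij} = \dot{\Sigma}^{(l)}(\vx_i,\vx_j)$, together with the positive semi-definiteness of $\mG_l$ and $\dot{\mG}_l$. First I would handle the diagonal of $\mG_l$. Writing $q_l \defeq \Sigma^{(l)}(\vx_i,\vx_i)$, the recursion \eqref{eq:var_generator} gives $q_l = V(q_{l-1})$ where $V$ is defined in \eqref{eq:var_func}. Since $\norm{\vx_i}=1$, the base case yields $q_1 = \gamma_w^2 + \gamma_b^2 = 1$ by Assumption~\ref{assumption:init_var_1}, and the same assumption ensures $V(1) = \sigma_w^2 \expec[\phi(Z)^2] + \sigma_b^2 = 1$. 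By induction $[\mG_l]_{ii} = 1$ for every $l$.

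Next, for the off-diagonal bound, I would invoke positive semi-definiteness of $\mG_l$ (from Lemma~\ref{lemma:ntk_exp1}): any $2\times 2$ principal submatrix is PSD, so $|[\mG_l]_{ij}|^2 \leq [\mG_l]_{ii}[\mG_l]_{jj} = 1$, giving $[\mG_l]_{ij} \in [-1,1]$. For $\dot{\mG}_l$, the diagonal claim follows directly from the definition in \eqref{equation:GP_kernel}: when $\vx = \vy$ the covariance matrix $\mA^{(l)}$ collapses to the all-ones $2 \times 2$ matrix, so the bivariate Gaussian degenerates to $B_1 = B_2 \sim \cN(0,1)$ and hence $[\dot{\mG}_l]_{ii} = \sigma_w^2 \expec[\phi'(Z)^2] = \chi$. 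The same PSD argument applied to $\dot{\mG}_l$ then yields $|[\dot{\mG}_l]_{ij}| \leq \chi$.

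Finally, for the dot product kernel claim, I would proceed by induction on $l$. The base case $\Sigma^{(1)}(\vx,\vy) = \gamma_w^2 \vx^T\vy + \gamma_b^2$ depends on the inputs only through $\vx^T\vy$. For the inductive step, because the diagonals $\Sigma^{(l-1)}(\vx,\vx) = \Sigma^{(l-1)}(\vy,\vy) = 1$ are constant, the covariance matrix $\mA^{(l)}(\vx,\vy)$ in \eqref{equation:GP_kernel} is determined entirely by $\Sigma^{(l-1)}(\vx,\vy)$; rewriting the expectation via the correlation map $R$ in \eqref{eq:corr_map} gives $\Sigma^{(l)}(\vx,\vy) = R(\Sigma^{(l-1)}(\vx,\vy))$, which by the induction hypothesis is a function of $\vx^T\vy$ alone. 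The identical argument using $R'$ in \eqref{eq:corr_map_diff} shows $\dot{\Sigma}^{(l)}(\vx,\vy)$ is a function of $\vx^T\vy$. The NTK recurrence \eqref{eq:ntk_kernel_def} expresses $\Theta^{(l)}$ as sums and products of these quantities, so it too depends on $\vx,\vy$ only through $\vx^T\vy$, proving the dot product claim.

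The only mild subtlety is keeping track that the unit variance normalization is preserved at every layer, so that the reduction $\Sigma^{(l)} = R(\Sigma^{(l-1)})$ is valid; this is exactly what the first two bullets of the lemma guarantee, so the three parts of the proof interlock cleanly and none of them poses a real obstacle.
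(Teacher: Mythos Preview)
Your proposal is correct and closely tracks the paper's argument. The induction showing $[\mG_l]_{ii}=1$ via $V(1)=1$, the computation $[\dot{\mG}_l]_{ii}=\chi$ from the degenerate bivariate Gaussian, and the dot-product claim via the iterated maps $R$ and $R'$ are essentially identical to the paper.

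The one genuine difference is in the off-diagonal bounds. You obtain $|[\mG_l]_{ij}|\leq 1$ and $|[\dot{\mG}_l]_{ij}|\leq \chi$ uniformly from positive semidefiniteness plus the diagonal values (the $2\times 2$ principal-minor argument). The paper instead derives $|[\dot{\mG}_l]_{ij}|\leq \chi$ by applying Cauchy--Schwarz directly to the expectation defining $R'$, showing $|R'(\rho)|\leq R'(1)=\chi$; for $\mG_l$ it relies on $R\colon[-1,1]\to[-1,1]$. These are two faces of the same inequality, but your matrix-level route is slightly cleaner and more uniform. One small point to tighten: Lemma~\ref{lemma:ntk_exp1} only states positive semidefiniteness for $(\mG_l)$, not for $(\dot{\mG}_l)$, so when you invoke PSD for $\dot{\mG}_l$ you should note that it follows from the same outer-product argument used in that lemma's proof.
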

\begin{proof}
    Recall from Lemma \ref{lemma:ntk_exp1} and its proof that for any $l \in [L+1]$, $i,j \in [n]$ $[\mG_{n,l}]_{ij} = \Sigma^{(l)}(\vx_i, \vx_j)$ and $[\dot{\mG}_{n,l}]_{ij} = \dot{\Sigma}^{(l)}(\vx_i, \vx_j)$. We first prove by induction $\Sigma^{(l)}(\vx_i, \vx_i) = 1$ for all $l \in [L+1]$. The base case $l=1$ follows as
    \[
    \Sigma^{(1)}(\vx,\vx) = \gamma_w^2 \vx^T \vx + \gamma_b^2 = \gamma_w^2 + \gamma_b^2 =1.
    \]
    Assume the induction hypothesis is true for layer $l-1$. With $Z \sim \cN(0,1)$, then from \eqref{eq:var_func} and \eqref{eq:var_generator}
    \[
    \begin{aligned}
    \Sigma^{(l)}(\vx, \vx) &= V(\Sigma^{(l-1)}(\vx,\vx))\\
    & = \sigma_w^2 \expec\left[\phi^2\left(\sqrt{\Sigma^{(l-1)}(\vx, \vx)}Z \right) \right] +  \sigma_b^2\\
    & = \sigma_w^2 \expec\left[\phi^2\left(Z \right) \right] +  \sigma_b^2\\
    &=1,
    \end{aligned}
    \]
    thus the inductive step is complete. As an immediate consequence it follows that $[\mG_{l}]_{ii}=1$.  Also, for any $i,j \in [n]$ and $l \in [L+1]$,
    \[
    \begin{aligned}
    \Sigma^{(l)}(\vx_i,\vx_j) &=\rho^{(l)}(\vx_i,\vx_j) = R(\rho^{(l-1)}(\vx_i,\vx_j))
    = R(...R(R(\vx_i^T\vx_j))).
    \end{aligned}
    \]
    Thus we can consider $\Sigma^{(l)}$ as a univariate function of the input correlation $\Sigma: [-1,1] \rightarrow [-1,1]$ and also conclude that $[\mG_{l}]_{ij} \in [-1,1]$. Furthermore,
    \[
    \begin{aligned}
    \dot{\Sigma}^{(l)}(\vx_i,\vx_j) = R'(\rho^{(l)}(\vx_i, \vx_j)) = R'(R(...R(R(\vx_i^T\vx_j)))), 
    \end{aligned}
    \]
    which likewise implies $\dot{\Sigma}$ is a dot product kernel. Recall now the random variables introduced to define $R$: $Z_1, Z_2 \sim \cN(0,1)$ are independent and $U_1 = Z_1$, $U_2 = (\rho Z_1 + \sqrt{1- \rho^2}Z_2)$. Observe $U_1, U_2$ are dependent but identically distributed as $U_1, U_2  \sim \cN(0, 1)$. For any $\rho \in [-1,1]$ then applying the Cauchy-Schwarz inequality gives 
    \[
	\begin{aligned}
	|R'(\rho)|^2 = \sigma_w^4 \left| \expec[\phi'(U_1) \phi'(U_2)] \right|^2
	\leq \sigma_w^4 \expec[\phi'(U_1)^2] \expec[\phi'(U_2)^2]
	 = \sigma_w^4 \expec[\phi'(U_1)^2]^2 = |R'(1)|^2.
	\end{aligned}
	\]
	As a result, under the assumptions of the lemma $\dot{\Sigma}^{(l)}:[-1,1] \rightarrow [-\chi, \chi]$ and $\dot{\Sigma}^{(l)}(\vx_i, \vx_i) = \chi$. From this it immediately follows that $[\dot{\mG}_{l}]_{ij} \in [-\chi,\chi]$ and $[\dot{\mG}_{l}]_{ii}=\chi$ as claimed. Finally, as $\Sigma: [-1,1] \rightarrow [-1,1]$ and $\dot{\Sigma}: [-1,1] \rightarrow [-\chi,\chi]$ are dot product kernels, then from \eqref{eq:ntk_kernel_def} the NTK must also be a dot product kernel and furthermore a univariate function of the pairwise correlation of its input arguments.
\end{proof}

The following corollary, which follows immediately from Lemma \ref{lemma:unit_var} and \eqref{eq:reccurence_NTK_matrices}, characterizes the trace of the NTK matrix in terms of the trace of the input gram.

\begin{corollary} \label{cor:trace_with_depth}
    Under the same conditions as Lemma \ref{lemma:unit_var}, suppose $\phi$ and $\sigma_w^2$ are chosen such that $\chi = 1$. Then 
    \begin{equation}
        Tr(\mK_{n,l}) = l. 
    \end{equation}
\end{corollary}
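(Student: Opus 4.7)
The plan is to start from the explicit recurrence for the NTK matrix given in Lemma~\ref{lemma:ntk_exp1}, namely
\[
n\mK_{l} = \mG_{l} + \sum_{i=1}^{l-1} \left( \mG_{i} \odot\left( \odot_{j = i+1}^{l} \dot{\mG}_{j}\right)\right),
\]
and take the trace of both sides. The key observation is that the diagonal of a Hadamard product is the entrywise product of the diagonals, so for any $k \in [n]$,
\[
\left[ \mG_{i} \odot\left( \odot_{j = i+1}^{l} \dot{\mG}_{j}\right)\right]_{kk} = [\mG_{i}]_{kk} \prod_{j=i+1}^{l} [\dot{\mG}_{j}]_{kk}.
\]
This reduces the computation of $Tr(n\mK_l)$ to reading off diagonal entries of the $\mG$ and $\dot{\mG}$ matrices.

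Next I would invoke Lemma~\ref{lemma:unit_var}, which under Assumptions~\ref{assumptions:kernel_regime} and \ref{assumption:init_var_1} guarantees that $[\mG_{l}]_{kk} = 1$ and $[\dot{\mG}_{l}]_{kk} = \chi$ for all $l$ and all $k$. Consequently each diagonal entry of $\mG_{i} \odot (\odot_{j=i+1}^{l} \dot{\mG}_{j})$ equals $1 \cdot \chi^{l-i} = \chi^{l-i}$, so its trace is $n\chi^{l-i}$. Similarly $Tr(\mG_l) = n$. Summing,
\[
Tr(n\mK_{l}) = n + \sum_{i=1}^{l-1} n\chi^{l-i} = n \sum_{i=0}^{l-1} \chi^{i}.
\]

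Finally, specializing to the hypothesis $\chi = 1$ collapses the geometric sum to $l$, yielding $Tr(n\mK_{l}) = nl$ and hence $Tr(\mK_{l}) = l$, as desired. There is no real obstacle here: all the work has already been done in Lemmas~\ref{lemma:ntk_exp1} and \ref{lemma:unit_var}, and the corollary is essentially a diagonal bookkeeping step combined with the simplification $\chi=1$. The only point to be careful about is reconciling the notation $\mK_{n,l}$ in the statement with $\mK_{l}$ used in the preceding lemmas; these refer to the same object, namely the $n \times n$ NTK matrix at layer $l$ defined in \eqref{eq:ntk_matrix_def}.
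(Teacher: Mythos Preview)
Your proposal is correct and follows precisely the route the paper indicates: the paper states that the corollary ``follows immediately from Lemma~\ref{lemma:unit_var} and \eqref{eq:reccurence_NTK_matrices}'' without spelling out the details, and your argument is exactly the diagonal bookkeeping those two ingredients entail.
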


\subsection{Hermite Expansions}\label{appendix:background_hermite}
We say that a function $f:\reals \rightarrow \reals$ is square integrable w.r.t.\ the standard Gaussian measure $\gamma = e^{-x^2/2} / \sqrt{2 \pi}$ if $\expec_{x \sim \cN(0,1)}[f(x)^2]< \infty$. 
We denote by $L^2(\reals,\gamma)$ the space of all such functions. The probabilist's Hermite polynomials are given by
\begin{align*}
	H_k(x)={(-1)}^ke^{x^2/2} \frac{d^{k}}{d x^{k}} e^{-x^2/2}, \quad k=0,1,\ldots .
\end{align*}
The first three Hermite polynomials are $H_0(x)=1$, $H_1(x)=x$, $H_2(x)=(x^2-1)$. 
Let $h_k(x)=\tfrac{H_k(x)}{\sqrt{k!}}$ denote the normalized probabilist's Hermite polynomials. The normalized Hermite polynomials form a complete orthonormal basis in $L^2(\reals,\gamma)$ \cite[\S 11]{donnellbook}: in all that follows, whenever we reference the Hermite polynomials, we will be referring to the normalized Hermite polynomials. The Hermite expansion of a function $\phi \in L^2(\reals , \gamma)$ is given by 
\begin{align}\label{eq:HermiteExp}
	\phi(x)= \sum_{k=0}^\infty \mu_k(\phi) h_k(x),
\end{align}
where 
\begin{equation} \label{eq:norm_prob_hermite_coeffs}
    \mu_k(\phi) = \expec_{X \sim \cN(0,1)}[\phi(X)h_k(X)]
\end{equation}
is the $k$th normalized probabilist's Hermite coefficient of $\phi$. In what follows we shall make use of the following identities.
\begin{align}
\forall k \geq 1,\, h_{k}^{\prime}(x) &=\sqrt{k} h_{k-1}(x), \label{eq:HP1}\\
\forall k \geq 1,\, x h_{k}(x)&=\sqrt{k+1}h_{k+1}(x)+\sqrt{k} h_{k-1}(x). \label{eq:HP2}
\end{align}
\begin{align}
	\begin{array}{c}
		h_{k}(0)=\left\{\begin{array}{ll}
			0, & \text { if } k \text { is odd } \\
			\frac{1}{\sqrt{k !}}(-1)^{\frac{k}{2}}(k-1) ! ! & \text { if } k \text { is even }
		\end{array}\right., \\ \text{ where }
		k ! !=\left\{\begin{array}{ll}
			1, & k \leq 0 \\
			k \cdot(k-2) \cdots 5 \cdot 3 \cdot 1, & k>0 \text { odd } \\
			k \cdot(k-2) \cdots 6 \cdot 4 \cdot 2, & k>0 \text { even }.
		\end{array}\right.
	\end{array}\label{eq:HP3}
\end{align}

We also remark that the more commonly encountered physicist's Hermite polynomials, which we denote $\tilde{H}_k$, are related to the normalized probablist's polynomials as follows,
\[
    h_k(z) = \frac{2^{-k/2}\tilde{H}_k(z/\sqrt{2})}{\sqrt{k!}}.
\]

The Hermite expansion of the activation function deployed will play a key role in determining the coefficients of the NTK power series. In particular, the Hermite coefficients of ReLU are as follows.

\begin{lemma} \label{lemma:hermite_relu}
     \cite{dual_view} For $\phi(z) = \max\{0,z\}$ the Hermite coefficients are given by
     \begin{equation}
         \mu_k(\phi) =
         \begin{cases}
            1/\sqrt{2 \pi}, &\text{$k=0$},\\
            1/2, &\text{$k=1$},\\
            (k-3)!!/\sqrt{2\pi k!}, &\text{$k$ even and $k\geq 2$},\\
            0 , &\text{$k$ odd and $k>3$}.
         \end{cases}
     \end{equation}
\end{lemma}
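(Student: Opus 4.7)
The plan is to evaluate $\mu_k(\phi) = \mathbb{E}_{Z \sim \cN(0,1)}[\max(Z,0)\, h_k(Z)] = \int_0^\infty z\, h_k(z)\, \gamma(z)\, dz$ case by case, using the Rodrigues-type representation of the Hermite polynomials to turn the integral into a boundary evaluation after integration by parts. The cases $k=0$ and $k=1$ are direct: $\mu_0 = \int_0^\infty z\,\gamma(z)\,dz = 1/\sqrt{2\pi}$ (a standard half-normal mean), and $\mu_1 = \int_0^\infty z^2\,\gamma(z)\,dz = 1/2$ (half of $\mathrm{Var}(Z)$).

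For $k \geq 2$, I would invoke the identity $h_k(z)\,\gamma(z) = \tfrac{(-1)^k}{\sqrt{k!}}\,\gamma^{(k)}(z)$, which follows immediately from the definition of $h_k$ given in Section~2.1 and the fact that $\gamma(z) = e^{-z^2/2}/\sqrt{2\pi}$. This rewrites
\[
\mu_k(\phi) \;=\; \frac{(-1)^k}{\sqrt{k!}}\int_0^\infty z\,\gamma^{(k)}(z)\,dz.
\]
Two successive integrations by parts, using that $\gamma^{(j)}(z) \to 0$ as $z\to\infty$ (since $\gamma^{(j)}$ is a polynomial times $e^{-z^2/2}$) and that the linear factor $z$ kills the boundary at $z=0$ in the first step, yield $\int_0^\infty z\,\gamma^{(k)}(z)\,dz = \gamma^{(k-2)}(0)$. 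Hence
\[
\mu_k(\phi) \;=\; \frac{(-1)^k}{\sqrt{k!}}\,\gamma^{(k-2)}(0) \quad \text{for } k \geq 2.
\]

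To finish, I would evaluate $\gamma^{(k-2)}(0)$ via the standard relation $\gamma^{(j)}(z) = \tfrac{(-1)^j}{\sqrt{2\pi}}\,H_j(z)\,e^{-z^2/2}$, where $H_j$ denotes the unnormalized probabilist's Hermite polynomial. At $z=0$ this gives $\gamma^{(j)}(0) = \tfrac{(-1)^j}{\sqrt{2\pi}}\,H_j(0)$. Applying identity \eqref{eq:HP3} with $j = k-2$: if $k$ is odd and $k\geq 3$ then $j$ is odd and $H_j(0)=0$, so $\mu_k(\phi)=0$; if $k$ is even and $k\geq 2$ then $j$ is even and $H_j(0) = (-1)^{j/2}(j-1)!! = (-1)^{(k-2)/2}(k-3)!!$, giving
\[
\mu_k(\phi) \;=\; \frac{(-1)^{(k-2)/2}\,(k-3)!!}{\sqrt{2\pi\,k!}}.
\]
Up to the sign $(-1)^{(k-2)/2}$, which is inessential for our purposes since only $\mu_k(\phi)^2$ appears in the NTK power series coefficients via Theorem~\ref{theorem:ntk_power_series}, this matches the stated formula.

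The main obstacle I anticipate is the careful sign bookkeeping: one must track three separate sign contributions (the $(-1)^k$ from the Rodrigues formula, the $(-1)^{k-2}$ from differentiating $\gamma$, and the $(-1)^{(k-2)/2}$ from $H_{k-2}(0)$) and also justify that the boundary terms vanish at infinity. A secondary subtlety is ensuring the two integrations by parts are legitimate for all $k \geq 2$, which works precisely because $z$ annihilates the lower boundary at the first step and $\gamma^{(k-1)}$ is integrable on $[0,\infty)$; the case $k=1$ is genuinely different and must be handled separately as above.
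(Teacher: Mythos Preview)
Your proof is correct. The paper does not actually prove this lemma; it simply cites \cite{dual_view} and states the result. Your argument via the Rodrigues representation and two integrations by parts is a clean self-contained derivation.

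Your observation about the sign is on point: a direct check at $k=4$ gives $\mu_4(\phi) = -1/\sqrt{48\pi}$, so the sign $(-1)^{(k-2)/2}$ is genuinely present and the lemma as stated in the paper suppresses it. Indeed, the paper's own downstream use of the lemma (in the proof of Lemma~\ref{lemma:ntk_2layer_coeff_decay}) writes $\mu_p(\mathrm{ReLU}) = \frac{1}{\sqrt{2\pi}}\frac{|h_p(0)|}{p-1}$ with an absolute value, tacitly acknowledging this. Since only $\mu_k^2$ enters the NTK coefficients, your remark that the discrepancy is inessential is exactly right.

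One minor point: the lemma statement reads ``$k$ odd and $k>3$'' while your argument covers $k$ odd and $k\geq 3$; your version is the correct range (and $\mu_3 = 0$ is easily checked directly, or follows from your formula since $H_1(0)=0$).
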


\section{Expressing the NTK as a power series}\label{appendix:power_series}

\subsection{Deriving a power series for the NTK}\label{appendix:subsec:deriving_power_series}

We will require the following minor adaptation of \citet[Lemma D.2]{marco}. We remark this result was first stated for ReLU and Softplus activations in the work of \citet[Lemma H.2]{solt_mod_over}.

\begin{lemma}\label{lemma:matrix_hermite_expansion}
    For arbitrary $n,d \in \naturals$, let $\mA \in \reals^{n \times d}$. For $i \in [n]$, we denote the $i$th row of $\mA$ as $\va_i$, and further assume that $\norm{\va_i} =1$. Let $\phi: \reals \rightarrow \reals$ satisfy $\phi \in L^2(\reals, \gamma)$ and define
    \[
    \mM = \expec_{\vw \sim \cN(0, \textbf{I}_{n})}[\phi(\mA\vw)\phi(\mA \vw)^T] \in \reals^{n \times n}.
    \]
    Then the matrix series
    \[
    \mS_K = \sum_{k = 0}^{K} \mu_k^2(\phi) \left(\mA \mA^T \right)^{\odot k}
    \]
    converges uniformly to $\mM$ as $K \rightarrow \infty$.
\end{lemma}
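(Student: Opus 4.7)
The plan is to prove the lemma entrywise using the Hermite expansion of $\phi$ together with the classical orthogonality identity for Hermite polynomials under a bivariate Gaussian, and then upgrade this to uniform (entrywise) convergence by a Parseval tail bound that is independent of the indices $i,j$.

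\textbf{Step 1 (reduce to entrywise identity).} Fix $i,j\in[n]$ and set $\rho_{ij}=\va_i^T\va_j$. Since $\norm{\va_i}=\norm{\va_j}=1$, the pair $(X,Y):=(\va_i^T\vw,\va_j^T\vw)$ with $\vw\sim\cN(0,\mI_d)$ is a centered bivariate Gaussian with unit variances and correlation $\rho_{ij}\in[-1,1]$. Thus
\[
[\mM]_{ij}=\expec\!\left[\phi(X)\phi(Y)\right].
\]
The target series has entries $[\mS_K]_{ij}=\sum_{k=0}^{K}\mu_k^2(\phi)\,\rho_{ij}^{k}$, since $\bigl[(\mA\mA^T)^{\odot k}\bigr]_{ij}=\rho_{ij}^{k}$. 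So it suffices to show $[\mS_K]_{ij}\to[\mM]_{ij}$ uniformly in $(i,j)$.

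\textbf{Step 2 (Hermite orthogonality under correlated Gaussians).} The key identity I would invoke is that for jointly standard Gaussian $(X,Y)$ with correlation $\rho$,
\[
\expec\!\left[h_{p}(X)\,h_{q}(Y)\right]=\delta_{p=q}\,\rho^{p}.
\]
This is a standard consequence of Mehler's formula (or a direct computation using the generating function of the Hermite polynomials) and is already used implicitly throughout the Hermite-expansion literature cited in the paper. Combined with the Hermite expansion $\phi=\sum_{k}\mu_k(\phi)\,h_k$ in $L^2(\reals,\gamma)$, this will give the formal identity
\[
\expec\!\left[\phi(X)\phi(Y)\right]=\sum_{k=0}^{\infty}\mu_k^2(\phi)\,\rho^{k}.
\]

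\textbf{Step 3 (justify interchange of sum and expectation).} This is the only subtle point: the Hermite series converges to $\phi$ in $L^2(\reals,\gamma)$ but not necessarily pointwise. I would argue as follows. Let $\phi_K:=\sum_{k=0}^{K}\mu_k(\phi)\,h_k$, so $\phi_K\to\phi$ in $L^2(\reals,\gamma)$. Because $X$ and $Y$ are marginally $\cN(0,1)$, $\phi_K(X)\to\phi(X)$ and $\phi_K(Y)\to\phi(Y)$ in $L^2(\prob)$. By Cauchy--Schwarz on the product space, $\phi_K(X)\phi_K(Y)\to\phi(X)\phi(Y)$ in $L^1(\prob)$, so $\expec[\phi_K(X)\phi_K(Y)]\to\expec[\phi(X)\phi(Y)]$. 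On the other hand, by Step~2,
\[
\expec\!\left[\phi_K(X)\phi_K(Y)\right]=\sum_{p,q=0}^{K}\mu_p(\phi)\mu_q(\phi)\,\delta_{p=q}\rho_{ij}^{p}=\sum_{k=0}^{K}\mu_k^2(\phi)\,\rho_{ij}^{k}=[\mS_K]_{ij}.
\]
Taking $K\to\infty$ yields the claimed identity for each $(i,j)$.

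\textbf{Step 4 (uniform tail bound via Parseval).} For uniform convergence, I bound the tail independently of $i,j$:
\[
\bigl|[\mM]_{ij}-[\mS_K]_{ij}\bigr|=\left|\sum_{k=K+1}^{\infty}\mu_k^2(\phi)\,\rho_{ij}^{k}\right|\le\sum_{k=K+1}^{\infty}\mu_k^2(\phi)\,|\rho_{ij}|^{k}\le\sum_{k=K+1}^{\infty}\mu_k^2(\phi).
\]
By Parseval's identity in $L^2(\reals,\gamma)$, $\sum_{k=0}^{\infty}\mu_k^2(\phi)=\expec[\phi(X)^2]<\infty$ under the hypothesis $\phi\in L^2(\reals,\gamma)$, so the tail vanishes as $K\to\infty$. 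This bound is independent of $(i,j)$, giving the desired uniform convergence and completing the proof.

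The main obstacle I anticipate is the rigorous justification in Step~3: swapping the infinite Hermite sum with the expectation over $(X,Y)$ requires the $L^2$ convergence argument above rather than naive term-by-term manipulation. Everything else is a clean application of Hermite orthogonality and Parseval.
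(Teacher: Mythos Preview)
Your proof is correct and follows essentially the same approach as the paper, which defers to \cite[Lemma D.2]{marco}: compute each entry via the Hermite orthogonality identity $\expec[h_p(X)h_q(Y)]=\delta_{p=q}\rho^p$ for unit-variance correlated Gaussians, justify the limit by $L^2$-truncation and Cauchy--Schwarz, and obtain uniformity from the Parseval tail bound $\sum_{k>K}\mu_k^2(\phi)\to 0$ that is independent of $(i,j)$. The paper notes its setting is slightly simpler than the cited reference because the rows are exactly unit norm, which is precisely what makes your Step~1 reduction and Step~4 bound clean.
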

The proof of Lemma \ref{lemma:matrix_hermite_expansion} follows exactly as in \cite[Lemma D.2]{marco}, and is in fact slightly simpler due to the fact we assume the rows of $\mA$ are unit length and $\vw \sim \cN(0, \textbf{I}_d)$ instead of $\sqrt{d}$ and $\vw \sim \cN(0, \frac{1}{d}\textbf{I}_d)$ respectively. For the ease of the reader, we now recall the following definitions, which are also stated in Section \ref{sec:power_series}. Letting $\bar{\alpha}_l \defeq (\alpha_{p,l})_{p=0}^{\infty}$ denote a sequence of real coefficients, then
    \begin{equation}
            F(p,k,\bar{\alpha}_{l}) \defeq 
            \begin{cases}
                1 \; &k=0 \text{ and } p = 0, \\
                0 \; &k=0 \text{ and } p \geq 1,\\
                \sum_{(j_i) \in \cJ(p,k)} \prod_{i=1}^k \alpha_{j_i, l} \; &k\geq 1 \text{ and } p \geq 0,
            \end{cases}
    \end{equation}
    where
    \[
    \cJ(p,k) \defeq \{ (j_i)_{i \in [k]} \; : \; j_i \geq 0 \; \forall i \in [k], \; \sum_{i=1}^k j_i = p  \} 
    \]
    for all $p \in \ints_{\geq 0}$, $k \in \ints_{\geq 1}$. 
    
We are now ready to derive power series for elements of $(\mG_l))_{l=1}^{L+1}$ and $(\dot{\mG}_l))_{l=2}^{L+1}$. 

\begin{lemma} \label{lemma:power_series_G}
     Under Assumptions \ref{assumptions:kernel_regime} and \ref{assumption:init_var_1}, for all $l \in [2,L+1]$
    \begin{equation}\label{eq:recurrence_G}
        \begin{aligned}
            &\mG_{l} = \sum_{k=0}^{\infty} \alpha_{k,l}(\mX\mX^T)^{\odot k},
        \end{aligned}
    \end{equation}
    where the series for each element $[\mG_{l}]_{ij}$ converges absolutely and the coefficients $\alpha_{p,l}$ are nonnegative.
    The coefficients of the series \eqref{eq:recurrence_G} for all $p\in \ints_{\geq 0}$ can be expressed via the following recurrence relationship, 
    \begin{equation}\label{eq:recurrence_G_coeffs}
        \alpha_{p,l} = 
        \begin{cases}
            \sigma_w^2 \mu_p^2(\phi) + \delta_{p=0}\sigma_b^2, &l=2,\\
            \sum_{k=0}^{\infty}\alpha_{k,2} F(p,k,\bar{\alpha}_{l-1}), &l\geq 3.
        \end{cases}
    \end{equation}
    Furthermore,
    \begin{equation} \label{eq:recurrence_Gdot}
        \begin{aligned}
            &\dot{\mG}_{l} = \sum_{k=0}^{\infty} \upsilon_{k,l}(\mX\mX^T)^{\odot k},
        \end{aligned}
    \end{equation}
    where likewise the series for each entry $[\dot{\mG}_{l}]_{ij}$ converges absolutely and the coefficients $\upsilon_{p,l}$ for all $p\in \ints_{\geq 0}$ are nonnegative and can be expressed via the following recurrence relationship, 
     \begin{equation}\label{eq:recurrence_Gdot_coeffs}
        \upsilon_{p,l} = 
        \begin{cases}
            \sigma_w^2 \mu_p^2(\phi'), &l=2,\\
            \sum_{k=0}^{\infty} \upsilon_{k,2} F(p,k,\bar{\alpha}_{l-1}), &l\geq3.
        \end{cases}
    \end{equation}
\end{lemma}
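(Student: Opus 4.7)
The plan is to proceed by induction on $l$, using Lemma~\ref{lemma:matrix_hermite_expansion} as the key analytic tool at each stage and the combinatorial identity encoded by $F(p,k,\bar{\alpha}_{l-1})$ to collapse iterated Hadamard powers.

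\textbf{Base case ($l=2$).} The recurrence in Lemma~\ref{lemma:ntk_exp1} gives $\mG_2 = \sigma_w^2 \EE_{\vw\sim\cN(0,\mI_d)}[\phi(\mX\vw)\phi(\mX\vw)^T] + \sigma_b^2 \mathbf{1}_{n\times n}$. Since Assumption~\ref{assumption:init_var_1} gives $\norm{\vx_i}=1$, the rows of $\mX$ are unit-norm, so Lemma~\ref{lemma:matrix_hermite_expansion} applies directly and yields $\sigma_w^2\sum_{k=0}^{\infty}\mu_k^2(\phi)(\mX\mX^T)^{\odot k}$, uniformly convergent. Writing $\sigma_b^2\mathbf{1}_{n\times n} = \sigma_b^2(\mX\mX^T)^{\odot 0}$ and combining terms gives exactly $\alpha_{p,2} = \sigma_w^2\mu_p^2(\phi) + \delta_{p=0}\sigma_b^2 \geq 0$. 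The identical argument applied to $\dot{\mG}_2$ (which has no bias term) yields $\upsilon_{p,2}=\sigma_w^2\mu_p^2(\phi')\geq 0$.

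\textbf{Inductive step ($l \geq 3$).} Assume the expansion holds for $l-1$. By Lemma~\ref{lemma:unit_var}, $[\mG_{l-1}]_{ii}=1$, so the rows of $\mA := \sqrt{\mG_{l-1}}$ are unit-norm, which makes Lemma~\ref{lemma:matrix_hermite_expansion} applicable again with this $\mA$. This produces
\[
\mG_l \;=\; \sigma_b^2\,\mathbf{1}_{n\times n} + \sigma_w^2\sum_{k=0}^{\infty}\mu_k^2(\phi)\,\mG_{l-1}^{\odot k}.
\]
Now I would substitute the inductive expansion $\mG_{l-1} = \sum_{p=0}^{\infty}\alpha_{p,l-1}(\mX\mX^T)^{\odot p}$ into the $k$-fold Hadamard product. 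Expanding the product of $k$ such series and regrouping terms by the sum of the indices gives
\[
\mG_{l-1}^{\odot k} \;=\; \sum_{p=0}^{\infty}\Big(\sum_{(j_i)\in\cJ(p,k)}\prod_{i=1}^{k}\alpha_{j_i,l-1}\Big)(\mX\mX^T)^{\odot p} \;=\; \sum_{p=0}^{\infty} F(p,k,\bar{\alpha}_{l-1})(\mX\mX^T)^{\odot p}.
\]
Because all coefficients $\alpha_{p,l-1}$ are nonnegative, Tonelli's theorem justifies the rearrangement both of the $k$-fold product and of the subsequent interchange of $\sum_k$ and $\sum_p$. Identifying the $p$-th coefficient and using $F(p,0,\bar{\alpha}_{l-1}) = \delta_{p=0}$ to absorb the bias contribution into the $k=0$ term yields
\[
\alpha_{p,l} \;=\; \delta_{p=0}\sigma_b^2 + \sum_{k=0}^{\infty}\sigma_w^2\mu_k^2(\phi)\,F(p,k,\bar{\alpha}_{l-1}) \;=\; \sum_{k=0}^{\infty}\alpha_{k,2}\,F(p,k,\bar{\alpha}_{l-1}),
\]
which is the claimed formula; nonnegativity is preserved throughout. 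The derivation for $\upsilon_{p,l}$ is word-for-word the same with $\phi'$ replacing $\phi$ and no bias term.

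\textbf{Absolute convergence.} To close the proof I would verify absolute summability of the coefficient sequences. Parseval's identity together with Assumption~\ref{assumption:init_var_1} gives $\sum_p \alpha_{p,2} = \sigma_w^2\EE[\phi(Z)^2] + \sigma_b^2 = 1$. By induction and another Tonelli swap,
\[
\sum_{p=0}^{\infty}\alpha_{p,l} \;=\; \sum_{k=0}^{\infty}\alpha_{k,2}\Big(\sum_{j=0}^{\infty}\alpha_{j,l-1}\Big)^{k} \;\leq\; \sum_{k=0}^{\infty}\alpha_{k,2} \;=\; 1,
\]
and analogously $\sum_p \upsilon_{p,l} \leq \sigma_w^2\EE[\phi'(Z)^2]$. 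Since $|[\mX\mX^T]_{ij}|\leq 1$, this bounds each entrywise series by a convergent sum and gives the claimed absolute (in fact uniform) convergence. The main obstacle I anticipate is bookkeeping rather than analysis: namely, checking carefully that the multinomial expansion of $\mG_{l-1}^{\odot k}$ matches the combinatorial definition of $F$ and that the bias term folds cleanly into the stated recurrence; the nonnegativity of all coefficients makes the series interchanges themselves routine via Tonelli.
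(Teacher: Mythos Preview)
Your proposal is correct and follows essentially the same route as the paper: induction on $l$, applying Lemma~\ref{lemma:matrix_hermite_expansion} at each layer to the unit-norm rows of $\sqrt{\mG_{l-1}}$ (using Lemma~\ref{lemma:unit_var} for the diagonal), then expanding $\mG_{l-1}^{\odot k}$ via the inductive series and collecting terms with $F(p,k,\bar\alpha_{l-1})$. The only cosmetic difference is that you invoke Tonelli (nonnegativity) to justify the rearrangements and compute $\sum_p\alpha_{p,l}\le 1$ directly, whereas the paper phrases the same step via Cauchy products and the entrywise bound $\sum_k\alpha_{k,l}|\langle\vx_i,\vx_j\rangle|^k\le[\mG_l]_{ii}=1$; these are equivalent here.
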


\begin{proof}
    We start by proving \eqref{eq:recurrence_G} and \eqref{eq:recurrence_G_coeffs}. Proceeding by induction, consider the base case $l=2$. From Lemma \ref{lemma:ntk_exp1}
    \[
        \mG_{2} = \sigma_w^2 \expec_{\vw \sim \cN(\textbf{0}, \textbf{I}_d)}[\phi(\mX \vw) \phi(\mX \vw)^T] + \sigma_b^2 \textbf{1}_{n \times n}.
    \]
    By the assumptions of the lemma, the conditions of Lemma \ref{lemma:matrix_hermite_expansion} are satisfied and therefore
    \[
    \begin{aligned}
        \mG_{2} &= \sigma_w^2 \sum_{k = 0}^{\infty} \mu_k^2(\phi) \left(\mX \mX^T \right)^{\odot k} + \sigma_b^2 \textbf{1}_{n\times n}\\
        & = \alpha_{0,2}\textbf{1}_{n \times n} + \sum_{k=1}^{\infty}\alpha_{k,2}\left(\mX \mX^T \right)^{\odot k}.
    \end{aligned}
    \]
    Observe the coefficients $(\alpha_{k,2})_{k \in \ints_{\geq 0}}$ are nonnegative. Therefore, for any $i,j \in [n]$ using Lemma~\ref{lemma:unit_var} the series for $[\mG_{l}]_{ij}$ satisfies 
    \begin{equation}\label{eq:absolute_convergence_G_entries}
    \begin{aligned}
        \sum_{k=0}^{\infty} \abs{\alpha_{k,2}} \abs{\langle \vx_i, \vx_j \rangle^{k}} 
        & \leq \sum_{k=0}^{\infty} \alpha_{k,2} \langle \vx_i, \vx_i \rangle^{k}
        & = [\mG_{l}]_{ii}
        & = 1
    \end{aligned}
    \end{equation}
    and so must be absolutely convergent. With the base case proved we proceed to assume the inductive hypothesis holds for arbitrary $\mG_{l}$ with $l \in [2,L]$. Observe
    \[
    \begin{aligned}
        \mG_{l+1}  &= \sigma_w^2 \expec_{\vw \sim \cN(\textbf{0}, \textbf{I}_n)}[\phi(\mA \vw) \phi(\mA \vw)^T] + \sigma_b^2 \textbf{1}_{n \times n},
    \end{aligned}
    \]
    where $\mA$ is a matrix square root of $\mG_{l}$, meaning $\mG_{l} = \mA \mA$. Recall from Lemma \ref{lemma:ntk_exp1} that $\mG_{l}$ is also symmetric and positive semi-definite, therefore we may additionally assume, without loss of generality, that $\mA \in \reals^{n \times n}$ is symmetric, which conveniently implies $\mG_{n,l} = \mA \mA^T$. Under the assumptions of the lemma the conditions for Lemma \ref{lemma:unit_var} are satisfied and as a result $[\mG_{n,l}]_{ii} = \norm{\va_i} = 1$ for all $i \in [n]$, where we recall $\va_i$ denotes the $i$th row of $\mA$. Therefore we may again apply Lemma \ref{lemma:ntk_exp1},
    \[
    \begin{aligned}
        \mG_{l+1}  &= \sigma_w^2 \sum_{k = 0}^{\infty} \mu_k^2(\phi) \left(\mA \mA^T \right)^{\odot k} + \sigma_b^2 \textbf{1}_{n\times n}\\
        &= (\sigma_w^2\mu_0^2(\phi) + \sigma_b^2) \textbf{1}_{n \times n} + \sigma_w^2\sum_{k=1}^{\infty}\mu_k^2(\phi)\left(\mG_{n,l}\right)^{\odot k}\\
        & = (\sigma_w^2\mu_0^2(\phi) + \sigma_b^2) \textbf{1}_{n \times n} + \sigma_w^2\sum_{k=1}^{\infty}\mu_k^2(\phi)\left(\sum_{m=0}^{\infty} \alpha_{m,l}(\mX\mX^T)^{\odot m}\right)^{\odot k},
    \end{aligned}
    \]
    where the final equality follows from the inductive hypothesis. For any pair of indices $i,j \in [n]$
    \[
    [\mG_{l+1}]_{ij} = (\sigma_w^2\mu_0^2(\phi) + \sigma_b^2) + \sigma_w^2\sum_{k=1}^{\infty}\mu_k^2(\phi)\left(\sum_{m=0}^{\infty} \alpha_{m,l}\langle \vx_i, \vx_j \rangle^m\right)^k.
    \]
     By the induction hypothesis, for any $i,j \in [n]$ the series $\sum_{m=0}^\infty\alpha_{m,l}\langle \vx_i, \vx_j \rangle^m$ is absolutely convergent. Therefore, from the Cauchy product of power series and for any $k \in \ints_{\geq 0}$ we have
     \begin{equation} \label{eq:cauchy_product}
     \left(\sum_{m=0}^{\infty} \alpha_{m,l}\langle \vx_i, \vx_j \rangle^m\right)^k = \sum_{p=0}^{\infty} F(p,k,\bar{\alpha}_{l}) \langle \vx_i, \vx_j \rangle^p,
     \end{equation}
    where $F(p,k,\bar{\alpha}_{l})$ is defined in \eqref{eq:def_F}. By definition, $F(p,k,\bar{\alpha}_{l})$ is a sum of products of positive coefficients, and therefore $\abs{F(p,k,\bar{\alpha}_{l})} = F(p,k,\bar{\alpha}_{l})$. In addition, recall again by Assumption \ref{assumption:init_var_1} and Lemma \ref{lemma:unit_var} that $[\mG_{l}]_{ii} = 1$. As a result, for any $k \in \ints_{\geq 0}$, as $\abs{\langle \vx_i, \vx_j \rangle} \leq 1$
    \begin{equation}\label{eq:capitalfsumsoverp}
    \sum_{p=0}^{\infty} \abs{F(p,k,\bar{\alpha}_{l}) \langle \vx_i, \vx_j \rangle^p} \leq \left(\sum_{m=0}^{\infty} \alpha_{m,l} \right)^k = [\mG_{n,l}]_{ii} = 1
    \end{equation}
     and therefore the series $\sum_{p=0}^{\infty} F(p,k,\bar{\alpha}_{l})\langle \vx_i, \vx_j \rangle^p $ converges absolutely. Recalling from the proof of the base case that the series $\sum_{p=1}^{\infty} \alpha_{p,2}$ is absolutely convergent and has only nonnegative elements, we may therefore interchange the order of summation in the following,
    \[
    \begin{aligned}
    [\mG_{l+1}]_{ij} &= (\sigma_w^2\mu_0^2(\phi) + \sigma_b^2) + \sigma_w^2\sum_{k=1}^{\infty}\mu_k^2(\phi)\left(\sum_{p=0}^{\infty} F(p,k,\bar{\alpha}_{l}) \langle \vx_i, \vx_j \rangle^p\right)\\
    & = \alpha_{0,2} + \sum_{k=1}^{\infty}\alpha_{k,2}\left(\sum_{p=0}^{\infty} F(p,k,\bar{\alpha}_{l}) \langle \vx_i, \vx_j \rangle^p\right)\\
    & = \alpha_{0,2} + \sum_{p=0}^{\infty} \left(\sum_{k=1}^{\infty}\alpha_{k,2} F(p,k,\bar{\alpha}_{l}) \right) \langle \vx_i, \vx_j \rangle^p . 
    \end{aligned}
    \]
    Recalling the definition of $F(p,k,l)$ in \eqref{eq:def_F}, in particular $F(0,0,\bar{\alpha}_l) = 1$ and $F(p,0,\bar{\alpha}_l) = 0$ for $p \in \ints_{\geq 1}$, then
    \[
    \begin{aligned}
     [\mG_{l+1}]_{ij} & = \left(\alpha_{0,2} + \sum_{k=1}^{\infty}\alpha_{k,2} F(0,k,\bar{\alpha}_l)\right) \langle \vx_i, \vx_j \rangle^0 + \sum_{p=1}^{\infty} \left(\sum_{k=1}^{\infty}\alpha_{k,2} F(p,k,\bar{\alpha}_{l}) \right) \langle \vx_i, \vx_j \rangle^p\\
    & = \left(\sum_{k=0}^{\infty}\alpha_{k,2} F(0, k, \bar{\alpha}_l)\right)\langle \vx_i, \vx_j \rangle^0 + \sum_{p=1}^{\infty} \left(\sum_{k=0}^{\infty}\alpha_{k,2} F(p,k,\bar{\alpha}_{l}) \right) \langle \vx_i, \vx_j \rangle^p\\
    & = \sum_{p=0}^{\infty} \left(\sum_{k=0}^{\infty}\alpha_{k,2} F(p,k,\bar{\alpha}_{l}) \right) \langle \vx_i, \vx_j \rangle^p\\
    & = \sum_{p=0}^{\infty} \alpha_{p, l+1} \langle \vx_i, \vx_j \rangle^p.
    \end{aligned}
    \]
    As the indices $i,j \in [n]$ were arbitrary we conclude that
    \[
    \mG_{l+1} = \sum_{p=0}^{\infty} \alpha_{p, l+1} \left( \mX \mX^T\right)^{\odot p}
    \]
    as claimed. In addition, by inspection and using the induction hypothesis it is clear that the coefficients $(\alpha_{p,l+1})_{p=0}^{\infty}$ are nonnegative. Therefore, by an argument identical to \eqref{eq:absolute_convergence_G_entries}, the series for each entry of $[\mG_{l+1}]_{ij}$ is absolutely convergent. This concludes the proof of \eqref{eq:recurrence_G} and \eqref{eq:recurrence_G_coeffs}. 
    
    We now turn our attention to proving the \eqref{eq:recurrence_Gdot} and \eqref{eq:recurrence_Gdot_coeffs}.  Under the assumptions of the lemma the conditions for Lemmas \ref{lemma:ntk_exp1} and \ref{lemma:matrix_hermite_expansion} are satisfied and therefore for the base case $l=2$ 
    \[
    \begin{aligned}
        \dot{\mG}_{2} &= \sigma_w^2 \expec_{\vw \sim \cN(\textbf{0}, \textbf{I}_n)}[\phi'(\mX \vw) \phi'(\mX \vw)^T]\\
        &= \sigma_w^2 \sum_{k=0}^\infty \mu_k^2(\phi') \left(\mX \mX^T \right)^{\odot k}\\
        & = \sum_{k=0}^\infty \upsilon_{k,2} \left(\mX \mX^T \right)^{\odot k}. 
    \end{aligned}
    \]
    By inspection the coefficients $(\upsilon_{p,2})_{p=0}^{\infty}$ are nonnegative and as a result by an argument again identical to \eqref{eq:absolute_convergence_G_entries} the series for each entry of $[\dot{\mG}_{2}]_{ij}$ is absolutely convergent. For $l \in [2,L]$, from \eqref{eq:recurrence_G} and its proof there is a matrix $\mA \in \reals^{n \times n}$ such that $\mG_{l} = \mA \mA^T$. Again applying Lemma \ref{lemma:matrix_hermite_expansion}
    \[
    \begin{aligned}
    \dot{\mG}_{n,l+1}
    & = \sigma_w^2 \expec_{\vw \sim \cN(\textbf{0}, \textbf{I}_n)}[\phi'(\mA \vw) \phi'(\mA \vw)^T]\\
    & = \sigma_w^2 \sum_{k=0}^\infty \mu_k^2(\phi') \left(\mA \mA^T \right)^{\odot k}\\
    & = \sum_{k=0}^\infty \upsilon_{k,2}\left(\mG_{n,l}\right)^{\odot k}\\
    & = \sum_{k=0}^\infty \upsilon_{k,2}\left(\sum_{p=0}^{\infty} \alpha_{p, l} \left( \mX \mX^T\right)^{\odot p}\right)^{\odot k}
    \end{aligned}
    \]
    Analyzing now an arbitrary entry $[\dot{\mG}_{l+1}]_{ij}$, by substituting in the power series expression for $\mG_{l}$ from \eqref{eq:recurrence_G} and using \eqref{eq:cauchy_product} we have
    \[
    \begin{aligned}
        [\dot{\mG}_{l+1}]_{ij} &= \sum_{k=0}^\infty \upsilon_{k,2} \left(\sum_{p=0}^{\infty} \alpha_{p, l}  \langle \vx_i, \vx_j \rangle^p \right)^k\\
        & = \sum_{k=0}^\infty \upsilon_{k,2} \left(\sum_{p=0}^{\infty} F(p,k,\bar{\alpha}_{l}) \langle \vx_i, \vx_j \rangle^p \right)\\
        & = \sum_{p=0}^{\infty} \left(\sum_{k=0}^\infty\upsilon_{k,2} F(p,k,\bar{\alpha}_{l})\right)\langle \vx_i, \vx_j \rangle^p\\
        & = \sum_{p=0}^{\infty} \upsilon_{p, l+1}\langle \vx_i, \vx_j \rangle^p.
    \end{aligned}
    \]
    Note that exchanging the order of summation in the third equality above is justified as for any $k \in \ints_{\geq 0}$ by \eqref{eq:capitalfsumsoverp} we have $\sum_{p=0}^{\infty} F(p,k,\bar{\alpha}_{l}) |\langle \vx_i, \vx_j \rangle|^p \leq 1$ and therefore $\sum_{k=0}^\infty \sum_{p=0}^{\infty}\upsilon_{k,2} F(p,k,\bar{\alpha}_{l}) \langle \vx_i, \vx_j \rangle^p$ converges absolutely. As the indices $i,j \in [n]$ were arbitrary we conclude that
    \[
        \dot{\mG}_{l+1} = \sum_{p=0}^{\infty} \upsilon_{p, l+1} \left(\mX \mX^T \right)^{\odot p}
    \]
    as claimed. Finally, by inspection the coefficients $(\upsilon_{p,l+1})_{p=0}^{\infty}$ are nonnegative, therefore, and again by an argument identical to \eqref{eq:absolute_convergence_G_entries}, the series for each entry of $[\dot{\mG}_{n,l+1}]_{ij}$ is absolutely convergent. This concludes the proof.
\end{proof}

We are now prove the key result of Section \ref{sec:power_series}.

\ntkPowerSeries* 

\begin{proof}
   We proceed by induction. The base case $l=1$ follows trivially from Lemma \ref{lemma:ntk_exp1}. We therefore assume the induction hypothesis holds for an arbitrary $l-1 \in [1,L]$. From \eqref{eq:reccurence_NTK_matrices} and Lemma \ref{lemma:power_series_G}
   \[
   \begin{aligned}
   n\mK_{l} &= \mG_{l} + n\mK_{l-1}\odot \dot{\mG}_{l}\\
   &= \left(\sum_{p = 0}^{\infty} \alpha_{p,l}\left( \mX \mX^T\right)^{\odot p}\right) + \left(n \sum_{q=0}^{\infty} \kappa_{q,l-1}\left( \mX \mX^T \right)^{\odot q}\right)\odot \left(\sum_{w=0}^{\infty} \upsilon_{w,l}\left( \mX \mX^T \right)^{\odot w}\right).
   \end{aligned}
   \]
   Therefore, for arbitrary $i,j \in [n]$
   \[
   \begin{aligned}
   [n\mK_{l}]_{ij}
   &= \sum_{p = 0}^{\infty} \alpha_{p,l}\langle\vx_i, \vx_j \rangle^p + \left( n\sum_{q=0}^{\infty} \kappa_{q,l-1}\langle\vx_i, \vx_j \rangle^q\right) \left(\sum_{w=0}^{\infty} \upsilon_{w,l}\langle\vx_i, \vx_j \rangle^w\right).
   \end{aligned}
   \]
   Observe $n\sum_{q=0}^{\infty} \kappa_{q,l-1}\langle\vx_i, \vx_j \rangle^q = \Theta^{(l-1)}(\vx_i, \vx_j)$ and therefore the series must converge due to the convergence of the NTK. Furthermore, $\sum_{w=0}^{\infty} \upsilon_{w,l}\langle\vx_i, \vx_j \rangle^w = [\dot{\mG}_{n,l}]_{ij}$ and therefore is absolutely convergent by Lemma \ref{lemma:power_series_G}. As a result, by Merten's Theorem the product of these two series is equal to their Cauchy product. Therefore
   \[
   \begin{aligned}
   [n\mK_{l}]_{ij}
   &= \sum_{p = 0}^{\infty} \alpha_{p,l}\langle\vx_i, \vx_j \rangle^p + \sum_{p=0}^{\infty} \left(\sum_{q = 0}^p \kappa_{q,l-1}\upsilon_{p-q,l} \right)\langle\vx_i, \vx_j \rangle^p\\
   &=\sum_{p = 0}^{\infty} \left( \alpha_{p,l} + \sum_{q = 0}^p \kappa_{q,l-1}\upsilon_{p-q,l}\right) \langle\vx_i, \vx_j \rangle^p\\
   & = \sum_{p = 0}^{\infty} \kappa_{p,l}\langle\vx_i, \vx_j \rangle^p,
   \end{aligned}
   \]
   from which the \eqref{eq:ntk_power_series} immediately follows.
\end{proof}

\subsection{Analyzing the coefficients of the NTK power series} \label{appendix:subsec:analyzing_ntk_coefficients}
In this section we study the coefficients of the NTK power series stated in Theorem \ref{theorem:ntk_power_series}. Our first observation is that, under additional assumptions on the activation function $\phi$, the recurrence relationship \eqref{eq:recurrence_ntk_coeffs} can be simplified in order to depend only on the Hermite expansion of $\phi$.

\begin{lemma}\label{lem:derivhermiterelation}
Under Assumption \ref{assumption:phi} the Hermite coefficients of $\phi'$ satisfy
\[ \mu_k(\phi') = \sqrt{k + 1} \mu_{k + 1}(\phi) \]
for all $k \in \ints_{\geq 0}$.
\end{lemma}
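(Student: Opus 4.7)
The plan is to prove the identity by integration by parts, combined with the Rodrigues-type formula for normalized Hermite polynomials.

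First I would write the left-hand side explicitly against the Gaussian weight $\gamma(x) = e^{-x^2/2}/\sqrt{2\pi}$:
\[
\mu_k(\phi') = \int_{-\infty}^{\infty} \phi'(x)\, h_k(x)\, \gamma(x)\, dx.
\]
The key algebraic input is the identity
\[
\frac{d}{dx}\bigl[h_k(x)\, e^{-x^2/2}\bigr] = -\sqrt{k+1}\, h_{k+1}(x)\, e^{-x^2/2},
\]
which follows directly from the Rodrigues representation $H_k(x) e^{-x^2/2} = (-1)^k (d^k/dx^k) e^{-x^2/2}$ together with the normalization $h_k = H_k/\sqrt{k!}$ and the relation $\sqrt{k!}\sqrt{k+1} = \sqrt{(k+1)!}$.

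Next I would apply integration by parts. Assumption~\ref{assumption:phi} ensures that $\phi$ is absolutely continuous on every compact interval, so integration by parts is justified on each $[-R,R]$, giving
\[
\int_{-R}^{R} \phi'(x)\, h_k(x)\, \gamma(x)\, dx = \bigl[\phi(x)\, h_k(x)\, \gamma(x)\bigr]_{-R}^{R} + \sqrt{k+1} \int_{-R}^{R} \phi(x)\, h_{k+1}(x)\, \gamma(x)\, dx.
\]
The boundary term vanishes in the limit $R \to \infty$ because $\phi$ is polynomially bounded, $h_k$ is a polynomial, and the Gaussian factor decays super-polynomially. Taking $R \to \infty$ and using that $\phi \in L^2(\reals,\gamma)$ (a consequence of polynomial boundedness, which makes the right-hand integral convergent and equal to $\mu_{k+1}(\phi)$) yields
\[
\mu_k(\phi') = \sqrt{k+1}\, \mu_{k+1}(\phi),
\]
as claimed.

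The only mildly delicate step is the passage to the limit: one must check both that the boundary terms vanish and that the integrals on both sides converge as $R\to\infty$. Polynomial boundedness of $\phi$ together with Gaussian decay handles the boundary terms, while $\phi' \in L^2(\reals,\gamma)$ and $\phi \in L^2(\reals,\gamma)$ (combined with $h_k, h_{k+1} \in L^2(\reals,\gamma)$) and Cauchy–Schwarz ensure absolute convergence of both integrals. No other subtleties are anticipated.
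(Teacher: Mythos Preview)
Your proposal is correct and follows essentially the same route as the paper: integration by parts on $[-R,R]$ using absolute continuity of $\phi$, vanishing of boundary terms via polynomial boundedness against Gaussian decay, and passage to the limit. The only cosmetic difference is that you obtain the identity $\frac{d}{dx}[h_k(x)e^{-x^2/2}] = -\sqrt{k+1}\,h_{k+1}(x)e^{-x^2/2}$ directly from the Rodrigues formula, whereas the paper derives the equivalent fact $xh_k(x)-h_k'(x)=\sqrt{k+1}\,h_{k+1}(x)$ from the standard recurrences.
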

\begin{proof}
Note for each $n \in \mathbb{N}$ as $\phi$ is absolutely continuous on $[-n, n]$ it is differentiable a.e. on $[-n, n]$.  It follows by the countable additivity of the Lebesgue measure that $\phi$ is differentiable a.e. on $\mathbb{R}$.  Furthermore, as $\phi$ is polynomially bounded we have $\phi \in L^2(\mathbb{R}, e^{-x^2 / 2} / \sqrt{2 \pi})$.  Fix $a > 0$. Since $\phi$ is absolutely continuous on $[-a, a]$ it is of bounded variation on $[-a, a]$.  Also note that $h_k(x) e^{-x^2 / 2}$ is of bounded variation on $[-a, a]$ due to having a bounded derivative.  Thus we have by Lebesgue-Stieltjes integration-by-parts (see e.g. \citealt[Chapter 3]{folland1999}) 
\begin{gather*}
\int_{-a}^a \phi'(x) h_k(x) e^{-x^2 / 2} dx \\
= \phi(a) h_k(a) e^{-a^2 / 2} - \phi(-a) h_k(-a) e^{-a^2 / 2} + \int_{-a}^a \phi(x) [x h_k(x) - h_k'(x)]e^{-x^2/2} dx \\
= \phi(a) h_k(a) e^{-a^2 / 2} - \phi(-a) h_k(-a) e^{-a^2 / 2} + \int_{-a}^a \phi(x) \sqrt{k + 1} h_{k + 1}(x) e^{-x^2 / 2} dx , 
\end{gather*}
where in the last line above we have used the fact that \eqref{eq:HP1} and \eqref{eq:HP2} imply that $x h_k(x) - h_k'(x) = \sqrt{k + 1} h_{k + 1}(x)$. 
Thus we have shown 
\begin{gather*}
\int_{-a}^a \phi'(x) h_k(x) e^{-x^2 / 2} dx \\
= \phi(a) h_k(a) e^{-a^2 / 2} - \phi(-a) h_k(-a) e^{-a^2 / 2} + \int_{-a}^a \phi(x) \sqrt{k + 1} h_{k + 1}(x) e^{-x^2 / 2} dx.   
\end{gather*}
We note that since $|\phi(x) h_k(x)| = \mathcal{O}(|x|^{\beta + k})$ we have that as $a \rightarrow \infty$ the first two terms above vanish.  Thus by sending $a \rightarrow \infty$ we have
\[ 
\int_{-\infty}^\infty \phi'(x) h_k(x) e^{-x^2 / 2} dx = \int_{-\infty}^\infty \sqrt{k + 1} \phi(x) h_{k + 1}(x) e^{-x^2 / 2} dx.
\]
After dividing by $\sqrt{2 \pi}$ we get the desired result. 
\end{proof}

In particular, under Assumption~\ref{assumption:phi}, and as highlighted by Corollary~\ref{corollary:upsilons_as_alphas}, which follows directly from Lemmas \ref{lemma:power_series_G} and \ref{lem:derivhermiterelation}, the NTK coefficients can be computed only using the Hermite coefficients of~$\phi$. 

\begin{corollary} \label{corollary:upsilons_as_alphas}
Under Assumptions \ref{assumptions:kernel_regime}, \ref{assumption:init_var_1} and \ref{assumption:phi}, for all $p \in \ints_{\geq 0}$
\begin{equation}\label{eq:recurrence_Gdot_coeffs_simple}
        \upsilon_{p,l} = 
        \begin{cases}
            (p+1)\alpha_{p+1,2}, &l=2,\\
            \sum_{k=0}^{\infty} \upsilon_{k,2} F(p,k,\bar{\alpha}_{l-1}), &l\geq 3.
        \end{cases}
\end{equation}
\end{corollary}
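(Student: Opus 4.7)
The plan is to simply combine the two building blocks already established just above the statement. From Lemma~\ref{lemma:power_series_G} we have the explicit formulas $\upsilon_{p,2} = \sigma_w^2 \mu_p^2(\phi')$ and $\alpha_{p,2} = \sigma_w^2 \mu_p^2(\phi) + \delta_{p=0} \sigma_b^2$. Lemma~\ref{lem:derivhermiterelation}, which requires Assumption~\ref{assumption:phi}, then gives the identity $\mu_k(\phi') = \sqrt{k+1}\,\mu_{k+1}(\phi)$ relating the Hermite coefficients of $\phi'$ to those of $\phi$.

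First, I would substitute the Hermite identity into the base-case expression for $\upsilon_{p,2}$ to obtain
\[
\upsilon_{p,2} \;=\; \sigma_w^2 \mu_p^2(\phi') \;=\; \sigma_w^2 (p+1)\, \mu_{p+1}^2(\phi).
\]
Next, I would observe that for $p \geq 0$ the index $p+1$ is strictly positive, so the Kronecker delta term in $\alpha_{p+1,2}$ vanishes, giving $\alpha_{p+1,2} = \sigma_w^2 \mu_{p+1}^2(\phi)$. Combining the two displays yields
\[
\upsilon_{p,2} \;=\; (p+1)\,\alpha_{p+1,2},
\]
which is the new content of the corollary. For the $l \geq 3$ branch there is nothing to prove: it is verbatim the recurrence \eqref{eq:recurrence_Gdot_coeffs} from Lemma~\ref{lemma:power_series_G}, simply repeated here so that \eqref{eq:recurrence_Gdot_coeffs_simple} constitutes a complete, self-contained recursion in terms of $\{\alpha_{p,l}\}$ alone (which themselves are expressible purely through the Hermite coefficients of $\phi$ via \eqref{eq:recurrence_G_coeffs}).

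There is no genuine obstacle here; the corollary is essentially a bookkeeping statement that propagates Lemma~\ref{lem:derivhermiterelation} into the NTK power-series recurrence. The only subtlety worth flagging is verifying that Assumption~\ref{assumption:phi} is compatible with Assumption~\ref{assumptions:kernel_regime} (in particular that $\phi, \phi' \in L^2(\reals, \gamma)$, so that the Hermite expansions used in Lemma~\ref{lemma:power_series_G} are valid); this is immediate since Assumption~\ref{assumption:phi} explicitly imposes $\phi' \in L^2(\reals,\gamma)$, while polynomial boundedness of $\phi$ gives $\phi \in L^2(\reals,\gamma)$.
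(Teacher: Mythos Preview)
Your proposal is correct and matches the paper's approach exactly: the paper simply states that the corollary follows directly from Lemmas~\ref{lemma:power_series_G} and~\ref{lem:derivhermiterelation}, which is precisely the substitution you carry out. Your observation that the $\delta_{p=0}$ term in $\alpha_{p+1,2}$ vanishes because $p+1\geq 1$ is the only small detail needed, and you have it.
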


With these results in place we proceed to analyze the decay of the coefficients of the NTK for depth two networks. As stated in the main text, the decay of the NTK coefficients depends on the decay of the Hermite coefficients of the activation function deployed. This in turn is strongly influenced by the behavior of the tails of the activation function. To this end we roughly group activation functions into three categories: growing tails, flat or constant tails and finally decaying tails. Analyzing each of these groups in full generality is beyond the scope of this paper, we therefore instead study the behavior of ReLU, Tanh and Gaussian activation functions, being prototypical and practically used examples of each of these three groups respectively. We remark that these three activation functions satisfy Assumption \ref{assumption:phi}. For typographical ease we let $\omega_{\sigma}(z) \defeq (1/\sqrt{2 \pi \sigma^2}) \exp\left( -z^2/ (2 \sigma^2)\right)$ denote the Gaussian activation function with variance $\sigma^2$.

\NTKcoeffTwolayer*

\begin{proof}
      Recall \eqref{eq:ntk_coeffs_2_layer_simple},
    \[
        \kappa_{p,2} = \sigma_w^2(1 + \gamma_w^2 p)\mu_p^2(\phi) +\sigma_w^2 \gamma_b^2(1+p) \mu_{p+1}^2(\phi)+ \delta_{p=0} \sigma_b^2.
    \]
    In order to bound $\kappa_{p,2}$ we proceed by using Lemma \ref{lemma:hermite_relu} to bound the square of the Hermite coefficients. We start with ReLU. Note Lemma \ref{lemma:hermite_relu} actually provides precise expressions for the Hermite coefficients of ReLU, however, these are not immediately easy to interpret. Observe from Lemma~\ref{lemma:hermite_relu} that above index $p=2$ all odd indexed Hermite coefficients are $0$. It therefore suffices to bound the even indexed terms, given by 
    \[
    \mu_p(ReLU) = \frac{1}{\sqrt{2 \pi}}\frac{(p-3)!!}{\sqrt{p!}}.
    \]
    Observe from \eqref{eq:HP3} that for $p$ even
    \[
    h_p(0) = (-1)^{p/2} \frac{(p-1)!!}{\sqrt{p!}}, 
    \]
    therefore
    \[
    \mu_p(ReLU) = \frac{1}{\sqrt{2 \pi}}\frac{(p-3)!!}{\sqrt{p!}} =  \frac{1}{\sqrt{2 \pi}} \frac{\abs{h_p(0)}}{p-1}.
    \]
    Analyzing now $\abs{h_p(0)}$, 
    \[ \frac{(p - 1)!!}{\sqrt{p!}} = \frac{\prod_{i = 1}^{p/2} (2i - 1)}{\sqrt{\prod_{i = 1}^{p/2} (2i - 1) 2i}} = \sqrt{\frac{\prod_{i = 1}^{p/2} (2i - 1)}{\prod_{i = 1}^{p/2} 2i}} = \sqrt{\frac{(p - 1)!!}{p!!}}.\]
    Here, the expression inside the square root is referred to in the literature as the Wallis ratio, for which the following lower and upper bounds are available \cite{kazarinoff_1956},
    \begin{equation} \label{eq:wallis_ratio_bounds}
        \sqrt{\frac{1}{\pi(p+0.5)}} < \frac{(p - 1)!!}{p!!} < \sqrt{\frac{1}{\pi(p+0.25)}}.
    \end{equation}
    As a result
    \[
    \abs{h_p(0)} = \Theta(p^{-1/4})
    \]
    and therefore
    \[
    \mu_p(ReLU) = 
    \begin{cases}
        \Theta(p^{-5/4}), \; &p \text{ even},\\
        0, \; &p \text{ odd}.
    \end{cases}
    \]
    As $(p+1)^{-3/2} = \Theta(p^{-3/2})$, then from \eqref{eq:ntk_coeffs_2_layer_simple}  
    \[
    \begin{aligned}
        \kappa_{p,2}  &= \Theta(( p \mu_p^2(ReLU) + \delta_{\gamma_b>0}(p+1) \mu_{p+1}^2(ReLU)))\\
        & = \Theta(( \delta_{p \text{ even}}p^{-3/2} + \delta_{(p \text{ odd})\cap(\gamma_b>0)}(p+1)^{-3/2}))\\
        &= \Theta \left( \delta_{(p \text{ even}) \cup \left((p \text{ odd})\cap(\gamma_b>0)\right)}p^{-3/2}\right)\\
        & = \delta_{(p \text{ even}) \cup(\gamma_b>0)} \Theta \left( p^{-3/2}\right) 
    \end{aligned}
    \]
    as claimed in item \emph{1.} 
    
    We now proceed to analyze $\phi(z) = Tanh(z)$. From \citet[Corollary F.7.1]{Panigrahi2020Effect}
    \[
    \mu_p(Tanh') = \cO \left(\exp\left( - \frac{\pi \sqrt{p}}{4}\right)\right) . 
    \]
     As Tanh satisfies the conditions of Lemma~\ref{lem:derivhermiterelation}
    \[
    \mu_p(Tanh) = p^{-1/2} \mu_{p-1}(Tanh') = \cO \left( p^{-1/2}\exp\left( - \frac{\pi \sqrt{p-1}}{4}\right)\right).
    \]
    Therefore the result claimed in item \emph{2.} follows as
    \[
    \begin{aligned}
    \kappa_{p,2}  &= \cO((p\mu_p^2(Tanh) + (p+1)\mu_{p+1}^2(Tanh))) \\
    &= \cO \left( \exp\left( - \frac{\pi \sqrt{p-1}}{2}\right) + \exp\left( - \frac{\pi \sqrt{p}}{2}\right) \right)\\
    &= \cO \left( \exp\left( - \frac{\pi \sqrt{p-1}}{2}\right)\right).
    \end{aligned}
    \]
    Finally, we now consider $\phi(z) = \omega_{\sigma}(z)$ where $\omega_{\sigma}(z)$ is the density function of $\cN(0, \sigma^2)$. Similar to ReLU, analytic expressions for the Hermite coefficients of $\omega_{\sigma}(z)$ are known \citep[see e.g.,][Theorem 2.9]{davis_hermite}, 
    \[
    \mu_p^2(\omega_{\sigma}) = 
    \begin{cases}
        \frac{p!}{(\left(p/2\right)!)^2 2^{p}2 \pi (\sigma^2 + 1)^{p+1}}, \; &\text{p even},\\
        0, \; &\text{p odd}.\\
    \end{cases}
    \]
    For $p$ even
    \[
        (p/2)! = p!! 2^{-p/2}.
    \]
    Therefore
    \[
    \begin{aligned}
        \frac{p!}{(p/2)!(p/2)!} = 2^p\frac{p!}{p!! p!!}
        = 2^p \frac{(p-1)!!}{p!!}.
    \end{aligned}
    \]
    As a result, for $p$ even and using \eqref{eq:wallis_ratio_bounds}, it follows that
    \[
    \begin{aligned}
        \mu_p^2(\omega_{\sigma}) &= \frac{(\sigma^2+1)^{-(p+1)}}{2 \pi } \frac{(p-1)!!}{p!!}
        & = \Theta(p^{-1/2}(\sigma^2+1)^{-p}) . 
    \end{aligned}
    \] 
    Finally, since $(p+1)^{1/2}(\sigma^2+1)^{-p-1} = \Theta(p^{1/2}(\sigma^2+1)^{-p})$, then from \eqref{eq:ntk_coeffs_2_layer_simple}  
    \[
    \begin{aligned}
        \kappa_{p,2}  &= \Theta(( p \mu_p^2(\omega_{\sigma}) + \delta_{\gamma_b>0}(p+1) \mu_{p+1}^2(\omega_{\sigma})))\\
        &= \Theta \left( \delta_{(p \text{ even}) \cup \left((p \text{ odd})\cap(\gamma_b>0)\right)}p^{1/2}(\sigma^2+1)^{-p}\right)\\
        & = \delta_{(p \text{ even}) \cup(\gamma_b>0)} \Theta \left( p^{1/2}(\sigma^2+1)^{-p}\right)
    \end{aligned}
    \]
    as claimed in item \emph{3.} 
\end{proof}

\subsection{Numerical approximation via a truncated NTK power series and interpretation of Figure \ref{fig:error_truncated_ntk}} \label{appendix:numerical_approx_ntk}
Currently, computing the infinite width NTK requires either 
a) explicit evaluation of the Gaussian integrals highlighted in \eqref{eq:recurrence_G_matrices}, 
b) numerical approximation of these same integrals such as in \cite{LeeBNSPS18}, or 
c) approximation via a sufficiently wide yet still finite width network, see for instance \cite{torchNTK, pmlr-v162-novak22a}. These Gaussian integrals \eqref{eq:recurrence_G_matrices} can be solved solved analytically only for a minority of activation functions, notably ReLU as discussed for example by \cite{arora_exact_comp}, while the numerical integration and finite width approximation approaches are relatively computationally expensive.
The truncated NTK power series we define as analogous to \eqref{eq:ntk_power_series} but with the series involved being computed only up to the $T$th element. Once the top $T$ coefficients are computed, then for any input correlation the NTK can be approximated by evaluating the corresponding finite degree $T$ polynomial.

\begin{definition} \label{def:truncated_ntk}
    For an arbitrary pair $\vx, \vy \in \sphere^{d-1}$ let $\rho = \vx^T\vy$ denote their linear correlation. Under Assumptions \ref{assumptions:kernel_regime}, \ref{assumption:init_var_1} and \ref{assumption:phi}, for all $l \in [2, L+1]$ the $T$-truncated NTK power series $\hat{\Theta}_{T}^{(l)}:[-1,1]\rightarrow \reals$ is defined as
    \begin{equation}
        \Theta_T^{(l)}(\rho) = \sum_{p=0}^T \hat{\kappa}_{p,l} \rho^p.
    \end{equation}
    and whose coefficients are defined via the following recurrence relation,
    \begin{equation}
        \hat{\kappa}_{p,l} = 
        \begin{cases}
            \delta_{p=0}\gamma_b^2 + \delta_{p=1}\gamma_w^2, & l=1,\\
            \hat{\alpha}_{p,l} + \sum_{q = 0}^p \hat{\kappa}_{q,l-1}\hat{\upsilon}_{p-q,l}, &l \in [2,L+1].
        \end{cases}
    \end{equation}
    Here, with $\bar{\hat{\alpha}}_{l-1} = (\hat{\alpha}_{p,l-1})_{p=0}^T$,
    \begin{equation}
        \hat{\alpha}_{p,l} \defeq 
        \begin{cases}
            \sigma_w^2 \mu_p^2(\phi) + \delta_{p=0}\sigma_b^2, &l=2,\\
            \sum_{k=0}^{T} \hat{\alpha}_{k,2} F(p,k,\bar{\hat{\alpha}}_{l-1}), &l\geq 3
        \end{cases}
    \end{equation}
    and
    \begin{equation}
        \hat{\upsilon}_{p,l} \defeq 
        \begin{cases}
            \sqrt{p+1}\hat{\alpha}_{p+1,2}, &l=2,\\
            \sum_{k=0}^{T} \sqrt{k+1} \hat{\alpha}_{p+1,2} F(p,k,\bar{\hat{\alpha}}_{l}), &l\geq 3.
        \end{cases}
    \end{equation}
\end{definition}

In order to analyze the performance and potential of the truncated NTK for numerical approximation, we compute it for ReLU and compare it with its analytical expression \cite{arora_exact_comp}. To recall this result, let
\[
\begin{aligned}
    R(\rho) &\defeq \frac{\sqrt{1-\rho^2} + \rho \cdot \arcsin(\rho)}{\pi} + \frac{\rho}{2},\\
    R'(\rho) &\defeq \frac{\arcsin(\rho)}{\pi} + \frac{1}{2}.
\end{aligned}
\]
Under Assumptions \ref{assumptions:kernel_regime} and \ref{assumption:init_var_1}, with $\phi(z) = ReLU(z)$, $\gamma_w^2 = 1$, $\sigma_w^2 = 2$, $\sigma_b^2 = \gamma_b^2 = 0$, $\vx, \vy \in \sphere^d$ and $\rho_1 \defeq \vx^T\vy$, then $\Theta_1(\vx,\vy) = \rho$ and for all $l \in [2,L+1]$
\begin{equation}
    \begin{aligned}
        &\rho_l = R(\rho_{l-1}),\\
        &\Theta_l(\vx, \vy) = \rho_l + \rho_{l-1}R'(\rho_{l-1}).
    \end{aligned}
\end{equation}
Turning our attention to Figure \ref{fig:error_truncated_ntk}, we observe particularly for input correlations $\abs{\rho} \approx 0.5$ and below then the truncated ReLU NTK power series achieves machine level precision. For $\abs{\rho} \approx 1$ higher order coefficients play a more significant role. As the truncated ReLU NTK power series approximates these coefficients less well the overall approximation of the ReLU NTK is worse. We remark also that negative correlations have a smaller absolute error as odd indexed terms cancel with even index terms: we emphasize again that in Figure \ref{fig:error_truncated_ntk} we plot the absolute not relative error. In addition, for $L=1$ there is symmetry in the absolute error for positive and negative correlations as $\alpha_{p,2} = 0$ for all odd $p$. One also observes that approximation accuracy goes down with depth, which is due to the error in the coefficients at the previous layer contributing to the error in the coefficients at the next, thereby resulting in an accumulation of error with depth. Also, and certainly as one might expect, a larger truncation point $T$ results in overall better approximation. Finally, as the decay in the Hermite coefficients for ReLU is relatively slow, see e.g., Table~\ref{tab:kappa_2layer_coeffs} and Lemma~\ref{lemma:ntk_2layer_coeff_decay}, we expect the truncated ReLU NTK power series to perform worse relative to the truncated NTK's for other activation functions.

\begin{figure}[ht]
    \centering
    \includegraphics[width=0.8\textwidth]{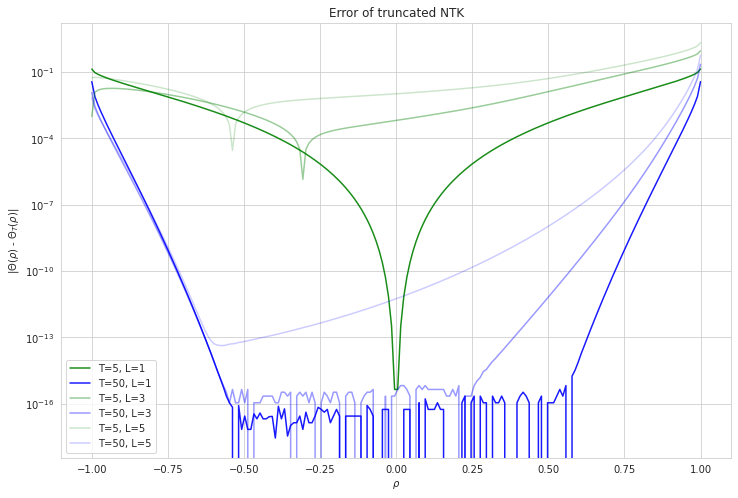}
    \caption{\textbf{(NTK Approximation via Truncation)} Absolute error between the analytical ReLU NTK and the truncated ReLU NTK power series as a function of the input correlation $\rho$ for two different values of the truncation point $T$ and three different values for the depth $L$ of the network. Although the truncated NTK achieves a uniform approximation error of only $10^{-1}$ on $[-1,1]$, for $\abs{\rho} \leq 0.5$, which we remark is more typical for real world data, $T=50$ suffices for the truncated NTK to achieve machine level precision.
    } 
    \label{fig:error_truncated_ntk}
\end{figure}

\subsection{Characterizing NTK power series coefficient decay rates for deep networks} \label{subsec:appendix:deep_decay}
In general, Theorem \ref{theorem:ntk_power_series} does not provide a straightforward path to analyzing the decay of the NTK power series coefficients for depths greater than two. This is at least in part due to the difficulty of analyzing $F(p,k,\bar{\alpha}_{l-1})$, which recall is the sum of all ordered products of $k$ elements of $\bar{\alpha}_{l-1}$ whose indices sum to $p$, defined in \eqref{eq:def_F}. However, in the setting where the squares of the Hermite coefficients, and therefore the series $(\alpha_{p,2})_{p=0}^{\infty}$, decay at an exponential rate, this quantity can be characterized and therefore an analysis, at least to a certain degree, of the impact of depth conducted. Although admittedly limited in scope, we highlight that this setting is relevant for the study of Gaussian activation functions and radial basis function (RBF) networks.  We will also make the additional simplifying assumption that the activation function has zero Gaussian mean (which can be obtained by centering).  Unfortunately this further reduces the applicability of the following results to activation functions commonly used in practice. We leave the study of relaxing this zero bias assumption, perhaps only enforcing exponential decay asymptotically, as well as a proper exploration of other decay patterns, to future work.

The following lemma precisely describes, in the specific setting considered here, the evolution of the coefficients of the Gaussian Process kernel with depth. 

\begin{lemma} \label{lemma:deep_alpha_coeffs}
    Let $\alpha_{0,2} = 0$ and $\alpha_{p,2} = C_2 \eta_2^{-p}$ for $p \in \ints_{\geq 1}$, where $C_2$ and $\eta_2$ are constants such that $\sum_{p=1}^{\infty} \alpha_{p,2} = 1$. Then for all $l\geq 2$ and $p \in \ints_{\geq 0}$  \begin{equation}\label{eq:deep_exp_induction_hypothesis}
    \alpha_{p,l+1} = 
         \begin{cases}
            0, & \; p=0,\\
            C_{l+1} \eta_{l+1}^{-p}, & \;p \geq 1
        \end{cases}
    \end{equation}
    where the constants $\eta_{l+1}$ and $C_{l+1}$ are defined as
    \begin{equation}
    \begin{aligned} \label{eq:recurrence_C_eta}
        \eta_{l+1} = \frac{\eta_l \eta_2}{\eta_2 + C_l}, \;\; 
        C_{l+1} = \frac{C_l C_2}{\eta_2 + C_l}.
    \end{aligned}
    \end{equation}
\end{lemma}

\begin{proof}
Observe for $l=2$, we have that $\alpha_{0,l} = 0$ and $\alpha_{p, l} = C_l \eta_l^{-p}$ hold by assumption.  Thus by induction it suffices to show that $\alpha_{0, l} = 0$ and $\alpha_{p, l} = C_l \eta_l^{-p}$ implies \eqref{eq:deep_exp_induction_hypothesis} and \eqref{eq:recurrence_C_eta} hold.  Thus assume for some $l \geq 2$ we have that $\alpha_{0, l} = 0$ and $\alpha_{p, l} = C_l \eta_l^{-p}$. 
 Recall the definition of $F$ from \eqref{eq:def_F}: as $\alpha_{0,l} = 0$ then with $p \geq 1$ and $1 \leq k \leq p$
\[
F(p,k,\bar{\alpha}_l) = \sum_{(j_i) \in \cJ(p,k)} \prod_{i=1}^k \alpha_{j_i,l} = \sum_{(j_i) \in \cJ_{+}(p,k)} \prod_{i=1}^k \alpha_{j_i,l},
\]
where
\[
    \cJ_{+}(p,k) \defeq \big\{ (j_i)_{i \in [k]} \; : \; j_i \geq 1 \; \forall i \in [k], \; \sum_{i=1}^k j_i = p  \big\} \quad \text{for all $p \in \ints_{\geq 1}$, $k \in [p]$},
\]
which is the set of all $k$-tuples of \textit{positive} (instead of non-negative) integers which sum to $p$. Substituting $\alpha_{p,l} = C_l \eta_l^{-p}$ then
\[
    F(p,k,\bar{\alpha}_l) = \sum_{(j_i) \in \cJ_{+}(p,k)} C_l^k \eta_l^{-p} = C_l^k \eta_l^{-p} |\cJ_{+}(p,k)| =C_l^k \eta_l^{-p} \binom{p-1}{k-1}, 
\]
where the final equality follows from a stars and bars argument. Now observe for $k > p$ that at least one of the indices in $(j_i)_{i=1}^k$ must be 0 and therefore $\prod_{i=1}^k \alpha_{j_i,2} = 0$. As a result under the assumptions of the lemma
\begin{equation} \label{eq:F_at_layer3}
    F(p,k,\bar{\alpha}_l) = 
    \begin{cases}
        1, \; &k=0 \text{ and } p = 0, \\
        C_l^k \eta_l^{-p} \binom{p-1}{k-1}, \; & \; k \in [p] \text{ and } p \geq 1,\\
         0, \; &\text{ otherwise.}\\
    \end{cases}
\end{equation}
Substituting \eqref{eq:F_at_layer3} into \eqref{eq:recurrence_alpha_coeffs} it follows that
\[
\alpha_{0,l+1}  = \sum_{k=0}^{\infty} \alpha_{k,2} F(0,k,\bar{\alpha}_{l}) = \alpha_{0,2} = 0
\]
and for $p \geq 1$
\[
\begin{aligned}
\alpha_{p,l+1} &= \sum_{k=0}^{\infty} \alpha_{k,2} F(p,k,\bar{\alpha}_{l})\\
& = C_2 \eta_l^{-p} \sum_{k=1}^p  \left(\frac{C_l}{\eta_2} \right)^{k}  \binom{p-1}{k-1}\\
& = \eta_l^{-p} C_l \eta_2^{-1} C_2 \sum_{h=0}^{p-1} \left(\frac{C_l}{\eta_2} \right)^{h}\binom{p-1}{h}\\
& = \eta_l^{-p} C_l \eta_2^{-1} C_2 \left(1 + \frac{C_l}{\eta_2} \right)^{p-1}\\
& = \frac{C_l C_2}{\eta_2 + C_l} \left(\frac{\eta_l \eta_2}{\eta_2 + C_l}\right)^{-p}\\
& = C_{l+1} \eta_{l+1}^{-p}
\end{aligned}
\]
as claimed.
\end{proof}

We now analyze the coefficients of the derivative of the Gaussian Process kernel.

\begin{lemma} \label{lemma:deep_upsilon_coeffs}
    In addition to the assumptions of Lemma \ref{lemma:deep_alpha_coeffs}, assume also that $\phi$ satisfies Assumption \ref{assumption:phi}. Then $\upsilon_{p,2} = \frac{C_2}{\eta_2}(1+ p) \eta_2^{-p}$. Furthermore, for all $l\geq 2$ and $p \in \ints_{\geq 0}$  \begin{equation}\label{eq:deep_upsilon_hypothesis}
    \upsilon_{p,l+1} = 
         \begin{cases}
            C_2 \eta_2^{-1}, & \; p=0,\\
            (V_{l+1}' + V_{l+1}p)\eta_{l+1}^{-p}, & \;p \geq 1,
        \end{cases}
    \end{equation}
    where the constants $V_{l+1}'$ and $V_{l+1}$ are defined as
    \begin{equation}
    \begin{aligned} \label{eq:K_upsilon}
        V_{l+1}' \defeq \frac{2C_2 C_l}{\eta_2(C_l + \eta_2)} - \frac{C_2C_l^2}{\eta_2( C_l + \eta_2)^2}, \;\; V_{l+1} \defeq \frac{C_2C_l^2}{\eta_2( C_l + \eta_2)^2}    
    \end{aligned}
    \end{equation}
    and $C_l$ and $\eta_l$ are defined in \eqref{eq:recurrence_C_eta}.
\end{lemma}
\begin{proof}
    Under Assumption \ref{assumption:phi} then for all $p\in \ints_{\geq 0}$ we have
    \[
        \upsilon_{p,2} = \sigma_w^2 \mu_p^2(\phi') = \sigma_w^2 (p + 1) \mu_{p + 1}(\phi)^2 = (p+1)\alpha_{p+1,2} = \frac{C_2}{\eta_2}(1+ p) \eta_2^{-p}.
    \]
    For $l\geq 2$ and $p=0$ it therefore follows that
    \[
        \upsilon_{0,l+1} =  \sum_{k=0}^{\infty}(k+1)\alpha_{k+1,2} F(0,k,\bar{\alpha}_l)  = \alpha_{1,2} = C_2 \eta_2^{-1}.
    \]
    For $l\geq 2$ and $p\geq 1$ then
    \[
    \begin{aligned}
         \upsilon_{p,l+1} &= \sum_{k=0}^{\infty} \upsilon_{k,2} F(p,k,\bar{\alpha}_l)\\
         &= \sum_{k=0}^{\infty}(k+1)\alpha_{k+1,2} F(p,k,\bar{\alpha}_l)\\
        &= \sum_{h=1}^{\infty} h C_2 \eta_2^{-h} F(p,h-1,\bar{\alpha}_l)\\
        &=  \frac{C_2 }{C_l} \eta_l^{-p} \sum_{h=2}^{p+1} h \left(\frac{C_l}{\eta_2 }\right)^{ h}\binom{p-1}{h-2}\\
        & = \frac{C_2 }{C_l} \eta_l^{-p} \sum_{r=0}^{p-1} (r+2) \left(\frac{C_l}{\eta_2 }\right)^{ r+2}\binom{p-1}{r}\\
        & =  \frac{C_2 C_l}{\eta_2^2} \eta_l^{-p}\left(2 \sum_{r=0}^{p-1} \left(\frac{C_l}{\eta_2 }\right)^{ r}\binom{p-1}{r}   + \sum_{r=0}^{p-1} r \left(\frac{C_l}{\eta_2 }\right)^{ r}\binom{p-1}{r}\right)\\
        &=\frac{C_2 C_l}{\eta_2^2} \eta_l^{-p}\left(2 \left(1+ \frac{C_l}{\eta_2} \right)^{p-1} + \frac{C_l}{\eta_2}(p-1)\left(1 + \frac{C_l}{\eta_2} \right)^{p-2}\right)\\
        &= \frac{2C_2 C_l}{\eta_2(C_l + \eta_2)}\left(\frac{\eta_l \eta_2}{\eta_2 + C_l}\right)^{-p} + \frac{C_2C_l^2}{\eta_2( C_l + \eta_2)^2} (p-1) \left(\frac{\eta_l \eta_2}{\eta_2 + C_l}\right)^{-p}\\
        & = \left(\frac{2C_2 C_l}{\eta_2(C_l + \eta_2)} - \frac{C_2C_l^2}{\eta_2( C_l + \eta_2)^2} \right) \eta_{l+1}^{-p} + \left( \frac{C_2C_l^2}{\eta_2( C_l + \eta_2)^2}\right) p \eta_{l+1}^{-p}\\
        & = (V_{l+1}' + V_{l+1} p)\eta_{l+1}^{-p}
    \end{aligned}
    \]
    as claimed.
\end{proof}
With the coefficients of both the Gaussian Process kernel and its derivative characterized, we proceed to upper bound the decay of the NTK coefficients in the specific setting outlined in Lemma \ref{lemma:deep_alpha_coeffs} and \ref{lemma:deep_upsilon_coeffs}.

\begin{lemma}\label{lemma:deep_coeffs}
    Let the data, hyperparameters and activation function $\phi$ be such that Assumptions \ref{assumptions:kernel_regime}, \ref{assumption:init_var_1} and \ref{assumption:phi} are satisfied along with the conditions of of Lemma \ref{lemma:deep_alpha_coeffs}. Then for any $l \geq 2$ there exist positive constants $M_l'$ and $K_l'$ such that for all $p \in \ints_{\geq 1}$
    \begin{equation}\label{eq:exp_ntk_deep_coeffs}
        \kappa_{p,l} \leq (M_l' + K_l'p^{2l-3}) \eta_l^{-p}
    \end{equation}
    where $\eta_l$ is defined in Lemma \ref{lemma:deep_alpha_coeffs}.
\end{lemma}

\begin{proof}
We proceed by induction starting with the base case $l=2$. Applying the results of Lemmas \ref{lemma:deep_alpha_coeffs} and \ref{lemma:deep_upsilon_coeffs} to \eqref{eq:recurrence_ntk_coeffs} then for $p \in \ints_{\geq 1}$
 \begin{equation}
        \kappa_{p,2} = ((C_2 + \gamma_b^2C_2 \eta_2^{-1}) + (\gamma_b^2C_2 \eta_2^{-1} + \gamma_w^2 C_2)p)\eta_{2}^{-p}.
\end{equation}
If we define $M_2' \defeq C_2 + \gamma_b^2 C_2 \eta_2^{-1}$ and $ K_2' \defeq \gamma_b^2C_2 \eta_2^{-1} + \gamma_w^2 C_2$, which are clearly positive constants, then $\kappa_{p,2} = (M_2' + K_2'p)\eta_2^{-p}$ and so for $l=2$ the induction hypothesis clearly holds. We now assume the inductive hypothesis holds for some $l \geq 2$. Observe from \eqref{eq:deep_upsilon_hypothesis}, with $l \geq 2$ and $p \in \ints_{\geq 0}$ that
\begin{equation}\label{eq:ind_lemmaB9}
\upsilon_{p,l+1} \leq (A_{l+1}' + V_{l+1} p)\eta_{l+1}^{-p}.
\end{equation}
where $A_{l+1}' \defeq \max\{C_2\eta_2^{-1}, V_{l+1}'\}$.
Substituting \ref{eq:ind_lemmaB9} and the inductive hypothesis inequality into \eqref{eq:recurrence_ntk_coeffs} it follows for $p \geq 1$ that
\[
\begin{aligned}
\kappa_{p,l+1} &\leq C_{l+1} \eta_{l+1}^{-p} + \eta_{l+1}^{-p} \sum_{q=0}^p (M_l' + K_l' q^{2l-3})\eta_{l}^{-q}(A_{l+1}' + V_{l+1} (p-q))\eta_{l+1}^q\\
&= C_{l+1} \eta_{l+1}^{-p} + \eta_{l+1}^{-p} \sum_{q=0}^p (M_l' + K_l' q^{2l-3})(A_{l+1}' + V_{l+1} (p-q))\left(\frac{\eta_2}{\eta_2 + C_l}\right)^q\\
& \leq C_{l+1} \eta_{l+1}^{-p} + \eta_{l+1}^{-p} \sum_{q=0}^p (M_l' + K_l' q^{2l-3})(A_{l+1}' + V_{l+1}(p-q))\\
&\leq C_{l+1} \eta_{l+1}^{-p} + \eta_{l+1}^{-p} \sum_{q=0}^p (M_l' + K_l' q^{2l-3})(A_{l+1}' + V_{l+1}p)\\
&\leq (C_{l+1} + M_l'A_{l+1}')\eta_{l+1}^{-p} +  \left(M_l'V_{l+1}p + \sum_{q=1}^p (M_l' + K_l' q^{2l-3})(A_{l+1}' + V_{l+1}p) \right)\eta_{l+1}^{-p}\\
&\leq (C_{l+1} + M_l'A_{l+1}')\eta_{l+1}^{-p} +  \left(M_l'V_{l+1}p + p (M_l' + K_l' p^{2l-3})(A_{l+1}' + V_{l+1}p) \right)\eta_{l+1}^{-p}\\
&\leq (C_{l+1} + M_l'A_{l+1}')\eta_{l+1}^{-p} +  p\left( M_l' A_{l+1}' + 2M_l'V_{l+1}p + K_l'A_{l+1}' p^{2l-3} + K_l' V_{l+1} p^{2l-2} \right)\eta_{l+1}^{-p}\\
&\leq \left((C_{l+1} + M_l'A_{l+1}') +  \left( M_l' A_{l+1}' + 2M_l'V_{l+1} + K_l'A_{l+1}' + K_l' V_{l+1} \right)p^{2l-1}\right)\eta_{l+1}^{-p}\\
\end{aligned}
\]
Therefore there exist positive constants $M_{l+1}' = C_{l+1} + M_l'A_{l+1}'$ and $K_{l+1}' = M_l' A_{l+1}' + 2M_l'V_{l+1} + K_l'A_{l+1}' + K_l' V_{l+1}$ such that $\kappa_{p,l+1} \leq (M_{l+1}' + K_{l+1}'p^{2(l+1)-3})\eta_{l+1}^{-p}$ as claimed. This completes the inductive step and therefore also the proof of the lemma.
\end{proof}

\section{Analyzing the spectrum of the NTK via its power series}\label{appendix:ntk_spectrum}
\subsection{Effective rank of power series kernels}
Recall that for a positive semidefinite matrix $\Ab$ we define the \textit{effective rank} \cite{DBLP:journals/simods/HuangHV22} via the following ratio
\[ \mathrm{eff}(\Ab) := \frac{Tr(\Ab)}{\lambda_1(\Ab)}. \]
We consider a kernel Gram matrix $\mK \in \mathbb{R}^{n \times n}$ that has the following power series representation in terms of an input gram matrix $\mX \mX^T$
\[ n \mK = \sum_{i = 0}^\infty c_i (\mX \mX^T)^{\odot i}. \]
Whenever $c_0 \neq 0$ the effective rank of $\mK$ is $O(1)$, as displayed in the following theorem.
\infiniteeffectiveconstantbd*
\begin{proof}
By linearity of trace we have that
\[ Tr(n \mK) = \sum_{i = 0}^\infty c_i Tr((\mX \mX^T)^{\odot i}) = n \sum_{i = 0}^\infty c_i \]
where we have used the fact that $Tr((\mX \mX^T)^{\odot i}) = n$ for all $i \in \mathbb{N}$.  On the other hand
\[ \lambda_1(n \mK) \geq \lambda_1(c_0 (\mX \mX^T)^0) = \lambda_1(c_0 \mathbf{1}_{n \times n}) = n c_0. \]
Thus we have that
\[ \mathrm{eff}(\mK) = \frac{Tr(\mK)}{\lambda_1(\mK)} = \frac{Tr(n \mK)}{\lambda_1(n \mK)} \leq \frac{\sum_{i = 0}^\infty c_i}{c_0}. \]
\end{proof}
The above theorem demonstrates that the constant term $c_0 \mathbf{1}_{n \times n}$ in the kernel leads to a significant outlier in the spectrum of $\mK$. However this fails to capture how the structure of the input data $\mX$ manifests in the spectrum of $\mK$.  For this we will examine the centered kernel matrix $\widetilde{\mK} := \mK - \frac{c_0}{n} \mathbf{1} \mathbf{1}^T$.  Using a very similar argument as before we can demonstrate that the effective rank of $\widetilde{\mK}$ is controlled by the effective rank of the input data gram $\mX \mX^T$.  This is formalized in the following theorem.
\infiniteeffectiverankbd*
\begin{proof}
By the linearity of the trace we have that
\[ Tr(n \widetilde{\mK}) = \sum_{i = 1}^\infty c_i Tr((\mX \mX^T)^{\odot i}) = Tr(\mX \mX^T) \sum_{i = 1}^\infty c_i \]
where we have used the fact that $Tr((\mX \mX^T)^{\odot i}) = Tr(\mX \mX^T) = n$ for all $i \in [n]$.  On the other hand we have that
\[ \lambda_1(n \widetilde{\mK}) \geq \lambda_1(c_1 \mX \mX^T) = c_1 \lambda_1(\mX \mX^T). \]
Thus we conclude
\[ \mathrm{eff}(\widetilde{\mK}) = \frac{Tr(\widetilde{\mK})}{\lambda_1(\widetilde{\mK})} = \frac{Tr(n\widetilde{\mK})}{\lambda_1(n \widetilde{\mK})} \leq \frac{Tr(\mX \mX^T)}{\lambda_1(\mX \mX^T)} \frac{\sum_{i = 1}^\infty c_i}{c_1}.  \]
\end{proof}

\subsection{Effective rank of the NTK for finite width networks}

\subsubsection{Notation and definitions}
We will let $[k] := \{1, 2, \ldots, k\}$.  We consider a neural network
\[ \sum_{\ell = 1}^m a_\ell \phi(\langle \wb_\ell, \xb \rangle) \]
where $\xb \in \RR^d$ and $\wb_\ell \in \RR^d$, $a_\ell \in \RR$ for all $\ell \in [m]$ and $\phi$ is a scalar valued activation function.  The network we present here does not have any bias values in the inner-layer, however the results we will prove later apply to the nonzero bias case by replacing $\xb$ with $[\xb^T, 1]^T$.  We let $\Wb \in \RR^{m \times d}$ be the matrix whose $\ell$-th row is equal to $\wb_\ell$ and $\ab \in \RR^m$ be the vector whose $\ell$-th entry is equal to $a_\ell$.  We can then write the neural network in vector form
\[ f(\xb; \Wb, \ab) = \ab^T \phi(\Wb \xb) \]
where $\phi$ is understood to be applied entry-wise.
\par
Suppose we have $n$ training data inputs $\xb_1, \ldots, \xb_n \in \RR^d$.  We will let $\Xb \in \RR^{n \times d}$ be the matrix whose $i$-th row is equal to $\xb_i$.  Let $\thetab_{inner} = vec(\Wb)$ denote the row-wise vectorization of the inner-layer weights.  We consider the Jacobian of the neural networks predictions on the training data with respect to the inner layer weights:
\[ \Jb_{inner}^T = \brackets{\frac{\partial f(\xb_1)}{\partial \thetab_{inner}}, \frac{\partial f(\xb_2)}{\partial \thetab_{inner}}, \ldots, \frac{\partial f(\xb_n)}{\partial \thetab_{inner}}} \]
Similarly we can look at the analagous quantity for the outer layer weights
\[ \Jb_{outer}^T = \brackets{\frac{\partial f(\xb_1)}{\partial \ab}, \frac{\partial f(\xb_2)}{\partial \ab}, \ldots, \frac{\partial f(\xb_n)}{\partial \ab}} = \phi\parens{\Wb \Xb^T}. \]
Our first observation is that the per-example gradients for the inner layer weights have a nice Kronecker product representation
\[ \frac{\partial f(\xb)}{\partial \thetab_{inner}} = 
\begin{bmatrix}
a_1 \phi'(\langle \wb_1, \xb \rangle) \\
a_2 \phi'(\langle \wb_2, \xb \rangle) \\
\cdots \\
a_m \phi'(\langle \wb_m, \xb \rangle)
\end{bmatrix} \otimes \xb. \]
For convenience we will let
\[ \Yb_i := \begin{bmatrix}
a_1 \phi'(\langle \wb_1, \xb_i \rangle) \\
a_2 \phi'(\langle \wb_2, \xb_i \rangle) \\
\cdots \\
a_m \phi'(\langle \wb_m, \xb_i \rangle)
\end{bmatrix}. \]
where the dependence of $\Yb_i$ on the parameters $\Wb$ and $\ab$ is suppressed (formally $\Yb_i = \Yb_i(\Wb, \ab)$).  This way we may write
\[ \frac{\partial f(\xb_i)}{\partial \thetab_{inner}} = \Yb_i \otimes \xb_i. \]

We will study the NTK with respect to the inner-layer weights
\[ \mK_{inner} = \Jb_{inner} \Jb_{inner}^T \]
and the same quantity for the outer-layer weights
\[ \mK_{outer} = \Jb_{outer} \Jb_{outer}^T. \]
\par
For a hermitian matrix $\mathbf{A}$ we will let $\lambda_i(\mathbf{A})$ denote the $i$th largest eigenvalue of $\mathbf{A}$ so that $\lambda_1(\mathbf{A}) \geq \lambda_2(\mathbf{A}) \geq \cdots \geq \lambda_n(\mathbf{A})$.  Similarly for an arbitrary matrix $\mathbf{A}$ we will let $\sigma_i(\mathbf{A})$ to the $i$th largest singular value of $\mathbf{A}$.  For a matrix $\Ab \in \RR^{r \times k}$ we will let $\sigma_{min}(\Ab) = \sigma_{\min(r, k)}$.

\subsubsection{Effective rank}
For a positive semidefinite matrix $\Ab$ we define the \textit{effective rank} \citep{DBLP:journals/simods/HuangHV22} of $\Ab$ to be the quantity
\[ \mathrm{eff}(\Ab) :=  \frac{Tr(\Ab)}{\lambda_1(\Ab)}. \]
The effective rank quantifies how many eigenvalues are on the order of the largest eigenvalue.  We have the Markov-like inequality
\[ \abs{\{ i : \lambda_i(\Ab) \geq c \lambda_1(\Ab)\}} \leq c^{-1} \frac{Tr(\Ab)}{\lambda_1(\Ab)}\]
and the eigenvalue bound
\[ \frac{\lambda_i(\Ab)}{\lambda_1(\Ab)} \leq \frac{1}{i} \frac{Tr(\Ab)}{\lambda_1(\Ab)}. \]
Let $\Ab$ and $\Bb$ be positive semidefinite matrices.  Then we have
\begin{gather*}
\frac{Tr(\Ab + \Bb)}{\lambda_1(\Ab + \Bb)} \leq \frac{Tr(\Ab) + Tr(\Bb)}{\max\parens{\lambda_1(\Ab), \lambda_1(\Bb)}} \leq \frac{Tr(\Ab)}{\lambda_1(\Ab)} + \frac{Tr(\Bb)}{\lambda_1(\Bb)}.
\end{gather*}
Thus the effective rank is subadditive for positive semidefinite matrices.
\par
We will be interested in bounding the effective rank of the NTK.  Let $\mK = \Jb \Jb^T = \Jb_{outer} \Jb_{outer}^T + \Jb_{inner} \Jb_{inner}^T = \mK_{outer} + \mK_{inner}$ be the NTK matrix with respect to all the network parameters.  Note that by subadditivity
\[ \frac{Tr(\mK)}{\lambda_1(\mK)} \leq \frac{Tr(\mK_{outer})}{\lambda_1(\mK_{outer})} + \frac{Tr(\mK_{inner})}{\lambda_1(\mK_{inner})}. \]
In this vein we will control the effective rank of $\mK_{inner}$ and $\mK_{outer}$ separately.

\subsubsection{Effective rank of inner-layer NTK}\label{sec:eff_rank_inner}
We will show that the effective rank of inner-layer NTK is bounded by a multiple of the effective rank of the data input gram $\mX \mX^T$.  We introduce the following meta-theorem that we will use to prove various corollaries later
\begin{theorem}\label{thm:inner_meta}
Set $\alpha := \sup_{\norm{\vb} = 1} \brackets{\min_{j \in [n]} |\langle \Yb_j, \vb \rangle|}$.  Assume $\alpha > 0$.  Then
\[\frac{\min_{i \in [n]} \norm{\Yb_i}_2^2 Tr(\Xb \Xb^T)}{\max_{i \in [n]} \norm{\Yb_i}_2^2 \lambda_1(\Xb \Xb^T)} \leq \frac{Tr(\mK_{inner})}{\lambda_1 \parens{\mK_{inner}}} \leq \frac{\max_{i \in [n]} \norm{\Yb_i}_2^2}{\alpha^2} \frac{Tr(\Xb \Xb^T)}{\lambda_1(\Xb \Xb^T)} \]
\end{theorem}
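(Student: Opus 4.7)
\textbf{Proof plan for Theorem~\ref{thm:inner_meta}.} My starting point is the Kronecker factorization $\partial f(\xb_i)/\partial\thetab_{inner}=\Yb_i\otimes \xb_i$, which immediately yields
\[
[\mK_{inner}]_{ij}=\langle \Yb_i,\Yb_j\rangle\langle \xb_i,\xb_j\rangle,
\]
i.e.\ $\mK_{inner}=(\Yb\Yb^T)\odot(\Xb\Xb^T)$, where $\Yb\in\RR^{n\times m}$ is the matrix whose $i$-th row is $\Yb_i^T$. The trace bound is then immediate: $Tr(\mK_{inner})=\sum_{i=1}^n\|\Yb_i\|_2^2\,\|\xb_i\|_2^2$ is sandwiched between $\min_i\|\Yb_i\|_2^2$ and $\max_i\|\Yb_i\|_2^2$ times $Tr(\Xb\Xb^T)$. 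Combining the trace upper bound with the straightforward eigenvalue lower bound $\lambda_1(\mK_{inner})\le \max_i\|\Yb_i\|_2^2\,\lambda_1(\Xb\Xb^T)$ (proved below) already gives the lower inequality in the theorem.

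\textbf{Upper bound on $\lambda_1(\mK_{inner})$.} I will use the column decomposition $\mK_{inner}=\sum_{\ell=1}^m D_\ell\,\Xb\Xb^T\,D_\ell$, where $D_\ell=\mathrm{diag}(Y_{1,\ell},\dots,Y_{n,\ell})$. For any unit vector $\vu\in\RR^n$,
\[
\vu^T\mK_{inner}\vu=\sum_\ell (D_\ell\vu)^T\Xb\Xb^T(D_\ell\vu)\le \lambda_1(\Xb\Xb^T)\sum_\ell\|D_\ell\vu\|_2^2=\lambda_1(\Xb\Xb^T)\sum_i u_i^2\|\Yb_i\|_2^2,
\]
which is at most $\lambda_1(\Xb\Xb^T)\max_i\|\Yb_i\|_2^2$.

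\textbf{Lower bound on $\lambda_1(\mK_{inner})$ (the main step).} This is where the assumption $\alpha>0$ is crucial. Let $\vb^{\star}\in\RR^m$ be a unit vector attaining (or approaching) the supremum in the definition of $\alpha$, so that $|(\Yb\vb^{\star})_j|\ge \alpha$ for every $j\in[n]$, and let $\vv\in\RR^n$ be a top unit eigenvector of $\Xb\Xb^T$. Define the test vector $\vw\in\RR^n$ entrywise by $w_j=v_j/(\Yb\vb^{\star})_j$. The key algebraic identity is
\[
\sum_{\ell=1}^m b^{\star}_\ell\,D_\ell\vw=\vv,
\]
because coordinate $j$ of the left-hand side equals $w_j\sum_\ell b^{\star}_\ell Y_{j,\ell}=w_j(\Yb\vb^{\star})_j=v_j$. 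Applying Cauchy–Schwarz in the PSD form $\|\Xb^T\sum_\ell b^{\star}_\ell D_\ell\vw\|_2^2\le \bigl(\sum_\ell (b^{\star}_\ell)^2\bigr)\bigl(\sum_\ell\|\Xb^T D_\ell\vw\|_2^2\bigr)$ and using $\|\vb^{\star}\|_2=1$ gives
\[
\lambda_1(\Xb\Xb^T)=\vv^T\Xb\Xb^T\vv\le \sum_\ell(D_\ell\vw)^T\Xb\Xb^T(D_\ell\vw)=\vw^T\mK_{inner}\vw.
\]
Since $|(\Yb\vb^{\star})_j|\ge\alpha$ entrywise, $\|\vw\|_2^2\le \alpha^{-2}\|\vv\|_2^2=\alpha^{-2}$, hence
\[
\lambda_1(\mK_{inner})\ge \frac{\vw^T\mK_{inner}\vw}{\|\vw\|_2^2}\ge \alpha^2\lambda_1(\Xb\Xb^T).
\]
Combining with $Tr(\mK_{inner})\le \max_i\|\Yb_i\|_2^2\,Tr(\Xb\Xb^T)$ yields the upper inequality of the theorem.

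\textbf{Main obstacle.} Both the trace estimates and the upper eigenvalue bound are routine, but the lower bound on $\lambda_1(\mK_{inner})$ is the crux: the decomposition $\mK_{inner}=\sum_\ell D_\ell\Xb\Xb^T D_\ell$ mixes the data geometry with the neuron-wise weights, and the natural inequality goes the wrong way. The trick is the explicit construction of $\vw$ from $\vb^{\star}$ and $\vv$, which inverts the neuron combination and reduces the lower bound on $\lambda_1(\mK_{inner})$ to $\lambda_1(\Xb\Xb^T)$ up to the loss factor $\alpha^{-2}$. If $\alpha=0$ (i.e.\ the $\Yb_j$ cannot be separated from zero in some uniform direction) the argument fails, which is exactly why the hypothesis $\alpha>0$ is imposed. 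No further issues arise; the result is purely linear algebraic given the initialization-dependent quantities $\alpha$, $\min_i\|\Yb_i\|_2$, and $\max_i\|\Yb_i\|_2$, which can then be controlled by concentration when the weights are Gaussian, as the corollaries in Appendix~\ref{sec:eff_rank_inner} will do.
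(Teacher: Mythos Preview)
Your proof is correct, but the route to the key lower bound $\lambda_1(\mK_{inner})\ge\alpha^2\lambda_1(\Xb\Xb^T)$ is genuinely different from the paper's. The paper passes to the $md\times md$ matrix $\Jb_{inner}^T\Jb_{inner}=\sum_i (\Yb_i\Yb_i^T)\otimes(\xb_i\xb_i^T)$ and evaluates its Rayleigh quotient at a \emph{separable} test vector $\vb_1\otimes\vb_2$, which immediately factors the sum into $(\vb_1^T\Yb_i)^2\cdot(\vb_2^T\xb_i)^2$; taking $\vb_2$ to be a top eigenvector of $\Xb^T\Xb$ and optimizing over $\vb_1$ yields $\alpha^2\lambda_1(\Xb\Xb^T)$ in one line. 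Your argument instead stays in $\RR^n$ via the column decomposition $\mK_{inner}=\sum_\ell D_\ell\,\Xb\Xb^T\,D_\ell$ and manufactures a test vector $\vw$ by entrywise dividing the top eigenvector $\vv$ by $\Yb\vb^\star$, then uses the vector Cauchy--Schwarz $\|\sum_\ell b^\star_\ell(\Xb^T D_\ell\vw)\|_2^2\le(\sum_\ell (b^\star_\ell)^2)(\sum_\ell\|\Xb^T D_\ell\vw\|_2^2)$ to reverse the direction of the obvious inequality. Both approaches are equally valid; the paper's Kronecker argument is a bit shorter and transparently exposes why $\alpha$ is the right quantity, while your decomposition is more elementary (no tensor products) and, as a bonus, also delivers the upper bound $\lambda_1(\mK_{inner})\le\max_i\|\Yb_i\|_2^2\,\lambda_1(\Xb\Xb^T)$ from the same identity, where the paper instead invokes the Schur--Oppenheim inequality $\lambda_1(\Ab\odot\Bb)\le\max_i\Ab_{ii}\,\lambda_1(\Bb)$ applied to $\mK_{inner}=(\Yb\Yb^T)\odot(\Xb\Xb^T)$. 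One small wording slip: you call the inequality $\lambda_1(\mK_{inner})\le\max_i\|\Yb_i\|_2^2\,\lambda_1(\Xb\Xb^T)$ a ``lower bound'' in the opening paragraph; it is of course an upper bound on $\lambda_1(\mK_{inner})$.
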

\begin{proof}
We will first prove the upper bound.  We first observe that
\begin{gather*}
Tr(\mK_{inner}) = \sum_{i = 1}^n \norm{\frac{\partial f(\xb_i)}{\partial \thetab_{inner}}}_2^2 = \sum_{i = 1}^n \norm{\Yb_i \otimes \xb_i}_2^2 = \sum_{i = 1}^n \norm{\Yb_i}_2^2 \norm{\xb_i}_2^2 \\
\leq \max_{j \in [n]} \norm{\Yb_j}_2^2 \sum_{i = 1}^n \norm{\xb_i}_2^2 = \max_{j \in [n]} \norm{\Yb_j}_2^2 Tr(\Xb \Xb^T)
\end{gather*}
\noindent
Recall that
\[ \lambda_1\parens{\mK_{inner}} = \lambda_1\parens{\Jb_{inner} \Jb_{inner}^T} = \lambda_1 \parens{\Jb_{inner}^T \Jb_{inner}}. \]
Well
\begin{gather*}
\Jb_{inner}^T \Jb_{inner} = \sum_{i = 1}^n \frac{\partial f(\xb_i)}{\partial \thetab_{inner}} \frac{\partial f(\xb_i)}{\partial \thetab_{inner}}^T = \sum_{i = 1}^n \brackets{\Yb_i \otimes \xb_i} \brackets{\Yb_i \otimes \xb_i}^T \\
= \sum_{i = 1}^n \brackets{\Yb_i \Yb_i^T} \otimes \brackets{\xb_i \xb_i^T}    
\end{gather*}
Well then we may use the fact that
\[ \lambda_1(\Jb_{inner}^T \Jb_{inner}) = \max_{\norm{\vb}_2 = 1} \vb^T \Jb_{inner}^T \Jb_{inner} \vb \]
Let $\vb_1 \in \RR^m$ and $\vb_2 \in \RR^d$ be vectors that we will optimize later satisfying $\norm{\vb_1}_2 \norm{\vb_2}_2 = 1$.  Then we have that $\norm{\vb_1 \otimes \vb_2} = 1$ and
\begin{gather*}
(\vb_1 \otimes \vb_2)^T \Jb_{inner}^T \Jb_{inner} (\vb_1 \otimes \vb_2) = \sum_{i = 1}^n (\vb_1 \otimes \vb_2)^T \parens{\brackets{\Yb_i \Yb_i^T} \otimes \brackets{\xb_i \xb_i^T}}  (\vb_1 \otimes \vb_2) \\
=\sum_{i = 1}^n \brackets{\vb_1^T \Yb_i \Yb_i^T \vb_1} \brackets{\vb_2^T \xb_i \xb_i^T \vb_2} \geq \brackets{\min_{j \in [n]} \vb_1^T \Yb_j \Yb_j^T \vb_1} \sum_{i = 1}^n \vb_2^T \xb_i \xb_i^T \vb_2 \\
= \brackets{\min_{j \in [n]} \vb_1^T \Yb_j \Yb_j^T \vb_1} \vb_2^T \brackets{\sum_{i = 1}^n \xb_i \xb_i^T} \vb_2 = \brackets{\min_{j \in [n]} \vb_1^T \Yb_j \Yb_j^T \vb_1} \vb_2 \Xb^T \Xb \vb_2
\end{gather*}
Pick $\vb_2$ so that $\norm{\vb_2} = 1$ and
\[ \vb_2 \Xb^T \Xb \vb_2 = \lambda_1(\Xb^T \Xb) = \lambda_1(\Xb \Xb^T). \]
Thus for this choice of $\vb_2$ we have
\begin{gather*}
\lambda_1(\Jb_{inner}^T \Jb_{inner}) \geq (\vb_1 \otimes \vb_2)^T \Jb_{inner}^T \Jb_{inner} (\vb_1 \otimes \vb_2) \geq \\
\brackets{\min_{j \in [n]} \vb_1^T \Yb_j \Yb_j^T \vb_1} \vb_2 \Xb^T \Xb \vb_2 = \brackets{\min_{j \in [n]} \vb_1^T \Yb_j \Yb_j^T \vb_1} \lambda_1(\Xb \Xb^T)
\end{gather*}
Now note that $\alpha^2 = \sup_{\norm{\vb_1} = 1} \brackets{\min_{j \in [n]} \vb_1^T \Yb_j \Yb_j^T \vb_1}$.  Thus by taking the sup over $\vb_1$ in our previous bound we have
\[ \lambda_1(\mK_{inner}) = \lambda_1(\Jb_{inner}^T \Jb_{inner}) \geq \alpha^2 \lambda_1(\Xb \Xb^T). \]
Thus combined with our previous result we have
\[ \frac{Tr(\mK_{inner})}{\lambda_1 \parens{\mK_{inner}}} \leq \frac{\max_{i \in [n]} \norm{\Yb_i}_2^2}{\alpha^2} \frac{Tr(\Xb \Xb^T)}{\lambda_1(\Xb \Xb^T)}. \]
\par
We now prove the lower bound.
\begin{gather*}
Tr(\mK_{inner}) = \sum_{i = 1}^n \norm{\frac{\partial f(\xb_i)}{\partial \thetab_{inner}}}_2^2 = \sum_{i = 1}^n \norm{\Yb_i \otimes \xb_i}_2^2 = \sum_{i = 1}^n \norm{\Yb_i}_2^2 \norm{\xb_i}_2^2 \\
\geq \min_{j \in [n]} \norm{\Yb_j}_2^2 \sum_{i = 1}^n \norm{\xb_i}_2^2 =
\min_{j \in [n]} \norm{\Yb_j}_2^2 Tr(\Xb \Xb^T)
\end{gather*}
Let $\Yb \in \RR^{n \times m}$ be the matrix whose $i$th row is equal to $\Yb_i$.  Then observe that
\[ \mK_{inner} = [\Yb \Yb^T] \odot [\Xb \Xb^T] \]
where $\odot$ denotes the entry-wise Hadamard product of two matrices.  We now recall that if $\Ab$ and $\mathbf{B}$ are two positive semidefinite matrices we have \cite[Lemma 2]{solt_mod_over}
\[ \lambda_1(\Ab \odot \mathbf{B}) \leq \max_{i \in [n]} \Ab_{i, i} \lambda_1(\mathbf{B}). \]
Applying this to $\mK_{inner}$ we get that
\[ \lambda_1(\mK_{inner}) \leq \max_{i \in [n]} \norm{\Yb_i}_2^2 \lambda_1(\Xb \Xb^T)  \]
Combining this with our previous result we get
\[ \frac{\min_{i \in [n]} \norm{\Yb_i}_2^2 Tr(\Xb \Xb^T)}{\max_{i \in [n]} \norm{\Yb_i}_2^2 \lambda_1(\Xb \Xb^T)} \leq \frac{Tr(\mK_{inner})}{\lambda_1(\mK_{inner})} \]
\end{proof}

\par
We can immediately get a useful corollary that applies to the ReLU activation function
\begin{corollary}\label{cor:meta_relu_bound}
Set $\alpha := \sup_{\norm{\vb} = 1} \brackets{\min_{j \in [n]} |\langle \Yb_j, \vb \rangle|}$ and $\gamma_{max} := \sup_{x \in \RR} |\phi'(x)|$.  Assume $\alpha > 0$ and $\gamma_{max} < \infty$.  Then
\[\frac{\alpha^2}{\gamma_{max}^2 \norm{\ab}_2^2} \frac{Tr(\Xb \Xb^T)}{\lambda_1(\Xb \Xb^T)} \leq \frac{Tr(\mK_{inner})}{\lambda_1 \parens{\mK_{inner}}} \leq \frac{\gamma_{max}^2 \norm{\ab}_2^2}{\alpha^2} \frac{Tr(\Xb \Xb^T)}{\lambda_1(\Xb \Xb^T)} \]
\end{corollary}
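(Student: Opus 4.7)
The plan is to derive Corollary~\ref{cor:meta_relu_bound} directly from Theorem~\ref{thm:inner_meta} by obtaining tight enough uniform bounds on $\max_{i \in [n]} \norm{\Yb_i}_2^2$ and $\min_{i \in [n]} \norm{\Yb_i}_2^2$ in terms of $\gamma_{max}$, $\norm{\ab}_2$, and $\alpha$.

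First I would handle the upper bound. Recall that $\Yb_i = (a_1 \phi'(\langle \wb_1, \xb_i \rangle), \ldots, a_m \phi'(\langle \wb_m, \xb_i \rangle))^T$, so
\[
\norm{\Yb_i}_2^2 = \sum_{\ell = 1}^m a_\ell^2 \,\phi'(\langle \wb_\ell, \xb_i \rangle)^2 \leq \gamma_{max}^2 \sum_{\ell = 1}^m a_\ell^2 = \gamma_{max}^2 \norm{\ab}_2^2,
\]
for every $i \in [n]$. Plugging $\max_{i} \norm{\Yb_i}_2^2 \leq \gamma_{max}^2 \norm{\ab}_2^2$ into the upper bound from Theorem~\ref{thm:inner_meta} immediately yields the right-hand inequality of the corollary.

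For the lower bound, the key observation is that $\alpha$ itself controls $\min_i \norm{\Yb_i}_2$ from below. Indeed, for any unit vector $\vb \in \RR^m$ and any $j \in [n]$, Cauchy--Schwarz gives $|\langle \Yb_j, \vb \rangle| \leq \norm{\Yb_j}_2$; taking the min over $j$ and then the sup over unit $\vb$ yields
\[
\alpha = \sup_{\norm{\vb} = 1} \min_{j \in [n]} |\langle \Yb_j, \vb \rangle| \;\leq\; \min_{j \in [n]} \norm{\Yb_j}_2,
\]
so $\min_{i \in [n]} \norm{\Yb_i}_2^2 \geq \alpha^2$. Combining this lower bound on $\min_i \norm{\Yb_i}_2^2$ with the upper bound $\max_i \norm{\Yb_i}_2^2 \leq \gamma_{max}^2 \norm{\ab}_2^2$ established above, the lower bound in Theorem~\ref{thm:inner_meta} gives
\[
\frac{\alpha^2}{\gamma_{max}^2 \norm{\ab}_2^2} \frac{Tr(\Xb \Xb^T)}{\lambda_1(\Xb \Xb^T)} \;\leq\; \frac{\min_{i} \norm{\Yb_i}_2^2}{\max_{i} \norm{\Yb_i}_2^2}\frac{Tr(\Xb \Xb^T)}{\lambda_1(\Xb \Xb^T)} \;\leq\; \frac{Tr(\mK_{inner})}{\lambda_1(\mK_{inner})},
\]
which is exactly the left-hand inequality of the corollary.

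There is essentially no obstacle: the corollary is a direct specialization of Theorem~\ref{thm:inner_meta} once one notes the two elementary bounds $\norm{\Yb_i}_2 \leq \gamma_{max}\norm{\ab}_2$ and $\norm{\Yb_i}_2 \geq \alpha$. The only mild subtlety is verifying the second inequality via Cauchy--Schwarz and the definition of $\alpha$; the assumption $\gamma_{max} < \infty$ (which holds for ReLU with $\gamma_{max} = 1$) ensures the upper bound is meaningful, while $\alpha > 0$ ensures the lower bound is nontrivial and that division by $\alpha^2$ is valid.
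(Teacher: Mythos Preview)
Your proposal is correct and follows essentially the same approach as the paper: both establish $\norm{\Yb_i}_2^2 \leq \gamma_{max}^2 \norm{\ab}_2^2$ from the bound on $|\phi'|$, use Cauchy--Schwarz to deduce $\min_i \norm{\Yb_i}_2 \geq \alpha$, and then invoke Theorem~\ref{thm:inner_meta}. Your write-up is simply a more explicit version of the paper's terse two-line argument.
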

\begin{proof}
Note that the hypothesis on $|\phi'|$ gives $\norm{\Yb_i}_2^2 \leq \gamma_{max}^2 \norm{\ab}_2^2$ for all $i \in [n]$.  Moreover by Cauchy-Schwarz we have that $\min_{i \in [n]} \norm{\Yb_i}_2 \geq \alpha$.  Thus by theorem \ref{thm:inner_meta} we get the desired result.
\end{proof}

\par
If $\phi$ is a leaky ReLU type activation (say like those used in \cite{marco}) Theorem \ref{thm:inner_meta} translates into an even simpler bound
\begin{corollary}
Suppose $\phi'(x) \in [\gamma_{min}, \gamma_{max}]$ for all $x \in \RR$ where $\gamma_{min} > 0$.  Then
\[\frac{\gamma_{min}^2 Tr(\Xb \Xb^T)}{\gamma_{max}^2 \lambda_1(\Xb \Xb^T)} \leq \frac{Tr(\mK_{inner})}{\lambda_1 \parens{\mK_{inner}}} \leq \frac{\gamma_{max}^2}{\gamma_{min}^2} \frac{Tr(\Xb \Xb^T)}{\lambda_1(\Xb \Xb^T)} \]
\end{corollary}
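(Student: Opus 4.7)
The plan is to deduce this corollary as an immediate specialization of Theorem~\ref{thm:inner_meta}. The task reduces to (a) sandwiching $\norm{\Yb_i}_2^2$ between multiples of $\norm{\ab}_2^2$, and (b) lower bounding the quantity $\alpha = \sup_{\norm{\vb}=1}\min_{j\in[n]} |\langle \Yb_j, \vb\rangle|$ in such a way that the $\norm{\ab}_2^2$ factors cancel from the ratios.

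For step (a), I would unpack $\Yb_i$, whose $\ell$-th coordinate is $a_\ell\,\phi'(\langle \wb_\ell,\xb_i\rangle)$. Squaring, summing, and using the hypothesis $\phi'(x)\in[\gamma_{min},\gamma_{max}]$ with $\gamma_{min}>0$ immediately yields
\[
\gamma_{min}^2\norm{\ab}_2^2 \;\leq\; \norm{\Yb_i}_2^2 \;\leq\; \gamma_{max}^2\norm{\ab}_2^2
\]
uniformly in $i\in[n]$. This is mechanical.

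For step (b), the natural candidate in the supremum defining $\alpha$ is $\vb = \ab/\norm{\ab}_2$. With this choice,
\[
\langle \Yb_j,\vb\rangle \;=\; \frac{1}{\norm{\ab}_2}\sum_{\ell=1}^m a_\ell^2\,\phi'(\langle \wb_\ell,\xb_j\rangle),
\]
and since every factor $\phi'(\cdot)$ is at least $\gamma_{min}>0$, this inner product is bounded below by $\gamma_{min}\norm{\ab}_2$ independently of $j$. Hence $\alpha\geq \gamma_{min}\norm{\ab}_2$, which in particular verifies the hypothesis $\alpha>0$ required by the meta-theorem (as long as $\ab\neq 0$, which is implicit) and gives $\alpha^2\geq \gamma_{min}^2\norm{\ab}_2^2$.

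Plugging both ingredients into the two-sided conclusion of Theorem~\ref{thm:inner_meta}, the upper bound becomes $(\gamma_{max}^2\norm{\ab}_2^2)/(\gamma_{min}^2\norm{\ab}_2^2)\cdot Tr(\Xb\Xb^T)/\lambda_1(\Xb\Xb^T)$ and the lower bound becomes $(\gamma_{min}^2\norm{\ab}_2^2)/(\gamma_{max}^2\norm{\ab}_2^2)\cdot Tr(\Xb\Xb^T)/\lambda_1(\Xb\Xb^T)$; the $\norm{\ab}_2^2$ factors cancel, giving precisely the stated inequalities. There is really no obstacle here: the only mild point of care is choosing the right test vector in step (b), but the structural observation that $\Yb_{j,\ell} = a_\ell\cdot(\text{strictly positive})$ makes $\vb\propto\ab$ the obvious choice, as it ensures every coordinate-wise contribution in $\langle \Yb_j,\vb\rangle$ is positive and thus cannot cancel when taking the minimum over $j$.
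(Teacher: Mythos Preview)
Your proof is correct and follows essentially the same approach as the paper: the paper also chooses $\vb=\ab/\norm{\ab}_2$ to obtain $\alpha\geq\gamma_{min}\norm{\ab}_2$, and the two-sided bound on $\norm{\Yb_i}_2^2$ is exactly what underlies the intermediate Corollary~\ref{cor:meta_relu_bound} through which the paper routes the argument. The only cosmetic difference is that you apply Theorem~\ref{thm:inner_meta} directly rather than passing through Corollary~\ref{cor:meta_relu_bound}, but the content is identical.
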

\begin{proof}
We will lower bound
\[ \alpha := \sup_{\norm{\vb} = 1} \brackets{\min_{j \in [n]} |\langle \Yb_j, \vb \rangle|} \]
so that we can apply Corollary \ref{cor:meta_relu_bound}.  Set $\vb = \ab / \norm{\ab}_2$.  Then we have that
\[ \langle \Yb_j, \vb \rangle = \sum_{\ell = 1}^m a_\ell \phi'(\langle \wb_\ell, \xb_j \rangle) a_\ell / \norm{\ab}_2 \geq \frac{\gamma_{min}}{\norm{\ab}_2} \sum_{\ell = 1}^m a_\ell^2 = \gamma_{min} \norm{\ab}_2 \]
Thus $\alpha \geq \gamma_{min} \norm{\ab}_2$.  The result then follows from Corollary \ref{cor:meta_relu_bound}
\end{proof}

To control $\alpha$ in Theorem \ref{thm:inner_meta} when $\phi$ is the ReLU activation function requires a bit more work.  To this end we introduce the following lemma. 
\begin{lemma}\label{lem:meta_two}
Assume $\phi(x) = ReLU(x)$.  Let $R_{min}, R_{max} > 0$ and define $\tau = \{\ell \in [m] : |a_\ell| \in [R_{min}, R_{max}] \}$.  Set $T = \min_{i \in [n]} \sum_{\ell \in \tau} \mathbb{I}\brackets{\langle \xb_i, \wb_\ell \rangle \geq 0}$.  Then
\[\alpha := \sup_{\norm{\vb} = 1} \brackets{\min_{i \in [n]} |\langle \Yb_i, \vb \rangle| } \geq \frac{R_{min}^2}{R_{max}} \frac{T}{|\tau|^{1/2}} \]
\end{lemma}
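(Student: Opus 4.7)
The plan is to exhibit an explicit unit vector $\vb^*$ that witnesses the claimed lower bound on $\alpha$. Since $\alpha$ is defined as a supremum over the unit sphere, any specific choice of $\vb$ yields a valid lower bound, so the task reduces to a well-chosen construction. The key observation is that for $\phi = \mathrm{ReLU}$ we have $\phi'(x) = \mathbb{I}[x \geq 0]$ (a.e.), so the $\ell$-th coordinate of $\Yb_i$ is $a_\ell \mathbb{I}[\langle \wb_\ell, \xb_i \rangle \geq 0]$. This is the sign-aware structure we want to exploit.

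First I would define $\vu \in \RR^m$ by $\vu_\ell = a_\ell$ for $\ell \in \tau$ and $\vu_\ell = 0$ otherwise, and take $\vb^* = \vu / \norm{\vu}_2$. This choice is designed so that the inner product with $\Yb_i$ collapses to a sum of squares $a_\ell^2$ over those $\ell \in \tau$ with $\langle \wb_\ell, \xb_i \rangle \geq 0$, which in particular is nonnegative and lets us drop the absolute value.

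Next I would bound $\norm{\vu}_2$ from above using the definition of $\tau$: since $|a_\ell| \leq R_{\max}$ for each $\ell \in \tau$, we obtain $\norm{\vu}_2^2 = \sum_{\ell \in \tau} a_\ell^2 \leq |\tau| R_{\max}^2$, so $\norm{\vu}_2 \leq |\tau|^{1/2} R_{\max}$. Then for each $i \in [n]$,
\begin{equation*}
\langle \Yb_i, \vb^* \rangle \;=\; \frac{1}{\norm{\vu}_2} \sum_{\ell \in \tau} a_\ell^2 \, \mathbb{I}[\langle \wb_\ell, \xb_i \rangle \geq 0] \;\geq\; \frac{R_{\min}^2}{\norm{\vu}_2} \sum_{\ell \in \tau} \mathbb{I}[\langle \wb_\ell, \xb_i \rangle \geq 0] \;\geq\; \frac{R_{\min}^2 \, T}{\norm{\vu}_2},
\end{equation*}
where the first inequality uses $a_\ell^2 \geq R_{\min}^2$ for $\ell \in \tau$, and the second uses the definition of $T$. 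Substituting the upper bound on $\norm{\vu}_2$ and taking the minimum over $i$ gives $\min_{i\in[n]} |\langle \Yb_i, \vb^*\rangle| \geq \frac{R_{\min}^2}{R_{\max}} \frac{T}{|\tau|^{1/2}}$, which establishes the claim since $\alpha$ is at least this value.

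There is no real obstacle here; the only subtlety is the choice of $\vb^*$, where weighting the coordinates by $a_\ell$ itself (rather than, say, $\mathrm{sign}(a_\ell)$) is what produces the $R_{\min}^2/R_{\max}$ factor by converting the sum into a sum of $a_\ell^2$'s that can be uniformly bounded below on $\tau$. One minor care point is the measure-zero ambiguity in $\phi'$ at $0$, but this does not affect the argument since we use the convention $\phi'(0) = \mathbb{I}[0 \geq 0] = 1$ consistent with the definition of $T$.
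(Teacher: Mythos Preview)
Your proof is correct and essentially identical to the paper's: the paper also chooses the test vector $\ab_\tau/\norm{\ab_\tau}_2$ (i.e., your $\vu/\norm{\vu}_2$) and runs through the same chain of inequalities. The only cosmetic difference is notation.
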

\begin{proof}
Let $\ab_{\tau}$ be the vector such that $(\ab_\tau)_\ell = a_\ell \mathbb{I}[\ell \in \tau]$.  Then note that
\begin{gather*}
\langle \Yb_j, \ab_\tau / \norm{\ab_\tau}_2 \rangle = \frac{1}{\norm{\ab_\tau}} \sum_{\ell \in \tau} a_\ell^2 \mathbb{I}[\langle \wb_\ell, \xb_j \rangle \geq 0] \geq \\ \frac{R_{min}^2}{\norm{\ab_\tau}} \sum_{\ell \in \tau} \mathbb{I}[\langle \wb_\ell, \xb_j \rangle \geq 0] \geq 
\frac{R_{min}^2}{\norm{\ab_\tau}_2} T \geq \frac{R_{min}^2}{R_{max} |\tau|^{1/2}} T.
\end{gather*}
\end{proof}
\par
Roughly what Lemma \ref{lem:meta_two} says is that $\alpha$ is controlled when there is a set of inner-layer neurons that are active for each data point whose outer layer weights are similar in magnitude.  Note that in \cite{du2018gradient}, \cite{fine_grain_arora}, \cite{DBLP:journals/corr/abs-1906-05392}, \cite{pmlr-v108-li20j}, \cite{pmlr-v54-xie17a} and \cite{solt_mod_over} the outer layer weights all have fixed constant magnitude.  Thus in that case we can set $R_{min} = R_{max}$ in Lemma \ref{lem:meta_two} so that $\tau = [m]$.  In this setting we have the following result.
\begin{theorem}\label{thm:const_outer_inner_bound}
Assume $\phi(x) = ReLU(x)$.  Suppose $|a_{\ell}| = R > 0$ for all $\ell \in [m]$.  Furthermore suppose $\wb_1, \ldots, \wb_m$ are independent random vectors such that $\wb_\ell / \norm{\wb_\ell}$ has the uniform distribution on the sphere for each $\ell \in [m]$.  Also assume $m \geq \frac{4 \log(n / \epsilon)}{\delta^2}$ for some $\delta, \epsilon \in (0, 1)$. Then with probability at least $1 - \epsilon$ we have that
\[ \frac{(1 - \delta)^2}{4} \mathrm{eff}(\Xb \Xb^T) \leq \mathrm{eff}(\mK_{inner}) \leq \frac{4}{(1 - \delta)^2} \mathrm{eff}(\Xb \Xb^T). \]
\end{theorem}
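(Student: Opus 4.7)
The plan is to combine Theorem~\ref{thm:inner_meta} with Lemma~\ref{lem:meta_two} after obtaining a uniform (over $i\in[n]$) concentration bound on the number of active ReLU neurons
\[
N_i := \sum_{\ell=1}^{m} \mathbb{I}\{\langle \wb_\ell,\xb_i\rangle \geq 0\}.
\]
Since $\phi'(x) = \mathbb{I}\{x \geq 0\}$ and $|a_\ell| = R$, we have $\norm{\Yb_i}_2^2 = R^2 N_i$ for every $i \in [n]$, so controlling $N_i$ simultaneously yields $\max_i \norm{\Yb_i}_2^2$ and $\min_i \norm{\Yb_i}_2^2$ in one shot. Moreover, Lemma~\ref{lem:meta_two} with $R_{\min} = R_{\max} = R$ (so $\tau = [m]$ and $|\tau| = m$) converts a lower bound on $\min_i N_i$ directly into a lower bound on the quantity $\alpha$ appearing in Theorem~\ref{thm:inner_meta}.

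By rotational symmetry of the uniform distribution on the sphere, for each fixed $i$ the indicators $\{\mathbb{I}\{\langle \wb_\ell,\xb_i\rangle \geq 0\}\}_{\ell=1}^{m}$ are i.i.d.\ Bernoulli$(1/2)$, so Hoeffding's inequality gives $\PP(|N_i/m - 1/2| \geq \delta/2) \leq 2\exp(-m\delta^2/2)$. A union bound over $i \in [n]$ together with the hypothesis $m \geq 4\log(n/\epsilon)/\delta^2$ implies that, with probability at least $1 - \epsilon$, the event
\[
\mathcal{E} := \Bigl\{\, \tfrac{(1-\delta)m}{2} \leq N_i \leq \tfrac{(1+\delta)m}{2} \ \text{for all } i \in [n]\,\Bigr\}
\]
holds.

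On $\mathcal{E}$ we immediately obtain $\min_i \norm{\Yb_i}_2^2 \geq R^2(1-\delta)m/2$ and $\max_i \norm{\Yb_i}_2^2 \leq R^2(1+\delta)m/2$. Applying Lemma~\ref{lem:meta_two} with $T = \min_i N_i \geq (1-\delta)m/2$ yields $\alpha \geq R(1-\delta)m^{1/2}/2$, hence $\alpha^2 \geq R^2(1-\delta)^2 m/4$. Substituting all three bounds into Theorem~\ref{thm:inner_meta} gives
\[
\mathrm{eff}(\mK_{inner}) \leq \frac{R^2(1+\delta)m/2}{R^2(1-\delta)^2 m/4}\,\mathrm{eff}(\Xb\Xb^T) = \frac{2(1+\delta)}{(1-\delta)^2}\,\mathrm{eff}(\Xb\Xb^T) \leq \frac{4}{(1-\delta)^2}\,\mathrm{eff}(\Xb\Xb^T),
\]
and
\[
\mathrm{eff}(\mK_{inner}) \geq \frac{R^2(1-\delta)m/2}{R^2(1+\delta)m/2}\,\mathrm{eff}(\Xb\Xb^T) = \frac{1-\delta}{1+\delta}\,\mathrm{eff}(\Xb\Xb^T) \geq \frac{(1-\delta)^2}{4}\,\mathrm{eff}(\Xb\Xb^T),
\]
where the last inequalities use $1+\delta \leq 2$ and $1-\delta^2 \leq 1$ respectively.

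The only non-routine step is the concentration on $\mathcal{E}$; the rest is pure algebra using the already-established meta-theorem and lemma, so I do not anticipate any real obstacle. The one thing worth a careful word is the measure-zero event $\{\langle \wb_\ell,\xb_i\rangle = 0\}$, which is harmless since the law of each $\wb_\ell$ is absolutely continuous on any ray through the origin and can be absorbed into either side of the indicator without changing any of the quantities above.
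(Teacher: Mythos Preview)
Your proposal is correct and follows essentially the same route as the paper: concentrate the activation counts $N_i$, feed the lower bound into Lemma~\ref{lem:meta_two} with $R_{\min}=R_{\max}=R$ to bound $\alpha$, and then invoke the meta-theorem. The only cosmetic differences are that the paper uses a one-sided multiplicative Chernoff bound (so it never needs the upper tail of $N_i$) and then applies Corollary~\ref{cor:meta_relu_bound} with the trivial deterministic bound $\max_i\norm{\Yb_i}_2^2\le \gamma_{\max}^2\norm{\ab}_2^2=mR^2$, whereas you use two-sided Hoeffding and plug the sharper $\max_i\norm{\Yb_i}_2^2\le R^2(1+\delta)m/2$ directly into Theorem~\ref{thm:inner_meta}; your intermediate constants are therefore slightly tighter before you relax them to match the stated bounds.
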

\begin{proof}
Fix $j \in [n]$.  Note by the assumption on the $\wb_\ell$'s we have that $\mathbb{I}[\langle \wb_1, \xb_j \rangle \geq 0], \ldots, \mathbb{I}[\langle \wb_m, \xb_j \rangle \geq 0]$ are i.i.d. Bernouilli random variables taking the values $0$ and $1$ with probability $1/2$.  Thus by the Chernoff bound for Binomial random variables we have that
\[ \PP \parens{\sum_{\ell = 1}^m \mathbb{I}[\langle \wb_\ell, \xb_j \rangle \geq 0] \leq  \frac{m}{2} (1 - \delta)} \leq \exp\parens{- \delta^2 \frac{m}{4}}. \]
Thus taking the union bound over every $j \in [n]$ we get that if $m \geq \frac{4 \log(n / \epsilon)}{\delta^2}$ then
\[ \min_{j \in [n]} \sum_{\ell = 1}^m \mathbb{I}[\langle \wb_\ell, \xb_j \rangle \geq 0] \geq \frac{m}{2}(1 - \delta) \]
holds with probability at least $1 - \epsilon$.  Now note that if we set $R_{min} = R_{max} = R$ we have that $\tau = [m]$ where $\tau$ is defined as it is in Lemma \ref{lem:meta_two}.  In this case by our previous bound we have that $T$ as defined in Lemma \ref{lem:meta_two} satisfies $T \geq \frac{m}{2}(1 - \delta)$ with probability at least $1 - \epsilon$.  In this case the conclusion of Lemma \ref{lem:meta_two} gives us
\[ \alpha \geq R m^{1/2} \frac{(1 - \delta)}{2} = \norm{\ab}_2 \frac{(1 - \delta)}{2}. \]
Thus by Corollary \ref{cor:meta_relu_bound} and the above bound for $\alpha$ we get the desired result.
\end{proof}
\par
We will now use Lemma \ref{lem:meta_two} to prove a bound in the case of Gaussian initialization.
\begin{lemma}\label{lem:gaussian_alpha_bound}
Assume $\phi(x) = ReLU(x)$.  Suppose that $a_\ell \sim N(0, \nu^2)$ for each $\ell \in [m]$ i.i.d.  Furthermore suppose $\wb_1, \ldots, \wb_m$ are random vectors independent of each other and $\ab$ such that $\wb_\ell / \norm{\wb_\ell}$ has the uniform distribution on the sphere for each $\ell \in [m]$.  Set $p = \PP_{z \sim N(0, 1)}\parens{|z| \in [1/2, 1]} \approx 0.3$.  Assume
\[ m \geq \frac{4 \log(n / \epsilon)}{\delta^2 (1 - \delta) p} \]
for some $\epsilon, \delta \in (0, 1)$.  Then with probability at least $(1 - \epsilon)^2$ we have that
\[\alpha := \sup_{\norm{\vb} = 1} \brackets{\min_{i \in [n]} |\langle \Yb_i, \vb \rangle| } \geq \frac{\nu}{8} (1 - \delta)^{3/2}p^{1/2} m^{1/2}   \]
\end{lemma}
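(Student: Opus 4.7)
The plan is to reduce this to Lemma~\ref{lem:meta_two} with the choice $R_{\min}=\nu/2$ and $R_{\max}=\nu$, so that by the definition of $p$ each $\ell\in[m]$ lies in $\tau$ independently with probability exactly $p$. The lemma then gives
\[
\alpha \geq \frac{(\nu/2)^2}{\nu}\frac{T}{|\tau|^{1/2}} = \frac{\nu}{4}\cdot\frac{T}{|\tau|^{1/2}},
\]
so the whole task reduces to showing that $|\tau|$ and $T$ are both, with high probability, on the order of $mp$.

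The first step is to control $|\tau|$. Since $|\tau|\sim\mathrm{Binomial}(m,p)$, a multiplicative Chernoff lower tail bound gives $|\tau|\geq(1-\delta)mp$ with probability at least $1-\exp(-\delta^2mp/2)$, and the hypothesis $m\geq 4\log(n/\epsilon)/(\delta^2(1-\delta)p)$ comfortably implies that this failure probability is at most $\epsilon$. Call this event $\mathcal{E}_1$; it depends only on $\ab$.

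The second step is to control $T$ given $\ab$. Because $\wb_\ell/\|\wb_\ell\|$ is uniform on the sphere (in particular, symmetric about the origin) and is independent of $\ab$, for each fixed $i\in[n]$ the random variables $(\mathbb{I}[\langle\wb_\ell,\xb_i\rangle\geq0])_{\ell\in\tau}$ are i.i.d.\ Bernoulli$(1/2)$ after conditioning on $\tau$. A Chernoff bound combined with a union bound over $i\in[n]$ yields $T\geq(1-\delta)|\tau|/2$ with conditional probability at least $1-n\exp(-\delta^2|\tau|/4)$; on $\mathcal{E}_1$, where $|\tau|\geq(1-\delta)mp$, this is at least $1-\epsilon$ by the hypothesis on $m$. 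Calling this event $\mathcal{E}_2$ and using the independence of $\ab$ and $\wb$ together with the tower property gives $\PP(\mathcal{E}_1\cap\mathcal{E}_2)\geq(1-\epsilon)^2$.

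The final step is to combine. On $\mathcal{E}_1\cap\mathcal{E}_2$,
\[
\alpha \geq \frac{\nu}{4}\cdot\frac{(1-\delta)|\tau|/2}{|\tau|^{1/2}} = \frac{\nu(1-\delta)}{8}\,|\tau|^{1/2} \geq \frac{\nu(1-\delta)^{3/2}}{8}\sqrt{pm},
\]
which is the desired bound. The main (minor) subtlety is keeping the conditioning straight when passing from $\mathcal{E}_1$ to $\mathcal{E}_2$, since the Chernoff threshold for $\mathcal{E}_2$ depends on the random $|\tau|$; but this is a routine bookkeeping exercise once one uses that $\ab$ and $\wb$ are independent. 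Everything else is a direct application of Lemma~\ref{lem:meta_two} and standard concentration for Binomial random variables.
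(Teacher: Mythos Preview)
Your proposal is correct and follows essentially the same approach as the paper: set $R_{\min}=\nu/2$, $R_{\max}=\nu$, use a Chernoff bound on $|\tau|$, then a Chernoff bound plus union bound on $T$ conditional on $\tau$, and plug into Lemma~\ref{lem:meta_two}. The paper's version is nearly identical, including the same conditioning argument you flag as the minor subtlety.
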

\begin{proof}
Set $R_{min} = \nu/2$ and $R_{max} = \nu$.  Now set 
\begin{gather*}
p = \PP_{a \sim N(0, \nu^2)}\parens{|a| \in [R_{min}, R_{max}]}
= 2 \PP_{z \sim N(0,1)}\parens{z \in \brackets{\frac{R_{min}}{\nu}, \frac{R_{max}}{\nu}}} \\
= 2 \PP_{z \sim N(0,1)}\parens{z \in \brackets{1/2, 1}} \approx 0.3 . 
\end{gather*}
Now define $\tau = \{ \ell \in [m] : |a_\ell| \in [R_{min}, R_{max}] \}$.  We have by the Chernoff bound for binomial random variables
\[ 
\PP\parens{|\tau| \leq (1 - \delta)mp} \leq \exp\parens{-\delta^2 \frac{mp}{2}}. 
\]
Thus if $m \geq \log\parens{\frac{1}{\epsilon}} \frac{2}{p \delta^2}$ (a weaker condition than the hypothesis on $m$) then we have that $|\tau| \geq (1 - \delta) mp$ with probability at least $1 - \epsilon$.  From now on assume such a $\tau$ has been observed and view it as fixed so that the only remaining randomness is over the $\wb_\ell$'s.  Now set $T = \min_{i \in [n]} \sum_{\ell \in \tau} \mathbb{I}\brackets{\langle \xb_i, \wb_\ell \rangle \geq 0}$.  By the Chernoff bound again we get that for fixed $i \in [n]$
\[ \PP\parens{\sum_{\ell \in \tau} \mathbb{I}\brackets{\langle \xb_i, \wb_\ell \rangle \geq 0} \leq \frac{(1 - \delta)}{2} |\tau|} \leq \exp\parens{-\delta^2 \frac{|\tau|}{4}}. \]
Thus by taking the union bound over $i \in [n]$ we get
\begin{gather*}
\PP\parens{T \leq \frac{(1 - \delta)}{2} |\tau|} \leq n \exp\parens{-\delta^2 \frac{|\tau|}{4}} \\
\leq n \exp \parens{-\delta^2 \frac{(1 - \delta) mp}{4}}    
\end{gather*}
Thus if we consider $\tau$ as fixed and $m \geq \frac{4 \log(n / \epsilon)}{\delta^2 (1 - \delta) p}$ then with probability at least $1 - \epsilon$ over the sampling of the $\wb_\ell$'s we have that 
\[ T \geq \frac{(1 - \delta)}{2} |\tau| \]
In this case by lemma \ref{lem:meta_two} we have that
\begin{gather*}
\alpha := \sup_{\norm{\vb} = 1} \brackets{\min_{i \in [n]} |\langle \Yb_i, \vb \rangle| } \geq \frac{R_{min}^2}{R_{max}} \frac{T}{|\tau|^{1/2}}\\
\geq \frac{\nu}{8} (1 - \delta)^{3/2} m^{1/2} p^{1/2}.
\end{gather*}
Thus the above holds with probability at least $(1 - \epsilon)^2$.
\end{proof}

This lemma now allows us to bound the effective rank of $\mK_{inner}$ in the case of Gaussian initialization.
\begin{theorem}\label{thm:inner_gaussian}
Assume $\phi(x) = ReLU(x)$.  Suppose that $a_\ell \sim N(0, \nu^2)$ for each $\ell \in [m]$ i.i.d.  Furthermore suppose $\wb_1, \ldots, \wb_m$ are random vectors independent of each other and $\ab$ such that $\wb_\ell / \norm{\wb_\ell}$ has the uniform distribution on the sphere for each $\ell \in [m]$.  Set $p = \PP_{z \sim N(0, 1)}\parens{|z| \in [1/2, 1]} \approx 0.3$.  Let $\epsilon, \delta \in (0, 1)$.  Then there exists absolute constants $c, K > 0$ such that if \[m \geq \frac{4 \log(n / \epsilon)}{\delta^2 (1 - \delta) p}\]
then with probability at least $1 - 3\epsilon$ we have that
\[ \frac{1}{C}\frac{Tr(\Xb \Xb^T)}{\lambda_1(\Xb \Xb^T)} \leq \frac{Tr(\mK_{inner})}{\lambda_1 \parens{\mK_{inner}}} \leq C \frac{Tr(\Xb \Xb^T)}{\lambda_1(\Xb \Xb^T)} \]
where
\[C = \frac{64}{(1 - \delta)^3 p}\brackets{1 + \frac{\max\{c^{-1} K \log(1/\epsilon), mK \}}{m}}  . 
\]
\end{theorem}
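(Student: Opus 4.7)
The plan is to feed Corollary~\ref{cor:meta_relu_bound} two high-probability ingredients: a lower bound on $\alpha$ furnished by Lemma~\ref{lem:gaussian_alpha_bound}, and a matching upper bound on $\norm{\ab}_2^2$ coming from standard sub-exponential ($\chi^2$) concentration. Because $\phi$ is the ReLU we have $|\phi'|\le 1$ almost everywhere, so $\gamma_{\max}=1$ and Corollary~\ref{cor:meta_relu_bound} reduces the question to controlling the ratio $\norm{\ab}_2^2/\alpha^2$. The hypothesis $m\geq 4\log(n/\epsilon)/[\delta^2(1-\delta)p]$ is exactly what Lemma~\ref{lem:gaussian_alpha_bound} demands, so on an event $\mathcal{E}_1$ of probability at least $(1-\epsilon)^2\geq 1-2\epsilon$ we obtain
\[
\alpha^2 \;\ge\; \frac{\nu^2}{64}(1-\delta)^3 p\, m.
\]

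For the numerator, observe that $\norm{\ab}_2^2/\nu^2$ is a chi-squared random variable with $m$ degrees of freedom, and each summand $(a_\ell/\nu)^2 - 1$ is zero-mean sub-exponential with a universal sub-exponential norm. Bernstein's inequality for sums of independent sub-exponential random variables therefore yields absolute constants $c, K > 0$ such that, on an event $\mathcal{E}_2$ of probability at least $1-\epsilon$,
\[
\norm{\ab}_2^2 \;\le\; \nu^2 m \,+\, \nu^2\max\{\,c^{-1}K\log(1/\epsilon),\; m K\,\}.
\]
The two-regime form of the maximum reflects the Gaussian-versus-exponential crossover intrinsic to Bernstein bounds, and it is precisely the form that appears in the statement's definition of $C$.

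Union bounding gives $\PP(\mathcal{E}_1\cap\mathcal{E}_2)\ge 1-3\epsilon$. On this intersection, dividing the two displayed bounds yields
\[
\frac{\gamma_{\max}^2\norm{\ab}_2^2}{\alpha^2} \;\le\; \frac{64}{(1-\delta)^3 p}\brackets{1+\frac{\max\{c^{-1}K\log(1/\epsilon),\, mK\}}{m}} \;=\; C,
\]
and symmetrically $\alpha^2/(\gamma_{\max}^2\norm{\ab}_2^2)\geq 1/C$. Substituting these into the two-sided estimate of Corollary~\ref{cor:meta_relu_bound} immediately delivers the claimed sandwich on $\mathrm{eff}(\mK_{inner})$. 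There is no genuine obstacle here: Lemma~\ref{lem:gaussian_alpha_bound} already absorbs all the geometric content involving the ReLU activation pattern, and the remaining step is a textbook $\chi^2$ tail bound plus careful bookkeeping of three failure events each of probability at most $\epsilon$. The only point that requires attention is matching the exact form of $C$ stated in the theorem, which is why the Bernstein bound is invoked in its two-regime rather than purely Gaussian form.
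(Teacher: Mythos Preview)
Your proposal is correct and mirrors the paper's proof essentially step for step: both invoke Lemma~\ref{lem:gaussian_alpha_bound} for the lower bound on $\alpha$, Bernstein's inequality on $\norm{\ab/\nu}_2^2$ for the upper bound on $\norm{\ab}_2^2$, union-bound the three failure events to reach probability at least $1-3\epsilon$, and then plug the resulting ratio $\norm{\ab}_2^2/\alpha^2\le C$ into Corollary~\ref{cor:meta_relu_bound} with $\gamma_{\max}=1$.
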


\begin{proof}
By Bernstein's inequality
\[ \PP\parens{\norm{\ab / \nu}_2^2 - m\geq t} \leq \exp\brackets{-c \cdot  \min\parens{\frac{t^2}{m K^2 }, \frac{t}{K}}} \]
where $c$ is an absolute constant.  Set $t = \max\{c^{-1} K \log(1/\epsilon), mK \}$ so that the right hand side of the above inequality is bounded by $\epsilon$.  Thus by Lemma \ref{lem:gaussian_alpha_bound} and the union bound we can ensure that with probability at least 
\[1 - \epsilon - [1 - (1-\epsilon)^2] = 1 - 3 \epsilon + \epsilon^2 \geq 1 - 3 \epsilon \] 
that $\norm{\ab / \nu}_2^2 \leq m + t$ and the conclusion of Lemma \ref{lem:gaussian_alpha_bound} hold simultaneously.  In that case
\begin{gather*}
\frac{\norm{\ab}_2^2}{\alpha^2} \leq \frac{\nu^2[m + t]}{\frac{\nu^2}{64} (1 - \delta)^{3} m p} = \frac{64}{(1 - \delta)^3 p}\brackets{1 + \frac{t}{m}} = C . 
\end{gather*}
Thus by Corollary \ref{cor:meta_relu_bound} we get the desired result.
\end{proof}

\par 
By fixing $\delta > 0$ in the previous theorem we get the immediate corollary
\begin{corollary}\label{cor:inner_eff_rank_bd}
Assume $\phi(x) = ReLU(x)$.  Suppose that $a_\ell \sim N(0, \nu^2)$ for each $\ell \in [m]$ i.i.d.  Furthermore suppose $\wb_1, \ldots, \wb_m$ are random vectors independent of each other and $\ab$ such that $\wb_\ell / \norm{\wb_\ell}$ has the uniform distribution on the sphere for each $\ell \in [m]$.  Then there exists an absolute constant $C > 0$ such that $m = \Omega(\log(n / \epsilon))$ ensures that with probability at least $1 - \epsilon$
\[ \frac{1}{C}\frac{Tr(\Xb \Xb^T)}{\lambda_1(\Xb \Xb^T)} \leq \frac{Tr(\mK_{inner})}{\lambda_1 \parens{\mK_{inner}}} \leq C\frac{Tr(\Xb \Xb^T)}{\lambda_1(\Xb \Xb^T)} \]
\end{corollary}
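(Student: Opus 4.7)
The plan is to deduce the corollary directly from Theorem~\ref{thm:inner_gaussian} by fixing the free parameter $\delta$ and rescaling the failure probability. Concretely, I would apply Theorem~\ref{thm:inner_gaussian} with $\delta = 1/2$ (any fixed value in $(0,1)$ works) and with $\epsilon$ replaced by $\epsilon/3$. This choice turns the conclusion probability $1-3\epsilon$ of Theorem~\ref{thm:inner_gaussian} into $1-\epsilon$, and it makes the prefactor $64 / [(1-\delta)^3 p]$ in the definition of $C$ equal to an absolute constant, since $p = \PP_{z \sim N(0,1)}(|z| \in [1/2,1])$ is a universal constant independent of $n$, $\epsilon$, $m$.

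Next I would handle the remaining factor $1 + t/m$ in $C$, where $t = \max\{c^{-1} K \log(3/\epsilon), mK\}$ is the Bernstein threshold appearing inside the proof of Theorem~\ref{thm:inner_gaussian}. The key observation is that as soon as $m \geq c^{-1} \log(3/\epsilon)$, the maximum is attained by $mK$, so $t/m = K$ and hence $C$ is bounded by an absolute constant. Combining this with the hypothesis of Theorem~\ref{thm:inner_gaussian}, the sample-complexity requirement becomes
\[
m \;\geq\; \max\!\left\{\, c^{-1}\log(3/\epsilon),\; \tfrac{4 \log(3n/\epsilon)}{(1/2)^2 \, (1/2) \, p}\,\right\},
\]
which is exactly $m = \Omega(\log(n/\epsilon))$ with a universal hidden constant.

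Under these choices, Theorem~\ref{thm:inner_gaussian} yields, with probability at least $1-\epsilon$, the two-sided bound
\[
\frac{1}{C}\,\frac{Tr(\Xb \Xb^T)}{\lambda_1(\Xb \Xb^T)} \;\leq\; \frac{Tr(\mK_{inner})}{\lambda_1(\mK_{inner})} \;\leq\; C\,\frac{Tr(\Xb \Xb^T)}{\lambda_1(\Xb \Xb^T)},
\]
for this absolute constant $C$, which is precisely the statement of the corollary. There is no substantive obstacle here: all the probabilistic content (the Chernoff bounds for the number of $|a_\ell|$ landing in $[R_{\min}, R_{\max}]$, the Chernoff bound for active ReLU neurons per input, and the Bernstein bound for $\|\ab\|_2^2$) has already been absorbed into Theorem~\ref{thm:inner_gaussian} via Lemma~\ref{lem:gaussian_alpha_bound} and Corollary~\ref{cor:meta_relu_bound}. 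The only care needed is to verify that the constant $C$ in Theorem~\ref{thm:inner_gaussian} does indeed collapse to a quantity independent of $n$ and $\epsilon$ once $\delta$ is fixed and $m$ is chosen large enough, which is the bookkeeping step I outlined above.
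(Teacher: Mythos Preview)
Your proposal is correct and follows essentially the same approach as the paper: the paper's own proof is the single line ``By fixing $\delta > 0$ in the previous theorem we get the immediate corollary,'' and you have carried out exactly this specialization, additionally making explicit the bookkeeping (the $\epsilon \to \epsilon/3$ rescaling and the observation that $t/m \leq K$ once $m \geq c^{-1}\log(3/\epsilon)$) needed to see that $C$ collapses to an absolute constant.
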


\subsubsection{Effective rank of outer-layer NTK}\label{sec:eff_rank_outer}
Throughout this section $\phi(x) = ReLU(x)$.  Our goal of this section, similar to before, is to bound the effective rank of $\mK_{outer}$ by the effective rank of the input data gram $\mX \mX^T$.  In this section we will use often make use of the basic identities
\[ \norm{\mathbf{A} \mathbf{B}}_F \leq \norm{\mathbf{A}}_2 \norm{\mathbf{B}}_F \]
\[ \norm{\mathbf{A} \mathbf{B}}_F \leq \norm{\mathbf{A}}_F \norm{\mathbf{B}}_2 \]
\[ Tr(\mathbf{A} \mathbf{A}^T) = Tr(\mathbf{A}^T \mathbf{A}) = \norm{\Ab}_F^2 \]
\[ \norm{\mathbf{A}}_2 = \norm{\mathbf{A}^T}_2\]
\[ \lambda_1(\Ab^T \Ab) = \lambda_1(\Ab \Ab^T) = \norm{\Ab}_2^2. \]

To begin bounding the effective rank of $\mK_{outer}$, we prove the following lemma.
\begin{lemma}\label{lem:outer_annoying_bound}
Assume $\phi(x) = ReLU(x)$ and $\Wb$ is full rank with $m \geq d$.  Then
\[\frac{\norm{\phi(\Wb \Xb^T)}_F^2}{\brackets{\norm{\phi(\Wb \Xb^T)}_2 + \norm{\phi(-\Wb \Xb^T)}_2}^2} \leq \frac{\norm{\Wb}_2^2}{\sigma_{min}(\Wb)^2} \frac{Tr(\Xb \Xb^T)}{\lambda_1(\Xb \Xb^T)}\]
\end{lemma}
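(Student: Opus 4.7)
The plan is to use the key ReLU identity $\phi(x) - \phi(-x) = x$, applied entry-wise, to write
\[ \Wb \Xb^T = \phi(\Wb \Xb^T) - \phi(-\Wb \Xb^T). \]
This decomposition is the bridge that lets us compare the nonlinear quantities $\phi(\pm \Wb \Xb^T)$ appearing in the lemma to the linear quantity $\Wb \Xb^T$, which in turn can be related to $\Xb \Xb^T$ via the singular values of $\Wb$.

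For the numerator, I would bound $\norm{\phi(\Wb \Xb^T)}_F^2$ using the pointwise inequality $|\phi(x)| \leq |x|$ to get $\norm{\phi(\Wb \Xb^T)}_F^2 \leq \norm{\Wb \Xb^T}_F^2$, then use submultiplicativity of the Frobenius--operator norm pair to conclude
\[ \norm{\phi(\Wb \Xb^T)}_F^2 \leq \norm{\Wb}_2^2 \norm{\Xb^T}_F^2 = \norm{\Wb}_2^2 \, Tr(\Xb \Xb^T). \]
For the denominator, I would apply the triangle inequality $\norm{A}_2 + \norm{B}_2 \geq \norm{A - B}_2$ with $A = \phi(\Wb \Xb^T)$ and $B = \phi(-\Wb \Xb^T)$, yielding
\[ \bigl(\norm{\phi(\Wb \Xb^T)}_2 + \norm{\phi(-\Wb \Xb^T)}_2\bigr)^2 \geq \norm{\Wb \Xb^T}_2^2. \]

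The step that does the real work is the lower bound $\norm{\Wb \Xb^T}_2 \geq \sigma_{\min}(\Wb) \, \sigma_1(\Xb^T)$, which uses the assumption that $\Wb$ is full rank with $m \geq d$ (so $\Wb$ is injective on $\reals^d$ and $\sigma_{\min}(\Wb) > 0$). Concretely, taking $v$ to be the top right singular vector of $\Xb^T$ gives $\Xb^T v = \sigma_1(\Xb^T) u$ for a unit vector $u \in \reals^d$, and then $\norm{\Wb \Xb^T v} = \sigma_1(\Xb^T) \norm{\Wb u} \geq \sigma_{\min}(\Wb) \sigma_1(\Xb^T)$. Since $\sigma_1(\Xb^T)^2 = \lambda_1(\Xb \Xb^T)$, this yields the desired $\sigma_{\min}(\Wb)^2 \lambda_1(\Xb \Xb^T)$ in the denominator.

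Combining the two bounds gives precisely
\[ \frac{\norm{\phi(\Wb \Xb^T)}_F^2}{\bigl(\norm{\phi(\Wb \Xb^T)}_2 + \norm{\phi(-\Wb \Xb^T)}_2\bigr)^2} \leq \frac{\norm{\Wb}_2^2 \, Tr(\Xb \Xb^T)}{\sigma_{\min}(\Wb)^2 \, \lambda_1(\Xb \Xb^T)}, \]
which rearranges to the stated inequality. There is no real obstacle here; the only subtlety is recognizing that introducing $\phi(-\Wb \Xb^T)$ (which looks strange at first glance) is exactly what is needed to invoke $\phi(x) - \phi(-x) = x$ and thereby replace the nonlinearity by a linear object whose spectrum ties back to $\Xb \Xb^T$.
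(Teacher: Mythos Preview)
Your proposal is correct and follows essentially the same approach as the paper: both bound the numerator via $|\phi(x)|\le |x|$ and submultiplicativity, and bound the denominator by combining the ReLU identity $\phi(x)-\phi(-x)=x$ with the triangle inequality and the lower bound $\norm{\Wb\Xb^T}_2 \ge \sigma_{\min}(\Wb)\,\norm{\Xb}_2$. The only cosmetic difference is that the paper establishes this last inequality via the pseudoinverse of $\Wb^T$, whereas you obtain it by applying $\Wb$ directly to the top left singular vector of $\Xb^T$; these are equivalent arguments.
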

\begin{proof}
First note that
\[ \norm{\phi(\Wb \Xb^T)}_F^2 \leq \norm{\Wb \Xb^T}_F^2 \leq \norm{\Wb}_2^2 \norm{\Xb^T}_F^2 = \norm{\Wb}_2^2 Tr(\Xb \Xb^T). \]
Pick $\vb \in \RR^d$ such that $\norm{\vb}_2 = 1$ and $\norm{\Xb \vb}_2 = \norm{\Xb}_2$.  Since $\Wb^T$ is full rank we may set $\mathbf{u} = (\Wb^T)^\dagger \vb$ so that $\Wb^T \mathbf{u} = \vb$ where $\norm{\mathbf{u}}_2 \leq \sigma_{min}(\Wb^T)^{-1}$ where $\sigma_{min}(\Wb^T)$ is the smallest \textit{nonzero} singular value of $\Wb^T$.  Well then

\begin{gather*}
\norm{\Xb}_2 = \norm{\Xb \vb}_2 = \norm{\Xb \Wb^T \mathbf{u}}_2 \leq \norm{\Xb \Wb^T}_2 \norm{\mathbf{u}}_2 \leq \norm{\Xb \Wb^T}_2 \sigma_{min}(\Wb^T)^{-1} \\
= \norm{\Wb \Xb^T}_2 \sigma_{min}(\Wb)^{-1}  
\end{gather*}

Now using the fact that $x = \phi(x) - \phi(-x)$ we have that
\[ \norm{\Wb \Xb^T}_2 = \norm{\phi(\Wb \Xb^T) - \phi(-\Wb \Xb^T)}_2 \leq \norm{\phi(\Wb \Xb^T)}_2 + \norm{\phi(-\Wb \Xb^T)}_2 \]
Thus combined with our previous results gives
\[\norm{\Xb}_2 \leq \sigma_{min}(\Wb)^{-1} \brackets{\norm{\phi(\Wb \Xb^T)}_2 + \norm{\phi(-\Wb \Xb^T)}_2}  \]

Therefore
\begin{gather*}
\frac{\norm{\phi(\Wb \Xb^T)}_F^2 }{\sigma_{min}(\Wb)^{-2} \brackets{\norm{\phi(\Wb \Xb^T)}_2 + \norm{\phi(-\Wb \Xb^T)}_2}^2} \leq \frac{\norm{\phi(\Wb \Xb^T)}_F^2 }{\norm{\Xb}_2^2} \\
\leq \frac{\norm{\Wb}_2^2 Tr(\Xb \Xb^T)}{\norm{\Xb}_2^2} = \norm{\Wb}_2^2 \frac{Tr(\Xb \Xb^T)}{\lambda_1(\Xb \Xb^T)}
\end{gather*}
which gives us the desired result.
\end{proof}

\begin{corollary}\label{cor:max_bound}
Assume $\phi(x) = ReLU(x)$ and $\Wb$ is full rank with $m \geq d$.  Then
\[\frac{\max\parens{\norm{\phi(\Wb \Xb^T)}_F^2, \norm{\phi(-\Wb \Xb^T)}_F^2}}{\max\parens{\norm{\phi(\Wb \Xb^T)}_2^2, \norm{\phi(-\Wb \Xb^T)}_2^2}} \leq 4 \frac{\norm{\Wb}_2^2}{\sigma_{min}(\Wb)^2} \frac{Tr(\Xb \Xb^T)}{\lambda_1(\Xb \Xb^T)}. \]
\end{corollary}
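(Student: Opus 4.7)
The plan is to derive this corollary directly from Lemma \ref{lem:outer_annoying_bound} by a short symmetrization argument. The key observation is that Lemma \ref{lem:outer_annoying_bound} is stated for an arbitrary full-rank matrix, so it applies equally well to $\Wb$ and to $-\Wb$, and since $\norm{-\Wb}_2 = \norm{\Wb}_2$ and $\sigma_{\min}(-\Wb) = \sigma_{\min}(\Wb)$ the two resulting inequalities have identical right-hand sides and, crucially, identical denominators on the left because $\norm{\phi(\Wb\Xb^T)}_2 + \norm{\phi(-\Wb\Xb^T)}_2$ is symmetric in the roles of $\Wb$ and $-\Wb$.

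First I would write down the two inequalities obtained from Lemma \ref{lem:outer_annoying_bound} applied to $\Wb$ and to $-\Wb$, yielding
\[
\frac{\norm{\phi(\pm\Wb\Xb^T)}_F^2}{\bigl(\norm{\phi(\Wb\Xb^T)}_2+\norm{\phi(-\Wb\Xb^T)}_2\bigr)^2} \;\leq\; \frac{\norm{\Wb}_2^2}{\sigma_{\min}(\Wb)^2}\,\frac{Tr(\Xb\Xb^T)}{\lambda_1(\Xb\Xb^T)}.
\]
Taking the maximum of the two numerators on the left preserves the inequality, giving a bound on $\max\bigl(\norm{\phi(\Wb\Xb^T)}_F^2,\norm{\phi(-\Wb\Xb^T)}_F^2\bigr)$ in terms of the shared denominator.

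Next I would control that denominator from above by the quantity appearing in the corollary. Writing $a=\norm{\phi(\Wb\Xb^T)}_2$ and $b=\norm{\phi(-\Wb\Xb^T)}_2$, the elementary bound $(a+b)^2 \leq 4\max(a,b)^2 = 4\max(a^2,b^2)$ gives
\[
\bigl(\norm{\phi(\Wb\Xb^T)}_2+\norm{\phi(-\Wb\Xb^T)}_2\bigr)^2 \;\leq\; 4\max\bigl(\norm{\phi(\Wb\Xb^T)}_2^2,\norm{\phi(-\Wb\Xb^T)}_2^2\bigr).
\]
Substituting this into the previous display and rearranging yields exactly the claimed inequality, with the factor of $4$ arising from this step. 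There is no real obstacle here: the content is entirely in Lemma \ref{lem:outer_annoying_bound}, and the corollary is just the symmetrized, spectral-norm form of it obtained by applying the lemma twice and using $(a+b)^2 \leq 4\max(a^2,b^2)$.
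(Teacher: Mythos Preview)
Your proposal is correct and follows essentially the same approach as the paper: apply Lemma~\ref{lem:outer_annoying_bound} to both $\Wb$ and $-\Wb$, exploit the symmetry of the denominator and right-hand side under $\Wb\mapsto -\Wb$, and use $(a+b)^2\leq 4\max(a^2,b^2)$ to replace the sum by the max. The only cosmetic difference is the order in which the paper presents these two steps.
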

\begin{proof}
Using the fact that
\[\norm{\phi(\Wb \Xb^T)}_2 + \norm{\phi(-\Wb \Xb^T)}_2 \leq 2 \max\parens{\norm{\phi(\Wb \Xb^T)}_2, \norm{\phi(-\Wb \Xb^T)}_2} \]
and lemma \ref{lem:outer_annoying_bound} we have that
\[\frac{\norm{\phi(\Wb \Xb^T)}_F^2}{4 \max\parens{\norm{\phi(\Wb \Xb^T)}_2^2, \norm{\phi(-\Wb \Xb^T)}_2^2}} \leq \frac{\norm{\Wb}_2^2}{\sigma_{min}(\Wb)^2} \frac{Tr(\Xb \Xb^T)}{\lambda_1(\Xb \Xb^T)}\]
Note that the right hand side and the denominator of the left hand side do not change when you replace $\Wb$ with $-\Wb$.  Therefore by using the above bound for both $\Wb$ and $-\Wb$ as the weight matrix separately we can conclude
\[\frac{\max\parens{\norm{\phi(\Wb \Xb^T)}_F^2, \norm{\phi(-\Wb \Xb^T)}_F^2}}{4 \max\parens{\norm{\phi(\Wb \Xb^T)}_2^2, \norm{\phi(-\Wb \Xb^T)}_2^2}} \leq \frac{\norm{\Wb}_2^2}{\sigma_{min}(\Wb)^2} \frac{Tr(\Xb \Xb^T)}{\lambda_1(\Xb \Xb^T)}. \]
\end{proof}

\begin{corollary}\label{cor:half_prob_bound}
Assume $\phi(x) = ReLU(x)$ and $m \geq d$.  Suppose $\Wb$ and $-\Wb$ have the same distribution.  Then conditioned on $\Wb$ being full rank we have that with probability at least $1/2$
\[ \frac{Tr(\mK_{outer})}{\lambda_1(\mK_{outer})} \leq 4 \frac{\norm{\Wb}_2^2}{\sigma_{min}(\Wb)^2} \frac{Tr(\Xb \Xb^T)}{\lambda_1(\Xb \Xb^T)}. \]
\end{corollary}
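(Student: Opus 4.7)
First I would unpack the left-hand side in terms of the quantities appearing in Corollary~\ref{cor:max_bound}. Since $\Jb_{outer}^T = \phi(\Wb \Xb^T)$, we have $\mK_{outer} = \phi(\Wb\Xb^T)^T \phi(\Wb\Xb^T)$, so that $Tr(\mK_{outer}) = \norm{\phi(\Wb \Xb^T)}_F^2$ and $\lambda_1(\mK_{outer}) = \norm{\phi(\Wb \Xb^T)}_2^2$. Write $A := \norm{\phi(\Wb \Xb^T)}_F^2$, $a := \norm{\phi(\Wb \Xb^T)}_2^2$, $B := \norm{\phi(-\Wb \Xb^T)}_F^2$, $b := \norm{\phi(-\Wb \Xb^T)}_2^2$, and
\[
C := 4 \frac{\norm{\Wb}_2^2}{\sigma_{min}(\Wb)^2}\frac{Tr(\Xb\Xb^T)}{\lambda_1(\Xb\Xb^T)}.
\]
The target inequality becomes simply $A \leq Ca$, while Corollary~\ref{cor:max_bound} already supplies the deterministic two-sided bound $\max(A,B) \leq C \max(a,b)$.

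The plan is to convert this two-sided bound into the desired one-sided one via a symmetrization argument that exploits $\Wb \disteq -\Wb$. Let $E := \{A > Ca\}$ be the ``bad'' event on which the target bound fails, and let $E' := \{B > Cb\}$. I would establish two facts: (i) $\PP(E \mid \Wb \text{ full rank}) = \PP(E' \mid \Wb \text{ full rank})$, and (ii) $E \cap E' = \emptyset$. Fact (i) follows because $C$ depends on $\Wb$ only through $\norm{\Wb}_2$ and $\sigma_{min}(\Wb)$, both invariant under the sign flip $\Wb \mapsto -\Wb$, and because the full-rank condition is likewise preserved under the flip; hence the distributional identity $\Wb \disteq -\Wb$ pulls $E$ back to $E'$ under the sign flip, so the two conditional probabilities coincide. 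Fact (ii) is immediate from Corollary~\ref{cor:max_bound}: if both $A > Ca$ and $B > Cb$ held, then $\max(A,B) > C\max(a,b)$, a contradiction.

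Combining (i) and (ii) I would conclude
\[
2 \PP(E \mid \Wb \text{ full rank}) = \PP(E \mid \Wb \text{ full rank}) + \PP(E' \mid \Wb \text{ full rank}) \leq 1,
\]
so that $\PP(A \leq Ca \mid \Wb \text{ full rank}) \geq 1/2$, which is exactly the claim. The only subtle point is the bookkeeping in fact (i), namely checking that every quantity appearing in the definition of $E$ besides $\Wb$ itself, and in particular the threshold $C$ and the full-rank conditioning event, is invariant under $\Wb \mapsto -\Wb$, so that the symmetry hypothesis transfers unaltered from $\Wb$ to the bad-event indicator. Apart from that, the argument is a one-line application of Corollary~\ref{cor:max_bound} together with a ``union bound'' over the two sign-flipped copies of the bad event.
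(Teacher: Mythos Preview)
Your proposal is correct and follows essentially the same route as the paper: both arguments combine Corollary~\ref{cor:max_bound} with the sign-flip symmetry $\Wb \disteq -\Wb$ to conclude that the desired one-sided bound holds at least half the time. The only cosmetic difference is that the paper phrases it via the ``good'' event (noting that $A\leq Ca$ holds whenever $a\geq b$, which by symmetry occurs with probability at least $1/2$), whereas you phrase it via the complementary ``bad'' events $E,E'$ and show they are equiprobable and disjoint; these are two sides of the same coin.
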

\begin{proof}
Fix $\Wb$ where $\Wb$ is full rank.  We have by corollary \ref{cor:max_bound} that either
\[\frac{\norm{\phi(\Wb \Xb^T)}_F^2}{\norm{\phi(\Wb \Xb^T)}_2^2} \leq 4 \frac{\norm{\Wb}_2^2}{ \sigma_{min}(\Wb)^2} \frac{Tr(\Xb \Xb^T)}{\lambda_1(\Xb \Xb^T)}. \]
holds or
\[\frac{\norm{\phi(-\Wb \Xb^T)}_F^2}{\norm{\phi(-\Wb \Xb^T)}_2^2} \leq 4 \frac{\norm{\Wb}_2^2}{\sigma_{min}(\Wb)^2} \frac{Tr(\Xb \Xb^T)}{\lambda_1(\Xb \Xb^T)} \]
(the first holds in the case where $\norm{\phi(\Wb \Xb^T)}_2^2 \geq \norm{\phi(-\Wb \Xb^T)}_2^2$ and the second in the case $\norm{\phi(\Wb \Xb^T)}_2^2 < \norm{\phi(-\Wb \Xb^T)}_2^2$).  Since $\Wb$ and $-\Wb$ have the same distribution, it follows that the first inequality must hold at least $1/2$ of the time.  From
\[ \frac{Tr(\mK_{outer})}{\lambda_1(\mK_{outer})} = \frac{\norm{\Jb_{outer}^T}_F^2}{\norm{\Jb_{outer}^T}_2^2} = \frac{\norm{\phi(\Wb \Xb^T)}_F^2}{\norm{\phi(\Wb \Xb^T)}_2^2} \]
we get the desired result.
\end{proof}

\par
We now note that when $\Wb$ is rectangular shaped and the entries of $\Wb$ are i.i.d. Gaussians that $\Wb$ is full rank with high probability and $\sigma_{min}(\Wb)^{-2} \norm{\Wb}_2^2$ is well behaved.  We recall the result from \cite{vershynin2011introduction}
\begin{theorem}\label{thm:gauss_well_cond}
Let $\mathbf{A}$ be a $N \times n$ matrix whose entries are independent standard normal random variables.  Then for every $t \geq 0$, with probability at least $1 - 2 \exp(-t^2 /2)$ one has
\[ \sqrt{N} - \sqrt{n} - t \leq \sigma_{min}(\mathbf{A}) \leq \sigma_{1}(\mathbf{A}) \leq \sqrt{N} + \sqrt{n} + t \]
\end{theorem}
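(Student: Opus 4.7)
The plan is to combine Gaussian concentration of measure with Gordon's comparison inequality for the expected extremal singular values. I would first observe that viewing $\Ab$ as a vector in $\RR^{Nn}$ of i.i.d.\ standard normals, both maps $\Ab \mapsto \sigma_1(\Ab)$ and $\Ab \mapsto \sigma_{\min}(\Ab)$ are $1$-Lipschitz with respect to the Frobenius norm. This follows from Weyl's perturbation inequality $|\sigma_i(\Ab) - \sigma_i(\mathbf{B})| \le \norm{\Ab - \mathbf{B}}_2 \le \norm{\Ab - \mathbf{B}}_F$ applied to the largest and smallest indices.

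Next, I would invoke the Gaussian concentration inequality for Lipschitz functions: if $F\colon \RR^{Nn} \to \RR$ is $1$-Lipschitz and $\mathbf{g} \sim N(0, I_{Nn})$, then
\[
\PP\parens{|F(\mathbf{g}) - \EE F(\mathbf{g})| \ge t} \le 2\exp(-t^2/2).
\]
Applying this with $F = \sigma_1$ and $F = \sigma_{\min}$ yields two-sided tail bounds of the desired form $2\exp(-t^2/2)$ around the respective expectations.

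Then I would use Gordon's min-max comparison theorem (a consequence of Slepian's inequality applied to the Gaussian process $(\vu, \vv) \mapsto \vu^T \Ab \vv$ on $\sphere^{N-1} \times \sphere^{n-1}$) to obtain the centering bounds $\EE[\sigma_1(\Ab)] \le \sqrt{N} + \sqrt{n}$ and $\EE[\sigma_{\min}(\Ab)] \ge \sqrt{N} - \sqrt{n}$. The comparison is against the simpler process $(\vu, \vv) \mapsto \vg^T \vu + \mathbf{h}^T \vv$ with independent standard Gaussian vectors $\vg \in \RR^N$, $\mathbf{h} \in \RR^n$, whose $\min$-$\max$ value can be computed directly using $\EE\norm{\vg}_2 \le \sqrt{N}$ and $\EE\norm{\mathbf{h}}_2 \le \sqrt{n}$.

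Finally, combining the expectation bounds with the concentration inequality and taking a union bound gives that with probability at least $1 - 2\exp(-t^2/2)$,
\[
\sqrt{N} - \sqrt{n} - t \le \sigma_{\min}(\Ab) \le \sigma_1(\Ab) \le \sqrt{N} + \sqrt{n} + t,
\]
as claimed. The main obstacle is the Gordon comparison step, since the concentration and Lipschitz observations are immediate; one must carefully verify the increment inequality needed to apply Slepian/Gordon to obtain the sharp $\sqrt{N} \pm \sqrt{n}$ centering rather than a looser bound. Since this is a standard result (indeed the statement is quoted verbatim from Vershynin), I would simply cite the Gordon-Chevet style argument rather than reproducing it in full.
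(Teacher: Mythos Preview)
The paper does not prove this theorem at all; it merely cites it as a known result from \cite{vershynin2011introduction}. Your sketch (Lipschitz property of singular values, Gaussian concentration, and Gordon's comparison inequality for the centering $\sqrt{N}\pm\sqrt{n}$) is precisely the argument given in that reference, so your proposal is correct and consistent with how the paper handles the statement. One small cleanup: you only need the one-sided concentration bounds $\PP(\sigma_1 \ge \EE\sigma_1 + t) \le \exp(-t^2/2)$ and $\PP(\sigma_{\min} \le \EE\sigma_{\min} - t) \le \exp(-t^2/2)$, after which a union bound yields exactly $1 - 2\exp(-t^2/2)$; invoking the two-sided form and then union bounding would give a worse constant.
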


\par
Corollary \ref{cor:half_prob_bound} gives us a bound that works at least half the time.  However, we would like to derive a bound that holds with high probability.  We will have that when $m \gtrsim n$ we have sufficient concentration of the largest singular value of $\phi(\Wb \Xb^T)$ to prove such a bound.  We recall the result from \cite{vershynin2011introduction} (Remark 5.40)
\begin{theorem}\label{thm:versh_mat_conc}
Assume that $\Ab$ is an $N \times n$ matrix whose rows $\Ab_i$ are independent sub-gaussian random vectors in $\RR^n$ with second moment matrix $\Sigma$.  Then for every $t \geq 0$, the following inequality holds with probability at least $1 - 2\exp(-ct^2)$
\[ \norm{\frac{1}{N} \Ab^* \Ab - \Sigma}_2 \leq \max(\delta, \delta^2) \quad \text{where} \quad \delta = C \sqrt{\frac{n}{N}} + \frac{t}{\sqrt{N}} \]
where $C = C_K, c = c_K > 0$ depend only on $K := \max_{i} \norm{\Ab_i}_{\psi_2}$.
\end{theorem}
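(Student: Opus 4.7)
The statement is a non-asymptotic concentration bound for the sample covariance of sub-gaussian rows, recalled from Vershynin's survey; any self-contained proof will follow the standard $\epsilon$-net plus Bernstein scheme, which I would adopt here. The plan is to reduce the operator-norm bound to a supremum of a quadratic form on the unit sphere, discretize the sphere, apply a scalar sub-exponential tail bound at each point, and union-bound.

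First, I would use the identity
\[
\Big\| \tfrac{1}{N} \Ab^{\ast} \Ab - \Sigma \Big\|_{2}
= \sup_{x \in S^{n-1}} \Big| \big\langle \big(\tfrac{1}{N} \Ab^{\ast}\Ab - \Sigma\big) x, x \big\rangle \Big|
= \sup_{x \in S^{n-1}} \Big| \tfrac{1}{N}\sum_{i=1}^N \langle \Ab_i, x\rangle^{2} - \langle \Sigma x, x\rangle \Big|,
\]
which converts a matrix concentration question into a uniform control over quadratic forms. Next, I would invoke a standard volumetric estimate to produce a $1/4$-net $\mathcal{N} \subset S^{n-1}$ with $|\mathcal{N}| \leq 9^{n}$, together with the approximation lemma $\sup_{x \in S^{n-1}} |\langle M x, x\rangle| \leq 2 \sup_{x \in \mathcal{N}} |\langle M x, x\rangle|$ valid for any symmetric $M$, reducing the problem to a finite set of points.

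Second, fix $x \in \mathcal{N}$ and note that by the definition of sub-gaussian norm the scalars $Z_i := \langle \Ab_i, x\rangle$ are independent with $\|Z_i\|_{\psi_2} \leq K$, so the squared variables $Z_i^{2} - \mathbb{E}[Z_i^{2}]$ are independent centered sub-exponential with $\|Z_i^{2} - \mathbb{E}[Z_i^{2}]\|_{\psi_1} \leq 2K^{2}$. Bernstein's inequality for sub-exponential sums then yields, for any $\epsilon > 0$,
\[
\PP\!\left( \Big| \tfrac{1}{N} \sum_{i=1}^N Z_i^{2} - \langle \Sigma x, x\rangle \Big| \geq \epsilon \right)
\leq 2 \exp\!\left( -c_K N \min\!\big( \epsilon^{2}, \epsilon \big) \right).
\]
Setting $\epsilon = \max(\delta, \delta^{2})$ with $\delta = C \sqrt{n/N} + t/\sqrt{N}$ is precisely what makes the two Bernstein regimes collapse into the single bound $c_K N \delta^{2}$.

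Third, I would union-bound over $\mathcal{N}$: the failure probability is at most $2 \cdot 9^{n} \exp(-c_K N \delta^{2})$, and plugging in $\delta = C\sqrt{n/N} + t/\sqrt{N}$ gives an exponent of the form $n\log 9 - c_K(C^{2} n + t^{2})$ (after discarding the cross term as lower order); choosing $C = C_K$ large enough absorbs $n\log 9$ and leaves a net probability of at most $2 \exp(-c t^{2})$, as required. The main obstacle I anticipate is the careful bookkeeping of the two Bernstein regimes: one has to verify that replacing $\epsilon$ by $\max(\delta,\delta^{2})$ really produces a single exponent $-c_K N \delta^{2}$ regardless of whether $\delta \leq 1$ or $\delta > 1$, and that the constants $C_K, c_K$ can be chosen depending only on $K$ and not on $n$, $N$, or the geometry of $\Sigma$. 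Alternatively, one could simply cite \cite{vershynin2011introduction} and treat the statement as a black-box input to the effective-rank argument in Section~\ref{sec:eff_rank_outer}.
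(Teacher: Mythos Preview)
The paper does not prove this statement at all: it is simply recalled verbatim from \cite{vershynin2011introduction} (Remark~5.40) and used as a black-box input to Lemma~\ref{lem:feature_conc}. Your proposal correctly sketches the standard $\epsilon$-net plus Bernstein argument that Vershynin himself uses, so there is no discrepancy in approach---you are just supplying the proof that the paper elected to omit, and your closing remark that one could simply cite the reference is exactly what the paper does.
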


We will use theorem \ref{thm:versh_mat_conc} in the following lemma.
\begin{lemma}\label{lem:feature_conc}
Assume $\phi(x) = ReLU(x)$.  Let $\Ab = \phi(\Wb \Xb^T)$ and $M = \max_{i \in [n]} \norm{\xb_i}_2$.  Suppose that $\wb_1, \ldots, \wb_m \sim N(0, \nu^2 I_d)$ i.i.d.  Set $K = M \nu \sqrt{n}$ and define
\[\Sigma := \EE_{\mathbf{w} \sim N(0, \nu^2 I)}[\phi(\Xb \mathbf{w}) \phi(\mathbf{w}^T \Xb^T)]\]
Then for every $t \geq 0$ the following inequality holds with probability at least $1 - 2\exp(-c_K t^2)$
\[ 
\norm{\frac{1}{m} \Ab^T \Ab - \Sigma}_2 \leq \max(\delta, \delta^2) \quad \text{where} \quad \delta = C_K \sqrt{\frac{n}{m}} + \frac{t}{\sqrt{m}}, 
\]
where $c_K, C_K > 0$ are absolute constants that depend only on $K$.
\end{lemma}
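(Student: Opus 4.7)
} The goal is to apply Theorem~\ref{thm:versh_mat_conc} directly to $\Ab = \phi(\Wb \Xb^T) \in \RR^{m \times n}$, whose $i$-th row is the random vector $\phi(\Xb \wb_i)^T \in \RR^n$. Since the $\wb_i$ are i.i.d.\ $N(0, \nu^2 I_d)$, the rows of $\Ab$ are i.i.d.\ random vectors in $\RR^n$, and by construction their common second moment matrix is exactly $\Sigma = \EE_{\wb \sim N(0, \nu^2 I)}[\phi(\Xb\wb)\phi(\wb^T \Xb^T)]$. Thus the entire lemma reduces to verifying the sub-gaussian hypothesis of Theorem~\ref{thm:versh_mat_conc} with $\psi_2$-norm bounded by $K = M\nu\sqrt{n}$ (up to an absolute multiplicative constant that gets absorbed into $C_K, c_K$).

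First I would bound the sub-gaussian norm of a single row. Fix a unit vector $v \in \RR^n$ and consider the scalar map $f(\wb) := \langle v, \phi(\Xb \wb)\rangle$. Because $\phi = \mathrm{ReLU}$ is $1$-Lipschitz, for a.e.\ $\wb$ we have
\[
\norm{\nabla f(\wb)}_2 = \norm{\Xb^T \mathrm{diag}(\phi'(\Xb \wb)) v}_2 \leq \norm{\Xb}_2 \norm{v}_2 \leq \norm{\Xb}_F \leq M\sqrt{n},
\]
so $f$ is $M\sqrt{n}$-Lipschitz on $\RR^d$. By the standard Gaussian concentration inequality for Lipschitz functions applied to $\wb \sim N(0, \nu^2 I_d)$, the centered random variable $f(\wb) - \EE[f(\wb)]$ is sub-gaussian with $\norm{f(\wb) - \EE f(\wb)}_{\psi_2} \lesssim \nu M \sqrt{n}$. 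The mean itself is also $O(\nu M \sqrt{n})$ since
\[
|\EE f(\wb)| \leq \sum_{j=1}^n |v_j|\, \EE|\phi(\xb_j^T \wb)| \leq \tfrac{\nu}{\sqrt{2\pi}} \sum_{j=1}^n |v_j|\, \norm{\xb_j}_2 \leq \nu M \sqrt{n}\, \norm{v}_1^{1/2}\!/\sqrt{n}\cdot\sqrt{n},
\]
and a Cauchy--Schwarz bound gives $\sum_j |v_j|\norm{\xb_j} \leq M\sqrt{n}$. Taking the supremum over unit $v$, the row $\phi(\Xb \wb_i)$ is a sub-gaussian random vector in $\RR^n$ with $\psi_2$-norm at most a constant multiple of $K = M\nu\sqrt{n}$.

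With the sub-gaussian bound in hand, Theorem~\ref{thm:versh_mat_conc} applies with $N=m$ and the ambient dimension $n$: for every $t \geq 0$, with probability at least $1 - 2\exp(-c_K t^2)$,
\[
\left\Vert \tfrac{1}{m} \Ab^T \Ab - \Sigma \right\Vert_2 \leq \max(\delta, \delta^2), \qquad \delta = C_K \sqrt{\tfrac{n}{m}} + \tfrac{t}{\sqrt{m}},
\]
where $C_K, c_K > 0$ depend only on $K$. This is precisely the claim.

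The only subtle step is the sub-gaussian norm bound; everything else is bookkeeping. The key ideas there are that (i) ReLU is $1$-Lipschitz and sends $0$ to $0$, so linear functionals of $\phi(\Xb\wb)$ are Lipschitz in $\wb$ with constant controlled by $\norm{\Xb}_2 \leq M\sqrt{n}$, and (ii) Gaussian concentration of Lipschitz functions then yields a sub-gaussian tail of the right scale. I do not expect any genuine obstacle: the bound $\norm{\Xb}_2 \leq M\sqrt{n}$ is a little lossy (it throws away possible structure of $\Xb$), but it is precisely the quantity baked into the definition of $K$ in the statement, so it matches the conclusion exactly.
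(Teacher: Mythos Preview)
Your proposal is correct and follows the same high-level strategy as the paper: verify that the rows $\phi(\Xb\wb_\ell)$ are i.i.d.\ sub-gaussian vectors with $\psi_2$-norm $O(M\nu\sqrt{n})$ and then invoke Theorem~\ref{thm:versh_mat_conc} directly. The only difference is in how the sub-gaussian norm is bounded. You argue that $\wb \mapsto \langle v, \phi(\Xb\wb)\rangle$ is $\norm{\Xb}_2$-Lipschitz (since ReLU is $1$-Lipschitz) and appeal to Gaussian Lipschitz concentration, then separately control the mean. The paper instead uses the triangle inequality for $\psi_2$-norms coordinate-wise: $\norm{\langle v, \phi(\Xb\wb_\ell)\rangle}_{\psi_2} \leq \sum_i |v_i|\,\norm{\phi(\langle \xb_i,\wb_\ell\rangle)}_{\psi_2} \leq \sum_i |v_i|\,\norm{\langle \xb_i,\wb_\ell\rangle}_{\psi_2} \leq C\nu M \norm{v}_1 \leq C\nu M\sqrt{n}$, using $|\phi(z)|\leq |z|$ and that each $\langle \xb_i,\wb_\ell\rangle$ is Gaussian. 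Both arguments land on the same bound $K = M\nu\sqrt{n}$; the paper's is slightly more elementary (no concentration inequality needed, and no separate treatment of the mean), while yours would generalize more readily to non-Gaussian $\wb_\ell$ satisfying a Lipschitz concentration property. Your displayed inequality for $|\EE f(\wb)|$ has a garbled intermediate step, but the conclusion $\sum_j |v_j|\,\norm{\xb_j}_2 \leq M\norm{v}_1 \leq M\sqrt{n}$ is the right one.
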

\begin{proof}
We will let $\Ab_{\ell \colon}$ denote the $\ell$th row of $\Ab$ (considered as a column vector).  Note that
\[ 
\Ab_{\ell \colon} = \phi(\Xb \wb_\ell). 
\]
We immediately get that the rows of $\Ab$ are i.i.d.  We will now bound $\norm{\Ab_{\ell \colon}}_{\psi_2}$.  Let $\vb \in \RR^n$ such that $\norm{\vb}_2 = 1$. 
Then 
\begin{gather*}
\norm{\langle \phi(\Xb \wb_\ell), \vb \rangle}_{\psi_2} = \norm{\sum_{i = 1}^n \phi(\langle \xb_i, \wb_\ell \rangle) b_i}_{\psi_2} \\
\leq \sum_{i = 1}^n |b_i| \norm{\phi(\langle \xb_i, \wb_\ell \rangle)}_{\psi_2} \leq \sum_{i = 1}^n |b_i| \norm{\langle \xb_i, \wb_\ell \rangle}_{\psi_2} \\
\leq \sum_{i = 1}^n |b_i| C \norm{\xb_i}_2 \nu \leq C M \nu \norm{\vb}_1 \leq CM\nu \sqrt{n}
\end{gather*}
where $C > 0$ is an absolute constant.  Set $K :=  M \nu \sqrt{n}$.  Well then by theorem \ref{thm:versh_mat_conc} we have the following.  For every $t \geq 0$ the following inequality holds with probability at least $1 - 2\exp(-c_K t^2)$
\[ \norm{\frac{1}{m} \Ab^T \Ab - \Sigma}_2 \leq \max(\delta, \delta^2) \quad \text{where} \quad \delta = C_K \sqrt{\frac{n}{m}} + \frac{t}{\sqrt{m}} \]
\end{proof}

\par
We are now ready to prove a high probability bound for the effective rank of $\mK_{outer}$.
\begin{theorem}
Assume $\phi(x) = ReLU(x)$ and $m \geq d$.  Let $M = \max_{i \in [n]} \norm{\xb_i}_2$.  Suppose that $\wb_1, \ldots, \wb_m \sim N(0, \nu^2 I_d)$ i.i.d.  Set $K = M \nu \sqrt{n}$
\[\Sigma := \EE_{\mathbf{w} \sim N(0, \nu^2 I)}[\phi(\Xb \mathbf{w}) \phi(\mathbf{w}^T \Xb^T)]\]
\[ \delta = C_K \brackets{\sqrt{\frac{n}{m}} + \sqrt{\frac{\log(2/\epsilon)}{m}}} \]
where $\epsilon > 0$ is small.  Now assume
\[ \sqrt{m} > \sqrt{d} + \sqrt{2 \log(2/\epsilon)}\]
and
\[ \max(\delta, \delta^2) \leq \frac{1}{2} \lambda_1(\Sigma) \]
Then with probability at least $1 - 3 \epsilon$
\[\frac{Tr(\mK_{outer})}{\lambda_1(\mK_{outer})}  \leq 12 \parens{\frac{\sqrt{m} + \sqrt{d} + t_1}{\sqrt{m} - \sqrt{d} - t_1}}^2 \frac{Tr(\Xb^T \Xb)}{\lambda_1(\Xb^T \Xb)} \]
\end{theorem}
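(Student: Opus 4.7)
The plan is to take the deterministic bound from Lemma \ref{lem:outer_annoying_bound} and show that, with high probability, each of the three factors on the right-hand side behaves well: the singular-value ratio $\norm{\Wb}_2^2/\sigma_{\min}(\Wb)^2$ is controlled by Theorem \ref{thm:gauss_well_cond}, while the ratio $\bigl(\norm{\phi(\Wb\Xb^T)}_2 + \norm{\phi(-\Wb\Xb^T)}_2\bigr)^2 / \norm{\phi(\Wb\Xb^T)}_2^2$ that arises after dividing the bound in Lemma \ref{lem:outer_annoying_bound} by $\norm{\phi(\Wb\Xb^T)}_2^2$ is controlled by applying the concentration in Lemma \ref{lem:feature_conc} to both $\Wb$ and $-\Wb$.

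First I would apply Theorem \ref{thm:gauss_well_cond} to $\Wb/\nu$ (which has i.i.d.\ standard Gaussian entries) with tail parameter $t_1 := \sqrt{2\log(2/\epsilon)}$. Under the assumed lower bound on $\sqrt{m}$, this yields $\Wb$ full rank and
\begin{equation*}
\frac{\norm{\Wb}_2^2}{\sigma_{\min}(\Wb)^2} \leq \parens{\frac{\sqrt{m}+\sqrt{d}+t_1}{\sqrt{m}-\sqrt{d}-t_1}}^2
\end{equation*}
with probability at least $1-\epsilon$ (the $\nu^2$ factors cancel). Second, I would invoke Lemma \ref{lem:feature_conc} twice, once with $\Wb$ and once with $-\Wb$ (both are i.i.d.\ $N(0,\nu^2 I_d)$ row-wise), choosing the tail parameter so that the resulting deviation matches the $\delta$ in the theorem statement. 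By Weyl's inequality combined with the hypothesis $\max(\delta,\delta^2) \leq \lambda_1(\Sigma)/2$, each of $\norm{\phi(\Wb\Xb^T)}_2^2$ and $\norm{\phi(-\Wb\Xb^T)}_2^2$ lies in the interval $[m\lambda_1(\Sigma)/2,\, 3m\lambda_1(\Sigma)/2]$ on an event of probability at least $1-2\epsilon$.

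On the intersection of these events, Lemma \ref{lem:outer_annoying_bound}, combined with division by $\norm{\phi(\Wb\Xb^T)}_2^2$, gives
\begin{equation*}
\frac{Tr(\mK_{outer})}{\lambda_1(\mK_{outer})} \leq \frac{\norm{\Wb}_2^2}{\sigma_{\min}(\Wb)^2} \cdot \frac{\bigl(\norm{\phi(\Wb\Xb^T)}_2 + \norm{\phi(-\Wb\Xb^T)}_2\bigr)^2}{\norm{\phi(\Wb\Xb^T)}_2^2} \cdot \frac{Tr(\Xb\Xb^T)}{\lambda_1(\Xb\Xb^T)}.
\end{equation*}
Using $(a+b)^2 \leq 2(a^2+b^2)$ together with the sandwich from the previous step, the middle factor is at most $2\bigl(3m\lambda_1(\Sigma)/2 + 3m\lambda_1(\Sigma)/2\bigr)/(m\lambda_1(\Sigma)/2) = 12$, and a union bound across the three failure events yields overall failure probability at most $3\epsilon$. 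The only nuance is that applying Lemma \ref{lem:feature_conc} to $-\Wb$ relies on the marginal identity $\Wb \disteq -\Wb$ rather than independence of the two feature matrices, but this is not a genuine obstacle since we only need a marginal tail estimate for each and then combine them by a union bound.
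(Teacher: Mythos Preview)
Your proposal is correct and matches the paper's argument essentially step for step: both apply Theorem~\ref{thm:gauss_well_cond} with $t_1=\sqrt{2\log(2/\epsilon)}$ to control the condition number of $\Wb$, then invoke Lemma~\ref{lem:feature_conc} for both $\Wb$ and $-\Wb$ and combine via a union bound. The only cosmetic difference is in how the constant $12$ is extracted: the paper routes through Corollary~\ref{cor:max_bound} after deducing $\norm{\phi(-\Wb\Xb^T)}_2^2\le 3\norm{\phi(\Wb\Xb^T)}_2^2$, whereas you sandwich both operator norms in $[m\lambda_1(\Sigma)/2,\,3m\lambda_1(\Sigma)/2]$ and plug directly into Lemma~\ref{lem:outer_annoying_bound}; both computations land on the same factor of $12$.
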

\begin{proof}
By theorem \ref{thm:gauss_well_cond} with $t_1 = \sqrt{2 \log(2 / \epsilon)}$ we have that with probability at least $1 - \epsilon$ that
\begin{equation}\label{eq:w_eig_bounds}
\sqrt{m} - \sqrt{d} - t_1 \leq \sigma_{min}(\Wb / \nu) \leq \sigma_{1}(\Wb / \nu) \leq \sqrt{m} + \sqrt{d} + t_1    
\end{equation}
The above inequalities and the hypothesis on $m$ imply that $\Wb$ is full rank.
\par
Let $\Ab = \phi(\Wb \Xb^T)$ and $\Tilde{\Ab} = \phi(-\Wb \Xb^T)$.
Set $t_2 = \sqrt{\frac{\log(2/\epsilon)}{c_K}}$ where $c_K$ is defined as in theorem \ref{lem:feature_conc}.  Note that $\Ab$ and $\Tilde{\Ab}$ are identical in distribution.  Thus by theorem \ref{lem:feature_conc} and the union bound we get that with probability at least $1 - 2 \epsilon$
\begin{equation}\label{eq:a_mat_bounds}
\norm{\frac{1}{m} \Ab^T \Ab - \Sigma}_2, \norm{\frac{1}{m} \Tilde{\Ab}^T \Tilde{\Ab} - \Sigma}_2  \leq \max(\delta, \delta^2) =: \rho    
\end{equation}
where
\[ \delta = C_K \sqrt{\frac{n}{m}} + \frac{t_2}{\sqrt{m}}. \]
\par
By our previous results and the union bound we can ensure with probability at least $1 - 3 \epsilon$ that the bounds \eqref{eq:w_eig_bounds} and \eqref{eq:a_mat_bounds} all hold simultaneously.  In this case we have
\begin{gather*}
\norm{\frac{1}{m} \Tilde{\Ab}^T \Tilde{\Ab}}_2 \leq \norm{\frac{1}{m} \Ab^T \Ab}_2 + 2\rho \\
= \norm{\frac{1}{m} \Ab^T \Ab}_2 \brackets{1 + \frac{2\rho}{\norm{\frac{1}{m} \Ab^T \Ab}_2}} \leq  \norm{\frac{1}{m} \Ab^T \Ab}_2 \brackets{1 + \frac{2 \rho}{\lambda_1(\Sigma) - \rho}}
\end{gather*}
Assuming $\rho \leq \lambda_1(\Sigma) /2$ we have by the above bound
\[ \norm{\frac{1}{m} \Tilde{\Ab}^T \Tilde{\Ab}}_2 \leq 3 \norm{\frac{1}{m} \Ab^T \Ab}_2. \]
Now note that
\[ \norm{\Ab^T \Ab}_2 = \norm{\phi(\Wb \Xb^T)}_2^2 \quad \norm{\Tilde{\Ab}^T \Tilde{\Ab}}_2 = \norm{\phi(-\Wb \Xb^T)}_2^2  \]
so that our previous bound implies
\[ \norm{\phi(-\Wb \Xb^T)}_2^2 \leq 3 \norm{\phi(\Wb \Xb^T)}_2^2 \]
then we have by corollary \ref{cor:max_bound} that
\begin{gather*}
\frac{Tr(\mK_{outer})}{\lambda_1(\mK_{outer})} = \frac{\norm{\phi(\Wb \Xb^T)}_F^2}{\norm{\phi(\Wb \Xb^T)}_2^2} 
\leq 12 \frac{\norm{\Wb}_2^2}{\sigma_{min}(\Wb)^2} \frac{Tr(\Xb \Xb^T)}{\lambda_1(\Xb \Xb^T)} \\
\leq 12 \parens{\frac{\sqrt{m} + \sqrt{d} + t_1}{\sqrt{m} - \sqrt{d} - t_1}}^2  \frac{Tr(\Xb \Xb^T)}{\lambda_1(\Xb \Xb^T)}.
\end{gather*}
\end{proof}
From the above theorem we get the following corollary.
\begin{corollary}\label{cor:outer_effective_rank_bound}
Assume $\phi(x) = ReLU(x)$ and $n \geq d$.  Suppose that $\wb_1, \ldots, \wb_m \sim N(0, \nu^2 I_d)$ i.i.d.  Fix $\epsilon > 0$ small.  Set $M = \max_{i \in [n]} \norm{\xb_i}_2$.  Then
\[ m = \Omega\parens{\max(\lambda_1(\Sigma)^{-2}, 1) \max(n, \log(1/\epsilon))}\]
and
\[ \nu = O(1/M\sqrt{m}) \]
suffices to ensure that with probability at least $1 - \epsilon$
\[\frac{Tr(\mK_{outer})}{\lambda_1(\mK_{outer})} \leq C \frac{Tr(\Xb \Xb^T)}{\lambda_1(\Xb \Xb^T)}\]
where $C > 0$ is an absolute constant.
\end{corollary}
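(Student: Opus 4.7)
The corollary is a direct consequence of the theorem immediately preceding it, so the plan is to verify that the choice of $m$ and $\nu$ stated here satisfies the three technical hypotheses of that theorem, and then to simplify the resulting bound into the claimed absolute-constant form. First I would set $\nu = c_0 / (M\sqrt{m})$ for a sufficiently small absolute constant $c_0 > 0$, so that the parameter $K = M\nu\sqrt{n} = c_0 \sqrt{n/m}$ appearing in Lemma~\ref{lem:feature_conc} satisfies $K \leq c_0$ (using $m = \Omega(n)$). Since the constants $c_K, C_K$ in Lemma~\ref{lem:feature_conc}, inherited by the preceding theorem, depend only on $K$, pinning down this upper bound on $K$ promotes them to absolute constants, which is precisely what permits the final bound to have an absolute constant $C$.

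Next I would verify the two remaining conditions of the preceding theorem. The requirement $\sqrt{m} > \sqrt{d} + \sqrt{2\log(2/\epsilon)}$ is immediate from $m = \Omega(\max(n, \log(1/\epsilon)))$ together with $n \geq d$, after absorbing constants into the hidden $\Omega$. For the harder condition $\max(\delta, \delta^2) \leq \lambda_1(\Sigma)/2$, where $\delta = C_K[\sqrt{n/m} + \sqrt{\log(2/\epsilon)/m}]$, I would bound $\delta \leq 2C_K\sqrt{\max(n,\log(2/\epsilon))/m}$ and observe that the hypothesis $m = \Omega(\max(\lambda_1(\Sigma)^{-2}, 1)\max(n, \log(1/\epsilon)))$ with a large enough hidden constant forces $\delta \leq \min(1, \lambda_1(\Sigma)/2)$. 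This yields $\max(\delta, \delta^2) = \delta \leq \lambda_1(\Sigma)/2$, as required.

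Finally, I would take the hidden constant in $m = \Omega(\cdot)$ large enough to also ensure $\sqrt{m} \geq 2(\sqrt{d} + t_1)$ with $t_1 = \sqrt{2\log(2/\epsilon)}$, so that the ratio $(\sqrt{m}+\sqrt{d}+t_1)^2/(\sqrt{m}-\sqrt{d}-t_1)^2 \leq (3/2)^2/(1/2)^2 = 9$. The preceding theorem then delivers $\mathrm{eff}(\mK_{outer}) \leq 108\cdot\mathrm{eff}(\mX\mX^T)$ with probability at least $1 - 3\epsilon$, and replacing $\epsilon$ by $\epsilon/3$ throughout the argument (which only inflates the $\log(1/\epsilon)$ term by $\log 3$, absorbed into $\Omega$) gives probability $1-\epsilon$ with $C = 108$.

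The one step requiring care, rather than being a genuine obstacle, is confirming that the constraint $K \leq c_0$ truly renders $c_K$ and $C_K$ absolute constants. This is verified by inspecting the proof of Lemma~\ref{lem:feature_conc}, where the sub-Gaussian norm bound $\|\langle \phi(\Xb\vw_\ell), \vb\rangle\|_{\psi_2} = O(K)$ and the monotonicity in $K$ of the constants imported from Theorem~\ref{thm:versh_mat_conc} yield the required uniformity. Once that is accepted, the remainder of the argument is algebraic bookkeeping on the two thresholds defining $m$.
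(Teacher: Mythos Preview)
Your proposal is correct and follows the same approach as the paper, which simply states the corollary as an immediate consequence of the preceding theorem without further detail. You have filled in the algebraic verification that the paper leaves to the reader: pinning $K = M\nu\sqrt{n}$ to an absolute constant via the choice of $\nu$, checking the two threshold conditions on $m$, and bounding the ratio involving $\sqrt{m}, \sqrt{d}, t_1$.
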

\subsubsection{Bound for the combined NTK}
Based on the results in the previous two sections, we can now bound the effective rank of the combined NTK gram matrix $\mK = \mK_{inner} + \mK_{outer}$.
\effrankbd*
\begin{proof}
This follows from the union bound and Corollaries \ref{cor:inner_eff_rank_bd} and \ref{cor:outer_effective_rank_bound}.
\end{proof}

\subsubsection{Magnitude of the spectrum}
By our results in sections \ref{sec:eff_rank_inner} and \ref{sec:eff_rank_outer} we have that $m \gtrsim n$ suffices to ensure that
\[ \frac{Tr(\mK)}{\lambda_1(\mK)} \lesssim \frac{Tr(\Xb \Xb^T)}{\lambda_1(\Xb \Xb^T)} \leq d\]
Well note that
\begin{gather*}
i \frac{\lambda_i(\mK)}{\lambda_1(\mK)} \leq \frac{Tr(\mK)}{\lambda_1(\mK)} \lesssim d
\end{gather*}
If $i \gg d$ then $\lambda_i(\mK) / \lambda_1(\mK)$ is small.  Thus the NTK only has $O(d)$ large eigenvalues.  The smallest eigenvalue $\lambda_n(\mK)$ of the NTK has been of interest in proving convergence guarantees \citep{du2019gradient,du2018gradient,solt_mod_over}.  
By our previous inequality 
\[ 
\frac{\lambda_n(\mK)}{\lambda_1(\mK)} \lesssim \frac{d}{n} 
\]
Thus in the setting where $m \gtrsim n \gg d$ we have that the smallest eigenvalue will be driven to zero relative to the largest eigenvalue.  
Alternatively we can view the above inequality as a lower bound on the condition number
\[ 
\frac{\lambda_1(\mK)}{\lambda_n(\mK)} \gtrsim \frac{n}{d} 
\]
\par
We will first bound the analytical NTK in the setting when the outer layer weights have fixed constant magnitude.  This is the setting considered by \cite{pmlr-v54-xie17a}, \cite{fine_grain_arora}, \citet{du2018gradient},   \cite{DBLP:journals/corr/abs-1906-05392}, \cite{pmlr-v108-li20j}, and \cite{solt_mod_over}.
\begin{theorem}
Let $\phi(x) = ReLU(x)$ and assume $\Xb \neq 0$.  Let $\mK_{inner}^\infty \in \RR^{n \times n}$ be the analytical NTK, i.e.
\[ (\mK_{inner}^\infty)_{i, j} := \langle \xb_i, \xb_j \rangle \EE_{\wb \sim N(0, I_d)} \brackets{\phi'(\langle \xb_i, \wb \rangle) \phi'(\langle \xb_j, \wb \rangle)}. \]
Then
\[ \frac{1}{4} \frac{Tr(\Xb \Xb^T)}{\lambda_1(\Xb \Xb^T)} \leq \frac{Tr(\mK_{inner}^\infty)}{\lambda_1 \parens{\mK_{inner}^\infty}} \leq 4\frac{Tr(\Xb \Xb^T)}{\lambda_1(\Xb \Xb^T)}.  
\]
\end{theorem}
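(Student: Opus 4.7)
The plan is to recognize $\mK_{inner}^\infty = (\Xb\Xb^T) \odot \dot{\Sigma}$, where $\dot{\Sigma}_{ij} := \EE_{\wb \sim N(0, I_d)}[\phi'(\langle \xb_i, \wb \rangle) \phi'(\langle \xb_j, \wb \rangle)]$, and to sandwich $\dot{\Sigma}$ between two positive semidefinite (PSD) matrices whose Hadamard products with $\Xb\Xb^T$ are easy to analyze. Note $\dot{\Sigma}$ is PSD as a Gram matrix of the functions $\wb \mapsto \phi'(\langle \xb_i, \wb \rangle)$ in $L^2(\RR^d, \gamma^d)$, and since $\phi'(x) = \mathbb{I}[x \geq 0]$, we have $\dot{\Sigma}_{ii} = \PP[\langle \xb_i, \wb \rangle \geq 0] = 1/2$ whenever $\xb_i \neq 0$. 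Zero rows of $\Xb$ contribute zero to both $\mK_{inner}^\infty$ and $\Xb\Xb^T$ and may be discarded, so we assume all $\xb_i \neq 0$. Consequently $Tr(\mK_{inner}^\infty) = \sum_i \norm{\xb_i}^2 \dot{\Sigma}_{ii} = \tfrac{1}{2} Tr(\Xb\Xb^T)$.

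First I would handle the \emph{lower} bound on the effective rank. Applying the Schur product inequality $\lambda_1(A \odot B) \leq (\max_i A_{ii}) \lambda_1(B)$ for PSD $A, B$ (the same tool used in Theorem~\ref{thm:inner_meta}) gives $\lambda_1(\mK_{inner}^\infty) \leq \tfrac{1}{2} \lambda_1(\Xb\Xb^T)$. Combined with the trace identity this yields $Tr(\mK_{inner}^\infty)/\lambda_1(\mK_{inner}^\infty) \geq Tr(\Xb\Xb^T)/\lambda_1(\Xb\Xb^T) \geq \tfrac{1}{4}\, Tr(\Xb\Xb^T)/\lambda_1(\Xb\Xb^T)$.

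For the \emph{upper} bound on the effective rank the crucial step is the PSD inequality $\dot{\Sigma} \succeq \tfrac{1}{4} \mathbf{1}_n \mathbf{1}_n^T$. Since $\phi'$ is $0$-homogeneous, $\dot{\Sigma}$ depends only on the row-normalized matrix $\hat{\Xb}$ with rows $\xb_i / \norm{\xb_i}$, so applying Lemma~\ref{lemma:matrix_hermite_expansion} to $\phi'$ and $\hat{\Xb}$ gives
\[ \dot{\Sigma} \;=\; \sum_{p = 0}^\infty \mu_p(\phi')^2 \,(\hat{\Xb}\hat{\Xb}^T)^{\odot p}. \]
Every Hadamard power of the PSD matrix $\hat{\Xb}\hat{\Xb}^T$ is PSD, and the $p = 0$ term equals $\mu_0(\phi')^2 \,\mathbf{1}_n \mathbf{1}_n^T = \tfrac{1}{4} \mathbf{1}_n \mathbf{1}_n^T$ because $\mu_0(\phi') = \EE_{Z \sim N(0,1)}[\mathbb{I}[Z \geq 0]] = 1/2$. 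Discarding the remaining PSD terms gives $\dot{\Sigma} \succeq \tfrac{1}{4} \mathbf{1}_n \mathbf{1}_n^T$, and one more Schur product argument (the Hadamard product of two PSD matrices is PSD) yields
\[ \mK_{inner}^\infty = (\Xb\Xb^T) \odot \dot{\Sigma} \;\succeq\; (\Xb\Xb^T) \odot \tfrac{1}{4} \mathbf{1}_n \mathbf{1}_n^T \;=\; \tfrac{1}{4}\, \Xb\Xb^T, \]
so $\lambda_1(\mK_{inner}^\infty) \geq \tfrac{1}{4}\, \lambda_1(\Xb\Xb^T)$. Combining with the trace identity gives $Tr(\mK_{inner}^\infty)/\lambda_1(\mK_{inner}^\infty) \leq 2\, Tr(\Xb\Xb^T)/\lambda_1(\Xb\Xb^T) \leq 4\, Tr(\Xb\Xb^T)/\lambda_1(\Xb\Xb^T)$, completing the argument.

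I do not anticipate a real obstacle; the conceptual content is simply that the Hermite expansion peels off a constant $\tfrac{1}{4} \mathbf{1}_n\mathbf{1}_n^T$ floor from $\dot{\Sigma}$, which when Hadamard-multiplied with $\Xb\Xb^T$ leaves $\tfrac{1}{4}\Xb\Xb^T$. The only minor bookkeeping is removing zero rows of $\Xb$ and noting that $\phi' \in L^2(\RR, \gamma)$ so Lemma~\ref{lemma:matrix_hermite_expansion} applies to it, even though $\phi'$ is not continuous. The constants $1/4$ and $4$ stated in the theorem are slack; the argument actually yields the sharper window $[1, 2]$.
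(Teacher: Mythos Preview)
Your argument is correct and actually yields the sharper window $[1,2]$ rather than $[1/4,4]$. However, it takes a genuinely different route from the paper. The paper does not work directly with $\mK_{inner}^\infty$; instead it passes through the finite-width result Theorem~\ref{thm:const_outer_inner_bound} (the constant-outer-layer case), invokes convergence $\mK_{inner}\to\mK_{inner}^\infty$ in probability as $m\to\infty$, and uses a probabilistic overlap argument (with probability at least $3/4-1/2=1/4$ both the finite-width bound and closeness to the limit hold) before sending $\delta,\eta\to 0$. Your approach bypasses all of this by exploiting the Hermite expansion of $\phi'$ on the row-normalized data (valid since $\phi'$ is $0$-homogeneous) to peel off the $\tfrac14\mathbf{1}_n\mathbf{1}_n^T$ floor from $\dot\Sigma$, combined with the Schur product inequality for the other direction. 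This is more elementary, avoids the finite-width machinery and limiting argument entirely, and produces better constants. The paper's route has the complementary virtue of showing how the infinite-width statement falls out as a corollary of the finite-width bounds they already established, tying the two regimes together.
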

\begin{proof}
We consider the setting where $|a_\ell| = 1/\sqrt{m}$ for all $\ell \in [m]$ and $\wb_\ell \sim N(0, I_d)$ i.i.d.. 
As was shown by \cite{jacot_ntk}, \cite{du2018gradient} in this setting we have that if we fix the training data $\Xb$ and send $m \rightarrow \infty$ we have that
\[
\norm{\mK_{inner} - \mK_{inner}^\infty}_2 \rightarrow 0 
\]
in probability. 
Therefore by continuity of the effective rank we have that 
\[ 
\frac{Tr(\mK_{inner})}{\lambda_1(\mK_{inner})} \rightarrow \frac{Tr(\mK_{inner}^\infty)}{\lambda_1(\mK_{inner}^\infty)} 
\]
in probability.  
Let $\eta > 0$.  Then there exists an $M \in \mathbb{N}$ such that $m \geq M$ implies that
\begin{equation}\label{eq:h_inner_eta}
\abs{\frac{Tr(\mK_{inner})}{\lambda_1(\mK_{inner})} - \frac{Tr(\mK_{inner}^\infty)}{\lambda_1(\mK_{inner}^\infty)}} \leq \eta    
\end{equation}
with probability greater than $1/2$.  
Now fix $\delta \in (0,1)$.  On the other hand by Theorem~\ref{thm:const_outer_inner_bound} with $\epsilon = 1/4$ we have that if $m \geq \frac{4}{\delta^2}\log(4n)$ then with probability at least $3/4$ that
\begin{equation}\label{eq:h_inner_cor_conclusion}
\frac{(1 - \delta)^2}{4} \frac{Tr(\Xb \Xb^T)}{\lambda_1(\Xb \Xb^T)} \leq \frac{Tr(\mK_{inner})}{\lambda_1 \parens{\mK_{inner}}} \leq \frac{4}{(1 - \delta)^2} \frac{Tr(\Xb \Xb^T)}{\lambda_1(\Xb \Xb^T)}.    
\end{equation}
Thus if we set $m = \max(\frac{4}{\delta^2}\log(4n), M)$ we have with probability at least $3/4 - 1/2 = 1/4$ that \eqref{eq:h_inner_eta} and \eqref{eq:h_inner_cor_conclusion} hold simultaneously.  In this case we have that
\[ \frac{(1 - \delta)^2}{4} \frac{Tr(\Xb \Xb^T)}{\lambda_1(\Xb \Xb^T)} - \eta \leq \frac{Tr(\mK_{inner}^\infty)}{\lambda_1 \parens{\mK_{inner}^\infty}} \leq \frac{4}{(1 - \delta)^2} \frac{Tr(\Xb \Xb^T)}{\lambda_1(\Xb \Xb^T)} + \eta  \]
Note that the above argument runs through for any $\eta > 0$ and $\delta \in (0, 1)$.  Thus we may send $\eta \rightarrow 0^+$ and $\delta \rightarrow 0^+$ in the above inequality to get
\[ \frac{1}{4} \frac{Tr(\Xb \Xb^T)}{\lambda_1(\Xb \Xb^T)} \leq \frac{Tr(\mK_{inner}^\infty)}{\lambda_1 \parens{\mK_{inner}^\infty}} \leq 4\frac{Tr(\Xb \Xb^T)}{\lambda_1(\Xb \Xb^T)} \]
\end{proof}
We thus have the following corollary about the conditioning of the analytical NTK.
\begin{corollary}
Let $\phi(x) = ReLU(x)$ and assume $\Xb \neq 0$.  Let $\mK_{inner}^\infty \in \RR^{n \times n}$ be the analytical NTK, i.e.
\[ (\mK_{inner}^\infty)_{i, j} := \langle \xb_i, \xb_j \rangle \EE_{\wb \sim N(0, I_d)} \brackets{\phi'(\langle \xb_i, \wb \rangle) \phi'(\langle \xb_j, \wb \rangle)}. \]
Then
\[ \frac{\lambda_n(\mK_{inner}^\infty)}{\lambda_1(\mK_{inner}^\infty)} \leq 4 \frac{d}{n}. \]
\end{corollary}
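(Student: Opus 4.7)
The corollary is an essentially immediate consequence of the preceding theorem combined with two elementary facts about spectra, so the plan is very short. First I would invoke the standard inequality that for any positive semidefinite matrix $\Ab \in \RR^{n \times n}$, the smallest eigenvalue is bounded by the average eigenvalue:
\[ n \lambda_n(\Ab) \leq \sum_{i = 1}^n \lambda_i(\Ab) = Tr(\Ab), \]
so that
\[ \frac{\lambda_n(\Ab)}{\lambda_1(\Ab)} \leq \frac{1}{n} \cdot \frac{Tr(\Ab)}{\lambda_1(\Ab)} = \frac{\mathrm{eff}(\Ab)}{n}. \]
Applied to $\Ab = \mK_{inner}^\infty$ (which is positive semidefinite as a limit of Gram matrices), this reduces the problem to bounding $\mathrm{eff}(\mK_{inner}^\infty)$.

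The preceding theorem gives exactly such a bound, namely
\[ \frac{Tr(\mK_{inner}^\infty)}{\lambda_1(\mK_{inner}^\infty)} \leq 4 \cdot \frac{Tr(\Xb \Xb^T)}{\lambda_1(\Xb \Xb^T)}. \]
To finish, I would observe that $\Xb \Xb^T \in \RR^{n \times n}$ has rank at most $d$, so it has at most $d$ nonzero eigenvalues, all of which are at most $\lambda_1(\Xb \Xb^T)$. Hence
\[ \frac{Tr(\Xb \Xb^T)}{\lambda_1(\Xb \Xb^T)} = \sum_{i = 1}^n \frac{\lambda_i(\Xb \Xb^T)}{\lambda_1(\Xb \Xb^T)} \leq d. \]
Chaining the three inequalities yields
\[ \frac{\lambda_n(\mK_{inner}^\infty)}{\lambda_1(\mK_{inner}^\infty)} \leq \frac{1}{n} \cdot 4 \cdot \frac{Tr(\Xb \Xb^T)}{\lambda_1(\Xb \Xb^T)} \leq \frac{4d}{n}, \]
which is precisely the claim.

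There is no real obstacle here: all three ingredients (the Markov-type eigenvalue bound, the previously established effective rank inequality, and the rank bound on $\Xb \Xb^T$) are either elementary or already proved in the paper. The only minor thing to verify is that $\lambda_1(\mK_{inner}^\infty) > 0$ so that the ratios make sense, but this is guaranteed by $\Xb \neq 0$ combined with the lower bound $\lambda_1(\mK_{inner}^\infty) \geq \tfrac{1}{4} \lambda_1(\Xb \Xb^T) \cdot \mathrm{eff}(\Xb \Xb^T)^{-1} \cdot Tr(\Xb\Xb^T) > 0$ implicit in the preceding theorem's derivation, so the division is legitimate.
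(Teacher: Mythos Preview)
Your proposal is correct and follows essentially the same approach as the paper: the paper's surrounding discussion uses exactly the chain $n\lambda_n(\mK) \leq Tr(\mK)$, the preceding theorem's bound $\mathrm{eff}(\mK_{inner}^\infty) \leq 4\,\mathrm{eff}(\Xb\Xb^T)$, and the rank bound $\mathrm{eff}(\Xb\Xb^T) \leq d$. One minor remark: your justification that $\lambda_1(\mK_{inner}^\infty) > 0$ is slightly garbled (the inequality you wrote is not quite what follows from the preceding theorem), but the conclusion is easily secured since $Tr(\mK_{inner}^\infty) = \tfrac{1}{2}\sum_i \|\xb_i\|^2 > 0$ when $\Xb \neq 0$.
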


\subsection{Experimental validation of results on the NTK spectrum}\label{appendix:subsection:emp_validation_ef}
\begin{figure}[ht]
    \centering
    \includegraphics[width=\textwidth]{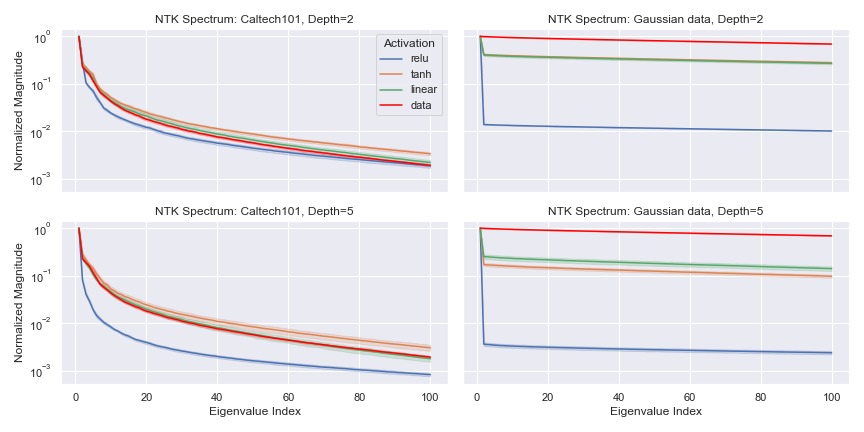}
    \caption{\textbf{(NTK Spectrum for CNNs)} We plot the normalized eigenvalues $\lambda_p / \lambda_1$ of the NTK Gram matrix $\mK$ and the data Gram matrix $\mX \mX^T$ for Caltech101 and isotropic Gaussian datasets. To compute the NTK, we randomly initialize convolutional neural networks of depth $2$ and $5$ with $100$ channels per layer. We use the standard parameterization and Pytorch's default Kaiming uniform initialization in order to better connect our results with what is used in practice. We consider a batch size of $n = 200$ and plot the first $100$ eigenvalues. The thick part of each curve corresponds to the mean across 10 trials while the transparent part corresponds to the 95\% confidence interval.}
    \label{fig:spectrum_cnn}
\end{figure}
\begin{figure}[ht]
    \centering    \includegraphics[width=\textwidth]{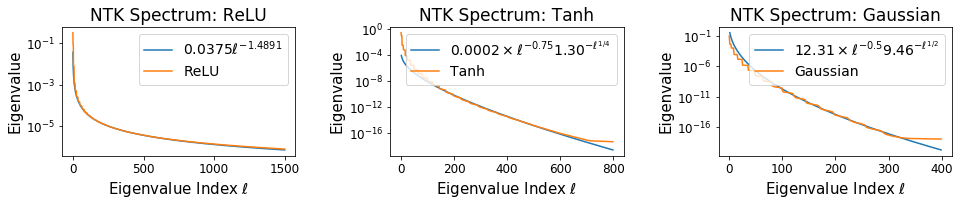}
    \caption{\textbf{ (Asymptotic NTK Spectrum) }NTK spectrum of two-layer fully connected networks with ReLU, Tanh and Gaussian activations under the NTK parameterization. The orange curve is the experimental eigenvalue. The blue curves in the left shows the regression fit for the experimental eigenvalues as a function of eigenvalue index $\ell$ in the form of $\lambda_\ell=a\ell^{-b}$ where $a$ and $b$ are unknown parameters determined by regression. The blue curves in the middle shows the regression fit for the experimental eigenvalues in the form of $\lambda_\ell=a\ell^{-0.75}b^{-l^{1/4}}$. The blue curves in the right shows the regression fit for the experimental eigenvalues in the form of $\lambda_\ell=a\ell^{-0.5}b^{-l^{1/2}}$.}
    \label{fig:asym_spectrum}
\end{figure}
We experimentally test the theory developed in Section \ref{subsec:partial_trace} and its implications by analyzing the spectrum of the NTK for both fully connected neural network architectures (FCNNs), the results of which are displayed in Figure~\ref{fig:spectrum_cifar}, and also convolutional neural network architectures (CNNs), shown in Figure~\ref{fig:spectrum_cnn}. For the feedforward architectures we consider networks of depth 2 and 5 with the width of all layers being set at 500. With regard to the activation function we test linear, ReLU and Tanh, and in terms of initialization we use Kaiming uniform \citep{7410480}, which is very common in practice and is the default in PyTorch \citep{NEURIPS2019_9015}. For the convolutional architectures we again consider depths 2 and 5, with each layer consisting of 100 channels with the filter size set to 5x5. In terms of data, we consider 40x40 patches from both real world images, generated by applying Pytorch's \texttt{RandomResizedCrop} transform to a random batch of Caltech101 images \citep{li_andreeto_ranzato_perona_2022}, as well as synthetic images corresponding to isotropic Gaussian vectors. The batch sized is fixed at 200 and we plot only the first 100 normalized eigenvalues. Each experiment was repeated 10 times. Finally, to compute the NTK we use the \texttt{functorch}\footnote{https://pytorch.org/functorch/stable/notebooks/neural\_tangent\_kernels.html} module in PyTorch using an algorithmic approach inspired by \cite{pmlr-v162-novak22a}. 

The results for convolutional neural networks show the same trends as observed in feedforward neural networks, which we discussed in Section \ref{subsec:partial_trace}. In particular, we again observe the dominant outlier eigenvalue, which increases with both depth and the size of the Gaussian mean of the activation. We also again see that the NTK spectrum inherits its structure from the data, i.e., is skewed for skewed data or relatively flat for isotropic Gaussian data. Finally, we also see that the spectrum for Tanh is closer to the spectrum for the linear activation when compared with the ReLU spectrum. In terms of differences between the CNN and FCNN experiments, we observe that the spread of the 95\% confidence interval is slightly larger for convolutional nets, implying a slightly larger variance between trials. We remark that this is likely attributable to the fact that there are only 100 channels in each layer and by increasing this quantity we would expect the variance to reduce. In summary, despite the fact that our analysis is concerned with FCNNs, it appears that the broad implications and trends also hold for CNNs. We leave a thorough study of the NTK spectrum for CNNs and other network architectures to future work.

To test our theory in Section \ref{subsec:lower}, we numerically plot the spectrum of NTK of two-layer feedforward networks with ReLU, Tanh, and Gaussian activations in Figure~\ref{fig:asym_spectrum}. The input data are uniformly drawn from $\mathbb{S}^2$. Notice that when $d=2$, $k=\Theta(\ell^{1/2})$. Then Corollary~\ref{cor:ReLUbias0} shows that for the ReLU activation $\lambda_\ell=\Theta(\ell^{-3/2})$, for the Tanh activation $\lambda_\ell=O\left(\ell^{-3/4}\exp(-\frac{\pi}{2}\ell^{1/4})\right)$, and for the Gaussian activation $\lambda_\ell=O(\ell^{-1/2}2^{-\ell^{1/2}})$. These theoretical decay rates for the NTK spectrum are verified by the experimental results in Figure~\ref{fig:asym_spectrum}.

\subsection{Analysis of the lower spectrum: uniform data}\label{appendix:lower_uniform}

\unifEigDecay* 

\begin{proof}
Let $\theta(t)=\sum_{p = 0}^{\infty} c_p t^p$, then $K(x_1,x_2)=\theta(\langle x_1, x_2 \rangle)$
According to Funk Hecke theorem \cite[Section 4.2]{uniform_sphere_data}, we have
\begin{align}
    \overline{\lambda_k} = \mathrm{Vol}(\mathbb{S}^{d-1})\int_{-1}^1 \theta(t)P_{k,d}(t)(1-t^2)^{\frac{d-2}{2}}\mathrm{d}t,
    \label{eq:eigenvalue_k}
\end{align}
where $\mathrm{Vol}(\mathbb{S}^{d-1})=\frac{2 \pi^{d/2}}{\Gamma(d/2)}$ is the volume of the hypersphere $\mathbb{S}^{d-1}$, and $P_{k,d}(t)$ is the Gegenbauer polynomial, given by
\begin{equation*}
    P_{k,d}(t)=\frac{(-1)^k}{2^k}\frac{\Gamma(d/2)}{\Gamma(k+d/2)}\frac{1}{(1-t^2)^{(d-2)/2}}\frac{d^k}{dt^k}(1-t^2)^{k+(d-2)/2},
\end{equation*} 
and $\Gamma$ is the gamma function. 

From \eqref{eq:eigenvalue_k} we have
\begin{align}
\overline{\lambda_k} &= \mathrm{Vol}(\mathbb{S}^{d-1})\int_{-1}^1 \theta(t)P_{k,d}(t)(1-t^2)^{\frac{d-2}{2}}\mathrm{d}t \nonumber\\
&= \frac{2\pi^{d/2}}{\Gamma(d/2)}\int_{-1}^1 \theta(t)\frac{(-1)^k}{2^k}\frac{\Gamma(d/2)}{\Gamma(k+d/2)}\frac{d^k}{dt^k}(1-t^2)^{k+(d-2)/2}\mathrm{d}t \nonumber\\
&= \frac{2\pi^{d/2}}{\Gamma(d/2)}\frac{(-1)^k}{2^k}\frac{\Gamma(d/2)}{\Gamma(k+d/2)}\sum_{p=0}^\infty c_p\int_{-1}^1 t^p\frac{d^k}{dt^k}(1-t^2)^{k+(d-2)/2}\mathrm{d}t.
\label{eq:lambda_k_sum}
\end{align}
Using integration by parts, we have
\begin{align}
    &\int_{-1}^1 t^p\frac{d^k}{dt^k}(1-t^2)^{k+(d-2)/2}\mathrm{d}t \nonumber\\
    &=\left. t^p\frac{d^{k-1}}{dt^{k-1}}(1-t^2)^{k+(d-2)/2}\right|_{-1}^1-p\int_{-1}^1 t^{p-1}\frac{d^{k-1}}{dt^{k-1}}(1-t^2)^{k+(d-2)/2}\mathrm{d}t \nonumber\\
    &=-p\int_{-1}^1 t^{p-1}\frac{d^{k-1}}{dt^{k-1}}(1-t^2)^{k+(d-2)/2}\mathrm{d}t,  
    \label{eq:parts_step2}
\end{align}
where the last line in \eqref{eq:parts_step2} holds because $\frac{d^{k-1}}{dt^{k-1}}(1-t^2)^{k+(d-2)/2}=0$ when $t=1$ or $t=-1$.

When $p<k$, repeat the above procedure \eqref{eq:parts_step2} $p$ times, we get
\begin{align}
    \int_{-1}^1 t^p\frac{d^k}{dt^k}(1-t^2)^{k+(d-2)/2}\mathrm{d}t&=(-1)^p p!\int_{-1}^1 \frac{d^{k-p}}{dt^{k-p}}(1-t^2)^{k+(d-2)/2}\mathrm{d}t \nonumber\\
    &=(-1)^p p!\left. \frac{d^{k-p-1}}{dt^{k-p-1}}(1-t^2)^{k+(d-2)/2}\right|_{-1}^1 \nonumber\\
    &=0.
    \label{eq:p<k}
\end{align}
When $p\geq k$, repeat the above procedure \eqref{eq:parts_step2} $k$ times, we get
\begin{align}
    \int_{-1}^1 t^p\frac{d^k}{dt^k}(1-t^2)^{k+(d-2)/2}\mathrm{d}t&=(-1)^k p(p-1)\cdots(p-k+1)\int_{-1}^1 t^{p-k} (1-t^2)^{k+(d-2)/2}\mathrm{d}t.
    \label{eq:r>=k}
\end{align}
When $p-k$ is odd, $t^{p-k} (1-t^2)^{k+(d-2)/2}$ is an odd function, then 
\begin{equation}
    \int_{-1}^1 t^{p-k} (1-t^2)^{k+(d-2)/2}\mathrm{d}t=0.
    \label{eq:odd_case}
\end{equation}
When $p-k$ is even, 
\begin{align}
    \int_{-1}^1 t^{p-k} (1-t^2)^{k+(d-2)/2}\mathrm{d}t&=2\int_{0}^1 t^{p-k} (1-t^2)^{k+(d-2)/2}\mathrm{d}t \nonumber\\
    &=\int_{0}^1 (t^2)^{(p-k-1)/2} (1-t^2)^{k+(d-2)/2}\mathrm{d}t^2 \nonumber\\
    &=B\left(\frac{p-k+1}{2}, k+d/2\right) \nonumber\\
    &=\frac{\Gamma(\frac{p-k+1}{2})\Gamma(k+d/2)}{\Gamma(\frac{p-k+1}{2}+k+d/2)}\label{eq:even_case},
\end{align}
where $B$ is the beta function.

Plugging \eqref{eq:even_case} , \eqref{eq:p<k} and \eqref{eq:odd_case} into \eqref{eq:r>=k}, we get
\begin{align}
    &\int_{-1}^1 t^p\frac{d^k}{dt^k}(1-t^2)^{k+(d-2)/2}\mathrm{d}t \nonumber\\
    &=
    \begin{cases}
    (-1)^k p(p-1)\ldots(p-k+1)\frac{\Gamma(\frac{p-k+1}{2})\Gamma(k+d/2)}{\Gamma(\frac{p-k+1}{2}+k+d/2)}, & p-k \text{ is even and } p\geq k, \\
    0, & \text{otherwise}.
    \end{cases}
    \label{eq:integral_result}
\end{align}
Plugging \eqref{eq:integral_result} into \eqref{eq:lambda_k_sum}, we get
\begin{align*}
\overline{\lambda_k} 
&= \frac{2\pi^{d/2}}{\Gamma(d/2)}\frac{(-1)^k}{2^k}\frac{\Gamma(d/2)}{\Gamma(k+d/2)}\sum_{\substack{p\geq k\\p-k\text{ is even}}} c_p (-1)^k p(p-1)\ldots(p-k+1)\frac{\Gamma(\frac{p-k+1}{2})\Gamma(k+d/2)}{\Gamma(\frac{p-k+1}{2}+k+d/2)}\\
&= \frac{\pi^{d/2}}{2^{k-1}}\sum_{\substack{p\geq k\\p-k\text{ is even}}} c_p \frac{ p(p-1)\ldots(p-k+1)\Gamma(\frac{p-k+1}{2})}{\Gamma(\frac{p-k+1}{2}+k+d/2)}\\
&= \frac{\pi^{d/2}}{2^{k-1}}\sum_{\substack{p\geq k\\p-k\text{ is even}}} c_p \frac{ \Gamma(p+1)\Gamma(\frac{p-k+1}{2})}{\Gamma(p-k+1)\Gamma(\frac{p-k+1}{2}+k+d/2)}.
\end{align*} 
\end{proof} 

\ReLUbiaszero* \label{cor:app:eigendecay} 

\begin{proof}[Proof of Corollary \ref{cor:app:eigendecay}, part 1]
We first prove $\overline{\lambda_k}=O(k^{-d-2a+2})$. Suppose that $c_p\leq C p^{-a}$ for some constant $C$, then according to Theorem~\ref{thm:eigen_hypersphere} we have
\begin{align*}
\overline{\lambda_k} 
&\leq \frac{\pi^{d/2}}{2^{k-1}}\sum_{\substack{p\geq k\\p-k\text{ is even}}} C p^{-a} \frac{ \Gamma(p+1)\Gamma(\frac{p-k+1}{2})}{\Gamma(p-k+1)\Gamma(\frac{p-k+1}{2}+k+d/2)}.
\end{align*}
According to Stirling's formula, we have
\begin{equation}
\label{eq:stirling}
    \Gamma(z)=\sqrt{\frac{2\pi}{z}}\left(\frac{z}{e}\right)^z\left(1+O\left(\frac{1}{z}\right)\right).
\end{equation}
Then for any $z\geq \frac{1}{2}$, we can find constants $C_1$ and $C_2$ such that
\begin{equation}
\label{eq:stirling_1}
    C_1\sqrt{\frac{2\pi}{z}}\left(\frac{z}{e}\right)^z\leq\Gamma(z)\leq C_2\sqrt{\frac{2\pi}{z}}\left(\frac{z}{e}\right)^z.
\end{equation}
Then 
\begin{align}
\overline{\lambda_k} 
&\leq \frac{\pi^{d/2}}{2^{k-1}}\frac{C_2^2}{C_1^2}\sum_{\substack{p\geq k\\p-k\text{ is even}}} C p^{-a} \frac{ \sqrt{\frac{2\pi}{p+1}}\left(\frac{p+1}{e}\right)^{p+1}\sqrt{\frac{2\pi}{\frac{p-k+1}{2}}}\left(\frac{\frac{p-k+1}{2}}{e}\right)^\frac{p-k+1}{2}}{\sqrt{\frac{2\pi}{{p-k+1}}}\left(\frac{p-k+1}{e}\right)^{p-k+1}\sqrt{\frac{2\pi}{\frac{p-k+1}{2}+k+d/2}}\left(\frac{\frac{p-k+1}{2}+k+d/2}{e}\right)^{\frac{p-k+1}{2}+k+d/2}} \nonumber\\
&= \frac{\pi^{d/2}}{2^{k-1}}\frac{C_2^2C}{C_1^2}\sum_{\substack{p\geq k\\p-k\text{ is even}}}p^{-a} \frac{e^{\frac{d}{2}} \sqrt{\frac{2}{p+1}}\left({p+1}\right)^{p+1}\left({\frac{p-k+1}{2}}\right)^\frac{p-k+1}{2}}{\left({p-k+1}\right)^{p-k+1}\sqrt{\frac{1}{\frac{p-k+1}{2}+k+d/2}}\left({\frac{p-k+1}{2}+k+d/2}\right)^{\frac{p-k+1}{2}+k+d/2}} \nonumber\\
&= \frac{\pi^{d/2}}{2^{k-1}}\frac{C_2^2C}{C_1^2}\sum_{\substack{p\geq k\\p-k\text{ is even}}}p^{-a} \frac{e^{\frac{d}{2}}2^{\frac{-p+k}{2}} \left({p+1}\right)^{p+\frac{1}{2}}}{\left({p-k+1}\right)^{\frac{p-k+1}{2}}\left({\frac{p-k+1}{2}+k+d/2}\right)^{\frac{p-k}{2}+k+d/2}} \nonumber\\
&= 2\pi^{d/2}\frac{2^{\frac{d}{2}}e^{\frac{d}{2}}C_2^2C}{C_1^2}\sum_{\substack{p\geq k\\p-k\text{ is even}}} \frac{ p^{-a}\left({p+1}\right)^{p+\frac{1}{2}}}{\left({p-k+1}\right)^{\frac{p-k+1}{2}}\left({p+k+1+d}\right)^{\frac{p+k+d}{2}}}.
\label{eq:lambda_leq}
\end{align}
We define 
\begin{equation}
f_a(p)=\frac{ p^{-a}\left({p+1}\right)^{p+\frac{1}{2}}}{\left({p-k+1}\right)^{\frac{p-k+1}{2}}\left({p+k+1+d}\right)^{\frac{p+k+d}{2}}}. 
\label{eq:def_f}
\end{equation}
By applying the chain rule to $e^{\log f_a(p)}$, we have that the derivative of $f_a$ is
\begin{multline}
f_a'(p)= \frac{(p + 1)^{p + \frac{1}{2}} p^{-a} }{2(p-k+1)^{\frac{p -k+ 1}{2}} (p+k+d+1)^{\frac{p+k+d}{2} }}  \\ \cdot \left(-\frac{2a}{p}   - \frac{k+d}{(p + 1)(p+k+d+1)}  +  \log(1+\frac{k^2-d(p-k+1)}{(p-k+1)(p+k+d+1)})\right).    
\end{multline}
Let $g_a(p)=-\frac{2a}{p}   - \frac{k+d}{(p + 1)(p+k+d+1)}  +  \log(1+\frac{k^2-d(p-k+1)}{(p-k+1)(p+k+d+1)})$. Then $g_a(p)$ and $f_a'(p)$ have the same sign.
Next we will show that $g_a(p)\geq 0$ for $k\leq p\leq  \frac{k^2}{d+24a}$ when $k$ is large enough.

First when $p\geq k$ and 
$\frac{k^2-d(p-k+1)}{(p-k+1)(p+k+d+1)}\geq 1$, we have
\begin{align}
    g_a(p)\geq -\frac{2a}{k}   - \frac{k+d}{(k + 1)(k+k+d+1)}+\log(2) \geq 0,
\end{align}
when $k$ is sufficiently large.

Second when $p\geq k$ and 
$0\leq \frac{k^2-d(p-k+1)}{(p-k+1)(p+k+d+1)}\leq 1$, since $\log(1+x)\geq \frac{x}{2}$ for $0\leq x\leq 1$, we have
\begin{align*}
    g_a(p)&\geq -\frac{2a}{p}   - \frac{k+d}{(p + 1)(p+k+d+1)}+\frac{k^2-d(p-k+1)}{2(p-k+1)(p+k+d+1)}\\
    &\geq -\frac{2a}{p}   - \frac{k+d}{(p + 1)(p+k+d+1)}+\frac{k^2-dp}{2p(p+k+d+1)}.
\end{align*} 
When $p\leq \frac{k^2}{d+24a}$, we have $k^2-dp\geq 24ap$. Then 
\begin{equation*}
   \frac{k^2-dp}{4p(p+k+d+1)}\geq \frac{24ap}{4p(p+k+d+1)}\geq \frac{6ap}{(p+1)(p+k+d+1)} \geq \frac{k+d}{(p + 1)(p+k+d+1)}
\end{equation*} 
when $k$ is sufficiently large. 
Also we have 
\begin{equation*}
   \frac{k^2-dp}{4r(p+k+d+1)}\geq \frac{24ap}{4r(p+k+d+1)}\geq \frac{6a}{p+k+d+1} \geq \frac{2a}{p}
\end{equation*} 
when $k$ is sufficiently large. 

Combining all the arguments above, we conclude that $g_a(p)\geq 0$ and $f_a'(p)\geq 0$ when $k\leq p \leq \frac{k^2}{d+24a}$. 
Then when $k\leq p \leq \frac{k^2}{d+24a}$, we have 
\begin{equation}
    f_a(p)\leq f_a\left(\frac{k^2}{d+24a}\right).
    \label{eq:fa_small_p}
\end{equation} 
When $p \geq \frac{k^2}{d+24a}$, we have 
\begin{align*}
    f_a(p)&=\frac{ p^{-a}\left({p+1}\right)^{p+\frac{1}{2}}}{\left({p-k+1}\right)^{\frac{p-k+1}{2}}\left({p+k+1+d}\right)^{\frac{p+k+d}{2}}}\\
    &=\frac{ p^{-a}\left({p+1}\right)^{p+\frac{1}{2}}}{\left((p+1)^2-k^2+d(p-k+1)\right)^{\frac{p-k+1}{2}}\left({p+k+1+d}\right)^{\frac{2k+d-1}{2}}}\\
    &=\frac{ p^{-a}\left({p+1}\right)^{-\frac{d}{2}}}{\left(1-\frac{k^2-d(p-k+1)}{(p+1)^2}\right)^{\frac{p-k+1}{2}}\left({1+\frac{k+d}{p+1}}\right)^{\frac{2k+d-1}{2}}}\\
    &\leq\frac{ p^{-a-\frac{d}{2}}}{\left(1-\frac{k^2-d(p-k+1)}{(p+1)^2}\right)^{\frac{p-k+1}{2}}}.
\end{align*} 
If $k^2-d(p-k+1)<0$, $\left(1-\frac{k^2-d(p-k+1)}{(p+1)^2}\right)^{\frac{p-k+1}{2}}\geq 1$. If $k^2-d(p-k+1)\geq 0$, i.e., $p\leq \frac{k^2+dk-d}{d}$, for sufficiently large $k$, we have
\begin{align*}
    \left(1-\frac{k^2-d(p-k+1)}{(p+1)^2}\right)^{\frac{p-k+1}{2}}&\geq\left(1-\frac{k^2-d(\frac{k^2}{d+24a}-k+1)}{(\frac{k^2}{d+24a}+1)^2}\right)^{\frac{\frac{k^2+dk-d}{d}-k+1}{2}}\\
    &\geq \left(1-\frac{48a(d+24a)}{k^2}\right)^{\frac{k^2}{2d}}\\
    &\geq e^{-\frac{k^2}{2d}\frac{48a(d+24a)}{k^2}}= e^{-\frac{48a(d+24a)}{2d}},
\end{align*}
which is a constant independent of $k$. 
Then for $p \geq \frac{k^2}{d+24a}$, we have
\begin{equation}
f_a(p)\leq e^{\frac{48a(d+24a)}{2d}}p^{-a-\frac{d}{2}}.
\label{eq:fa_large_p}
\end{equation}

Finally we have
\begin{align*}
\overline{\lambda_k}&= 2\pi^{d/2}\frac{2^{\frac{d}{2}}e^{\frac{d}{2}}C_2^2C}{C_1^2}\sum_{\substack{p\geq k\\p-k\text{ is even}}} f_a(p)\\
&\leq O\left(\sum_{\substack{k\leq p \leq \frac{k^2}{d+24a}\\p-k\text{ is even}}} f_a(p)+\sum_{\substack{p\geq \frac{k^2}{d+24a}\\p-k\text{ is even}}} f_a(p)\right)\\
&\leq O\left(\left(\frac{k^2}{d+24a}-k+1\right)f_a\left(\frac{k^2}{d+24a}\right)+\sum_{\substack{p\geq \frac{k^2}{d+24a}\\p-k\text{ is even}}} e^{\frac{48a(d+24a)}{2d}}p^{-a-\frac{d}{2}}\right)\\
&\leq O\left(\left(\frac{k^2}{d+24a}-k+1\right)e^{\frac{48a(d+24a)}{2d}}\left(\frac{k^2}{d+24a}\right)^{-a-\frac{d}{2}}+ e^{\frac{48a(d+24a)}{2d}}\frac{1}{{a+\frac{d}{2}-1}}\left(\frac{k^2}{d+24a}-1\right)^{1-a-\frac{d}{2}}\right)\\
&=O(k^{-d-2a+2}).
\end{align*} 

Next we prove $\overline{\lambda_k}=\Omega(k^{-d-2a+2})$. Since $c_p$ are nonnegative and $c_p=\Theta(p^{-a})$, we have that $c_p\geq C' p^{-a}$ for some constant $C'$. Then we have
\begin{align}
\overline{\lambda_k} 
&\geq \frac{\pi^{d/2}}{2^{k-1}}\sum_{\substack{p\geq k\\p-k\text{ is even}}} C' p^{-a} \frac{ \Gamma(p+1)\Gamma(\frac{p-k+1}{2})}{\Gamma(p-k+1)\Gamma(\frac{p-k+1}{2}+k+d/2)}.
\end{align} 
According to Stirling's formula \eqref{eq:stirling} and \eqref{eq:stirling_1}, using the similar argument as \eqref{eq:lambda_leq} we have
\begin{align}
\overline{\lambda_k} 
&\geq \frac{\pi^{d/2}}{2^{k-1}}\frac{C_1^2}{C_2^2}\sum_{\substack{p\geq k\\p-k\text{ is even}}} C' p^{-a} \frac{ \sqrt{\frac{2\pi}{p+1}}\left(\frac{p+1}{e}\right)^{p+1}\sqrt{\frac{2\pi}{\frac{p-k+1}{2}}}\left(\frac{\frac{p-k+1}{2}}{e}\right)^\frac{p-k+1}{2}}{\sqrt{\frac{2\pi}{{p-k+1}}}\left(\frac{p-k+1}{e}\right)^{p-k+1}\sqrt{\frac{2\pi}{\frac{p-k+1}{2}+k+d/2}}\left(\frac{\frac{p-k+1}{2}+k+d/2}{e}\right)^{\frac{p-k+1}{2}+k+d/2}}\\
\label{eq:lambda_geq}
&= 2\pi^{d/2}\frac{2^{\frac{d}{2}}e^{\frac{d}{2}}C_1^2C'}{C_2^2}\sum_{\substack{p\geq k\\p-k\text{ is even}}} \frac{ p^{-a}\left({p+1}\right)^{p+\frac{1}{2}}}{\left({p-k+1}\right)^{\frac{p-k+1}{2}}\left({p+k+1+d}\right)^{\frac{p+k+d}{2}}}\\
&\geq 2\pi^{d/2}\frac{2^{\frac{d}{2}}e^{\frac{d}{2}}C_1^2C'}{C_2^2}\sum_{\substack{p\geq k^2\\p-k\text{ is even}}} f_a(p),
\end{align}
where $f_a(p)$ is defined in \eqref{eq:def_f}. 
When $p \geq k^2$, we have
\begin{align*}
    f_a(p)&=\frac{ p^{-a}\left({p+1}\right)^{p+\frac{1}{2}}}{\left({p-k+1}\right)^{\frac{p-k+1}{2}}\left({p+k+1+d}\right)^{\frac{p+k+d}{2}}}\\
    &=\frac{ p^{-a}\left({p+1}\right)^{p+\frac{1}{2}}}{\left((p+1)^2-k^2+d(p-k+1)\right)^{\frac{p-k+1}{2}}\left({p+k+1+d}\right)^{\frac{2k+d-1}{2}}}\\
    &\geq\frac{ \left({p+1}\right)^{-a-\frac{d}{2}}}{\left(1-\frac{k^2-d(p-k+1)}{(p+1)^2}\right)^{\frac{p-k+1}{2}}\left({1+\frac{k+d}{p+1}}\right)^{\frac{2k+d-1}{2}}}
\end{align*}
For sufficiently large $k$, $k^2-d(p-k+1)<0$. Then we have
\begin{align*}
    \left(1-\frac{k^2-d(p-k+1)}{(p+1)^2}\right)^{\frac{p-k+1}{2}}&= \left(1-\frac{k^2-d(p-k+1)}{(p+1)^2}\right)^{\frac{-(p+1)^2}{k^2-d(p-k+1)}\cdot\frac{-k^2+d(p-k+1)}{(p+1)^2}\cdot\frac{p-k+1}{2}}\\
    &\leq e^{\frac{-k^2+d(p-k+1)}{(p+1)^2}\cdot\frac{p-k+1}{2}}\\
    &\leq e^{\frac{dp^2}{2p^2}}=e^{\frac{d}{2}}
\end{align*} 
which is a constant independent of $k$. 
Also, for sufficiently large $k$, we have 
\begin{align*}
    \left({1+\frac{k+d}{p+1}}\right)^{\frac{2k+d-1}{2}}&= \left({1+\frac{k+d}{p+1}}\right)^{\frac{p+1}{k+d}\frac{k+d}{p+1}\frac{2k+d-1}{2}}\\
    &\leq e^{\frac{k+d}{p+1}\frac{2k+d-1}{2}}\\
    &\leq e^{\frac{3k^2}{2r}}=e^{\frac{3}{2}}
\end{align*}

Then for $p \geq k^2$, we have $f_a(p)\geq e^{-\frac{d}{2}-\frac{3}{2}}(p+1)^{-a-\frac{d}{2}}$.

Finally we have 
\begin{align}
\overline{\lambda_k} 
&\geq 2\pi^{d/2}\frac{2^{\frac{d}{2}}e^{\frac{d}{2}}C_1^2C'}{C_2^2}\sum_{\substack{p\geq k^2\\p-k\text{ is even}}} f_a(p)\\
&\geq 2\pi^{d/2}\frac{2^{\frac{d}{2}}e^{\frac{d}{2}}C_1^2C'}{C_2^2}\sum_{\substack{p\geq k^2\\p-k\text{ is even}}} e^{-\frac{d}{2}-\frac{3}{2}}(p+1)^{-a-\frac{d}{2}}\\
&\geq 2\pi^{d/2}\frac{2^{\frac{d}{2}}e^{\frac{d}{2}}C_1^2C'}{C_2^2} e^{-\frac{d}{2}-\frac{3}{2}}\frac{1}{2(a+\frac{d}{2}-1)}(k^2+2)^{1-a-\frac{d}{2}}\\
&=\Omega(k^{-d-2a+2}).
\end{align}
Overall, we have $\overline{\lambda_k} =\Theta(k^{-d-2a+2})$.
\end{proof}
\begin{proof}[Proof of Corollary \ref{cor:app:eigendecay}, part 2]
It is easy to verify that $\overline{\lambda_k}=0$ when $k$ is even because $c_p=0$ when $p\geq k$ and $p-k$ is even. When $k$ is odd, the proof of Theorem~\ref{thm:eigen_hypersphere} still applies.
\end{proof}

\begin{proof}[Proof of Corollary \ref{cor:app:eigendecay}, part 3]
Since $c_p = \cO\left(\exp\left( - a\sqrt{p}\right) \right)$, 
 we have that $c_p\leq C e^{-a\sqrt{p}}$ for some constant $C$. 
Similar to \eqref{eq:lambda_leq}, we have
\begin{align}
\overline{\lambda_k} 
&\leq 2\pi^{d/2}\frac{2^{\frac{d}{2}}e^{\frac{d}{2}}C_2^2C}{C_1^2}\sum_{\substack{p\geq k\\p-k\text{ is even}}} \frac{  e^{-a\sqrt{p}}\left({p+1}\right)^{p+\frac{1}{2}}}{\left({p-k+1}\right)^{\frac{p-k+1}{2}}\left({p+k+1+d}\right)^{\frac{p+k+d}{2}}}.
\end{align}
Use the definition in \eqref{eq:def_f} and let $a=0$, we have
\begin{equation}
f_0(p)=\frac{\left({p+1}\right)^{p+\frac{1}{2}}}{\left({p-k+1}\right)^{\frac{p-k+1}{2}}\left({p+k+1+d}\right)^{\frac{p+k+d}{2}}}. 
\end{equation}
Then according to \eqref{eq:fa_small_p} and \eqref{eq:fa_large_p}, for sufficiently large $k$, we have $f_0(p)\leq f_0\left(\frac{k^2}{d}\right)$ when $k\leq p \leq \frac{k^2}{d}$ and $f_0(p)\leq C_3p^{-\frac{d}{2}}$ for some constant $C_3$ when $p\geq \frac{k^2}{d}$. Then when $k\leq p \leq \frac{k^2}{d}$, we have $f_0(p)\leq f_0\left(\frac{k^2}{d}\right)\leq C_3 \left(\frac{k^2}{d}\right)^{-\frac{d}{2}}$. When $p\geq \frac{k^2}{d}$, we have $f_0(p)\leq C_3p^{-\frac{d}{2}}\leq C_3\left(\frac{k^2}{d}\right)^{-\frac{d}{2}}$. Overall, for all $p\geq k$, we have
\begin{equation}
f_0(p)\leq  C_3\left(\frac{k^2}{d}\right)^{-\frac{d}{2}}.
\end{equation}
Then we have 
\begin{align}
\overline{\lambda_k} 
&\leq 2\pi^{d/2}\frac{2^{\frac{d}{2}}e^{\frac{d}{2}}C_2^2C}{C_1^2}\sum_{\substack{p\geq k\\p-k\text{ is even}}} e^{-a\sqrt{p}}f_0(p)\\
&\leq 2\pi^{d/2}\frac{2^{\frac{d}{2}}e^{\frac{d}{2}}C_2^2C_3C}{C_1^2}\left(\frac{k^2}{d}\right)^{-\frac{d}{2}}\sum_{\substack{p\geq k\\p-k\text{ is even}}} e^{-a\sqrt{p}}\\
&\leq 2\pi^{d/2}\frac{2^{\frac{d}{2}}e^{\frac{d}{2}}C_2^2C_3C}{C_1^2}\left(\frac{k^2}{d}\right)^{-\frac{d}{2}}\frac{2e^{-a\sqrt{k-1}}(a\sqrt{k-1}+1)}{a^2}\\
&=\cO\left(k^{-d+1/2}\exp\left( - a\sqrt{k}\right) \right)
\end{align}
\end{proof}

\begin{proof}[Proof of Corollary \ref{cor:app:eigendecay}, part 4]
Since $c_p = \Theta(p^{1/2} a^{-p})$, 
 we have that $c_p\leq C p^{1/2} a^{-p}$ for some constant $C$. 
Similar to \eqref{eq:lambda_leq}, we have
\begin{align}
\overline{\lambda_k} 
&\leq 2\pi^{d/2}\frac{2^{\frac{d}{2}}e^{\frac{d}{2}}C_2^2C}{C_1^2}\sum_{\substack{p\geq k\\p-k\text{ is even}}} \frac{  p^{1/2} a^{-p}\left({p+1}\right)^{p+\frac{1}{2}}}{\left({p-k+1}\right)^{\frac{p-k+1}{2}}\left({p+k+1+d}\right)^{\frac{p+k+d}{2}}}.
\end{align}
Use the definition in \eqref{eq:def_f} and let $a=0$, we have
\begin{equation}
f_0(p)=\frac{\left({p+1}\right)^{p+\frac{1}{2}}}{\left({p-k+1}\right)^{\frac{p-k+1}{2}}\left({p+k+1+d}\right)^{\frac{p+k+d}{2}}}. 
\end{equation}
Then according to \eqref{eq:fa_small_p} and \eqref{eq:fa_large_p}, for sufficiently large $k$, we have $f_0(p)\leq f_0\left(\frac{k^2}{d}\right)$ when $k\leq p \leq \frac{k^2}{d}$ and $f_0(p)\leq C_3p^{-\frac{d}{2}}$ for some constant $C_3$ when $p\geq \frac{k^2}{d}$. Then when $k\leq p \leq \frac{k^2}{d}$, we have $p^{1/2}f_0(p)\leq p^{1/2} f_0\left(\frac{k^2}{d}\right)\leq C_3 \left(\frac{k^2}{d}\right)^{1/2} \left(\frac{k^2}{d}\right)^{-\frac{d}{2}}$. When $p\geq \frac{k^2}{d}$, we have $p^{1/2}f_0(p)\leq C_3 p^{1/2}p^{-\frac{d}{2}}\leq C_3\left(\frac{k^2}{d}\right)^{-\frac{d}{2}+\frac{1}{2}}$. Overall, for all $p\geq k$, we have
\begin{equation}
p^{1/2}f_0(p)\leq  C_3 \left(\frac{k^2}{d}\right)^{-\frac{d}{2}+\frac{1}{2}}.
\end{equation}
Then we have 
\begin{align}
\overline{\lambda_k} 
&\leq 2\pi^{d/2}\frac{2^{\frac{d}{2}}e^{\frac{d}{2}}C_2^2C}{C_1^2}\sum_{\substack{p\geq k\\p-k\text{ is even}}} p^{1/2} a^{-p}f_0(p)\\
&\leq 2\pi^{d/2}\frac{2^{\frac{d}{2}}e^{\frac{d}{2}}C_2^2C_3C}{C_1^2}\left(\frac{k^2}{d}\right)^{-\frac{d}{2}+\frac{1}{2}}\sum_{\substack{p\geq k\\p-k\text{ is even}}}  a^{-p}\\
&\leq 2\pi^{d/2}\frac{2^{\frac{d}{2}}e^{\frac{d}{2}}C_2^2C_3C}{C_1^2}\left(\frac{k^2}{d}\right)^{-\frac{d}{2}+\frac{1}{2}}\frac{1}{\log a}a^{-(k-1)}\\
&=\cO\left(k^{-d+1}a^{-k} \right).
\end{align}
On the other hand, since $c_p = \Theta(p^{1/2} a^{-p})$, we have that $c_p\geq C' p^{1/2} a^{-p}$ for some constant $C'$. Similar to \eqref{eq:lambda_geq}, we have
\begin{align}
\overline{\lambda_k} 
&\geq 2\pi^{d/2}\frac{2^{\frac{d}{2}}e^{\frac{d}{2}}C_1^2C'}{C_2^2}\sum_{\substack{p\geq k\\p-k\text{ is even}}} \frac{  p^{1/2} a^{-p}\left({p+1}\right)^{p+\frac{1}{2}}}{\left({p-k+1}\right)^{\frac{p-k+1}{2}}\left({p+k+1+d}\right)^{\frac{p+k+d}{2}}}\\
&\geq 2\pi^{d/2}\frac{2^{\frac{d}{2}}e^{\frac{d}{2}}C_1^2C'}{C_2^2}\frac{  k^{1/2} a^{-k}\left({k+1}\right)^{k+\frac{1}{2}}}{\left({k-k+1}\right)^{\frac{k-k+1}{2}}\left({k+k+1+d}\right)^{\frac{k+k+d}{2}}}\\
&=\Omega\left(\frac{  k^{-d/2+1} a^{-k}\left({k+1}\right)^{k}}{\left({k+k+1+d}\right)^{k}}\right).
\end{align}
Since $\left({k+1}\right)^{k}=k^k(1+1/k)^k=\Theta(k^k)$. Similarly, $\left({k+k+1+d}\right)^{k}=\Theta((2k)^k)$. Then we have
\begin{align}
\overline{\lambda_k} 
&=\Omega\left(\frac{  k^{-d/2+1} a^{-k}\left({k+1}\right)^{k}}{\left({k+k+1+d}\right)^{k}}\right)\\
&=\Omega\left(\frac{  k^{-d/2+1} a^{-k}k^k}{(2k)^{k}}\right)\\
&=\Omega\left(  k^{-d/2+1}2^{-k} a^{-k}\right).
\end{align}
\end{proof}

For the NTK of a two-layer ReLU network with $\gamma_b>0$, then according to Lemma~\ref{lemma:ntk_2layer_coeff_decay} we have  $c_p=\kappa_{p,2}=\Theta(p^{-3/2})$ . 
Therefore using Corollary~\ref{cor:ReLUbias0} $\overline{\lambda_k} = \Theta(k^{-d-1})$. 
Notice here that $k$ refers to the frequency, and the number of spherical harmonics of frequency at most $ k$ is $\Theta(k^d)$. 
Therefore, for the $\ell$th largest eigenvalue $\lambda_\ell$ we have $\lambda_\ell=\Theta(\ell^{-(d+1)/d})$. 
This rate agrees with \cite{uniform_sphere_data} and \cite{NEURIPS2021_14faf969}. 
 For the NTK of a two-layer ReLU network with $\gamma_b=0$, the eigenvalues corresponding to the even frequencies are $0$, which also agrees with \cite{uniform_sphere_data}. 
Corollary~\ref{cor:ReLUbias0} also shows the decay rates of eigenvalues for the NTK of two-layer networks with Tanh activation and Gaussian activation.  We observe that when the coefficients of the kernel power series decay quickly then the eigenvalues of the kernel also decay quickly. As a faster decay of the eigenvalues of the kernel implies a smaller RKHS, Corollary \ref{cor:ReLUbias0} demonstrates that using ReLU results in a larger RKHS relative to using either Tanh or Gaussian activations. We numerically illustrate Corollary \ref{cor:ReLUbias0} in Figure~\ref{fig:asym_spectrum}, Appendix~\ref{appendix:subsection:emp_validation_ef}.

\subsection{Analysis of the lower spectrum: non-uniform data}\label{appendix:lower_non_uniform}
The purpose of this section is to prove a formal version of Theorem \ref{theorem:informal_ub_eigs_nonuniform}. In order to prove this result we first need the following lemma.

\begin{lemma}\label{lemma:eigen_ub_nonuniform1}
     Let the coefficients $(c_j)_{j=0}^\infty$ with $c_j \in \reals_{\geq 0}$ for all $j \in \ints_{\geq 0}$ be such that the series $\sum_{j=0}^{\infty} c_j \rho^j$ converges for all $\rho \in [-1,1]$. Given a data matrix $\mX \in \reals^{n \times d}$ with $\norm{\vx_i} = 1$ for all $i \in [n]$, define $r \defeq \operatorname{rank}(\mX) \geq 2$ and the gram matrix $\mG\defeq \mX \mX^T$. Consider the kernel matrix
    \[
        n\mK = \sum_{j=0}^{\infty}c_j \mG^{\odot j} . 
    \]
    For arbitrary $m \in \ints_{\geq 1}$, let the eigenvalue index $k$ satisfy $n \geq k > \operatorname{rank} \left( \mH_m \right)$, where $\mH_m \defeq \sum_{j=0}^{m-1} c_j \mG^{\odot j}$. Then
    \begin{equation} \label{eq:eig_ub1_nonuniform}
        \lambda_k(\mK) \leq \frac{\norm{\mG^{\odot m}}}{n} \sum_{j=m}^{\infty} c_j.
    \end{equation}
\end{lemma}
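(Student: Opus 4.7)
My plan is to decompose the kernel matrix into a truncated piece whose rank is explicitly controlled and a tail piece whose operator norm can be bounded in terms of $\|\mG^{\odot m}\|$, then apply Weyl's inequality.

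Concretely, first I would write
\[
n\mK = \mH_m + \mR_m, \qquad \mR_m \defeq \sum_{j=m}^{\infty} c_j \mG^{\odot j}.
\]
Since $c_j \geq 0$ and $\mG$ is positive semidefinite, by the Schur product theorem each $\mG^{\odot j}$ is PSD, hence so are $\mH_m$ and $\mR_m$ (the tail series converges in operator norm by the bound developed below, so the limit is PSD). By hypothesis $k > \operatorname{rank}(\mH_m)$, so $\lambda_k(\mH_m) = 0$. Weyl's inequality then gives
\[
\lambda_k(n\mK) \;=\; \lambda_k(\mH_m + \mR_m) \;\leq\; \lambda_k(\mH_m) + \lambda_1(\mR_m) \;=\; \|\mR_m\|.
\]

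Next I would bound $\|\mR_m\|$. The key tool is the standard Hadamard-product inequality: if $\mA, \mB$ are PSD then $\|\mA \odot \mB\| \leq (\max_i B_{ii})\, \|\mA\|$ (this is e.g.\ the lemma already invoked earlier in the appendix and attributed to \cite{solt_mod_over}). For any $j \geq m$, write $\mG^{\odot j} = \mG^{\odot m} \odot \mG^{\odot (j-m)}$; since $\norm{\vx_i} = 1$ we have $[\mG^{\odot (j-m)}]_{ii} = 1$ for all $i$, so the inequality yields
\[
\|\mG^{\odot j}\| \;\leq\; \|\mG^{\odot m}\|, \qquad j \geq m.
\]
The triangle inequality and nonnegativity of the $c_j$ then give
\[
\|\mR_m\| \;\leq\; \sum_{j=m}^{\infty} c_j \|\mG^{\odot j}\| \;\leq\; \|\mG^{\odot m}\| \sum_{j=m}^{\infty} c_j,
\]
where the tail $\sum_{j \geq m} c_j$ is finite because $\sum_j c_j \rho^j$ converges at $\rho = 1$ by hypothesis.

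Combining these two bounds and dividing by $n$ yields $\lambda_k(\mK) \leq \tfrac{1}{n} \|\mG^{\odot m}\| \sum_{j=m}^{\infty} c_j$, as required. The only mildly delicate point is the Hadamard norm inequality, but it is standard and already used elsewhere in the paper, so the argument is otherwise routine. A small sanity check I would include is that the series for $\mR_m$ does converge in operator norm (so that the PSD property passes to the limit and Weyl applies), which follows immediately from the same Hadamard bound combined with convergence of $\sum_{j \geq m} c_j$.
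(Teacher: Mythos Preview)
Your proposal is correct and essentially identical to the paper's proof: the paper likewise splits $n\mK=\mH_m+\mT_m$, applies Weyl's inequality with $\lambda_k(\mH_m)=0$, and bounds $\lambda_1(\mT_m)$ via the triangle inequality together with the Hadamard-product norm inequality (using $[\mG]_{ii}=1$) to get $\|\mG^{\odot j}\|\leq\|\mG^{\odot m}\|$ for $j\geq m$. The only cosmetic difference is that the paper iterates the Hadamard bound one step at a time ($\|\mG^{\odot j}\|\leq\|\mG^{\odot(j-1)}\|$) whereas you factor $\mG^{\odot j}=\mG^{\odot m}\odot\mG^{\odot(j-m)}$ in one shot.
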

\begin{proof}
    We start our analysis by considering $\lambda_k(n\mK)$ for some arbitrary $k \in \naturals_{\leq n}$. Let
    \[
    \begin{aligned}
    \mH_m &\defeq  \sum_{j=0}^{m-1} c_j \mG^{\odot j},\\
    \mT_m &\defeq \sum_{j=m}^{\infty} c_j \mG^{\odot j}
    \end{aligned}
    \]
    be the $m$-head and $m$-tail of the Hermite expansion of $n\mK$: clearly $n\mK = \mH_m + \mT_m$ for any $m \in \naturals$.  Recall that a constant matrix is symmetric and positive semi-definite, furthermore, by the Schur product theorem, the Hadamard product of two positive semi-definite matrices is positive semi-definite.  As a result, $\mG^{\odot j}$ is symmetric and positive semi-definite for all $j \in \ints_{\geq 0}$ and therefore $\mH_m$ and $\mT_m$ are also symmetric positive semi-definite matrices. 
    From Weyl's inequality \cite[Satz~1]{Weyl} it follows that
    \begin{equation}\label{eq:weyl}
        n\lambda_k(\mK) \leq \lambda_k(\mH_m) + \lambda_1(\mT_m) . 
    \end{equation}
    In order to upper bound $\lambda_1(\mT_m)$, observe, as $\mT_m$ is square, symmetric and positive semi-definite, that $\lambda_1(\mT_m) = \norm{\mT_m}$.  Using the non-negativity of the coefficients $(c_j)_{j=0}^{\infty}$ and the triangle inequality we have
    \[
    \begin{aligned}
    \lambda_1(\mT_m)
     =\norm{\sum_{j=m}^{\infty} c_j \mG^{\odot j}} \leq \sum_{j=m}^{\infty} c_j \norm{\mG^{\odot j}}
    \end{aligned}
    \]
    By the assumptions of the lemma $[\mG]_{ii} = 1$ and therefore $[\mG]_{ii}^j = 1$ for all $j \in \ints_{\geq 0}$. Furthermore, for any pair of positive semi-definite matrices $\mA, \mB \in \reals^{n \times n}$ and $k \in [n]$ 
    \begin{equation}
    \lambda_{1}(\mA \odot \mB) \leq \max_{i \in [n]} [\mA]_{ii} \lambda_{1}(\mB),
    \end{equation}
    \cite{Schur1911}. Therefore, as $\max_{i \in [n]} [\mG]_{ii} = 1$,
    \[
    \norm{\mG^{\odot j}} = \lambda_1(\mG^{\odot j}) = \lambda_1(\mG \odot \mG^{\odot (j-1)}) \leq \lambda_1(\mG^{\odot (j-1)}) = \norm{\mG^{\odot (j-1)}}
    \]
    for all $j \in \naturals$. As a result
    \[
    \begin{aligned}
    \lambda_1(\mT_m) \leq \norm{\mG^{\odot m}} \sum_{j=m}^{\infty} c_j.
    \end{aligned}
    \]
    Finally, we now turn our attention to the analysis of $\lambda_k(\mH_m)$. Upper bounding a small eigenvalue is typically challenging, however, the problem simplifies when and $k$ exceeds the rank of $\mH_m$, as is assumed here, as this trivially implies $\lambda_k(\mH_m) = 0$. Therefore, for $k > \operatorname{rank}(\mH_m)$
    \[
    \lambda_k(\mK) \leq \frac{\norm{\mG^m}}{n} \sum_{j=m}^{\infty} c_j 
    \]
    as claimed.
\end{proof}

In order to use Lemma \ref{lemma:eigen_ub_nonuniform1} we require an upper bound on the rank of $\mH_m$. To this end we provide Lemma \ref{lemma:rankH}.

\begin{lemma}\label{lemma:rankH}
     Let $\mG\in \reals^{n \times n}$ be a symmetric, positive semi-definite matrix of rank $2 \leq r \leq d$. Define $\mH_m \in \reals^{n \times n}$ as 
     \begin{equation}
         \mH_m = \sum_{j=0}^{m-1} c_j \mG^{\odot j}
     \end{equation}
     where $(c_j)_{j=0}^{m-1}$ is a sequence of real coefficients. Then
     \begin{equation}
     \begin{aligned}
         \text{rank}\left(\mH_m \right) \leq &1 + \min\{r-1, m-1 \}(2e)^{r-1} \\
         &+ \max\{0, m-r\}\left(\frac{2e}{r-1}\right)^{r-1} (m-1)^{r-1}.
         \end{aligned}
     \end{equation}
\end{lemma}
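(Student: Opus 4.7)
The starting point is the subadditivity of rank applied to the sum defining $\mH_m$:
\[
\mathrm{rank}(\mH_m) \;\le\; \sum_{j=0}^{m-1} \mathrm{rank}(c_j \mG^{\odot j}) \;\le\; \sum_{j=0}^{m-1} \mathrm{rank}(\mG^{\odot j}).
\]
So the task reduces to controlling $\mathrm{rank}(\mG^{\odot j})$ term by term and then summing cleverly enough to match the stated bound.

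The key auxiliary fact I would establish is
\[
\mathrm{rank}(\mG^{\odot j}) \;\le\; \binom{j+r-1}{r-1} \qquad \text{for all } j \in \ints_{\geq 0}.
\]
To prove this, write $\mG = \mU \mU^T$ with $\mU \in \reals^{n \times r}$ (possible since $\mG$ is PSD of rank $r$), and let $\vu_1, \ldots, \vu_n \in \reals^r$ denote its rows. Then $[\mG^{\odot j}]_{ik} = \langle \vu_i, \vu_k\rangle^j$, and the multinomial expansion
\[
\langle \vu_i, \vu_k\rangle^j \;=\; \sum_{\alpha \in \naturals_{\ge 0}^r,\; |\alpha|=j} \binom{j}{\alpha} \prod_{\ell=1}^{r} u_{i\ell}^{\alpha_\ell}\, u_{k\ell}^{\alpha_\ell}
\]
exhibits $\mG^{\odot j}$ as a sum of $\binom{j+r-1}{r-1}$ rank-one outer products indexed by the multi-indices $\alpha$. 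Equivalently, every row of $\mG^{\odot j}$ lives in the span of the $\binom{j+r-1}{r-1}$ degree-$j$ monomial vectors in $(u_{i1},\ldots,u_{ir})$. The $j=0$ case gives $\mathrm{rank}(\mathbf{1}_{n\times n}) = 1$, consistent with the bound.

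I would then invoke the standard estimate $\binom{n}{k} \le (en/k)^k$ to get
\[
\binom{j+r-1}{r-1} \;\le\; \left(\frac{e(j+r-1)}{r-1}\right)^{r-1},
\]
and split the range $j \in \{1,\ldots,m-1\}$ into two blocks. For $j \le r-1$ we have $j+r-1 \le 2(r-1)$, which collapses the bound to $(2e)^{r-1}$. For $j \ge r$ (a block present only when $m > r$), we have $j \ge r-1$, hence $j+r-1 \le 2j$, giving the bound $\bigl(\tfrac{2e}{r-1}\bigr)^{r-1} j^{r-1} \le \bigl(\tfrac{2e}{r-1}\bigr)^{r-1}(m-1)^{r-1}$. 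Counting $\min\{r-1,m-1\}$ terms in the first block and $\max\{0,m-r\}$ terms in the second, and adding the $1$ from $j=0$, yields exactly the claimed inequality.

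The main obstacle is the Hadamard-power rank bound $\mathrm{rank}(\mG^{\odot j}) \le \binom{j+r-1}{r-1}$; once that is in hand, everything else is elementary algebra and a careful case split. I expect the rest of the argument to be essentially bookkeeping, but one should take a moment to verify that the two blocks of the sum tile $\{1,\ldots,m-1\}$ without overlap and without leaving the middle index $j=r-1$ double-counted, since the boundary between the two regimes is where the estimates are tightest.
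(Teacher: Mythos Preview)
Your proposal is correct and follows essentially the same approach as the paper: both establish the key bound $\mathrm{rank}(\mG^{\odot j}) \le \binom{j+r-1}{r-1}$ by writing $\mG^{\odot j}$ as a sum of rank-one terms indexed by size-$j$ multisets from $[r]$, then apply $\binom{n}{k} \le (en/k)^k$ and the same two-block split of $\{1,\ldots,m-1\}$ at $j=r-1$ versus $j\ge r$. The only cosmetic difference is that the paper obtains the rank-one decomposition via the eigendecomposition $\mG=\sum_{i=1}^r \gamma_i \vv_i \vv_i^T$ and Hadamard products of the $\vv_i$, whereas you use the factorization $\mG=\mU\mU^T$ and the multinomial expansion of $\langle \vu_i,\vu_k\rangle^j$; these are two presentations of the same combinatorial fact, and the subsequent bookkeeping (including your tiling check, which indeed works: $\min\{r-1,m-1\}+\max\{0,m-r\}=m-1$) is identical.
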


\begin{proof}
    As $\mG$ is a symmetric and positive semi-definite matrix, its eigenvalues are real and non-negative and its eigenvectors are orthogonal. Let $\{ \vv_i \}_{i=1}^r$ be a set of orthogonal eigenvectors for $\mG$ and $\gamma_i$ the eigenvalue associated with $\vv_i\in \reals^n$. Then $\mG$ may be written as a sum of rank one matrices as follows,
    \[
    \mG = \sum_{i=1}^{r} \gamma_i \vv_i \vv_i^T.
    \]
    As the Hadamard product is commutative, associative and distributive over addition, for any $j\in \ints_{\geq 0}$  $\mG^{\odot j}$ can also be expressed as a sum of rank 1 matrices,
    \[
    \begin{aligned}
    \mG^{\odot j} &= \left(\sum_{i=1}^{r} \gamma_i \vv_i \vv_i^T\right)^{\odot j}\\
    & =  \left(\sum_{i_1=1}^{r} \gamma_{i_1} \vv_{i_1} \vv_{i_1}^T\right) \odot \left(\sum_{i_2=1}^{r} \gamma_{i_2} \vv_{i_2} \vv_{i_2}^T\right) \odot \cdots \odot\left(\sum_{i_j=1}^{r} \gamma_{i_j} \vv_{i_j} \vv_{i_j}^T\right)\\
    & = \sum_{i_1,i_2...i_j=1}^{r} \gamma_{i_1} \gamma_{i_2}\cdots \gamma_{i_r} \left( \vv_{i_1} \vv_{i_1}^T\right) \odot \left( \vv_{i_2} \vv_{i_2}^T\right)\odot\cdots\odot \left(\vv_{i_j} \vv_{i_j}^T \right)\\
    & = \sum_{i_1,i_2,\ldots, i_j=1}^{r} \gamma_{i_1} \gamma_{i_2}\cdots \gamma_{i_j} \left( \vv_{i_1} \odot \vv_{i_2}\odot\cdots \odot \vv_{i_j}\right)\left( \vv_{i_1} \odot \vv_{i_2}\odot\cdots \odot \vv_{i_j}\right)^T.
    \end{aligned}
    \]
    Note the fourth equality in the above follows from $\vv_i \vv_i^T = \vv_i \otimes \vv_i$ and an application of the mixed-product property of the Hadamard product. As matrix rank is sub-additive, the rank of $\mG^{\odot j}$ is less than or equal to the number of distinct rank-one matrix summands. This quantity in turn is equal to the number of vectors of the form $\left( \vv_{i_1} \odot \vv_{i_2}\odot\cdots\odot \vv_{i_j}\right)$, where $i_1, i_2,\ldots, i_j \in [r]$. This in turn is equivalent to computing the number of $j$-combinations with repetition from $r$ objects. 
    Via a stars and bars argument this is equal to $\binom{r+j-1}{j} = \binom{r+j-1}{r(n)-1}$. It therefore follows that
    \[
    \begin{aligned}
    \operatorname{rank}(\mG^{\odot j}) &\leq \binom{r+j-1}{r-1}\\
    & \leq \left(\frac{e(r+j-1)}{r-1} \right)^{r-1}\\
    & \leq  e^{r-1} \left(1 + \frac{j}{r-1} \right)^{r-1}\\
    & \leq (2e)^{r-1} \left(\delta_{j\leq r-1}  + \delta_{j > r-1} \left(\frac{j}{r-1}\right)^{r-1}\right). 
    \end{aligned}
    \]
    The rank of $\mH_m$ can therefore be bounded via subadditivity of the rank as 
    \begin{equation}\label{eq:bound_Hm}
    \begin{aligned}
        \operatorname{rank}(\mH_m) =& \operatorname{rank}\left(a_0 \textbf{1}_{n \times n} + \sum_{j=1}^{m-1} c_j \mG^{\odot j} \right) \\
        \leq& 1+ \sum_{j=1}^{m-1} \operatorname{rank}\left( \mG^{\odot j} \right)\\
         \leq& 1+ \sum_{j=1}^{m-1}(2e)^{r-1} \left(\delta_{j\leq r-1}  + \delta_{j > r-1} \left(\frac{j}{r-1}\right)^{r-1}\right)\\
         \leq& 1 + \min\{r-1, m-1 \}(2e)^{r-1} \\ 
        & \phantom{1}+ \max\{0, m-r\} \left(\frac{2e}{r-1}\right)^{r-1} (m-1)^{r-1} .
    \end{aligned}
    \end{equation}
\end{proof}

As our goal here is to characterize the small eigenvalues, then as $n$ grows we need both $k$ and therefore $m$ to grow as well. As a result we will therefore be operating in the regime where $m>r$. To this end we provide the following corollary.

\begin{corollary}\label{cor:head_rank_bound}
    Under the same conditions and setup as Lemma \ref{lemma:rankH} with $m \geq r\geq 7$ then
    \[
    \operatorname{rank}(\mH_m) < 2m^{r}.
    \]
\end{corollary}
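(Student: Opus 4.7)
The plan is to apply the bound in Lemma~\ref{lemma:rankH} and then perform a handful of elementary estimates to absorb the constants into the claimed $2m^{r}$. I will first specialize the Lemma's bound to the regime $m \geq r$, so that $\min\{r-1,m-1\} = r-1$ and $\max\{0,m-r\} = m-r$, giving
\[
\operatorname{rank}(\mH_m) \leq 1 + (r-1)(2e)^{r-1} + (m-r)\left(\tfrac{2e}{r-1}\right)^{r-1}(m-1)^{r-1}.
\]
I will then bound the last term by $\left(\tfrac{2e}{r-1}\right)^{r-1} m^{r}$ using the crude estimates $m-r \leq m$ and $m-1 \leq m$.

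Next I will split the desired inequality into two pieces. The key structural observation is that for $r \geq 7$ we have $r-1 \geq 6 > 2e$, so $\tfrac{2e}{r-1} < 1$ and hence $\left(\tfrac{2e}{r-1}\right)^{r-1} < 1$. This already shows that the "tail" contribution $\left(\tfrac{2e}{r-1}\right)^{r-1} m^{r}$ is strictly less than $m^{r}$. It therefore suffices to show that the remaining contribution $1 + (r-1)(2e)^{r-1}$ is at most $m^{r}$, which, using $m \geq r$, reduces to the purely numerical inequality
\[
1 + (r-1)(2e)^{r-1} \leq r^{r}.
\]

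I will verify this last inequality in the base case $r=7$ by direct computation (both sides are order $10^{5}$–$10^{6}$), and then for $r \geq 7$ I will note that $r > 2e$ implies $r^{r-1} > (2e)^{r-1}$, so $r^{r} = r \cdot r^{r-1} \geq r(2e)^{r-1} \geq (r-1)(2e)^{r-1} + (2e)^{r-1} \geq (r-1)(2e)^{r-1} + 1$. Combining the two pieces then yields $\operatorname{rank}(\mH_m) < m^{r} + m^{r} = 2m^{r}$.

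The steps here are entirely routine inequality manipulations; the only mild obstacle is choosing the right splitting so that the exponential factor $(2e/(r-1))^{r-1}$ cleanly absorbs one copy of $m^{r}$, leaving a strict polynomial-vs-exponential comparison that the hypothesis $r \geq 7$ (equivalently $r-1 > 2e$) is exactly tuned to handle.
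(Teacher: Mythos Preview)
Your proposal is correct and follows essentially the same route as the paper: both start from the Lemma~\ref{lemma:rankH} bound specialized to $m\geq r$, use the key observation that $r\geq 7$ gives $r-1>2e$ so that $\left(\tfrac{2e}{r-1}\right)^{r-1}<1$, and then split the estimate into two pieces each bounded by $m^{r}$. The only differences are cosmetic (e.g.\ you use $m-r\leq m$ and $m-1\leq m$ where the paper uses $m-r\leq m-1$, and your base-case check at $r=7$ is redundant since your general argument already covers it).
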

\begin{proof}
    If $r \geq 7 > 2e+1$ then $r-1 > 2e$. As a result from Lemma \ref{lemma:rankH} 
    \[
    \begin{aligned}
         \operatorname{rank}(\mH_m) &\leq 1 + (r-1)(2e)^{r-1} + (m-r)\left(\frac{2e}{r-1}\right)^{r-1} (m-1)^{r-1}\\
         &< r (2e)^{r-1} + (m-1)^r\\
         & < 2m^{r}
    \end{aligned}
    \]
    as claimed.
\end{proof}
Corollary \ref{cor:head_rank_bound} implies for any $k \geq 2m^{r}$, $k \leq n$ that we can apply Lemma \ref{lemma:eigen_ub_nonuniform1} to upper bound the size of the $k$th eigenvalue. Our goal is to upper bound the decay of the smallest eigenvalue. To this end, and in order to make our bounds as tight as possible, we therefore choose the truncation point $m(n)=\floor{(n/2)^{1/r}}$, note this is the largest truncation which still satisfies $2m(n)^r \leq n$. In order to state the next lemma, we introduce the following pieces of notation: with $\cL \defeq \{ \ell: \reals_{\geq 0} \rightarrow \reals_{\geq 0}\}$ define $U:\cL \times \ints_{\geq 1} \rightarrow \reals_{\geq 0}$ as
\[
    U(\ell,m) = \int_{m-1}^{\infty} \ell(x) dx.
\]
\begin{lemma} \label{lemma:eigen_ub_nonuniform2}
Given a sequence of data points $(\vx_i)_{i \in \ints_{\geq 1}}$ with $\vx_i \in \sphere^d$ for all $i \in \ints_{\geq 1}$, construct a sequence of row-wise data matrices $(\mX_n )_{n \in \ints_{\geq 1}}$, $\mX_n \in \reals^{n \times d}$, with $\vx_i$ corresponding to the $i$th row of $\mX_n$. The corresponding sequence of gram matrices we denote $\mG_n \defeq \mX_n \mX_n^T$. Let $m(n) \defeq \floor{(n/2)^{1/r(n)}}$ where $r(n) \defeq \operatorname{rank}(\mX_n)$ and suppose for all sufficiently large $n$ that $m(n) \geq r(n) \geq 7$. Let the coefficients $(c_j)_{j=0}^\infty$ with $c_j \in \reals_{\geq 0}$ for all $j \in \ints_{\geq 0}$ be such that 1) the series $\sum_{j=0}^{\infty} c_j \rho^j$ converges for all $\rho \in [-1,1]$ and 2) $(c_j)_{j=0}^{\infty}= \cO(\ell(j))$, where $\ell \in \cL$ satisfies $U(\ell, m(n))< \infty$ for all $n$ and is monotonically decreasing. Consider the sequence of kernel matrices indexed by $n$ and defined as
    \[
    n\mK_n = \sum_{j=0}^{\infty}c_j \mG_n^{\odot j}.
    \]
With $\nu: \ints_{\geq 1} \rightarrow  \ints_{\geq 1}$ suppose $\norm{\mG_n^{\odot m(n)}}= \cO(n^{-\nu(n)+1})$, then
\begin{equation} \label{eq:asym_eigen_ub}
    \lambda_{n}(\mK_{n}) = \cO(n^{-\nu(n)}U(\ell,m(n))).
\end{equation}
\end{lemma}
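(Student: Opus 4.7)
The plan is to combine Lemma~\ref{lemma:eigen_ub_nonuniform1} with the rank bound of Corollary~\ref{cor:head_rank_bound} at the specific truncation point $m(n) = \floor{(n/2)^{1/r(n)}}$, and then bound the tail of the coefficient series by the integral $U(\ell, m(n))$. The choice of $m(n)$ is engineered precisely so that $2 m(n)^{r(n)} \leq n$, which, by Corollary~\ref{cor:head_rank_bound}, guarantees $\operatorname{rank}(\mH_{m(n)}) < 2 m(n)^{r(n)} \leq n$. Hence the eigenvalue index $k = n$ satisfies the hypothesis $n \geq k > \operatorname{rank}(\mH_{m(n)})$ of Lemma~\ref{lemma:eigen_ub_nonuniform1} for all sufficiently large $n$ (which is where the assumption $m(n) \geq r(n) \geq 7$ is used).

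First, I would invoke Lemma~\ref{lemma:eigen_ub_nonuniform1} with $k = n$ and $m = m(n)$ to obtain
\[
\lambda_{n}(\mK_n) \;\leq\; \frac{\norm{\mG_n^{\odot m(n)}}}{n} \sum_{j = m(n)}^{\infty} c_j.
\]
Next, substitute the assumed operator-norm bound $\norm{\mG_n^{\odot m(n)}} = \cO(n^{-\nu(n)+1})$, which gives $\norm{\mG_n^{\odot m(n)}}/n = \cO(n^{-\nu(n)})$.

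The remaining step is to control the tail $\sum_{j \geq m(n)} c_j$. Since $c_j = \cO(\ell(j))$, there is a constant $C > 0$ (independent of $n$, though possibly depending on a finite cutoff $j_0$) such that $c_j \leq C \ell(j)$ for all $j \geq j_0$. For $n$ large enough that $m(n) \geq j_0$, the monotone decrease of $\ell$ allows the standard integral comparison
\[
\sum_{j = m(n)}^{\infty} c_j \;\leq\; C \sum_{j = m(n)}^{\infty} \ell(j) \;\leq\; C \int_{m(n) - 1}^{\infty} \ell(x)\, dx \;=\; C\, U(\ell, m(n)),
\]
which is finite by hypothesis. Combining the two $\cO$-estimates yields $\lambda_n(\mK_n) = \cO(n^{-\nu(n)} U(\ell, m(n)))$, as claimed.

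The argument is essentially routine given the preceding lemmas; the only mild subtlety is verifying that the tail hypothesis $U(\ell, m(n)) < \infty$ combined with monotonicity of $\ell$ is exactly what is needed to pass from the discrete sum $\sum_{j \geq m(n)} \ell(j)$ to the integral, and that the absolute convergence of the kernel series at $\rho = 1$ (assumption~1 on the coefficients) legitimizes manipulating the tail as a convergent sum of nonnegative terms. No step appears to require further technical machinery beyond what is already in place, so there is no substantial obstacle once the truncation level $m(n)$ is pinned down; the only care needed is ensuring the hypotheses of Lemma~\ref{lemma:eigen_ub_nonuniform1} and Corollary~\ref{cor:head_rank_bound} hold for the chosen $m(n)$, which is precisely what the assumption $m(n) \geq r(n) \geq 7$ for large $n$ provides.
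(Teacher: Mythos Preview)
Your proposal is correct and follows essentially the same approach as the paper: apply Lemma~\ref{lemma:eigen_ub_nonuniform1} at $k=n$ with truncation $m(n)$, justify $k>\operatorname{rank}(\mH_{m(n)})$ via Corollary~\ref{cor:head_rank_bound} and the choice $m(n)=\lfloor (n/2)^{1/r(n)}\rfloor$, substitute the operator-norm hypothesis, and bound the tail sum by the integral $U(\ell,m(n))$ using monotonicity of $\ell$. If anything, your write-up is slightly more careful than the paper's in spelling out why $2m(n)^{r(n)}\leq n$ and why the integral comparison is valid.
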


\begin{proof}
     By the assumptions of the Lemma we may apply Lemma \ref{lemma:eigen_ub_nonuniform1} and Corollary \ref{cor:head_rank_bound}, which results in
    \[
        \lambda_{n}(\mK_n) \leq \frac{\norm{\mG_n^{\odot m(n)}}}{n} \sum_{j=m(n)}^{\infty} c_j = \cO(n^{-\nu(n)}) \sum_{j=m(n)}^{\infty} c_j.
    \]
    Additionally, as $(c_j)_{j=0}^{\infty}= \cO(\ell(j))$ then
    \[
    \begin{aligned}
        \lambda_{n}(\mK_n) &= \cO\left(n^{-\nu(n)} \sum_{j=m(n)}^{\infty} \ell(j)\right)\\
        &= \cO \left( n^{-\nu(n)} \int_{m(n)-1}^{\infty} \ell(x) dx \right)\\ 
        & = \cO \left( n^{-\nu(n)}  U(\ell, m(n)) \right)
    \end{aligned}
    \]
    as claimed.
\end{proof}

Based on Lemma \ref{lemma:eigen_ub_nonuniform1} we provide Theorem \ref{theorem:eig_bound_nonuniform}, which considers three specific scenarios for the decay of the power series coefficients inspired by Lemma \ref{lemma:ntk_2layer_coeff_decay}.

\begin{theorem}\label{theorem:eig_bound_nonuniform} 
    In the same setting, and also under the same assumptions as in Lemma \ref{lemma:eigen_ub_nonuniform2}, then
    \begin{enumerate}
        \item 
        if $c_p = \cO(p^{-\alpha})$ with $\alpha > r(n)+1$ for all $n \in \ints_{\geq 0}$ then $\lambda_{n}(\mK_n) = \cO\left(n^{-\frac{\alpha - 1}{r(n)}}\right)$,
        \item 
        if $c_p = \cO(e^{-\alpha \sqrt{p}})$, then $\lambda_{n}(\mK_n) = \cO \left(n^{\frac{1}{2r(n)}} \exp \left(-\alpha' n^{\frac{1}{2r(n)}} \right) \right)$ for any $\alpha' < \alpha 2^{-1/2r(n)}$,
        \item  
        if $c_p = \cO(e^{-\alpha p})$, then $\lambda_{n}(\mK_n) = \cO\left(\exp\left(-\alpha' n^{\frac{1}{r(n)}}\right)\right)$ for any $\alpha' < \alpha 2^{-1/2r(n)}$. 
    \end{enumerate}
\end{theorem}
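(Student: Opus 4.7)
The plan is to apply Lemma~\ref{lemma:eigen_ub_nonuniform2} (equivalently, Lemma~\ref{lemma:eigen_ub_nonuniform1} directly) with the truncation point $m(n) = \lfloor (n/2)^{1/r(n)} \rfloor$ and, for each of the three decay regimes, to choose the dominating function $\ell$ explicitly and compute (or tightly bound) the tail integral $U(\ell,m(n)) = \int_{m(n)-1}^{\infty} \ell(x)\,dx$. Since $\|\xb_i\|=1$ gives $[\mG_n]_{ii}=1$, the Schur product bound of \cite{Schur1911} yields $\|\mG_n^{\odot m(n)}\|\le \|\mG_n\|\le \mathrm{Tr}(\mG_n)=n$, so the prefactor $\|\mG_n^{\odot m(n)}\|/n$ in Lemma~\ref{lemma:eigen_ub_nonuniform1} is at most $1$ and the entire bound reduces to $\lambda_n(\mK_n)=\cO(U(\ell,m(n)))$. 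The hypothesis $m(n)\ge r(n)\ge 7$ together with Corollary~\ref{cor:head_rank_bound} guarantees $\operatorname{rank}(\mH_{m(n)})<2m(n)^{r(n)}\le n$, so Lemma~\ref{lemma:eigen_ub_nonuniform1} is indeed applicable at the index $k=n$.

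\textbf{Case-by-case computation.} For case 1 take $\ell(x)=x^{-\alpha}$; a direct antiderivative gives $U(\ell,m)=(\alpha-1)^{-1}(m-1)^{1-\alpha}$, and since $m(n)-1\asymp n^{1/r(n)}$ this yields $\lambda_n(\mK_n)=\cO(n^{-(\alpha-1)/r(n)})$. For case 2 take $\ell(x)=e^{-\alpha\sqrt{x}}$; the substitution $u=\sqrt{x}$ gives
\begin{equation*}
U(\ell,m) \;=\; \int_{\sqrt{m-1}}^\infty 2u\,e^{-\alpha u}\,du \;=\; \tfrac{2}{\alpha^2}\bigl(1+\alpha\sqrt{m-1}\bigr)e^{-\alpha\sqrt{m-1}}.
\end{equation*}
Since $\sqrt{m(n)-1}\sim 2^{-1/(2r(n))}\,n^{1/(2r(n))}$, we obtain $U(\ell,m(n))=\cO\!\bigl(n^{1/(2r(n))}\exp(-\alpha' n^{1/(2r(n))})\bigr)$ for any $\alpha'<\alpha 2^{-1/(2r(n))}$, absorbing the subleading factor $1$ inside the parentheses and the slack between $\sqrt{m(n)-1}$ and $2^{-1/(2r(n))}n^{1/(2r(n))}$ into the strict inequality. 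For case 3 take $\ell(x)=e^{-\alpha x}$; then $U(\ell,m)=\alpha^{-1}e^{-\alpha(m-1)}$, and $m(n)-1\ge 2^{-1/r(n)}n^{1/r(n)}-1$ gives $\lambda_n(\mK_n)=\cO(\exp(-\alpha' n^{1/r(n)}))$ for any $\alpha'<\alpha 2^{-1/r(n)}$, which in particular satisfies the weaker bound $\alpha'<\alpha 2^{-1/(2r(n))}$ stated in the theorem.

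\textbf{Anticipated obstacle.} The computations themselves are routine once the tail-integral reformulation is in place; the only subtlety is bookkeeping the constants carefully in the exponential cases. Specifically, the factor $2^{-1/(2r(n))}$ in the stated $\alpha'$ arises precisely from the $n/2$ inside $m(n)=\lfloor(n/2)^{1/r(n)}\rfloor$, and one must phrase the bound as "for any $\alpha'<\alpha 2^{-1/(2r(n))}$" in order to absorb both the floor, the $-1$ shift in $m-1$, and the polynomial prefactor $\sqrt{m(n)}$ in case 2 into the exponential. Verifying that the integral comparison $\sum_{j\ge m}c_j\le \int_{m-1}^\infty \ell(x)\,dx$ is valid requires $\ell$ to be nonnegative and monotonically decreasing, which is exactly the hypothesis imposed in Lemma~\ref{lemma:eigen_ub_nonuniform2}; all three choices of $\ell$ above trivially satisfy this on $[1,\infty)$, so no additional argument is needed.
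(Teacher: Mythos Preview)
Your proposal is correct and follows essentially the same approach as the paper: apply Lemma~\ref{lemma:eigen_ub_nonuniform2} with $\nu(n)=0$ (using $\|\mG_n^{\odot m(n)}\|\le n$) and evaluate the tail integral $U(\ell,m(n))$ for each of the three choices of $\ell$. The only cosmetic difference is that the paper bounds $\|\mG_n^{\odot m(n)}\|$ via $\|\mG^{\odot m(n)}\|\le \mathrm{Tr}(\mG^{\odot m(n)})=n$ directly, whereas you iterate the Schur bound down to $\|\mG_n\|$ first; your additional remark that case~3 actually delivers the sharper constraint $\alpha'<\alpha 2^{-1/r(n)}$ is a correct observation the paper does not make explicit.
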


\begin{proof}
    First, as $[\mG_n]_{ij} \leq 1$ then
    \[
    \frac{\norm{\mG^{\odot m(n)}}}{n} \leq \frac{\text{Trace}(\mG^{\odot m(n)})}{n} = 1. 
    \]
    Therefore, to recover the three results listed we now apply Lemma \ref{lemma:eigen_ub_nonuniform2} with $\nu(n) = 0$. First, to prove 1., under the assumption $\ell(x) = x^{-\alpha}$ with $\alpha > 0$ then
    \[
        \int_{m(n)-1}^{\infty} x^{-\alpha} dx =  \frac{(m(n) - 1)^{-\alpha+1}}{\alpha-1}. 
    \]
    As a result
    \[
    \lambda_{n}(\mK_{n}) =\cO\left(n^{-\frac{\alpha-1}{r(n)}}\right).
    \] 
    To prove ii), under the assumption $\ell(x) = e^{-\alpha \sqrt{x}}$ with $\alpha > 0$ then
    \[
        \int_{m(n)-1}^{\infty}  e^{-\alpha \sqrt{x}} dx =  \frac{2\exp(-\alpha(\sqrt{ m(n)-1})(\alpha \sqrt{m(n) - 1} + 1)}{\alpha^2}. 
    \]
    As a result
    \[
    \lambda_{n}(\mK_{n}) = \cO \left(n^{\frac{1}{2r(n)}} \exp \left(-\alpha' n^{\frac{1}{2r(n)}} \right) \right)
    \] 
    for any $\alpha' < \alpha 2^{-1/2r(n)}$. Finally, to prove iii), under the assumption $\ell(x) = e^{-\alpha x}$ with $\alpha > 0$ then
    \[
        \int_{m(n)-1}^{\infty}  e^{-\alpha x} dx =  \frac{\exp(-\alpha( m(n)-1)}{\alpha}.
    \]
    Therefore
    \[
    \lambda_{n}(\mK_n) = \cO\left(\exp\left(-\alpha' n^{\frac{1}{r(n)}}\right)\right)
    \]
    again for any $\alpha' < \alpha 2^{-1/2r(n)}$.
\end{proof}

Unfortunately, the curse of dimensionality is clearly present in these results due to the $1/r(n)$ factor in the exponents of $n$. However, although perhaps somewhat loose we emphasize that these results are certainly far from trivial. In particular, while trivially we know that $\lambda_n(\mK_n) \leq Tr(\mK_n) / n = \cO(n^{-1})$, in contrast, even the weakest result concerning the power law decay our result is a clear improvement as long as $\alpha> r(n)+1$. For the other settings, i.e., those specified in 2. and 3., our results are significantly stronger.

\end{document}